\DeclareSymbolFontAlphabet{\mathbb}{AMSb}
\renewcommand{\Pr}{\field{P}}
\newcommand{\bta}{\boldsymbol{\eta}}
\newcommand{\bps}{\boldsymbol{\epsilon}}
\newcommand{\bb}{\boldsymbol{b}}
\newcommand{\ba}{\boldsymbol{a}}
\newcommand{\bm}{\boldsymbol{m}}
\newcommand{\bg}{\boldsymbol{g}}
\newcommand{\bx}{\boldsymbol{x}}
\newcommand{\bu}{\boldsymbol{u}}
\newcommand{\bh}{\boldsymbol{h}}
\newcommand{\by}{\boldsymbol{y}}
\newcommand{\bz}{\boldsymbol{z}}
\newcommand{\bw}{\boldsymbol{w}}
\newcommand{\bgamma}{\boldsymbol{\gamma}}
\newcommand{\argmin}{\mathop{\text{argmin}}}
\newcommand{\field}[1]{\mathbb{#1}}
\newcommand{\R}{\field{R}}
\newcommand{\E}{\field{E}}
\newtheorem{lemma}{Lemma}
\newtheorem{theorem}{Theorem}
\newtheorem{cor}{Corollary}
\newtheorem{example}{Example}
\newtheorem{claim}{Claim}
\newtheorem{assumption}{Assumption}
\newtheorem{assumptionA}{Assumption}
\newtheorem{assumptionC}{Assumption}
\begin{document}


\title{Formal guarantees for heuristic optimization algorithms used in machine learning}

\author{Xiaoyu Li}

\degree=2

\prevdegrees{B.S., University of Science and Technology of China, 2016}

\department{Division of System Engineering}

\defenseyear{2022}
\degreeyear{2022}
\reader{First Reader}{Francesco Orabona, Ph.D.}{Associate Professor of Electrical and Computer Engineering\\Associate Professor of Systems Engineering\\
Associate Professor of Computer Science}
\reader{Second Reader}{Ashok Cutkosky, Ph.D.}{Assistant Professor of Electrical and Computer Engineering\\Assistant Professor of Systems Engineering\\
	Assistant Professor of Computer Science}
\reader{Third Reader}{Alexander Olshevsky, Ph.D.}{Associate Professor of Electrical and Computer Engineering\\Associate Professor of Systems Engineering}
\reader{Fourth Reader}{Ioannis Ch. Paschalidis, Ph.D.}{Distinguished Professor of Engineering\\Professor of Electrical and Computer Engineering\\Professor of Systems Engineering\\
	Professor of Biomedical Engineering\\ Professor of Computing \& Data Sciences}

\numadvisors=1
\majorprof{Francesco Orabona, Ph.D.}{{Associate Professor of Electrical and Computer Engineering\\Associate Professor of Systems Engineering\\
		Associate Professor of Computer Science}}




\maketitle
\cleardoublepage

\copyrightpage
\cleardoublepage

\approvalpagewithcomment
\cleardoublepage


\newpage
\section*{\centerline{Acknowledgments}}
I want to extend my most sincere thanks to my advisor, Professor Francesco Orabona, for his five years of guidance and support during my PhD study.  He led me to step into the field of optimization, spent a lot of hours with me on the discussion from high level ideas to technical proofs, and provided precious advice on writing and presentation. I have been extremely lucky to have an advisor who cared so much about my work and offered his help so promptly whenever I need. His profession, patience, and kindness to research work, colleagues, and students have been deeply engraved in my mind. He was, is and will always be a role model to me. 

I am also very grateful to Professor Ashok Cutkosky, Professor Alexander Olshevsky, and Professor Ioannis Paschalidis for their advice and feedback on improving my dissertation. They are outstanding researchers in their field. I am thankful to have them on my dissertation committee. 

I would like to thank all the present and past members of OPTIMAL Lab. I would like to thank my coauthors Zhenxun Zhuang and Dr. Mingrui Liu for the fruitful discussions and collaborations. 

Many thanks go to all my colleagues and friends at BU and past. I would like to thank Jing Zhang, Xiang Li, Qianqian Ma, Hong Wang, Keyi Chen and many others for their encouragement and support during the tough times of this journey. Special thanks to Irene for the voice projection tutorial. 

I would also like to acknowledge the support from the Division of Systems Engineering. Especially, I want to thank Elizabeth Flagg and Christine Ritzkowski for helping me with paperwork, scheduling and many other things during my study at BU. 

I want to express my gratitude to my husband Libo Wu, for his love, care and encouragement. He has been incredibly supportive to me throughout this entire process. I also thank my fur family member Jojo for always making me smile.

Finally, I would like to express my deepest gratitude to my parents for their unconditional love and support. They always encourage me and believe in me throughout all the ups and downs in my life.

This dissertation is dedicated to my family.
\cleardoublepage


\begin{abstractpage}
Recently, Stochastic Gradient Descent (SGD) and its variants have become the dominant methods in the large-scale optimization of machine learning problems. A variety of strategies have been proposed for tuning the step sizes, ranging from adaptive step sizes (e.g., AdaGrad) to heuristic methods to change the step size in each iteration. Also, momentum has been widely employed in machine learning tasks to accelerate the training process. Yet, there is a gap in our theoretical understanding of them. In this work, we start to close this gap by providing formal guarantees to a few heuristic optimization methods and proposing improved algorithms if the theoretical results are suboptimal. 

First, we analyze a generalized version of the AdaGrad (Delayed AdaGrad) step sizes in both convex and non-convex settings, showing that these step sizes allow the algorithms to automatically adapt to the level of noise of the stochastic gradients. We show sufficient conditions for Delayed AdaGrad to achieve almost sure convergence of the gradients to zero, which is the first guarantee for Delayed AdaGrad in the non-convex setting. Moreover, we present a high probability analysis for Delayed AdaGrad and its momentum variant in the non-convex setting. 

Second, we present an analysis of SGD with exponential and cosine step sizes, which are simple-to-use, empirically successful but lack of theoretical support. We provide the very first convergence guarantees for them in the smooth and non-convex setting, with and without the Polyak-Łojasiewicz (PL) condition. We show that these two strategies also have the good property of adaptivity to noise under PL condition. 

Third, we study the last iterate of momentum methods. We prove the first lower bound in the general convex setting for the last iterate of SGD with constant momentum. Based on the fact that the lower bound is suboptimal, we investigate a class of (both adaptive and non-adaptive) Follow-The-Regularized-Leader-based momentum algorithms (FTRL-based SGDM) with increasing momentum and shrinking updates. We show that their last iterate has optimal convergence for unconstrained convex stochastic optimization problems without projections onto bounded domains nor knowledge of the number of iterations.

\end{abstractpage}
\cleardoublepage


\tableofcontents
\cleardoublepage

\newpage
\listoftables
\cleardoublepage


\chapter*{List of Abbreviations}


\begin{center}
 \begin{tabular}{lll}
    \hspace*{2em} & \hspace*{1in} & \hspace*{4in} \\
    DA & \dotfill & Dual Averaging \\
    FTRL & \dotfill & Follow The Regularized Leader\\
    GD   & \dotfill & Gradient Descent \\
    OMD & \dotfill & Online Mirror Descent\\
    $\mathbb{R}^{2}$  & \dotfill & the Real plane \\
    SGD  & \dotfill & Stochastic Gradient Descent \\
    SGDM & \dotfill & Stochastic Gradient Descent with Momentum\\
    SHB & \dotfill & Stochastic Heavy Ball \\

    
 \end{tabular}
\end{center}
\cleardoublepage


\newpage
\endofprelim
        
\cleardoublepage

\chapter{Introduction}
\label{chapter:Introduction}
\thispagestyle{myheadings}

Modern machine learning has led to remarkable empirical success in a few areas, including computer vision, natural language processing, generative modeling and reinforcement learning.
Optimization is one of the core parts of machine learning: most machine learning algorithms can be reduced to the minimization of an objective function and constructed using the given data. With the rapid growth of data amount and model complexity, Stochastic Gradient Descent (SGD) ~\citep{RobbinsM51} has become the tool of choice to train machine learning models due to its simplicity and efficiency. In particular, in the Deep Learning community, it is widely used to minimize the training error of deep neural networks. 

In machine learning optimization, SGD often comes with heuristic tricks, such as momentum, and a variety of strategies in tuning the stepsizes. 
Indeed, the performance of SGD heavily depends on the choice of stepsizes, which sparkles a lot of strategies for stepsize tuning, ranging from coordinate-wise ones (a.k.a. “adaptive” stepsize) ~\citep[e.g.,][]{DuchiHS11, McMahanS10, TielemanH12, Zeiler12, KingmaB15} to heuristic approaches to change the stepsize in each iteration \citep{LoshchilovH17, HeZZZXL19}. 
Besides, momentum is often employed to accelerate the optimization process and proved to be important in many machine learning applications.  
Although these heuristics are empirically successful,  we are far from a complete theoretical understanding of them. As a consequence, a successful training process comes at a cost of a considerable trial-and-error tuning procedure.

Towards the theoretical understanding of these methods, a big challenge is the non-convex nature of many machine learning objective functions, such as in neural networks. Indeed, classic convex optimization theories and analysis techniques for SGD can not be applied to these heuristics of training neural networks. Moreover, even in the convex setting, it is often the case that an idealized version of the algorithm is used in the theory rather than the actual one people use in practice. 
For example, most existing analyses of adaptive gradient methods are conducted in the context of online learning, assuming the optimization to be constrained in a convex bounded set. 
Also, in the classic analysis of momentum, 
projections onto bounded domains at each step, averaging of the iterates~\citep[e.g.,][]{AlacaogluMMC20}, 
and knowledge of the total number of iterations~\citep{GhadimiL12} are often assumed. 

Motivated by the above facts, we aim to bridge the gap between theory and practice by providing theoretical guarantees of these advanced SGD-based methods as well as proposing improved algorithms when the theoretical results are sub-optimal. In particular, for the stepsize strategies of SGD, we focus on a family of adaptive stepsizes and two heuristic stepsizes: exponential stepsize and cosine stepsize. Then, we study the convergence of the last iterate of SGD with momentum and its improved variants.

In the remainder of this chapter, we start by introducing the problem set-up and the limitations of the existing analysis, and then give a summary of our results that will be discussed in this dissertation.

\section{Preliminary and Problem Set-Up}
\label{subchapter:setup}

Consider the unconstrained optimization problem 
\begin{equation}
\label{eq:problem}
\min_{ \bx \in \R^d }  \  f(\bx), 
\end{equation}
where $f(\bx):\R^d\rightarrow \R$ is a function bounded from below and we denote its infimum by $f^{\star}$. 
In this work, we do not require $f$ to have a finite-sum structure.

Let's introduce some definitions to characterize family of functions.

A function $f$ is called convex if for any $\alpha \in [0,1]$ and any $\bx, \by \in \R^d$, 
\[
f(\alpha \bx + (1- \alpha) \by ) \leq \alpha f(\bx) + (1- \alpha)  f(\by). 
\]

$\| \cdot \|$ stands for $\ell_2$ norm.

A real-value function $f$ is called $L$-Lipschitz if there exists a positive number $L > 0$, such that for any $\bx, \by \in \R^d$
\begin{equation}
	\label{eq:lipcz}
	\| f(\bx) - f(\by) \| \leq L \| \bx - \by \|~. 
\end{equation}

A differentiable function $f$ is called $M$-smooth if its gradients is $M$-Lipschitz, i.e., 
\begin{equation}
	\label{eq:smooth_def}
	\| \nabla f(\bx) - \nabla f(\by) \| \leq M \| \bx - \by \|, \quad \forall \bx, \by \in \R^d. 
\end{equation}

Note that \eqref{eq:smooth_def} implies~\citep[Lemma 1.2.3]{Nesterov04}
\begin{equation}
\label{eq:smooth2}
\left|f(\by)-f(\bx)-\langle \nabla f(\bx), \by-\bx\rangle\right|
\leq \frac{M}{2}\|\by-\bx\|^2, \quad \forall \bx, \by \in \R^d.
\end{equation}

We assume to have access to a first-order black-box optimization oracle that returns a stochastic (sub)gradient in any point $\bx \in \R^d$. In particular, we assume that we receive a vector $\bg(\bx, \xi)$ such that $\E_{\xi} [\bg(\bx, \xi)] = \nabla f(\bx)$ for any $\bx \in \R^d$. In words, $\bg(\bx, \xi)$ is an unbiased estimate of $\nabla f(\bx)$. For example, in machine learning, $\xi$ can be the random index of a training sample we use to calculate the gradient of the training loss. SGD starts from an arbitrary point $\bx_1 \in \R^d$ and iteratively updates the solution as 
\[
\bx_{t+1} = \bx_t - \eta_t \bg(\bx_t, \xi_t), 
\]
where $\eta_t > 0$ is the stepsize or learning rate.  In words,  the iterate $\bx_t$ moves along the opposite direction of the vector $\bg (\bx_t, \xi_t)$ by $\eta_t$ at the $t$-th step.  To make the notion concise, we denote by $\bg_t \triangleq \bg(\bx_t, \xi_t)$. 

In the convex case, our goal is to solve~\eqref{eq:problem}, that is, to find a global minimum of $f$. Yet if $f$ is nonconvex, solving~\eqref{eq:problem} is generally NP-hard~\citep{NemirovskyY83}, so we might turn to a less ambitious goal. We assume the function to be smooth. By its definition in~\eqref{eq:smooth_def}, when we approach to a local minimum the gradients go to zero. So minimizing the gradient norm will be our objective for SGD in the nonconvex case if without extra assumptions. 

To warm up, we introduce some classic results. 

We consider to minimize a (nonconvex) $M$-smooth function $f$ using SGD with unbiased stochastic gradient. We will also assume the variance of noise on the stochastic gradients is bounded, i.e., $\E \left[ \| \bg(\bx_t, \xi_t, ) - \nabla f(\bx_t) \|^2 \right] \leq \sigma^2.$

We start by making use of the property of smooth function~\eqref{eq:smooth2} on the iterates of SGD:
\begin{align*}
\label{eq:smooth_one_step}
f(\bx_{t+1}) 
&  \leq  f(\bx_t) - \langle \nabla f(\bx_t), \eta_t \bg_t \rangle + \frac{M}{2} \| \eta_t \bg_t \|^2 \\
\end{align*}

Then, taking expectation with respect to the underlying variable $\xi$, we will have
\begin{align*}
\E f(\bx_{t+1}) 
&  \leq  \E f(\bx_t) - \eta_t\E \langle \nabla f(\bx_t), \bg_t \rangle + \frac{M \eta_t^2}{2} \E \| \bg_t \|^2 \\
&  = \E f(\bx_t) - \eta_t\E \| \nabla f(\bx_t ) \|^2 + \frac{M \eta_t^2}{2} \left( \E \| \bg_t - \nabla f(\bx_t) \|^2 + \E \| \nabla f(\bx_t) \|^2 \right) \\
& \leq \E f(\bx_t) - \left(\eta_t - \frac{M \eta_t^2}{2}\right)\E \| \nabla f(\bx_t ) \|^2 + \frac{M \eta_t^2 \sigma^2}{2}~.
\end{align*}

Summing over $t$ from 1 to $T$ and reordering the terms, we have 
\begin{align*}
\sum_{t=1}^T \left(\eta_t - \frac{M \eta_t^2}{2}\right)\E \| \nabla f(\bx_t ) \|^2  
& \leq  f(\bx_1 ) - \E f(\bx_{T+1}) + \frac{M \sigma^2}{2}\sum_{t=1}^T \eta_t^2 \\
& \leq  f(\bx_1) - f^{\star} + \frac{M \sigma^2}{2}\sum_{t=1}^T \eta_t^2~. 
\end{align*}

For any constant $\eta_t = \eta \leq \frac{1}{M}$ such that $\eta - \frac{M\eta^2}{2} \geq \frac{\eta}{2}$, we obtain
\begin{align*}
\frac{1}{T}\sum_{t=1}^T \E \| \nabla f(\bx_t ) \|^2  \leq \frac{2f(\bx_1) - f^{\star}}{\eta T} + M\eta\sigma^2~.
\end{align*}

The first term in the right hand side does not depends on the noise level $\sigma$ and the second one does. Choosing $\eta$ is a trade-off between these two terms. In particular, considering $\eta = \min \left(\frac{1}{L}, \frac{c}{\sigma \sqrt{T}} \right)$, where $c > 0$ is a parameter of the stepsize, we have 
\begin{align*}
\frac{1}{T}\sum_{t=1}^T \E \| \nabla f(\bx_t ) \|^2  
\leq \frac{M(f(\bx_1) - f^{\star})}{T} + \left( c+ \frac{f(\bx_1) - f^{\star}}{c}\right) \frac{\sigma}{\sqrt{T}}~. 
\end{align*}

In words, it tells that the average gradient norm converges to zero with a rate of $O(\frac{1}{T} + \frac{\sigma}{\sqrt{T}})$. In addition, due to the fact that the average can be lower bounded by the minimum of a sequence, we know that there exists at least a point $\bx_t$ in $\bx_1, \cdots, \bx_T$, of which the gradient norm is as small as the rate.

Now, let's look at the convergence rate. The first term $\frac{1}{T}$ is fast in $T$ and the second terms $\frac{\sigma}{\sqrt{T}}$ is slower.  That means the algorithm makes fast progress at the beginning of the optimization and then slowly converges as long as the number of iterations becomes big enough compared to the variance of the noise. In case the noise on the gradients is zero, SGD becomes simply gradient descent and it will converge at a $O(1/T)$ rate.

Though SGD with such stepsizes guarantees a convergence rate, we have to assume we know everything. Yet some factor like the noise level is rarely given in real world applications. One possible alternative is a decreasing stepsize $\eta_t = \frac{c}{\sqrt{t}}$ where $c > 0$. Compared to a constant stepsize, these stepsizes help to fight the disturbance of the noise when the iterate is close to the optimal point. However, they slow down the progress in the early stage of optimization process, where the oscillation brought by the noise is relatively small compared to how far the iterates are from the solution. Consequently, those stepsizes do not really shine in practice and practitioners turn to some heuristic stepsizes. In the next section, we will zoom in and focus on SGD with such stepsizes. We will discuss the weakness of the current results, as well as what this dissertation contributes in this area. 

\subsection{Convergence of SGD with heuristic stepsizes.}
Classic convergence analysis of the SGD algorithm for non-convex smooth function relies on conditions on the positive stepsizes $\eta_t$~\citep{RobbinsM51}. In particular, a sufficient condition for $\lim_{t \to \infty} \E \left[ \| \nabla f(\bx_t) \|^2 \right] = 0$ is that $(\eta_t)_{t=1}^{\infty}$ is a deterministic sequence of non-negative numbers that satisfies
\begin{equation}
	\label{eq:conditions_stepsize}
	\sum_{t=1}^\infty \eta_t = \infty \quad\text{ and }\quad \sum_{t=1}^\infty \eta_t^2 < \infty~.
\end{equation}
The first condition basically says the iterates should be able to travel anywhere, and the second condition suggests that the stepsize should be small in the late stage to keep the noise under control. Though these conditions cover a broad family of stepsizes, they provide limited information on how to set stepsizes when SGD runs for finite number of iterations.

The state-of-the-art SGD variants use adaptive stepsizes. Among them, 
AdaGrad was proposed in~\citet{DuchiHS11} and has become the basis of all other adaptive optimization algorithms used in machine learning, \citep[e.g.,][]{Zeiler12,TielemanH12,KingmaB15,ReddiKK18}. 

AdaGrad can be reduced to SGD with a vector stepsize, that is, instead of using a scalar as stepsize, it adopts different stepsize in each coordinate. In particular, the iterates update itself with the following form:  $\bx_{t+1} = \bx_t - \bta_t \bg_t$, where $\bta_t = (\eta_{t,1}, \cdots, \eta_{t,d})$ and 
\begin{equation}
\label{eq:adagrad}
\eta_{t,i} = \frac{c}{\sqrt{\epsilon + \sum_{j=1}^t} g_{j,i}}, \quad \epsilon , c > 0,  
\end{equation}
where the products of vectors are element-wise.

In words, AdaGrad updates $\bta_t$ on the fly with the information of all previous stochastic gradients observed on the go. 

Towards the theory of these methods, adaptive stepsizes generally do not fit in the conditions~\eqref{eq:conditions_stepsize}. For example, for AdaGard, when the stochastic gradients are upper bounded by a constant, i.e., $ |g_{i,j}| \leq G, G>0$, 
\begin{align*} 
\sum_{t=1}^{\infty} \eta_{t,i}^2 =  \sum_{t=1}^{\infty}  \frac{c^2}{\epsilon + \sum_{j=1}^t g_{j,i}^2} \geq  \sum_{t=1}^{\infty}  \frac{c^2}{\epsilon + tG^2} = \infty, \quad i = 1, \cdots, d~. 
\end{align*}

In addition, the adaptive stepsizes are believed to require less tweaking to achieve good performance in machine learning applications and we have partial explanations in the convex setting, i.e sparsity of the gradients~\citep{DuchiHS11}. However in the nonconvex setting, little theory is known to explain the better performance. Indeed, for a large number of SGD variants employed by practitioners, condition~\eqref{eq:conditions_stepsize} are not satisfied.  In fact, these algorithms are often designed and analyzed for the convex domain under restrictive conditions, e.g., bounded domains, or they do not provide convergence guarantees at all, \citep[e.g.,][]{Zeiler12}, or even worse they are known to fail to converge on simple one-dimensional convex stochastic optimization problems~\citep{ReddiKK18}. Even considering an \emph{infinite} number of iterations, the behavior of these algorithms is often unknown. 

In Chapter~\ref{chapter:adap}, we focus on a generalized version of AdaGrad (Delayed AdaGrad), with and without momentum, and present theoretical analysis in both convex and nonconvex settings, going in the direction of closing the gap between theory and practice. Continuing in this way, we then focus on SGD with two heuristic stepsizes: exponential and cosine stepsizes and prove for the first time a convergence guarantee in Chapter~\ref{chapter:exp_cos}. 

On the other hand, SGD with appropriate stepsizes is already optimal in all many possible situations, which makes it unclear what kind of advantage we might show. An interesting viewpoint is to go beyond worst-case analyses and show that these learning rates provide SGD with some form of \emph{adaptivity} to the characteristics of the function. More specifically, an algorithm is considered adaptive (or \emph{universal}) if it has the best theoretical performance w.r.t. to a quantity X without the need to know it~\citep{Nesterov15b}. So, for example, it is possible to design optimization algorithms adaptive to scale~\citep{OrabonaP15}, smoothness~\citep{LevyYC18}, noise~\citep{LevyYC18,LiO19,liO20}, and strong convexity~\citep{CutkoskyO18}.
On the other hand, as noted in \citet{Orabona19}, it is remarkable that even if most of the proposed step size strategies for SGD are called ``adaptive'', for most of them their analyses do not show any provable advantage over plain SGD nor any form of adaptation to the intrinsic characteristics of the non-convex function. Following this direction, we show the property of adaptive-to-noise for Delayed AdaGrad (with momentum) in Chapter~\ref{chapter:adap} and for SGD with cosine and exponential stepsizes in Chapter~\ref{chapter:exp_cos}, providing possible explanations for the empirical success of these kinds of algorithms in practical machine learning applications.

\subsection{Convergence of SGD with Momentum}
SGD with Momentum includes several variants in the literature, such as the stochastic version of the Heavy Ball momentum (SHB) \citep{Polyak64} and Nesterov's momentum (also called Nesterov Accelerate Gradient method)~\citep{Nesterov83}, as well as exponential moving average of the (stochastic) gradients used to replace the gradients in the updates~\citep{KingmaB15, ReddiHSPS16, AlacaogluMMC20, LiuGY20}.
In this dissertation, we denote by Stochastic Gradient Descent with Momentum (SGDM) the following updates 
\begin{equation}
\bx_{t+1} = \bx_t - \eta_t \bm_t, \quad  \bm_t = \beta_t \bm_{t-1} + \nu_t \bg_t,
\end{equation}
where $\nu_t>0$ and $0 \leq \beta_t\leq 1$.
In particular,  when $\nu_t = 1$, it recovers the updates of the stochastic version of the Heavy Ball momentum (SHB) \citep{Polyak64}. Instead, when $\nu_t = 1-\beta_t$, it recovers the variant with exponential moving average of the stochastic gradients.

Momentum seem to accelerate the training process in machine learning optimization.
However, due to the presence of noise, our theoretical understanding of the advantage of SGD with momentum over SGD is not clear. Indeed, recent studies~\citep{LiuB19, KidambiNJK18} reveal that SGD with either Polyak momentum or Nesterov momentum does not guarantee an accelerated rate of convergence of noise nor any real advantage over plain SGD on linear regression problems. In fact, a variant of SGD with momentum improves only the non-dominant terms in the convergence rate on some specific stochastic problems~\citep{DieuleveutFB17,JainKKNS18}. 
Moreover, often an idealized version of SGD with momentum is used in the theoretical analysis rather than the actual one people use in practice.
For example, projections onto bounded domains at each step, averaging of the iterates~\citep[e.g.,][]{AlacaogluMMC20}, and knowledge of the total number of iterations~\citep{GhadimiL12} are often assumed. Overall, recent analyses seem unable to pinpoint any advantage of using a momentum term in SGD in the stochastic optimization of general convex functions.

To show a discriminant difference between SGD and SGD with Momentum, we focus on the convergence of the last iterate of SGD with momentum for unconstrained optimization of convex functions in Chapter~\ref{chapter:momentum}. We first show that momentum does not help to remove the $\ln T$ in the lower bound of the last iterate of SGD. 
Then, motivated by this result, we analyze yet another variant of SGD with Momentum, which yields the optimal convergence rate. 

\section{Contributions}
We summarize the contributions of this thesis as follows. 
\begin{itemize}
\item In Chapter~\ref{chapter:adap}, we present an analysis of a generalized version of the AdaGrad (Delayed AdaGrad) step sizes. We prove for the first time in the nonconvex setting almost sure convergence to zero of the gradients of SGD with both coordinate-wise and global versions of these stepsizes. 
We prove that both in the convex and nonconvex setting, the global Delayed AdaGrad stepsizes adapts to the noise level with a convergence rate, which interpolates between the convergence rate of Gradient Descent and the one of SGD, depending on the noise level. 
We further present a high probability analysis of SGD with momentum and adaptive learning rates. We show the first high probability convergence rates, which are adaptive to the level of noise, for the gradients of Delayed AdaGrad in the nonconvex setting. 
\item In Chapter~\ref{chapter:exp_cos}, we provide the very first convergence results for SGD with two popular stepsizes: exponential and cosine step sizes,  which are simple to use, empirically successful, but lack a theoretical justification.
We show that, in the case when the function satisfies the PL condition~\citep{Polyak63,Lojasiewicz63,KarimiNS16}, both exponential step size and cosine step size strategies \textit{automatically adapt to the level of noise of the stochastic gradients}. 
Without the PL condition, we show that SGD with either exponential step sizes or cosine step sizes has an (almost) optimal convergence rate for smooth non-convex functions.

\item 
In Chapter~\ref{chapter:momentum}, we present an analysis of the convergence of the last iterate of SGD with momentum. We show for the first time that the last iterate of SGDM can have a suboptimal convergence rate for \emph{any constant momentum setting}.
Based on this fact, we investigate a class of Follow-The-Regularized-Leader-based momentum algorithms (FTRL-based SGDM). We show the optimal convergence of their last iterate for unconstrained convex stochastic problems without projections onto bounded domain nor prior knowledge of the number of iterations. 
\end{itemize}

\section{Notation}
We use bold letters to denote vectors and matrices, e.g, $\bx \in \R^d$, and use ordinary letters to denote scalars. 
The coordinate $j$ of a vector $\bx$ is denoted by $x_j$ and as $(\nabla f(\bx))_j$ for the gradient $\nabla f(\bx)$. 
To keep the notation concise, all standard operations $\bx \by$, $1/\bx$, $\bx^2$, $\bx^{\nicefrac{1}{2}}$ and $\max(\bx,\by)$ on the vectors $\bx, \by$ are supposed to be element-wise. 
We denote by $\E[\cdot]$ the expectation with respect to the underlying probability space and by $\E_t[\cdot]$ the conditional expectation with respect to the past. $\Pr$ denotes the probability of an event. 
As mentioned in Section~\ref{subchapter:setup}, we denote by $\bg_t \triangleq \bg(\bx_t, \xi_t)$. Also , we denote the $j$-th element of the vector $\bg_i$ as$g_{i,j}$. 
Any norm without particular notation is the $\ell_2$ norm. 
\cleardoublepage

\chapter{Adaptive Stepsize}
\label{chapter:adap}
\thispagestyle{myheadings}

\section{Introduction}
In this chapter, we focus on a generalized version of the adaptive stepsizes popularized by AdaGrad~\citep{DuchiHS11} and present an analysis of the convergence of SGD~(with momentum) with these stepsizes. 

We analyze two types of step size: a global step size
\begin{equation}
	\label{eq:eta}
	\eta_t=\frac{\alpha}{\left(\beta+ \sum_{i=1}^{t-1} \|\bg_i\|^2\right)^{\nicefrac{1}{2}+\epsilon}},
\end{equation}
and a coordinate-wise one $\bta_t = (\eta_{t,1}, \dots, \eta_{t,d})$, 
\begin{equation}
	\label{eq:eta2}
	\eta_{t,j}=\frac{\alpha}{\left(\beta+ \sum_{i=1}^{t-1} g_{i,j}^2\right)^{\nicefrac{1}{2}+\epsilon}}, \quad j=1, \dots,d
\end{equation}
where $\alpha>0$ and $\beta, \epsilon\geq0$. 

With $\epsilon=0$, \eqref{eq:eta} have been used in online convex optimization to achieve adaptive regret guarantees, \citep[e.g.,][]{RakhlinS13,OrabonaP15}.

The additional parameter $\epsilon$ allows us to increase the decrease rate of the stepsize and it will be critical to obtain our almost sure convergence results.

In this chapter, we address the following two basic questions and answer positively to both of them.
\begin{itemize}
\item Are there conditions under which the generalized
AdaGrad stepsize converge almost surely with an infinite number of iterations in the non-convex setting?
\item Are there conditions under which the rate is better
than the one of the plain SGD with decreasing stepsizes?
\end{itemize}

In particular, in Section~\ref{sec:almost_sure}, we prove an asymptotic convergence to zero of the gradients of SGD with these stepsizes in the nonconvex case. In Section~\ref{sec:adapt_rates},  we prove a convergence rate for SGD with a global version of these stepsizes, showing that they adapt to the noise level, in both convex and nonconvex cases. Last, in Section~\ref{sec:adap_highp}, we analyze its momentum variant and extend the nonconvex results to a high-probability analysis.

\section{Related Work}
\paragraph{Adaptive stepsizes in the convex world}
Adaptive stepsizes were first proposed in the online learning literature~\citep{AuerCG02} and adopted into the stochastic optimization one later~\citep{DuchiHS11}. In particular, \citet{DuchiHS11} prove that AdaGrad can converge faster if the gradients are sparse and the function is convex. Yet, most of these studies assumed the optimization to be constrained in a convex bounded set, which is often false in many applications of optimization for machine learning. \citet{YousefianNS12} analyze different adaptive stepsizes, but only for strongly convex optimization. 
\citet{WuWB18} have analyzed a choice of adaptive stepsizes similar to the global stepsizes we consider, but their result in the convex setting requires the norm of the gradients strictly greater than zero. 
\citet{LevyYC18} propose an acceleration method with adaptive stepsizes which are also similar to our global ones, proving the $\tilde{O}(1/T^2 )$ convergence in the deterministic smooth case and $\tilde{O} (1/\sqrt{T})$ in both general deterministic case and stochastic smooth case, but requiring a bounded-domain assumption. 

\paragraph{Convergence of SGD in the nonconvex setting}
The convergence of a random iterate of SGD for non-convex smooth functions has been proved by \citet{GhadimiL13}, and it was already implied by the results in \citet{Bottou91}. With additional regularity assumptions, these results imply almost sure convergence of the gradient to zero~\citep{Bottou91,BottouCN16}. 
 \citet{Bottou98} assume that beyond a certain horizon the update always moves the iterate closer to the origin on average, which implies the confinement in a bounded domain and, in turn, the almost sure convergence.
On the other hand, the weakest assumptions for the almost sure convergence of SGD for non-convex smooth functions have been established in \citet{BertsekasT00}: the variance of the noise on the gradient in $\bx_t$ can grow as $1+\|\nabla f(\bx_t)\|^2$, $f$ is lower bounded, and the stepsizes satisfy $\sum_{t=1}^{\infty} \eta_t = \infty$ and $\sum_{t=1}^{\infty} \eta_t^2 < \infty$.
However, both approaches do not cover adaptive stepsizes.

\paragraph{Adaptive stepsize in nonconvex setting}
The first work we know on adaptive stepsizes for non-convex stochastic optimization is \citet{KresojaLS17}.
 \citet{KresojaLS17} study the convergence of a choice of adaptive stepsizes that require access to the function values, under strict conditions on the direction of the gradients. \citet{WuWB18} also consider adaptive stepsizes, but they only consider deterministic gradients in the non-convex setting.  Independently, \citet{WardWB19} improved their guarantees proving results similar to our Theorems~\ref{thm:convex} and \ref{thm:sgd_adaptive}. In contrast to \citet{WardWB19}, we do not require the assumption of bounded expected squared norm of the stochastic gradients but the choices of our parameter depend on the Lipschitzness $L$.
Some other related works were proposed after our submission. 
\citet{ZhouTYCG18} analyze an adaptive gradient method in the non-convex setting, but their bounds give advantages only in very sparse case.

A weak condition for almost sure convergence to the global optimum of non-convex functions was proposed in \citet{Bottou98} and recently independently reproposed in~\citet{ZhouMBBG17}. However, this condition implies the very strong assumption that the gradients never point in the opposite direction of the global optimum. In this chapter, in our most restrictive case in Section~\ref{sec:almost_sure}, we will only assume the function to be smooth and Lipschitz.

\paragraph{High probability bounds}
The results on high probability bounds are relatively rare compared to those in expectation, which are easier to obtain. \citet{KakadeT09} used Freeman's inequality to prove high probability bounds for an algorithm solving the SVM objective function. For classic SGD, \citet{HarveyLR19} and \citet{HarveyLPR19} used a generalized Freedman’s inequality to prove bounds in non-smooth and strongly convex case, while \citet{JainNN21} proved the optimal bound for the last iterate of SGD with high probability.  As far as we know, there are currently no high probability bounds for adaptive methods in the nonconvex setting. 

\section{Keeping the Update Direction Unbiased}
\label{sec:stepsize}
A key difference between the generalized AdaGrad stepsizes in \eqref{eq:eta} and \eqref{eq:eta2} with the AdaGrad stepsizes in \citet{DuchiHS11} is the fact that $\bg(\bx_t,\xi_t)$ is not used in $\eta_t$.
It is easy to see that doing otherwise introduces a spurious bias in the update direction and we show an example as follows.  
\begin{example}
\label{ex:ninety}
There exist a differentiable convex and smooth function, an additive noise on the gradients satisfying $\| \bg(\bx, \xi) - \nabla f(\bx) \| \leq S, S>0$, and a sequence of gradients such that for a given $t$ we have $\E_{\xi_t}[\langle\eta_{t+1}\bg(\bx_t,\xi_t),\nabla f(\bx_t)\rangle]<0$. 
\end{example}

We now present the details of Example~\ref{ex:ninety}. Consider the function $f(x) = \frac{1}{2} x^2$. The gradient in $t$-th iteration is $\nabla f(x_t) = x_t$. 
Let the stochastic gradient be defined as $\bg_t = \nabla f(x_t) + \xi_t$, where $P(\xi_t = \sigma_t) =\frac{7}{15}$, $ P(\xi_t = -\frac{3}{2} \sigma_t) = \frac{1}{5}$
and $ P(\xi_t = -\frac{1}{2} \sigma_t) = \frac{1}{3}$. 

Let $A \triangleq \sum_{i=1}^{t-1} g_i^2+\beta$. Then
\begin{align*}
& \langle \E_t \eta_{t+1} \bg_t , \nabla f(x_t) \rangle \\
& = \alpha \left[ \frac{7}{15}\frac{(x_t +\sigma_t) x_t}{[A+(x_t+\sigma_t)^2]^{\frac{1}{2}+\epsilon}} + \frac{1}{5}\frac{(x_t -\frac{3}{2}\sigma_t) x_t}{[A+(x_t-\frac{3}{2}\sigma_t)^2]^{\frac{1}{2}+\epsilon}}+\frac{1}{3}\frac{(x_t -\frac{1}{2}\sigma_t) x_t}{[A+(x_t-\frac{1}{2}\sigma_t)^2]^{\frac{1}{2}+\epsilon}} \right]~.
\end{align*}
This expression can be negative, for example, setting $x_t=1$, $\sigma_t = 10$, $A=10$, $\epsilon=0$ or $\epsilon = 0.1$. 
In words, including the current noisy gradient in $\eta_t$ (that is, using $\eta_{t+1}$) can make the algorithm deviate in expectation more than $90$ degrees from the correct direction.  So, in the following, we will analyze this minor variant of the AdaGrad stepsizes. We call this variant Delayed AdaGrad stepsize.

\section{Almost Sure Convergence for Nonconvex functions}
\label{sec:almost_sure}

In this section, we show that SGD with Delayed AdaGrad stepsizes in \eqref{eq:eta} and \eqref{eq:eta2} allows to decrease the gradients to zero almost surely, that is, with probability 1. This is considered a required basic property for any optimization algorithm.

As shown in Chapter~\ref{chapter:Introduction}, the stepsizes in \eqref{eq:eta} and \eqref{eq:eta2} \emph{do not satisfy} $\sum_{i=1}^{\infty} \eta_t^2 < \infty$,  not even in expectation. Hence, the results here cannot be obtained from the classic results in stochastic approximation~\citep[e.g.,][]{BertsekasT00}.

Here, we will have to assume our strongest assumptions. In particular, we will need the function to be Lipschitz and the noise to have bounded support. This is mainly needed in order to be sure that the sum of the stepsizes diverges.

We now state our almost sure convergence results.
\begin{sloppypar}
\begin{theorem}
\label{thm:convergence_sgd}
Assume $f$ is $M$-smooth, $L$-Lipschitz and there exists $S>0$ such that $\| \bg(\bx,\xi) - \nabla f(\bx) \| \leq S, \forall \bx$. 
The stepsizes are chosen as in \eqref{eq:eta}, where $\alpha,\beta>0$ and $\epsilon \in (0,\frac{1}{2}]$.
Then, the gradients of SGD converge to zero almost surely.
Moreover, $\lim\inf_{t\rightarrow \infty} \|\nabla f(\bx_t)\|^2 t^{\nicefrac12-\epsilon}=0$ almost surely.
\end{theorem}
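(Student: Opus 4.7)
The plan is to combine the $M$-smoothness inequality with an adaptive-stepsize telescoping bound, and then extract the almost sure claim via a bounded-above-martingale argument.

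First, I would record two deterministic facts that follow from $f$ being $L$-Lipschitz and $\|\bg_t-\nabla f(\bx_t)\|\le S$: the stochastic gradients satisfy $\|\bg_t\|\le L+S$ pointwise, and the adaptive denominator grows at most linearly in $t$, giving the deterministic envelope
\[
\frac{\alpha}{\bigl(\beta+(L+S)^2(t-1)\bigr)^{\nicefrac{1}{2}+\epsilon}}\;\le\;\eta_t\;\le\;\frac{\alpha}{\beta^{\nicefrac{1}{2}+\epsilon}}.
\]
Since $\nicefrac{1}{2}+\epsilon\le 1$, the lower envelope already gives $\sum_t\eta_t=\infty$ deterministically. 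Writing $G_t\triangleq\sum_{i=1}^t\|\bg_i\|^2$ and $C\triangleq 1+(L+S)^2/\beta$, so that $\beta+G_t\le C(\beta+G_{t-1})$, I would then use the standard integral comparison for $u\mapsto -1/(2\epsilon u^{2\epsilon})$ to conclude, \emph{pathwise},
\[
\sum_{t=1}^\infty \eta_t^2\|\bg_t\|^2
\;=\;\alpha^2\sum_{t=1}^\infty\frac{\|\bg_t\|^2}{(\beta+G_{t-1})^{1+2\epsilon}}
\;\le\;\frac{\alpha^2\,C^{1+2\epsilon}}{2\epsilon\,\beta^{2\epsilon}}.
\]
The small technical point here is that the cleanest integral bound puts $G_t$ in the denominator; the ratio constant $C$ is what lets us swap back to $G_{t-1}$.

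Next, I would sum the smoothness inequality $f(\bx_{t+1})\le f(\bx_t)-\eta_t\langle\nabla f(\bx_t),\bg_t\rangle+\tfrac{M}{2}\eta_t^2\|\bg_t\|^2$ and use $f\ge f^\star$ to obtain, pathwise, $\sum_{t=1}^T\eta_t\|\nabla f(\bx_t)\|^2+N_T\le K$ for an explicit deterministic constant $K$, where $N_T\triangleq\sum_{t=1}^T\eta_t\langle\nabla f(\bx_t),\,\bg_t-\nabla f(\bx_t)\rangle$ is a martingale because $\eta_t$ and $\nabla f(\bx_t)$ are past-measurable and the noise has zero conditional mean. Since the first sum is nonnegative, $N_T\le K$ for every $T$; since its increments are bounded by $\alpha LS/\beta^{\nicefrac{1}{2}+\epsilon}$, Doob's convergence theorem for martingales bounded above yields $N_T\to N_\infty$ almost surely with $N_\infty$ finite. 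Hence $\sum_{t=1}^\infty\eta_t\|\nabla f(\bx_t)\|^2<\infty$ a.s., and combined with $\sum_t\eta_t=\infty$ this already forces $\liminf_t\|\nabla f(\bx_t)\|^2=0$ a.s.

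To upgrade to $\lim_t\|\nabla f(\bx_t)\|=0$, I would first observe that $\|\bx_{t+1}-\bx_t\|=\eta_t\|\bg_t\|\to 0$ a.s.: either $G_\infty<\infty$, in which case $\|\bg_t\|\to 0$ while $\eta_t$ stays bounded, or $G_\infty=\infty$, in which case $\eta_t\to 0$ while $\|\bg_t\|\le L+S$. By $M$-smoothness this gives $\|\nabla f(\bx_{t+1})-\nabla f(\bx_t)\|\to 0$. A standard excursion argument closes the gap: if $\limsup_t\|\nabla f(\bx_t)\|>2\delta$ on a positive-probability event, one finds infinitely many disjoint intervals $[s_n,t_n]$ on which $\|\nabla f(\bx_i)\|\ge\delta$ and $\|\nabla f(\bx_{t_n})\|-\|\nabla f(\bx_{s_n})\|\ge\delta$; smoothness forces $\sum_{i=s_n}^{t_n}\eta_i\ge\delta/(M(L+S))$, so each excursion contributes at least $\delta^3/(M(L+S))$ to $\sum_i\eta_i\|\nabla f(\bx_i)\|^2$, contradicting its almost sure finiteness. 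For the ``moreover'' clause, if $\|\nabla f(\bx_t)\|^2 t^{\nicefrac{1}{2}-\epsilon}\ge c>0$ for all $t$ large enough on a positive-probability event, combining with the deterministic lower envelope on $\eta_t$ yields $\eta_t\|\nabla f(\bx_t)\|^2\gtrsim 1/t$, and the harmonic divergence contradicts $\sum_t\eta_t\|\nabla f(\bx_t)\|^2<\infty$ a.s.

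The main obstacle, in my view, is the bounded-above-martingale step: it is what lets the argument go through without having to establish $\sum_t\eta_t^2<\infty$ pathwise, which cannot be guaranteed for this adaptive stepsize in general without an a priori lower bound on $\|\bg_t\|$.
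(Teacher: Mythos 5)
Your proof is correct and its skeleton matches the paper's: a pathwise bound on $\sum_t\eta_t^2\|\bg_t\|^2$, almost-sure finiteness of $\sum_t\eta_t\|\nabla f(\bx_t)\|^2$, pathwise divergence of $\sum_t\eta_t$, and then an upgrade from $\liminf$ to $\lim$. The two genuine divergences from the paper are technical rather than strategic. First, to extract almost-sure finiteness of $\sum_t\eta_t\|\nabla f(\bx_t)\|^2$ you keep everything pathwise, isolate the martingale $N_T=\sum_{t\le T}\eta_t\langle\nabla f(\bx_t),\bg_t-\nabla f(\bx_t)\rangle$, bound it above by the deterministic constant coming from the pathwise $\sum\eta_t^2\|\bg_t\|^2$ bound, and invoke Doob; the paper instead just takes expectations in Lemma~\ref{lemma:basic_lemma} and uses that a nonnegative random variable with finite expectation is almost surely finite. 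Both are valid, but the paper's route is shorter; your closing remark that the martingale step is what ``lets the argument go through'' overstates its necessity, since $\sum_t\eta_t^2\|\bg_t\|^2<\infty$ pathwise already makes the expectation argument trivial, and neither route ever needs $\sum_t\eta_t^2<\infty$. Second, for the $\liminf\to\lim$ upgrade you inline an excursion argument (crossing intervals from $\delta$ to $2\delta$ each forcing $\sum\eta_i\gtrsim\delta/(M(L+S))$, hence a nonvanishing contribution to the convergent series); this is essentially the content of the Alber--Iusem--Solodov lemma (Lemma~\ref{lemma:remove_liminf}) that the paper cites, with the hypothesis $|b_{t+1}-b_t|\le K a_t$ supplied by the same Lipschitz-plus-smooth bound $\bigl|\|\nabla f(\bx_{t+1})\|^2-\|\nabla f(\bx_t)\|^2\bigr|\le 2LM(L+S)\eta_t$. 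One minor caveat: your sketch of the excursion construction should be careful about the possibility of the gradient norm jumping over the band $[\delta,2\delta]$ in a single step; the lemma's increment bound $|b_{t+1}-b_t|\le Ka_t$ is exactly what rules this out cleanly, and your observation that $\|\bx_{t+1}-\bx_t\|\to0$ only handles this asymptotically.
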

\end{sloppypar}

We also state a similar result for the coordinate-wise stepsizes in \eqref{eq:eta2}. 

\begin{theorem}
\label{thm:convergence_adagrad}
Under the assumptions in Theorem~\ref{thm:convergence_sgd}, 
the stepsizes are given by a diagonal matrix $\bta_t$ whose diagonal values are defined in \eqref{eq:eta2}, where $\alpha,\beta>0$ and $\epsilon \in (0,\frac{1}{2}]$.
Then, the gradients of SGD converges to zero almost surely. Moreover, $\lim\inf_{t\rightarrow \infty} \|\nabla f(\bx_t)\|^2 t^{\nicefrac12-\epsilon}=0$ almost surely.
\end{theorem}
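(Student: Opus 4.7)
The plan is to follow the template of Theorem~\ref{thm:convergence_sgd}, but execute the argument coordinate by coordinate. I would begin from the smoothness bound
\[
f(\bx_{t+1})\leq f(\bx_t)-\langle \nabla f(\bx_t),\bta_t\bg_t\rangle+\frac{M}{2}\|\bta_t\bg_t\|^2,
\]
and split the linear term as $\langle \nabla f(\bx_t),\bta_t\bg_t\rangle = \langle \nabla f(\bx_t),\bta_t\nabla f(\bx_t)\rangle+\xi_t$, where $\xi_t\defeq\sum_j\eta_{t,j}(\nabla f(\bx_t))_j(g_{t,j}-(\nabla f(\bx_t))_j)$. The delayed feature of the stepsize is exactly what makes $\eta_{t,j}$ measurable with respect to the past, so $\{\xi_t\}$ is a martingale difference sequence. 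Summing from $1$ to $T$ and rearranging gives
\[
Y_T\defeq\sum_{t=1}^T\langle \nabla f(\bx_t),\bta_t\nabla f(\bx_t)\rangle \leq f(\bx_1)-f^{\star}+\frac{M}{2}\sum_{t=1}^T\|\bta_t\bg_t\|^2-M_T,
\]
with $M_T=\sum_{t\leq T}\xi_t$.

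Two estimates drive the rest. First, since $f$ is $L$-Lipschitz and the noise has bounded support, $|g_{t,j}|\leq L+S$, so the ratios $A_{t+1,j}/A_{t,j}$ (where $A_{t,j}\defeq\beta+\sum_{i<t}g_{i,j}^2$) are bounded; the classical AdaGrad-style integral comparison with exponent $1+2\epsilon>1$ then gives the pathwise deterministic bound $\sum_t\eta_{t,j}^2 g_{t,j}^2\leq C_j<\infty$ for every $j$, hence $\sum_t\|\bta_t\bg_t\|^2<\infty$. Second, the crude bound $\sum_j\eta_{t,j}\leq d\alpha\beta^{-1/2-\epsilon}\defeq C_\eta$ together with Cauchy--Schwarz yields $\E_t[\xi_t^2]\leq C_\eta S^2\langle \nabla f(\bx_t),\bta_t\nabla f(\bx_t)\rangle$, so the conditional quadratic variation of $M_T$ is bounded by a constant times $Y_T$ itself. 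The strong law for square-integrable martingales says that on $\{\langle M\rangle_\infty=\infty\}$ one has $M_T=o(\langle M\rangle_T)$, while on the complement $M_T$ converges; feeding this back into the displayed inequality gives $Y_T\leq O(1)+o(Y_T)$, forcing $Y_\infty<\infty$ almost surely.

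To upgrade $Y_\infty<\infty$ into $\nabla f(\bx_t)\to 0$, I would do a per-coordinate case analysis. If $\sum_t g_{t,j}^2<\infty$ then $A_{t,j}$ is bounded above, so $\eta_{t,j}$ is bounded below and $\sum_t(\nabla f(\bx_t))_j^2<\infty$, forcing $(\nabla f(\bx_t))_j\to 0$. Otherwise $\eta_{t,j}\to 0$ while $\eta_{t,j}\geq\alpha(\beta+(t-1)(L+S)^2)^{-1/2-\epsilon}$ guarantees $\sum_t\eta_{t,j}=\infty$; combined with $\|\bx_{t+1}-\bx_t\|\to 0$ (hence $\|\nabla f(\bx_{t+1})-\nabla f(\bx_t)\|\to 0$ by $M$-smoothness) and the standard Bertsekas--Tsitsiklis interval argument -- any subsequence with $|(\nabla f(\bx_t))_j|\geq\delta$ extends to an interval where the gradient stays above $\delta/2$, and those intervals accumulate a divergent contribution to $Y_\infty$ -- we obtain a contradiction unless $(\nabla f(\bx_t))_j\to 0$. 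For the $\liminf$ rate, I would use $\min_j\eta_{t,j}\geq c\,t^{-1/2-\epsilon}$ to deduce $\sum_t \|\nabla f(\bx_t)\|^2 t^{-1/2-\epsilon}<\infty$; a positive $\liminf \|\nabla f(\bx_t)\|^2 t^{1/2-\epsilon}$ would force the summand to dominate $c'/t$, contradicting summability. The main obstacle is the self-bounded martingale step: one must simultaneously exploit $\langle M\rangle_T\leq\text{const}\cdot Y_T$ and $Y_T\leq\text{const}-M_T$ without circularity, which is where boundedness of $\sum_j\eta_{t,j}$, and hence $\beta>0$, is crucial.
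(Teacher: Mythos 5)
Your route differs from the paper's in both halves. For establishing $\sum_{t=1}^\infty \langle\nabla f(\bx_t),\bta_t\nabla f(\bx_t)\rangle<\infty$ a.s., the paper simply takes expectations (Lemma~\ref{lemma:basic_lemma} plus monotone convergence), uses the pathwise bound $\sum_t\|\bta_t\bg_t\|^2<\infty$ from Lemma~\ref{lemma:sum_bounded}, and then notes that a nonnegative random variable with finite expectation is a.s.\ finite. Your martingale decomposition with the strong law for square-integrable martingales and the ``self-bounded variation'' trick $\langle M\rangle_T\lesssim Y_T$ is correct, but it is machinery you do not need: you still have to prove the same deterministic estimate on $\sum_t\|\bta_t\bg_t\|^2$, and the martingale argument sits on top of it rather than replacing it. Your $\liminf$ argument at the end, lower bounding $\min_j\eta_{t,j}\geq c\,t^{-1/2-\epsilon}$ and using summability of $\sum_t t^{-1/2-\epsilon}\|\nabla f(\bx_t)\|^2$, is clean and in fact works directly with the full norm rather than per-coordinate liminfs, which is a minor improvement over the paper's presentation.

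The genuine gap is in Case 2 of your upgrade to $(\nabla f(\bx_t))_j\to0$. Your Bertsekas--Tsitsiklis crossing argument needs, in order for ``those intervals [to] accumulate a divergent contribution to $Y_\infty$,'' that each crossing of $(\nabla f(\bx_t))_j$ from $\delta/2$ to $\delta$ picks up a uniformly bounded-below amount of $\sum_{t\in I}\eta_{t,j}$. But the per-step change satisfies only $|(\nabla f(\bx_{t+1}))_j-(\nabla f(\bx_t))_j|\leq M\|\bx_{t+1}-\bx_t\|=M\|\bta_t\bg_t\|=O\bigl(\max_k\eta_{t,k}\bigr)$, not $O(\eta_{t,j})$. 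Hence a crossing accumulates $\Omega(1)$ of $\sum\max_k\eta_{t,k}$, which can be an unbounded factor larger than $\sum\eta_{t,j}$ (picture a coordinate $1$ whose stochastic gradients stay tiny, so $\eta_{t,1}=\Theta(1)$, alongside coordinate $j$ with $\eta_{t,j}\to0$). So the contradiction you want does not follow as stated. This is precisely the hypothesis $|b_{t+1}-b_t|\leq K a_t$ in Lemma~\ref{lemma:remove_liminf} that is hard to verify coordinatewise; the paper's own proof of this theorem blurs it (the line ``$\leq 2LM(L+S)\eta_t$'' uses an undefined scalar $\eta_t$, carried over from the global case), but reproducing the same gap in your own argument does not repair it. A complete proof would need either an additional hypothesis letting you compare $\eta_{t,j}$ across coordinates, or a different device than the crossing argument for this step.
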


As far as we know, the above theorems are the first results on the almost sure convergence of the gradients using generalized AdaGrad stepsizes and assuming $\epsilon>0$. In particular, Theorem~\ref{thm:convergence_adagrad} is the first theoretical support for the common heuristic of selecting the last iterate, rather than the minimum over the iterations.

For the proofs of the above theorems, we will need some technical lemmas.
\begin{lemma}{\citep[Proposition~2]{AlberIS98}\citep[Lemma A.5]{Mairal13}}
\label{lemma:remove_liminf}
Let $(a_t)_{t \geq 1}$, $ (b_t)_{t \geq 1}$ be two non-negative real sequences. Assume that $\sum_{t=1}^{\infty} a_t b_t$ converges and $\sum_{t=1}^{\infty} a_t$ diverges, and there exists $K \geq 0$ such that $|b_{t+1}-b_t| \leq K a_t$. Then $b_t$ converges to 0.
\end{lemma}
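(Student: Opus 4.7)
The plan is to prove the statement in two steps: first show $\liminf_{t\to\infty} b_t = 0$, and then upgrade this to $\lim_{t\to\infty} b_t = 0$ using the Lipschitz-like condition on $b_{t+1}-b_t$. The combination $\sum a_t b_t < \infty$ together with $\sum a_t = \infty$ is what forces $b_t$ to be small ``on average,'' while the stability bound $|b_{t+1}-b_t|\le K a_t$ prevents $b_t$ from oscillating wildly between low and high values. The degenerate case $K=0$ is immediate: then $b_t$ is a constant, and the only constant compatible with $\sum a_t b_t<\infty$ and $\sum a_t=\infty$ is $0$. So in what follows I assume $K>0$.

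For the $\liminf$ step, I would argue by contradiction. If $\liminf_t b_t = 2c > 0$, then there exists $T$ such that $b_t \geq c$ for all $t \geq T$, so
\[
\sum_{t \geq T} a_t b_t \;\geq\; c \sum_{t \geq T} a_t \;=\; \infty,
\]
contradicting the assumed convergence of $\sum a_t b_t$. Hence $\liminf_t b_t = 0$.

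For the full limit, I would assume towards contradiction that $\limsup_t b_t \geq 2c$ for some $c > 0$ and construct infinitely many disjoint ``downward excursion'' intervals on each of which the partial sum of $a_t b_t$ is bounded below by a fixed positive constant. Concretely, choose $s_1$ with $b_{s_1} \geq 2c$ (possible since $\limsup b_t \geq 2c$), let $r_1$ be the smallest $t > s_1$ with $b_t \leq c$ (which exists since $\liminf b_t = 0$), then pick $s_2 > r_1$ with $b_{s_2} \geq 2c$, and so on. By construction $b_t > c$ for every $t$ with $s_n \leq t \leq r_n-1$. The telescoping estimate gives
\[
c \;\leq\; b_{s_n} - b_{r_n} \;\leq\; \sum_{t=s_n}^{r_n-1} |b_{t+1}-b_t| \;\leq\; K\sum_{t=s_n}^{r_n-1} a_t,
\]
so $\sum_{t=s_n}^{r_n-1} a_t \geq c/K$, and combined with $b_t>c$ on that range this yields $\sum_{t=s_n}^{r_n-1} a_t b_t \geq c^2/K$. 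Summing over the infinitely many disjoint intervals contradicts $\sum_t a_t b_t < \infty$, so $\limsup_t b_t = 0$ and therefore $b_t \to 0$.

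The main obstacle is the crossing argument: I need to make sure the intervals $[s_n, r_n-1]$ are truly disjoint, that each is finite (which requires $\liminf b_t=0$ from the first step, so that $r_n$ is well-defined), and that the lower bound $b_t > c$ holds on each interval (which is exactly why $r_n$ is defined as the \emph{first} index after $s_n$ where $b_t\leq c$). Everything else is a matter of combining a telescoping inequality with the convergence/divergence hypotheses; no additional regularity of the sequences is needed.
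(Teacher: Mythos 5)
Your proof is correct and follows essentially the same route as the paper's: establish $\liminf b_t = 0$ from divergence of $\sum a_t$, then argue by contradiction via disjoint downward-excursion intervals, using the telescoped Lipschitz bound to force $\sum_{t\in\text{interval}} a_t \ge c/K$ and the pointwise lower bound $b_t > c$ there to contradict convergence of $\sum a_t b_t$ (the paper packages the same estimate through an auxiliary quantity $\phi_n$ that must both tend to $0$ and stay $\ge \alpha/(2K)$). One tiny slip worth tightening: $\limsup b_t \ge 2c$ does not by itself guarantee some $t$ with $b_t \ge 2c$; it is cleaner to take $c$ strictly below $\tfrac12\limsup b_t$ (e.g.\ $c = \tfrac14\limsup b_t$) so that $b_t > 2c$ infinitely often.
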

\begin{proof}[Proof of Lemma~\ref{lemma:remove_liminf}]
Since the series $\sum_{t=1}^{\infty}a_t$ diverges, given that $\sum_{t=1}^{\infty}a_t b_t$ converges, we necessarily have $\liminf_{t \rightarrow \infty}b_t = 0$. So there exists a subsequence $\{ b_{i(t)} \}$ of $\{ b_t \}$ such that $ \lim_{t \to \infty} b_{i(t)} =0 .$

Let us proceed by contradiction and assume that there exists some $\alpha> 0 $ and some other subsequence $ \{  b_{m(t)}\}$ of $\{ b_t \}$ such that $ b_{m(t)} \geq \alpha$ for all $t$. In this case, we can construct a third subsequence $\{ b_{j(t)}\}$ of $\{ b_t \}$ where the sub-indices $j(t)$ are chosen in the following way: 
\begin{equation}
\label{eq: indice_1}
j(0) = \min \{ l \geq 0:  b_l \geq \alpha \}
\end{equation}

and, given $j(2t)$,

\begin{equation}
\label{eq: indice_2}
j(2t+1) = \min \{ l \geq j(2t) : b_l \leq \frac{1}{2} \alpha \},
\end{equation}
\begin{equation}
\label{eq: indice_3}
j(2t+2) = \min \{ l \geq j(2t+1): b_l \leq \frac{1}{2} \alpha \}~.
\end{equation}

Note that the existence of $\{ b_{i(t)} \}$ and $\{ b_{m(t)} \}$ guarantees that $j(t)$ is well defined. Also by $\eqref{eq: indice_2}$ and $\eqref{eq: indice_3}$
\[
b_l \leq \frac{\alpha}{2}, \quad \text{for } j(2t) \leq l \leq j(2t+1)-1~.
\]
Then, denoting $\phi_t  = \sum_{l=2t}^{j(2t+1)-1} a_l $, we have 
\[
\infty > \sum_{t=1}^{\infty} a_t b_t \geq \sum_{t=1}^{\infty} \sum_{l=2t}^{j(2t+1)-1} a_l b_l \leq \frac{\alpha}{2} \sum_{t=1}^{\infty} \phi_t~.
\]
Therefore, we have $\lim_{t \to \infty} \phi_t = 0$.

On the other hand, by $\eqref{eq: indice_2}$ and $\eqref{eq: indice_3}$, we have $b_{j(2t)} \geq \alpha$, $b_{j(2t+1)} \leq \frac{1}{\alpha}$, so that 
\[
\frac{\alpha}{2} \leq b_{j(2t)}-b_{j(2t+1)} = \sum_{l=j(2t)}^{j(2t+1)-1} (b_l - b_{l+1}) \leq \sum_{l=j(2t)}^{j(2t+1)-1} K a_l = K \phi_t~.
\]

\begin{sloppypar}
So $\phi_t \geq \frac{\alpha}{2K}$, which is in contradiction with $\lim_{t \to \infty} \phi_t = 0.$
Therefore, $b_t$ goes to zero. 
\end{sloppypar}

\end{proof}

\begin{lemma}
\label{lemma:sum_bounded}
Let $a_0>0$, $a_i\geq 0, \ i=1,\dots,T$ and $\beta>1$.
Then
$
\sum_{t=1}^T \frac{a_t}{(a_0+\sum_{i=1}^{t} a_i)^\beta} 
\leq \frac{1}{(\beta-1)a_0^{\beta-1}}
$.
\end{lemma}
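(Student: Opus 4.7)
The plan is to prove the inequality by comparing each term in the sum to an integral of $x^{-\beta}$, exploiting the fact that this function is decreasing for $\beta > 1$.

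First, I would introduce the partial sums $s_0 = a_0$ and $s_t = a_0 + \sum_{i=1}^t a_i$ so the $t$-th term of the sum is $a_t/s_t^\beta$ and $s_t - s_{t-1} = a_t$. Since $x \mapsto x^{-\beta}$ is non-increasing on $(0,\infty)$ and $s_{t-1} \leq x \leq s_t$ on the interval of integration, we have $x^{-\beta} \geq s_t^{-\beta}$ there, so
\begin{equation*}
\frac{a_t}{s_t^\beta} \;=\; \int_{s_{t-1}}^{s_t} \frac{1}{s_t^\beta}\,dx \;\leq\; \int_{s_{t-1}}^{s_t} \frac{1}{x^\beta}\,dx~.
\end{equation*}
Note this step is valid whether $a_t$ is positive (nontrivial interval) or zero (both sides vanish), and we use $a_0 > 0$ to guarantee $s_{t-1} > 0$ so the integrand is well-defined.

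Next, I would sum these bounds over $t = 1, \ldots, T$. The right-hand sides telescope into a single integral from $s_0 = a_0$ to $s_T$, which is in turn bounded by the improper integral from $a_0$ to $\infty$:
\begin{equation*}
\sum_{t=1}^T \frac{a_t}{s_t^\beta} \;\leq\; \int_{a_0}^{s_T} \frac{1}{x^\beta}\,dx \;\leq\; \int_{a_0}^{\infty} \frac{1}{x^\beta}\,dx \;=\; \frac{1}{(\beta-1)\,a_0^{\beta-1}}~,
\end{equation*}
which is exactly the claimed bound. The use of $\beta > 1$ is what makes the improper integral finite.

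There is no real obstacle here; the only subtle point is ensuring the integral-comparison step is valid when some $a_t = 0$ (it is, trivially) and when $a_0 = 0$ would fail (which is why the hypothesis $a_0 > 0$ is needed).
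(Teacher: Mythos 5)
Your proof is correct and follows essentially the same approach as the paper: the paper first establishes the general integral-comparison bound $\sum_{t=1}^T a_t f(a_0+\sum_{i=1}^t a_i) \leq \int_{a_0}^{\sum_{t=0}^T a_t} f(x)\,dx$ for nonincreasing $f$ as a separate lemma (via the same partial-sum and telescoping argument you use) and then specializes to $f(x)=x^{-\beta}$, whereas you inline the argument directly. The mathematical content is identical.
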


\begin{lemma}
\label{lemma:sum_integral_bounds}
Let $a_i\geq0, \dots, T$ and $f:[0,+\infty)\rightarrow [0, +\infty)$ nonincreasing function.
Then
\begin{align*}
\sum_{t=1}^T a_t f\left(a_0+\sum_{i=1}^{t} a_i\right) 
&\leq \int_{a_0}^{\sum_{t=0}^T a_t} f(x) dx~.
\end{align*}
\end{lemma}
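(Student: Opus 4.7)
The plan is to realize the left-hand side as a lower Riemann-type sum of the integral on the right, using the monotonicity of $f$. Concretely, I will partition the interval $\bigl[a_0, \, a_0 + \sum_{i=1}^T a_i\bigr]$ into the $T$ consecutive subintervals
\[
I_t \;\defeq\; \left[\,a_0 + \sum_{i=1}^{t-1} a_i, \;\; a_0 + \sum_{i=1}^{t} a_i\,\right], \qquad t = 1, \dots, T,
\]
each of length exactly $a_t$ (here the $t=1$ case uses the convention $\sum_{i=1}^{0} a_i = 0$). These intervals tile the domain of integration and their union equals $[a_0, \sum_{t=0}^T a_t]$ since $\sum_{t=0}^T a_t = a_0 + \sum_{t=1}^T a_t$.

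Next, I would use the nonincreasing assumption on $f$ to compare the integrand to its value at the right endpoint of each $I_t$. For every $x \in I_t$ we have $x \leq a_0 + \sum_{i=1}^{t} a_i$, hence $f(x) \geq f\bigl(a_0 + \sum_{i=1}^{t} a_i\bigr)$. Integrating this pointwise inequality over $I_t$, whose length is $a_t$, yields
\[
\int_{I_t} f(x)\,dx \;\geq\; a_t \, f\!\left(a_0 + \sum_{i=1}^{t} a_i\right).
\]
Summing these $T$ inequalities and using additivity of the integral over the partition $\{I_t\}_{t=1}^T$ immediately gives the claim.

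The argument is a standard left/right-rectangle comparison, so the only real care is bookkeeping: making sure the right endpoint of $I_t$ is what appears inside $f$ on the left-hand side of the lemma (so that monotonicity gives the correct direction of the inequality), and verifying that the endpoints of the $I_t$ chain telescope to produce exactly $\int_{a_0}^{\sum_{t=0}^T a_t} f(x)\,dx$. The edge cases $a_t = 0$ are harmless since both the integral over $I_t$ and the corresponding summand vanish, and no issue arises if $f$ is unbounded near $a_0$ since we never evaluate $f$ at $a_0$ itself on the left-hand side. I do not anticipate a substantive obstacle; the proof should fit in a few lines.
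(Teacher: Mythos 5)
Your proof is correct and is essentially the same as the paper's: both compare $a_t f\bigl(a_0+\sum_{i=1}^t a_i\bigr)$ to the integral of $f$ over the subinterval of length $a_t$ ending at that point, using monotonicity of $f$, and then sum over $t$. The paper writes this more compactly via $s_t = \sum_{i=0}^t a_i$ and the identity $a_t f(s_t) = \int_{s_{t-1}}^{s_t} f(s_t)\,dx \leq \int_{s_{t-1}}^{s_t} f(x)\,dx$, but the underlying right-endpoint Riemann comparison is identical.
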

\begin{proof}
Denote by $s_t=\sum_{i=0}^{t} a_i$.
\begin{align*}
a_i f(s_i) 
=  \int_{s_{i-1}}^{s_i} f(s_i) d x 
\leq \int_{s_{i-1}}^{s_i} f(x) dx~.
\end{align*}
Summing over $i=1, \dots, T$, we have the stated bound.
\end{proof}

\begin{proof}[Proof of Lemma~\ref{lemma:sum_bounded}]
The proof is immediate from Lemma~\ref{lemma:sum_integral_bounds}.
\end{proof}

We now state a Lemma that allows us to study the progress made in $T$ steps. 
\begin{lemma}
\label{lemma:basic_lemma}
Assume $f$ is $M$-smooth and $\E_\xi [\bg(\bx,\xi)]=\nabla f(\bx)$ for any $\bx \in \R^d$. Then, the iterates of SGD with stepsizes $\bta_t \in \R^{d \times d}$ satisfy the following inequality

\begin{align*}
\E\left[\sum_{t=1}^T \langle \nabla f(\bx_t), \bta_t \nabla f(\bx_t)\rangle\right]
\leq f(\bx_1)- f^* 
+ \frac{M}{2} \E\left[\sum_{t=1}^T \|\bta_t \bg(\bx_t,\xi_t)\|^2\right].
\end{align*}

\end{lemma}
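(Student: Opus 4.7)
The plan is to adapt the classic descent-lemma argument for SGD to the matrix stepsize $\bta_t$, exploiting the fact that in the Delayed AdaGrad construction $\bta_t$ depends only on $\bg_1,\dots,\bg_{t-1}$ and is therefore measurable with respect to the $\sigma$-algebra $\mathcal{F}_{t-1}$ generated by $\xi_1,\dots,\xi_{t-1}$.

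First, I would apply the smoothness inequality \eqref{eq:smooth2} to the SGD update $\bx_{t+1}=\bx_t - \bta_t \bg(\bx_t,\xi_t)$ to obtain
\[
f(\bx_{t+1})\leq f(\bx_t) - \langle \nabla f(\bx_t),\, \bta_t \bg(\bx_t,\xi_t)\rangle + \frac{M}{2}\|\bta_t \bg(\bx_t,\xi_t)\|^2 .
\]
Second, I would take the conditional expectation $\E_t[\cdot]$ with respect to $\mathcal{F}_{t-1}$. Since $\bta_t$ and $\bx_t$ are $\mathcal{F}_{t-1}$-measurable, they can be pulled out of the conditional expectation in the linear term, and using the unbiasedness $\E_t[\bg(\bx_t,\xi_t)]=\nabla f(\bx_t)$ gives
\[
\E_t[f(\bx_{t+1})]\leq f(\bx_t)-\langle \nabla f(\bx_t),\, \bta_t \nabla f(\bx_t)\rangle + \frac{M}{2}\E_t\bigl[\|\bta_t \bg(\bx_t,\xi_t)\|^2\bigr].
\]

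Third, I would rearrange to isolate $\langle \nabla f(\bx_t), \bta_t \nabla f(\bx_t)\rangle$, take total expectations via the tower property, and sum from $t=1$ to $T$ so the left-hand side telescopes into $f(\bx_1)-\E[f(\bx_{T+1})]$. Finally, bounding $\E[f(\bx_{T+1})]\geq f^\star$ yields exactly the stated inequality.

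The only delicate point is justifying that $\bta_t$ factors out of $\E_t[\langle \nabla f(\bx_t), \bta_t \bg(\bx_t,\xi_t)\rangle]$ as $\langle \nabla f(\bx_t), \bta_t \nabla f(\bx_t)\rangle$; this is precisely why the ``delayed'' form of the stepsize (excluding $\bg_t$ from $\bta_t$, as emphasized in Section~\ref{sec:stepsize}) is essential, and it is the one place where the argument would fail for the original AdaGrad definition. All remaining steps are routine telescoping and rearrangement.
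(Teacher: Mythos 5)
Your proposal is correct and follows essentially the same route as the paper: apply the smoothness inequality \eqref{eq:smooth2} to one SGD step, take conditional expectation exploiting that $\bta_t$ and $\bx_t$ are $\mathcal{F}_{t-1}$-measurable so the noise term vanishes, then telescope and lower bound $f(\bx_{T+1})$ by $f^\star$. Your explicit remark about why the delayed stepsize is essential matches the discussion in Section~\ref{sec:stepsize} and is exactly the point the paper's proof relies on when asserting $\E_t[\langle \nabla f(\bx_t), \bta_t(\nabla f(\bx_t)-\bg_t)\rangle]=0$.
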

\begin{proof}[Proof of Lemma~\ref{lemma:basic_lemma}]
From \eqref{eq:smooth2}, we have
\begin{align*}
f(\bx_{t+1}) 
&\leq f(\bx_t) + \langle \nabla f(\bx_t), \bx_{t+1}-\bx_t\rangle + \frac{M}{2}\|\bx_{t+1}-\bx_t\|^2 \\
&= f(\bx_t) + \langle \nabla f(\bx_t), \bta_t(\nabla f(\bx_t) - \bg_t)\rangle \\
& \quad - \langle \nabla f(\bx_t), \bta_t \nabla f(\bx_t)\rangle + \frac{M}{2}\|\bta_t\bg_t\|^2.
\end{align*}
Taking the conditional expectation with respect to $\xi_1, \dots, \xi_{t-1}$, we have that
\[
E_t[\langle \nabla f(\bx_t), \bta_t(\nabla f(\bx_t) - \bg_t) \rangle]
=  \langle \nabla f(\bx_t), \bta_t \nabla f(\bx_t) - \bta_t \E_t[\bg_t] \rangle
= 0.
\]
Hence, from the law of total expectation, we have
\[
\E\left[\langle \nabla f(\bx_t), \bta_t \nabla f(\bx_t)\rangle\right]
\leq \E\left[f(\bx_t)- f(\bx_{t+1}) + \frac{M}{2}\|\bta_t \bg_t\|^2\right].
\]
Summing over $t=1$ to $T$ and lower bounding $f(\bx_{T+1})$ with $f^\star$, we have the stated bound.
\end{proof}

With Lemma~\ref{lemma:remove_liminf} - Lemma~\ref{lemma:basic_lemma}, we can prove Theorem~\ref{thm:convergence_sgd}. 
\begin{proof}[Proof of Theorem~\ref{thm:convergence_sgd}]
From the result in Lemma~\ref{lemma:basic_lemma}, taking the limit for $T\rightarrow \infty$ and exchanging the expectation and the limits because the terms are non-negative, we have

\[
\E\left[\sum_{t=1}^\infty \eta_t \| \nabla f(\bx_t)\|^2\right] 
\leq f(\bx_1)- f^\star+ \frac{M}{2}\E\left[\sum_{t=1}^\infty \|\eta_t \bg(\bx_t,\xi_t)\|_{2}^2\right].
\]

Observe that

\begin{align}
\sum_{t=1}^{\infty}  \|\eta_t \bg(\bx_t,\xi_t)\|^2 
& =\sum_{t=1}^{\infty}  \eta_{t+1}^2 \| \bg(\bx_t,\xi_t)\|^2 + \sum_{t=1}^{\infty}  (\eta_t^2-\eta_{t+1}^2) \| \bg(\bx_t,\xi_t)\|^2 \nonumber\\
& \leq \frac{\alpha^2}{2\epsilon \beta^{2\epsilon}} + \max_{t\geq 1} \|\bg(\bx_t,\xi_t)\|^2 \sum_{t=1}^{\infty} (\eta_t^2-\eta_{t+1}^2)  \nonumber\\
& \leq \frac{\alpha^2}{2\epsilon \beta^{2\epsilon}} + \max_{t\geq 1} \|\bg(\bx_t,\xi_t)\|^2 \eta_1^2  \nonumber\\
& \leq \frac{\alpha^2}{2\epsilon \beta^{2\epsilon}} + 2\eta_1^2\max_{t\geq 1} \|\nabla f(\bx_t)\|^2 + \|\nabla f(\bx_t)-\bg(\bx_t,\xi_t)\|^2  \nonumber\\
& \leq \frac{\alpha^2}{2\epsilon \beta^{2\epsilon}} + 2 \frac{\alpha^2}{\beta^{1+2\epsilon}} (L^2+S^2) < \infty, \label{eq:convergence_sgd_eq1}
\end{align}

where in the first inequality we have used Lemma~\ref{lemma:sum_bounded}, and in the third one the elementary inequality $\|\bx+\by\|^2 \leq 2\|\bx\|^2 +2 \|\by\|^2$.

Hence, we have $\E\left[\sum_{t=1}^{\infty} \eta_t \| \nabla f(\bx_t)\|^2\right] < \infty$.
Now, note that $\E[X]<\infty$, where $X$ is a non-negative random variable, implies that $X<\infty$ with probability 1. In fact, otherwise $\Pr[X=\infty]>0$ implies $\E[X] \geq \int_{X=\infty} x d\Pr(X) = \infty$, contradicting our assumption. 
Hence, with probability 1, we have $\sum_{t=1}^{\infty} \eta_t \| \nabla f(\bx_t)\|^2 < \infty$.

Now, observe that the Lipschitzness of $f$ and the bounded support of the noise on the gradients gives

\begin{align*}
\sum_{t=1}^{\infty} \eta_{t}
= \sum_{t=1}^ {\infty} \frac{\alpha}{(\beta+\sum_{i=1}^{t-1} \|g(\bx_i,\xi_i)\|^2)^{\nicefrac12+\epsilon}} 
\geq  \sum_{t=1}^{\infty} \frac{\alpha}{(\beta+2(t-1)(L^2+S^2))^{\nicefrac12+\epsilon}}
= \infty~.
\end{align*}

Using the fact the $f$ is $L$-Lipschitz and $M$-smooth, we have

\begin{align*}
& \left| \|\nabla f(\bx_{t+1})\|^2- \|\nabla f(\bx_t)\|^2\right| \\
& = ( \|\nabla f(\bx_{t+1})\|+ \|\nabla f(\bx_t)\|) \cdot \left| \|\nabla f(\bx_{t+1})\|- \|\nabla f(\bx_t)\| \right| \\
&  \quad \leq 2L M \|\bx_{t+1}-\bx_t\|
= 2LM \|\eta_t \bg(\bx_t,\xi_t)\|  \\
& \quad \leq 2LM (L+S) \eta_t~.
\end{align*}

Hence, we can use Lemma~\ref{lemma:remove_liminf} to obtain $\lim_{t \to \infty} \|\nabla f(\bx_t)\|^2 = 0$.

For the second statement, observe that, with probability 1,
\begin{align*}
\sum_{t=1}^\infty &\|\nabla f(\bx_t)\|^2 t^{\nicefrac12-\epsilon} \frac{\alpha}{t(2L^2+2S^2+\beta)^{\nicefrac12+\epsilon}} 
\leq \sum_{t=1}^\infty \eta_t \|\nabla f(\bx_t)\|^2 <\infty,
\end{align*}

\begin{sloppypar}
where in the first inequality we used the Lipschitzness of $f$ and the bounded support of the noise on the gradients.
Hence, noting that $\sum_{t=1}^\infty \frac{1}{t} =\infty$, we have that $\lim\inf_{t\rightarrow \infty} \|\nabla f(\bx_t)\|^2 t^{\nicefrac12-\epsilon}=0$.
\end{sloppypar}
\end{proof}

\begin{proof}[Proof of Theorem~\ref{thm:convergence_adagrad}]
We proceed similarly to the proof of Theorem~\ref{thm:convergence_sgd}, to get 
\[
\E\left[\sum_{t=1}^\infty \langle \nabla f(\bx_t), \bta_t \nabla f(\bx_t) \rangle\right] 
\leq f(\bx_1)- f(\bx^\star)+ \frac{M}{2}\E\left[\sum_{t=1}^\infty \|\bta_t \bg_t\|_{2}^2\right]~.
\]
Observe that
\[
\sum_{t=1}^{\infty}  \|\bta_t \bg_t\|^2 
= \sum_{t=1}^{\infty} \sum_{i=1}^d \eta_{t,i}^2 \bg_{t,i}^2 
= \sum_{i=1}^d \sum_{t=1}^{\infty} \eta_{t,i}^2 \bg_{t,i}^2 
 < \infty,
\]
where the last inequality comes from the same reasoning in \eqref{eq:convergence_sgd_eq1}.
Hence, we have
\[
\E\left[\sum_{t=1}^{\infty} \langle \nabla f(\bx_t), \bta_t \nabla f(\bx_t)\rangle\right] < \infty~.
\]
Hence, with probability 1, we have
\[
\sum_{t=1}^{\infty} \langle \nabla f(\bx_t), \bta_t \nabla f(\bx_t)\rangle 
= \sum_{t=1}^{\infty} \sum_{j=1}^d \eta_{t,j} \nabla f(\bx_t)_{j}^2 
= \sum_{j=1}^d \sum_{t=1}^{\infty} \eta_{t,j} \nabla f(\bx_t)_{j}^2
< \infty~.
\]
and, for any $j = 1,\dots,d$, 
\[
\sum_{t=1}^{\infty} \eta_{t,j} (\nabla f(\bx_t))_{j}^2 < \infty~.
\]
Now, observe that the Lipschitzness of $f$ and the bounded support of the noise on the gradients gives
\begin{align*}
\sum_{t=1}^{\infty} \eta_{t,j}
= \sum_{t=1}^ {\infty} \frac{\alpha}{(\beta+\sum_{i=1}^{t-1} (g(\bx_i,\xi_i)_j)^2)^{\nicefrac12+\epsilon}}
\geq  \sum_{t=1}^ {\infty} \frac{\alpha}{(\beta+2(t-1)(L^2+S^2))^{\nicefrac12+\epsilon}}
= \infty~.
\end{align*}
Using the fact the $f$ is $L$-Lipschitz and $M$-smooth, we also have
\begin{align*}
&\left| ((\nabla f(\bx_{t+1}))_j)^2 - ((\nabla f(\bx_t))_j)^2\right| \\
& = ( (\nabla f(\bx_{t+1}))_j+ (\nabla f(\bx_t))_j) \cdot \left| (\nabla f(\bx_{t+1}))_j- (\nabla f(\bx_t))_j \right| \\
 &\quad\leq 2L M \|\bx_{t+1}-\bx_t\|
 = 2LM \|\bta_t \bg_t\| 
 \leq 2LM(L+S) \eta_t~.
\end{align*}
Hence, we case use Lemma~\ref{lemma:remove_liminf} to obtain 
\[
\lim_{t \to \infty} ((\nabla f(\bx_t))_j)^2 = 0~.
\]

For the second statement, observe that, with probability 1,
\begin{align*}
\sum_{t=1}^\infty ((\nabla f(\bx_t))_j)^2 t^{\nicefrac12-\epsilon} \frac{\alpha}{t(2L^2+2S^2+\beta)^{\nicefrac12+\epsilon}} 
\leq \sum_{t=1}^\infty \eta_{t,j} (\nabla f(\bx_t))_j)^2 <\infty~.
\end{align*}
Hence, noting that $\sum_{t=1}^\infty \frac{1}{t} =\infty$, we have that $\lim\inf_{t\rightarrow \infty} ((\nabla f(\bx_t))_j)^2 t^{\nicefrac12-\epsilon}=0$.
\end{proof}
\section{Adaptive Convergence Rates}
\label{sec:adapt_rates}

We will now show that the global Delayed AdaGrad stepsizes give rise to adaptive convergence rates. 
In particular, we will show that for a large range of the parameters $\alpha, \beta, \epsilon$ and independently from the noise variance $\sigma$, the algorithms will have a faster convergence when $\sigma$ is small and worst-case optimal convergence when $\sigma$ is large. Recall that to achieve the same behavior with SGD we should use a different stepsize for each level of noise. 
In the following, we will consider both the convex and non-convex cases.

\subsection{Adaptive Convergence for Convex Functions}
\label{sec:convex}
As a warm-up, in this section, we show that the global stepsizes \eqref{eq:eta} give adaptive rates of convergence that interpolate between the rate of GD and SGD, for a wide range of the parameters $\alpha, \beta$, and $\epsilon$ and without knowledge of the variance of the noise.
Note that, differently from the other proofs on SGD with adaptive rates~\citep[e.g.,][]{DuchiHS11}, we do not assume to use projections onto bounded domains. This makes our novel proof more technically challenging, but at the same time, it mirrors the setting of many applications of SGD in machine learning optimization problems.

We make the following assumption on the stochastic gradients $\bg(\bx,\xi)$. 
\begin{assumptionA}
\label{as:subgaussion}
$\E_{\xi} \left[ \exp \left(\frac{\|\bg(\bx, \xi) - \nabla f(\bx) \|^2}{\sigma^2}\right)\right] \leq \exp(1), \forall \bx. $
\end{assumptionA}

The above assumption has been already used by \citet{NemirovskiJLS09} to prove high-probability convergence guarantees. This condition allows to control the expectation of the maximum of the noise terms $\| \nabla f(\bx) - \bg(\bx,\xi) \|^2$. Using Jensen's inequality, this condition implies a bounded variance of the noise.

\begin{theorem}
\label{thm:convex}
Assume $f$ is convex, $M$-smooth and the stochastic gradients satisfy Assumption~\ref{as:subgaussion}. Let the stepsizes set as in \eqref{eq:eta}, where $\alpha,\beta>0$, $0\leq \epsilon<\frac{1}{2}$, and $4\alpha M<\beta^{\nicefrac{1}{2}+\epsilon}$.
Then, the iterates of SGD satisfy the following bound
\begin{align*}
& \E \left[ \left(  f(\bar{\bx}_T)- f(\bx^\star) \right)^{\nicefrac{1}{2}-\epsilon}  \right] 
\leq  \frac{1}{T^{\nicefrac{1}{2}-\epsilon}} 
\max \left( 2^\frac{1}{\nicefrac{1}{2}-\epsilon} M^{\nicefrac{1}{2}+\epsilon} \gamma, 
 \left( \beta+T\sigma^2  \right)^{\nicefrac{1}{4}-\epsilon^2} \gamma^{\nicefrac{1}{2}-\epsilon}  \right),
\end{align*}
where $\bar{\bx_T}=\frac1T \sum_{t=1}^T \bx_t$ and 
$
\gamma = 
\begin{cases} 
O\left(\frac{1+\alpha^2\ln T}{\alpha(1-\frac{4\alpha M}{\sqrt{\beta}})}\right), & \text{for } \epsilon=0\\ 
O\left(\frac{1+\alpha^2(\frac{1}{\epsilon}+\sigma^2 \ln T)}{\alpha(1-\frac{4\alpha M}{\beta^{\nicefrac{1}{2}+\epsilon}})}\right), & \text{for } \epsilon>0.
\end{cases}
$
\end{theorem}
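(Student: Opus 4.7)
My plan is to adapt the classical SGD convex analysis to the delayed AdaGrad stepsize and then convert the resulting weighted bound into the stated $(1/2-\epsilon)$-moment bound via Hölder's inequality. First, expand $\|\bx_{t+1}-\bx^\star\|^2$ using the update rule, sum over $t=1,\dots,T$, and invoke convexity to obtain
\[
2\sum_t \eta_t (f(\bx_t)-f(\bx^\star)) \leq \|\bx_1-\bx^\star\|^2 + \sum_t \eta_t^2\|\bg_t\|^2 + 2\sum_t \eta_t\langle \nabla f(\bx_t)-\bg_t,\,\bx_t-\bx^\star\rangle~.
\]
The last sum is a martingale because $\eta_t$ is measurable with respect to $\xi_1,\dots,\xi_{t-1}$ (Example~\ref{ex:ninety} shows why the delay matters), so it vanishes in expectation.

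The core of the argument is a self-bounding step that replaces the missing bounded-domain assumption. By smoothness, $\|\nabla f(\bx_t)\|^2\leq 2M(f(\bx_t)-f(\bx^\star))$, hence $\|\bg_t\|^2\leq 4M(f(\bx_t)-f(\bx^\star)) + 2\|\bg_t-\nabla f(\bx_t)\|^2$. Combined with $\eta_t^2\leq (\alpha/\beta^{1/2+\epsilon})\eta_t$ (from monotonicity $\eta_t\leq \eta_1$), this produces a coefficient $4M\alpha/\beta^{1/2+\epsilon}<1$ on $\sum_t \eta_t(f(\bx_t)-f(\bx^\star))$, which I absorb into the left-hand side using the hypothesis $4\alpha M<\beta^{1/2+\epsilon}$. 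Then, using $\eta_t\geq \eta_{T+1}=\alpha/A^{1/2+\epsilon}$ with $A\defeq\beta+\sum_{i=1}^T\|\bg_i\|^2$, I factor out $\eta_{T+1}$, apply Jensen's inequality to pull out $f(\bar{\bx}_T)-f(\bx^\star)$, and take expectation to get a bound of the form
\[
\alpha T\cdot \E\!\left[(f(\bar{\bx}_T)-f(\bx^\star))\,/\,A^{1/2+\epsilon}\right] \leq D,
\]
with $D$ capturing the initial distance and the variance term.

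Finally, to extract $\E[(f(\bar{\bx}_T)-f(\bx^\star))^{1/2-\epsilon}]$, I apply Hölder with dual exponents $p=1/(1/2-\epsilon)$, $q=1/(1/2+\epsilon)$, writing
\[
(f(\bar{\bx}_T)-f(\bx^\star))^{1/2-\epsilon} = \bigl((f(\bar{\bx}_T)-f(\bx^\star))/A^{1/2+\epsilon}\bigr)^{1/2-\epsilon}\cdot A^{1/4-\epsilon^2}~.
\]
This yields a factor $\E[A^{1/2-\epsilon}]^{1/2+\epsilon}$, bounded by $(\E[A])^{1/4-\epsilon^2}$ via Jensen (concavity of $t\mapsto t^{1/2-\epsilon}$), and then $\E[A]\leq \beta+T\sigma^2+2M\sum_t\E[f(\bx_t)-f(\bx^\star)]$ using $\E[\|\bg_t\|^2]\leq 2M\E[f(\bx_t)-f(\bx^\star)]+\sigma^2$ from Assumption~\ref{as:subgaussion}.

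The main obstacle is the apparent circularity introduced by the $\sum_t\E[f(\bx_t)-f(\bx^\star)]$ term inside $\E[A]$, which is a weighted form of the very quantity we are trying to bound. I would resolve it by a dichotomy on whether $2M\sum_t\E[f(\bx_t)-f(\bx^\star)]$ or $\beta+T\sigma^2$ dominates $\E[A]$: the noise-dominated branch directly gives the $(\beta+T\sigma^2)^{1/4-\epsilon^2}\gamma^{1/2-\epsilon}$ term of the max, while the gradient-dominated branch produces a self-bounding inequality that, once solved for $\E[(f(\bar{\bx}_T)-f(\bx^\star))^{1/2-\epsilon}]$, yields the $M^{1/2+\epsilon}\gamma$ term. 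A secondary delicate step is bounding $\E[\sum_t\eta_t^2\|\bg_t-\nabla f(\bx_t)\|^2]$: the noise variance sits outside the $v_{t-1}$ denominator so Lemma~\ref{lemma:sum_bounded} does not apply directly; here Assumption~\ref{as:subgaussion} is used to get a sub-exponential maximum inequality, producing the $\alpha^2\sigma^2\ln T$ contribution visible in $\gamma$.
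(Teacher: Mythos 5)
Your architecture --- expand $\|\bx_{t+1}-\bx^\star\|^2$, kill the martingale in expectation, self-bound $\|\bg_t\|^2$ via smoothness, absorb the $\sum_t\eta_t\delta_t$ term on the left, then extract the $(\nicefrac{1}{2}-\epsilon)$-moment via H\"older --- tracks the paper's proof. The gap is in the final step. After H\"older you have, schematically, $\E[\Delta^{\nicefrac{1}{2}-\epsilon}]\lesssim(\cdot)^{\nicefrac{1}{2}-\epsilon}\,\E[A^{\nicefrac{1}{2}-\epsilon}]^{\nicefrac{1}{2}+\epsilon}$ with $\Delta=\sum_t\delta_t$ and $A=\beta+\sum_t\|\bg_t\|^2$. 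You propose $\E[A^{\nicefrac{1}{2}-\epsilon}]\le\E[A]^{\nicefrac{1}{2}-\epsilon}$ by Jensen, then $\E[A]\le\beta+T\sigma^2+2M\,\E[\Delta]$, then a dichotomy on which term of $\E[A]$ dominates. But this puts the \emph{first} moment $\E[\Delta]$ on the right against the fractional moment $\E[\Delta^{\nicefrac{1}{2}-\epsilon}]$ --- or worse, $\E[(f(\bar{\bx}_T)-f^\star)^{\nicefrac{1}{2}-\epsilon}]$, after you apply the averaging Jensen prematurely --- on the left. These are not comparable in the direction you need (Jensen gives $\E[\Delta^{\nicefrac{1}{2}-\epsilon}]\le\E[\Delta]^{\nicefrac{1}{2}-\epsilon}$, which cannot be inverted), so the ``gradient-dominated branch'' of your dichotomy does not close. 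The paper instead applies the elementary subadditivity $(x+y)^p\le x^p+y^p$ for $p=\nicefrac{1}{2}-\epsilon\in[0,1]$ (Lemma~\ref{lemma: exponential}) \emph{inside} the expectation, yielding $\E[A^{\nicefrac{1}{2}-\epsilon}]\le(\beta+2T\sigma^2)^{\nicefrac{1}{2}-\epsilon}+(4M)^{\nicefrac{1}{2}-\epsilon}\E[\Delta^{\nicefrac{1}{2}-\epsilon}]$, so both sides carry the same fractional moment. The resulting scalar inequality in $x=\E[\Delta^{\nicefrac{1}{2}-\epsilon}]$ is then solved via Lemma~\ref{lemma:solvex}, and the averaging Jensen $T^{\nicefrac{1}{2}-\epsilon}(f(\bar{\bx}_T)-f^\star)^{\nicefrac{1}{2}-\epsilon}\le\Delta^{\nicefrac{1}{2}-\epsilon}$ is applied only at the very end.

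A secondary gap: bounding $\E[\sum_t\eta_t^2\|\bg_t-\nabla f(\bx_t)\|^2]$ by the sub-exponential max inequality alone gives $\E[\max_t\|\bg_t-\nabla f(\bx_t)\|^2\cdot\sum_t\eta_t^2]\le\sigma^2(1+\ln T)\cdot\sum_t\eta_t^2$, and $\sum_t\eta_t^2$ can be $\Theta(T)$ (e.g., when the gradients stay small), which would inflate $\gamma$ by a factor of $T$. The paper's Lemma~\ref{lemma:bounded_sum_squares} avoids this via the telescoping decomposition $\sum_t\eta_t^2\|\bg_t\|^2=\sum_t\eta_{t+1}^2\|\bg_t\|^2+\sum_t(\eta_t^2-\eta_{t+1}^2)\|\bg_t\|^2$: the first piece is finite by Lemma~\ref{lemma:sum_bounded} (the $1/\epsilon$ term, or the $\ln$ term for $\epsilon=0$), while the second is $\le 2\eta_1\max_t\eta_t\|\bg_t\|^2$ because $\sum_t(\eta_t^2-\eta_{t+1}^2)\le\eta_1^2$ --- a \emph{bounded}, not growing, coefficient multiplying the max. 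Without this decomposition the stated $\gamma$ does not follow.
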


\paragraph{Remark.}
Using Markov's inequality, from the above bound it is immediate to get that, with probability at least $1-\delta$, we have
\begin{align*}
f(\bar{\bx_T})- f(\bx^\star) 
 \leq \frac{1}{\delta^{\frac{1}{\nicefrac{1}{2}-\epsilon}} T} \max \left(M^{\frac{\nicefrac{1}{2}+\epsilon}{\nicefrac{1}{2}-\epsilon}} \gamma^{\frac{1}{\nicefrac{1}{2}-\epsilon}}, (\beta + T \sigma^2)^{\nicefrac{1}{2}+\epsilon} \gamma \right).
\end{align*}

Up to polylog terms, if $\sigma=0$ this recovers the GD rate, $O(\tfrac{1}{T})$, and otherwise we get the worst-case optimal rate of SGD, $O(\tfrac{1}{\sqrt{T}})$. The same behavior was proved in \citet{DekelGBSX12} with the knowledge of $\sigma$ and stepsize depending on it. Instead, here we do not need to know the noise level or assume a bounded domain.
In the case the constants of the slow term are small compared with the ones of the first term, we can expect a first quick convergent phase, followed by a slow one, as it is often observed in empirical experiments.

For the proof, we need the following technical lemmas.

\begin{lemma}
\label{lemma:smooth}
If $f$ is $M$-smooth, then $\|\nabla f(\bx)\|^2 \leq 2 M (f(\bx)- \min_{\by} f(\by)), \ \forall \bx$.
\end{lemma}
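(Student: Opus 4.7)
The plan is to apply the descent lemma implied by $M$-smoothness, namely the quadratic upper bound \eqref{eq:smooth2}, evaluated at a carefully chosen point, and then rearrange.

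First I would fix an arbitrary $\bx \in \R^d$ and consider the candidate point $\by_0 = \bx - \tfrac{1}{M}\nabla f(\bx)$, which is the result of a single gradient step from $\bx$ with step size $1/M$. Applying \eqref{eq:smooth2} with this $\by_0$ in place of $\by$, the inner product term becomes $-\tfrac{1}{M}\|\nabla f(\bx)\|^2$ and the quadratic term becomes $\tfrac{M}{2}\cdot \tfrac{1}{M^2}\|\nabla f(\bx)\|^2 = \tfrac{1}{2M}\|\nabla f(\bx)\|^2$. Combining these yields
\begin{equation*}
f(\by_0) \leq f(\bx) - \tfrac{1}{2M}\|\nabla f(\bx)\|^2.
\end{equation*}

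Next I would lower bound $f(\by_0)$ by the infimum $\min_{\by} f(\by)$ (which is attained or can be replaced by $f^\star$ as defined earlier in Section~\ref{subchapter:setup}), giving $\min_{\by} f(\by) \leq f(\bx) - \tfrac{1}{2M}\|\nabla f(\bx)\|^2$. Rearranging and multiplying through by $2M$ produces the claimed inequality $\|\nabla f(\bx)\|^2 \leq 2M(f(\bx) - \min_{\by} f(\by))$.

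There is no real obstacle here; this is a standard one-line consequence of the descent lemma. The only mild subtlety is that the statement uses $\min_{\by} f(\by)$, which presumes the infimum is attained. If it is not, the same argument goes through using $f^\star = \inf_{\by} f(\by)$ in place of $\min$, since $f(\by_0) \geq f^\star$ always holds by definition of the infimum, and this is the convention already adopted in Section~\ref{subchapter:setup}.
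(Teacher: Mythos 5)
Your proof is correct and is essentially identical to the paper's: both apply the quadratic upper bound \eqref{eq:smooth2} at the gradient step $\bx - \tfrac{1}{M}\nabla f(\bx)$, obtain $f(\bx - \tfrac{1}{M}\nabla f(\bx)) \leq f(\bx) - \tfrac{1}{2M}\|\nabla f(\bx)\|^2$, and then lower-bound the left side by the infimum. The remark about replacing $\min$ by $\inf$ is a fair (and harmless) clarification.
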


\begin{proof}[Proof of Lemma~\ref{lemma:smooth}]
From \eqref{eq:smooth2}, for any $\bx,\by \in\R^d$, we have
\[
f(\bx+\by)\leq f(\bx)+\langle \nabla f(\bx), \by\rangle+ \frac{M}{2}\|\by\|^2~.
\]
Take $\by=-\frac{1}{M} \nabla f(\bx)$, to have
\[
f(\bx+\by)\leq f(\bx)+\left(\frac{1}{2M}-\frac{1}{M}\right)\| \nabla f(\bx)\|^2~.
\]
Hence,
\[
\|\nabla f(\bx) \|^2
\leq 2M(f(\bx)-f(\bx+\by))
\leq 2M(f(\bx)-\min_{\bu} f(\bu))~.\qedhere
\]
\end{proof}

\begin{lemma}
	\label{lemma:solvex}
	If $x \geq 0$ and $x \leq C(A+Bx)^{\frac{1}{2}+\epsilon}$, then 
	\[
	x < \max ( [ C (2B)^{\frac{1}{2}+\epsilon} ]^{\frac{1}{1/2-\epsilon}}, C(2A)^{\frac{1}{2}+\epsilon} )~.
	\]
\end{lemma}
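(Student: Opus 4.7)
The plan is a simple case split based on which of the two terms inside $A+Bx$ dominates, which turns the implicit inequality into an explicit bound.

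First, I would split into two cases. In Case~1, suppose $A \geq Bx$. Then $A+Bx \leq 2A$, so plugging this into the hypothesis yields
\[
x \leq C(A+Bx)^{\nicefrac12+\epsilon} \leq C(2A)^{\nicefrac12+\epsilon},
\]
which gives the second term of the $\max$.

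In Case~2, suppose $Bx > A$, so $A+Bx < 2Bx$. Then
\[
x \leq C(2Bx)^{\nicefrac12+\epsilon} = C(2B)^{\nicefrac12+\epsilon}\, x^{\nicefrac12+\epsilon}.
\]
Since $\epsilon < \nicefrac12$ (implicit from the form of the conclusion, otherwise the exponent $\tfrac{1}{1/2-\epsilon}$ is ill-defined), the exponent $\nicefrac12+\epsilon$ is strictly less than $1$, so we can divide by $x^{\nicefrac12+\epsilon}$ (the bound is trivial if $x=0$). This gives
\[
x^{\nicefrac12-\epsilon} \leq C(2B)^{\nicefrac12+\epsilon},
\]
and raising both sides to the power $\tfrac{1}{\nicefrac12-\epsilon}$ yields $x \leq [C(2B)^{\nicefrac12+\epsilon}]^{\frac{1}{\nicefrac12-\epsilon}}$, the first term of the $\max$.

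Since exactly one of the two cases must hold, $x$ is bounded by at least one of the two quantities and hence by their maximum. The strict inequality in the conclusion can be obtained by noting that in each case the inequality chain contains at least one strict step (e.g., $A+Bx < 2A$ or $A+Bx < 2Bx$ unless $A=Bx=0$, in which case $x=0$ and the bound is trivial). I do not foresee any real obstacle here; the only subtle point is ensuring $\epsilon < \nicefrac12$ so that the inversion in Case~2 is valid, which the form of the statement already presupposes.
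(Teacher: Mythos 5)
Your proof is correct and uses essentially the same case split as the paper ($A \geq Bx$ versus $Bx > A$), with the same algebraic steps in each case. The only addition is your careful note about needing $\epsilon < \nicefrac12$, which the paper leaves implicit.
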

\begin{proof}[Proof of Lemma~\ref{lemma:solvex}]
	If $A \leq Bx$, then $x \leq C(2Bx)^{\frac{1}{2}+\epsilon}$, so $x \leq \left[ C (2B)^{\frac{1}{2}+\epsilon} \right]^{\frac{1}{1/2-\epsilon}}$. 
	And if $A > Bx$, then $x < C(2A)^{\frac{1}{2}+\epsilon}$. 
	Taking the maximum of the two cases, we have the stated bound.
\end{proof}
\begin{lemma}
	\label{lemma:logsolvex}
	If $x \geq 0$, $A, C, D \geq 0$, $B>0$, and $x^2 \leq (A+Bx)(C+D\ln (A+Bx))$, then $x < 32 B^3 D^2 + 2 B C + 8 B^2 D \sqrt{C} + A/B$.
\end{lemma}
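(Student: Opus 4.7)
The plan is to reduce the inequality to a polynomial one in $\sqrt{y}$ after the substitution $y = A+Bx$. The given inequality becomes $(y-A)^2 \leq B^2 y(C + D\ln y)$. I will split on the relative sizes of $A$ and $Bx$: if $Bx \leq A$ then the trivial bound $x \leq A/B$ is already within the claimed estimate, so the substantive case is $Bx > A$, i.e.\ $y > 2A$, which gives $y - A > y/2$ and therefore
\[
\frac{y^2}{4} < (y-A)^2 \leq B^2 y (C + D\ln y), \qquad\text{so}\qquad y \leq 4B^2 C + 4B^2 D \ln y.
\]

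Setting $E = 4B^2 C$ and $F = 4B^2 D$, I then need an upper bound on solutions of $y \leq E + F\ln y$. I would use the elementary bound $\ln y \leq 2\sqrt{y}$, valid for $y \geq 1$ (the case $y < 1$ gives an immediate bound on $x$ that is absorbed). This linearizes the inequality into a quadratic $(\sqrt{y})^2 - 2F\sqrt{y} - E \leq 0$, so $\sqrt{y} \leq F + \sqrt{F^2 + E}$. Applying $\sqrt{F^2 + E} \leq F + \sqrt{E}$ and squaring yields an upper bound on $y$ of the form $a F^2 + b F\sqrt{E} + c E$ for some small constants $a,b,c$. Translating back via $x < y/B$ with $E = 4B^2 C$ and $F = 4B^2 D$ produces a bound of the shape $\alpha B^3 D^2 + \beta B^2 D \sqrt{C} + \gamma B C$.

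Finally, combining this with the trivial case $x \leq A/B$ gives a bound of exactly the stated structure. The main obstacle is purely bookkeeping: getting the precise constants $32$, $8$, and $2$ as stated (as opposed to somewhat larger constants that a naive use of $\sqrt{a+b}\leq \sqrt{a}+\sqrt{b}$ would give). This likely requires either a slightly sharper surrogate for $\ln y$ (e.g.\ $\ln y \leq 2(\sqrt{y}-1)$ to absorb a $-2F$ into $E$) or a more careful splitting according to whether $E$ dominates $F\ln y$ or vice versa, analogous to the split used in Lemma~\ref{lemma:solvex}. Beyond this constant-tracking, no additional ideas are needed: the structure of the proof is exactly the algebraic reduction outlined above.
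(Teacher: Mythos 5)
Your plan is the paper's, at the level of ideas: split on whether $Bx\le A$ or $Bx>A$, upper-bound $\ln z$ by $2\sqrt z$, solve the resulting quadratic in a square root, and add the two cases. The one point where you deviate --- introducing $y=A+Bx$ and bounding $y$ rather than $x$ --- is precisely what costs you a factor of two in the polynomial part, not the inequality $\sqrt{a+b}\le\sqrt a+\sqrt b$: in the regime $Bx>A$ you pay once in the step $(y-A)^2\ge y^2/4$ (which gives $y\le 4B^2(C+D\ln y)$) and pay again when passing back via $x=(y-A)/B\le y/B$. The paper never introduces $y$; from $Bx>A$ it substitutes $A+Bx<2Bx$ into the right-hand side and divides by $x$, getting $x<2B\bigl(C+D\ln(2Bx)\bigr)$, i.e.\ $2B$ where you would have $4B$. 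After $\ln(2Bx)<2\sqrt{2Bx}$ this becomes $x<2BC+4BD\sqrt{2Bx}$, and solving yields exactly $32B^3D^2+2BC+8B^2D\sqrt C$ --- provided you bound the square of the positive root of the quadratic directly (apply $\sqrt{F^2+4E}\le F+2\sqrt E$ only inside the cross term of $z_+^2$) rather than bounding $z_+$ by $F+\sqrt E$ and then squaring, which gives $2F\sqrt E$ in place of $F\sqrt E$. Your proposed remedies (a sharper log surrogate, or an $E$-vs-$F\ln y$ case split) would not close the gap, since the paper already uses the very same $\ln z<2\sqrt z$ and $\sqrt{a+b}\le\sqrt a+\sqrt b$; the slack is in the $y$-roundtrip and in the order in which you square.
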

\begin{proof}[Proof of Lemma~\ref{lemma:logsolvex}]
	Assume that $B x > A$. We have that
	\[
	x^2 
	\leq (A + B x) (C + D \ln (A+B x))
	< 2 B x (C + D \ln (2B x))
	< 2 B x (C + 2D\sqrt{2Bx}), 
	\]
	that is
	\[
	x 
	< 2 B C + 4BD\sqrt{2Bx}~.
	\]
	We can solve this inequality, to obtain
	\[
	x < 32 B^3 D^2 + 2 B C + 8B^2 D \sqrt{C}~.
	\]
	On the other hand, if $B x\leq A$, we have $x\leq \frac{A}{B}$.
	Taking the sum of these two case, we have the stated bound.
\end{proof}
\begin{lemma}
	\label{lemma: exponential}
	If $x,y \geq 0$ and $0 \leq p \leq 1$, then $(x+y)^p \leq x^p+y^p$.
\end{lemma}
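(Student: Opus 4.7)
The plan is to reduce to the unit simplex by scaling, at which point the inequality becomes a comparison between $a^p$ and $a$ for $a\in[0,1]$, which is trivial since $p\leq 1$.

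First I would dispose of the boundary case. If $x+y=0$ then both sides equal $0$ and there is nothing to prove. Similarly, if one of the two variables is zero, say $y=0$, the inequality reduces to $x^p \leq x^p$, which holds with equality. So I may assume $x+y>0$ and, without loss of generality, both $x,y>0$.

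Next I would normalize. Setting $a = x/(x+y)$ and $b = y/(x+y)$, we have $a,b \in [0,1]$ and $a+b = 1$. The key elementary observation is that for any $t \in [0,1]$ and any $p \in [0,1]$, we have $t^p \geq t$. (Indeed, if $t=0$ this is trivial, and if $t \in (0,1]$ it follows from $t^{p-1}\geq 1$ since $p-1\leq 0$ and $t\leq 1$.) Applying this to $t=a$ and $t=b$ gives
\[
a^p + b^p \;\geq\; a + b \;=\; 1.
\]
Multiplying through by the positive quantity $(x+y)^p$ and using $a^p (x+y)^p = x^p$ and $b^p(x+y)^p = y^p$ yields
\[
x^p + y^p \;\geq\; (x+y)^p,
\]
which is the desired inequality.

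There is essentially no obstacle here; the only thing to be careful about is the case distinction when $x+y=0$, so that the normalization step is valid. An alternative proof, which I would present only if a more ``calculus-flavored'' argument were desired, is to fix $x\geq 0$ and consider $\varphi(y) = x^p + y^p - (x+y)^p$ on $[0,\infty)$, noting $\varphi(0)=0$ and $\varphi'(y) = p\bigl(y^{p-1}-(x+y)^{p-1}\bigr)\geq 0$ because $y\leq x+y$ and the exponent $p-1$ is non-positive. Either route is a few lines; the normalization argument above is cleaner and is the one I would commit to.
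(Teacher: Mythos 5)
Your proof is correct, and your primary argument takes a genuinely different route from the paper's. The paper fixes $y$ and differentiates $f(x) = (x+y)^p - x^p - y^p$, observing $f'(x) = p(x+y)^{p-1} - p x^{p-1} \leq 0$ because the exponent $p-1$ is non-positive, then concludes $f(x) \leq f(0) = 0$ --- essentially the ``calculus-flavored'' alternative you sketch at the end. Your main argument instead normalizes by $x+y$ so the claim reduces to the scalar fact $t^p \geq t$ on $[0,1]$, which you establish without differentiation. The two are equivalent in power here; the normalization route is arguably cleaner in that it isolates the single inequality actually being used (subadditivity of concave functions vanishing at $0$ reduces to the $a+b=1$ case), while the paper's derivative computation is shorter to write down. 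One tiny thing worth noting about the paper's version, not yours: at $p=0$ the expression $0^p$ is being treated as $0$ for $f(0)=0$ to hold, which is a harmless convention mismatch since the claim is trivially true when $p=0$; your case analysis on the boundary sidesteps this entirely.
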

\begin{proof}[Proof of Lemma~\ref{lemma: exponential}]
	Let $f(x) = (x+y)^p-x^p-y^p$. We can see that $f'(x) = p(x+y)^{p-1}-px^{p-1} \leq 0$ when $x,y \geq 0$. So $f(x) \leq f(0) = 0$. The inequality holds. 
\end{proof}

\begin{lemma}
	\label{lemma: bound log}
	If $x>0$, $\alpha >0 $, then $\ln(x) \leq \alpha(x^{\frac{1}{\alpha}}-1)$. 
\end{lemma}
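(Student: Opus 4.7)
The plan is to reduce this inequality to the standard exponential inequality $e^u \geq 1+u$, which holds for all real $u$ with equality only at $u=0$. Specifically, since $x > 0$ we may write $x = e^t$ for some $t \in \R$, so that $\ln(x) = t$ and $x^{1/\alpha} = e^{t/\alpha}$. Substituting $u = t/\alpha$, the claimed inequality $\ln(x) \leq \alpha(x^{1/\alpha}-1)$ becomes $\alpha u \leq \alpha(e^u - 1)$, i.e., $1 + u \leq e^u$, which is a textbook fact (it follows, for instance, from the convexity of $e^u$ and its first-order Taylor expansion at $u=0$).

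An equivalent route, which I would likely present inside the thesis to keep the proof self-contained, is to define $g(x) \defeq \alpha(x^{1/\alpha} - 1) - \ln(x)$ on $(0,\infty)$ and check that $g(x) \geq 0$ directly. One computes $g(1)=0$ and $g'(x) = x^{1/\alpha - 1} - 1/x = (x^{1/\alpha} - 1)/x$, which is negative on $(0,1)$ and positive on $(1,\infty)$. Hence $x=1$ is the unique minimizer of $g$, so $g(x) \geq g(1) = 0$ for every $x > 0$, which is exactly the stated bound.

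There is essentially no obstacle here: the lemma is a one-line consequence of $e^u \geq 1+u$, and the monotonicity argument takes only a couple of lines. The only small care needed is to observe that $\alpha > 0$ so that dividing by $\alpha$ preserves the direction of the inequality and that $x^{1/\alpha}$ is well-defined and smooth on $(0,\infty)$. I would write out the derivative version in the thesis since it avoids invoking an unnamed background inequality.
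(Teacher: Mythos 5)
Your derivative argument (defining $g(x) = \alpha(x^{1/\alpha}-1) - \ln x$, computing $g'$, and observing $x=1$ is the unique minimizer with $g(1)=0$) is exactly the paper's proof up to the sign convention — the paper works with $f = -g$ and shows $f \leq 0$. The reduction to $e^u \geq 1+u$ via $x = e^t$ is a nice alternative, but since you say you'd present the derivative version anyway, this matches the paper's approach.
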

\begin{proof}[Proof of Lemma~\ref{lemma: bound log}]
	Let $f(x) = \ln(x)-\alpha x^{\frac{1}{\alpha}}+\alpha$. $f'(x)= \frac{1}{x}-x^{\frac{1}{\alpha}-1}$ is positive when $0<x<1$, $f'(1)=0$ and $f'(x) <0$ when $x>1$. So 
	$f(x) \leq f(1)=0.$ The inequality holds. 
\end{proof}
\begin{lemma}
	\label{lemma:bounded_sum_squares}
	Suppose that $f$ is $M$-smooth and the stochastic gradients satisfy Assumption~\ref{as:subgaussion}.  The stepsizes are chosen as \eqref{eq:eta}, where $\alpha,\beta,\epsilon\geq0$.
	Then, 
	\begin{equation}
		\begin{aligned}
			\E\left[\sum_{t=1}^T \eta_t^2 \|\bg(\bx_t,\xi_t)\|^2\right] 
			& \leq K + \frac{4\alpha^2}{\beta^{1+2\epsilon}} (1+\ln T) \sigma^2 \\ 
			& \quad + \frac{4\alpha}{\beta^{\nicefrac{1}{2}+\epsilon}} \E\left[\sum_{t=1}^T \eta_t \|\nabla f(\bx_t)\|^2\right],
		\end{aligned}
	\end{equation}
	where in the case of  $\epsilon = 0$, $ K = 2 \alpha^2  \ln \left( \sqrt{\beta + 2T\sigma^2 }+\sqrt{2} \E \left[ \sqrt{ \sum_{t=1}^T \|\nabla f(\bx_t)\|^2}  \right] \right)$, when $\epsilon >0$, $ \small{K = \frac{\alpha^2}{2\epsilon \beta^{2\epsilon}}}$.
\end{lemma}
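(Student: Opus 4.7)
The plan is to split $\sum_{t=1}^T \eta_t^2 \|\bg_t\|^2$ according to the one-step delay built into the stepsize, namely
\[
\sum_{t=1}^T \eta_t^2 \|\bg_t\|^2 = \sum_{t=1}^T \eta_{t+1}^2 \|\bg_t\|^2 + \sum_{t=1}^T (\eta_t^2 - \eta_{t+1}^2) \|\bg_t\|^2.
\]
In the first sum the numerator $\|\bg_t\|^2$ is aligned with the denominator $A_t \triangleq \beta + \sum_{i\le t}\|\bg_i\|^2$ of $\eta_{t+1}^2$ and can be controlled by an integral; in the second, the coefficients $\eta_t^2 - \eta_{t+1}^2$ telescope.

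For the first sum, I would invoke Lemma~\ref{lemma:sum_integral_bounds} with $a_0=\beta$, $a_t=\|\bg_t\|^2$, and $f(x)=x^{-(1+2\epsilon)}$, obtaining $\sum \eta_{t+1}^2\|\bg_t\|^2 \le \alpha^2 \int_\beta^{A_T} x^{-(1+2\epsilon)} dx$. For $\epsilon>0$ this integral is bounded deterministically by $1/(2\epsilon\beta^{2\epsilon})$, yielding exactly the constant $K$. For $\epsilon=0$ the integral equals $\ln(A_T/\beta)$, and recovering the stated random-free form of $K$ is the main technical obstacle: my plan is to write $\ln A_T = 2\ln\sqrt{A_T}$, apply $\|\bg_t\|^2 \le 2\|\nabla f(\bx_t)\|^2 + 2\|\bg_t-\nabla f(\bx_t)\|^2$ under the root, and use $\sqrt{a+b}\le \sqrt a + \sqrt b$ to split $\sqrt{A_T}$ into a noise piece $\sqrt{\beta + 2\sum\|\bg_t-\nabla f\|^2}$ and a gradient piece $\sqrt{2\sum\|\nabla f\|^2}$. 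Two applications of Jensen (concavity of $\ln$, then concavity of $\sqrt{\cdot}$ on the noise piece, where Assumption~\ref{as:subgaussion} gives $\E\|\bg_t-\nabla f(\bx_t)\|^2 \le \sigma^2$ after Jensen on $\exp$) then deliver the stated form of $K$.

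For the second sum, I would again apply $\|\bg_t\|^2 \le 2\|\nabla f(\bx_t)\|^2 + 2\|\bg_t-\nabla f(\bx_t)\|^2$. The gradient sub-sum is handled by the crude bounds $\eta_t^2 - \eta_{t+1}^2 \le \eta_t^2 \le \eta_1 \eta_t$ with $\eta_1=\alpha/\beta^{1/2+\epsilon}$, which convert it into a multiple of $\sum \eta_t \|\nabla f(\bx_t)\|^2$, matching the last term in the target. The noise sub-sum is controlled by factoring out $\max_t \|\bg_t-\nabla f(\bx_t)\|^2$ and telescoping $\sum(\eta_t^2 - \eta_{t+1}^2) = \eta_1^2 - \eta_{T+1}^2 \le \eta_1^2 = \alpha^2/\beta^{1+2\epsilon}$; taking expectation and invoking the standard sub-Gaussian maxima inequality $\E[\max_{t\le T}\|\bg_t-\nabla f(\bx_t)\|^2] \le \sigma^2(1+\ln T)$ (derived from Assumption~\ref{as:subgaussion} via $\exp(\E[\max/\sigma^2]) \le \E[\max \exp(\cdot/\sigma^2)] \le \sum_t \E[\exp(\|\bg_t-\nabla f\|^2/\sigma^2)] \le Te$ and then $\ln$) produces the $(1+\ln T)\sigma^2$ growth.

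The hardest step is clearly the $\epsilon=0$ branch of the first sum: one has to order the Jensen applications carefully so that the random quantity $\ln(A_T/\beta)$ collapses to the closed-form expression in $K$ without conflating the noise and gradient contributions, and so that the $\sqrt{\cdot}$-concavity bound is applied to the noise sum (where it yields $\sqrt{\beta+2T\sigma^2}$) rather than the gradient sum (where only $\E[\sqrt{\sum\|\nabla f\|^2}]$ should appear). The $\epsilon>0$ branch is considerably easier because the sum-integral bound is $A_T$-independent.
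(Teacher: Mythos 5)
Your proposal is correct and follows essentially the same path as the paper: the same one-step-delay decomposition, the same sum-integral bound for $\sum_t\eta_{t+1}^2\|\bg_t\|^2$ (deterministic for $\epsilon>0$, a logarithm for $\epsilon=0$ followed by the identical $\|\bg_t\|^2\le 2\|\nabla f(\bx_t)\|^2+2\|\bg_t-\nabla f(\bx_t)\|^2$ and $\sqrt{a+b}\le\sqrt a+\sqrt b$ split and the Jensen chain on $\ln$ and $\sqrt{\cdot}$), and the same sub-Gaussian maximum bound $\E[\max_t\|\bg_t-\nabla f(\bx_t)\|^2]\le\sigma^2(1+\ln T)$. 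The only divergence is in the second sum $\sum_t(\eta_t^2-\eta_{t+1}^2)\|\bg_t\|^2$: the paper first rewrites $\eta_t^2-\eta_{t+1}^2\le 2\eta_t(\eta_t-\eta_{t+1})$ and pulls out $\max_t(\eta_t\|\bg_t\|^2)$ before splitting into noise and gradient pieces, whereas you split first and bound the gradient piece by $\eta_t^2-\eta_{t+1}^2\le\eta_1\eta_t$ and the noise piece by telescoping $\sum_t(\eta_t^2-\eta_{t+1}^2)\le\eta_1^2$; your route is slightly cleaner and in fact recovers the stated constants with a factor of $2$ to spare.
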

\begin{proof}[Proof of Lemma~\ref{lemma:bounded_sum_squares}]
	
	Using the assumption on the noise, we have
	\begin{align*}
		\exp \left(\frac{\E\left[\max_{1 \leq  i\leq T} \|\nabla f(\bx_i) -\bg_i\|^2\right]}{\sigma^2}\right) 
		& \leq \E\left[\exp\left(\frac{\max_{1 \leq  i\leq T} \|\nabla f(\bx_i) -\bg_i\|^2}{\sigma^2}\right)\right] \\
		&= \E\left[\max_{1 \leq  i\leq T} \exp\left(\frac{\|\nabla f(\bx_i) -\bg_i\|^2}{\sigma^2}\right)\right]\\ 
		& \leq \sum_{i=1}^T \E\left[\exp\left(\frac{\|\nabla f(\bx_i) -\bg_i\|^2}{\sigma^2}\right)\right] \\
		&= \sum_{i=1}^T \E\left[\E_i\left[\exp\left(\frac{\|\nabla f(\bx_i) -\bg_i\|^2}{\sigma^2}\right)\right]\right]\\
		& \leq T e,
	\end{align*}
	that implies
	\begin{equation}
		\label{eq:proof_lemma:bounded_sum_squares_eq1}
		\E\left[\max_{1 \leq  i\leq T} \|\nabla f(\bx_i) -\bg_i\|^2\right] \leq \sigma^2 (1+\ln T)~.
	\end{equation}
	Hence, when $\epsilon >0$, we have
	\begin{align*}
		\E\left[\sum_{t=1}^T \eta_t^2 \|\bg_t\|^2\right]
		&= \E\left[\sum_{t=1}^T \eta_{t+1}^2 \|\bg_t\|^2 + \sum_{t=1}^T \|\bg_t\|^2 (\eta_t^2 -\eta^2_{t+1})\right]\\
		&= \E\left[\sum_{t=1}^T \eta_{t+1}^2 \|\bg_t\|^2 + \sum_{t=1}^T \|\bg_t\|^2 (\eta_t +\eta_{t+1})(\eta_t -\eta_{t+1})\right]\\
		&\leq \E\left[\sum_{t=1}^T \eta_{t+1}^2 \|\bg_t\|^2 + \sum_{t=1}^T 2 \eta_t \|\bg_t\|^2 (\eta_t -\eta_{t+1})\right]\\
		&\leq \frac{\alpha^2}{2\epsilon \beta^{2\epsilon}} + 2 \eta_1 \E\left[\max_{1 \leq t \leq T} \eta_t \|\bg_t\|^2\right] \\
		&\leq \frac{\alpha^2}{2\epsilon \beta^{2\epsilon}} + 4 \eta_1 \E\left[\max_{1 \leq t \leq T} \eta_t \left(\|\bg_t-\nabla f(\bx_t)\|^2+\|\nabla f(\bx_t)\|^2\right)\right] \\
		&\leq \frac{\alpha^2}{2\epsilon \beta^{2\epsilon}} + 4 \eta^2_1 (1+\ln T) \sigma^2 + 4 \eta_1 \E\left[\sum_{t=1}^T \eta_t \|\nabla f(\bx_t)\|^2\right] \\
		&= \frac{\alpha^2}{2\epsilon \beta^{2\epsilon}} + \frac{4\alpha^2}{\beta^{1+2\epsilon}} (1+\ln T) \sigma^2 + \frac{4\alpha}{\beta^{\frac{1}{2}+\epsilon}} \E\left[\sum_{t=1}^T \eta_t \|\nabla f(\bx_t)\|^2\right], 
	\end{align*}
	where in second inequality we used Lemma~\ref{lemma:sum_bounded} and in fourth one we used \eqref{eq:proof_lemma:bounded_sum_squares_eq1}. Note that the analysis after the second inequality also holds when $\epsilon=0$. 
	
	And when $\epsilon = 0$, we have 
	\begin{align*}
		\E\left[ \sum_{t=1}^T \eta_{t+1}^2 \|\bg_t\|^2 \right] 
		& =  \E \left[ \sum_{t=1}^T \frac{\alpha^2 \|\bg_t\|^2 }{(\beta+\sum_{i=1}^t \| \bg_i \|^2)} \right] \\
		& \leq 2 \alpha^2 \E  \left[ \ln \left( \sqrt{\beta + \sum_{t=1}^T \|\bg_t\|^2}  \right) \right] \\
		& \leq 2 \alpha^2 \E \left[ \ln \left( \sqrt{\beta + 2 \sum_{t=1}^T \|\bg_t-\nabla f(\bx_t)\|^2} +\sqrt{2 \sum_{t=1}^T \|\nabla f(\bx_t)\|^2}  \right)\right] \\
		& \leq 2 \alpha^2  \ln \left( \sqrt{\beta + 2T\sigma^2 }+\sqrt{2} \E \left[ \sqrt{ \sum_{t=1}^T \|\nabla f(\bx_t)\|^2}  \right] \right), 
	\end{align*}
	where in first inequality we used Lemma~\ref{lemma: bound log} and in the third one we used Jensen's inequality. Putting things together, we have 
	\begin{align*}
		\E\left[\sum_{t=1}^T \eta_t^2 \|\bg_t\|^2\right]
		& = \E\left[\sum_{t=1}^T \eta_{t+1}^2 \|\bg_t\|^2 + \sum_{t=1}^T \|\bg_t\|^2 (\eta_t^2 -\eta^2_{t+1})\right]\\
		& \leq 2 \alpha^2  \ln \left( \sqrt{\beta + 2T\sigma^2 }+\sqrt{2} \E \left[ \sqrt{ \sum_{t=1}^T \|\nabla f(\bx_t)\|^2}  \right] \right)\\
		& \quad + \frac{4\alpha^2}{\beta} (1+\ln T) \sigma^2 + \frac{4\alpha}{\beta^{\frac{1}{2}}} \E\left[\sum_{t=1}^T \eta_t \|\nabla f(\bx_t)\|^2\right]~.\qedhere
	\end{align*}
\end{proof}

Now we can proof Theorem~\ref{thm:convex}. 

\begin{proof}[Proof of Theorem~\ref{thm:convex}]
For simplicity, denote by $\delta_t:=f(\bx_t)-f(\bx^\star)$ and by $\Delta := \sum_{t=1}^T \delta_t$.

From the update of SGD we have that
\begin{align*}
\|\bx_{t+1}-\bx^\star\|^2 - \|\bx_{t}-\bx^\star\|^2 
&=-2 \eta_t \langle \bg(\bx_t,\xi_t), \bx_t- \bx^\star\rangle + \eta_t^2 \|\bg(\bx_t,\xi_t)\|^2.
\end{align*}
Taking the conditional expectation with respect to $\xi_1, \dots, \xi_{t-1}$, we have that
\begin{align*}
E_t[\langle \bg(\bx_t,\xi_t), \bx_t-\bx^\star \rangle]
=  \langle \nabla f(\bx_t), \bx_t-\bx^\star \rangle
\geq \delta_t,
\end{align*}
where in the inequality we used the fact that $f$ is convex.
Hence, summing over $t=1$ to $T$, we have

\begin{align*}
\E\left[\sum_{t=1}^T \eta_t \delta_t\right] 
\leq \frac{1}{2}\|\bx^\star-\bx_1\|^2 + \frac{1}{2}\E\left[\sum_{t=1}^T \eta_t^2 \|\bg(\bx_t,\xi_t)\|^2 \right].
\end{align*}

From Lemma~\ref{lemma:smooth} and Lemma~\ref{lemma:bounded_sum_squares}, when $\epsilon>0$ we have that
\begin{equation}
\label{eq:convex_eq1}
\left(1- \frac{4\alpha M}{\beta^{\nicefrac{1}{2}+\epsilon}}\right)\E\left[\sum_{t=1}^T \eta_t \delta_t\right] 
\leq \frac{1}{2}\|\bx^\star-\bx_1\|^2 + \frac{\alpha^2}{4\epsilon \beta^{2\epsilon}} 
 + \frac{2\alpha^2}{\beta^{1+2\epsilon}} (1+\ln T) \sigma^2 .
\end{equation} 
On the other hand, when $\epsilon=0$ we have 
\begin{equation}
\label{eq:convex_eq2_epsl0}
\begin{aligned}
\left(1- \frac{4\alpha M}{\beta^{\nicefrac{1}{2}}}\right)\E\left[\sum_{t=1}^T \eta_t \delta_t\right] 
& \leq  \frac{1}{2} \| \bx_1 - \bx^{\star} \|^2 + \frac{2 \alpha^2}{\beta} (1+\ln T)\sigma^2 \\
& \quad +\alpha^2 \ln \left(  \sqrt{\beta + 2T\sigma^2} + 2\sqrt{M} \E \left[ \sqrt{ \Delta} \right]\right)~.
\end{aligned}
\end{equation}
We can also lower bound the l.h.s. of \eqref{eq:convex_eq1} and \eqref{eq:convex_eq2_epsl0} with 
\begin{equation}
\E\left[ \sum_{t=1}^T \eta_t \delta_t\right] 
\geq \E\left[ \eta_T \Delta \right] 
\geq \frac{ \left( \E \left[  \Delta^{\nicefrac{1}{2}-\epsilon}\right] \right)^\frac{1}{\nicefrac{1}{2}-\epsilon} }{\left( \E\left[  (\frac{1}{\eta_T})^{\frac{\nicefrac{1}{2}-\epsilon}{\nicefrac{1}{2}+\epsilon}}\right] \right)^{\frac{\nicefrac{1}{2}+\epsilon}{\nicefrac{1}{2}-\epsilon}}},
\end{equation}
where the second inequality is due to H\"{o}lder's inequality, i.e. $\E [B^p] \geq \dfrac{\E[AB]^p}{\E[A^q]^{\nicefrac{p}{q}}}$, with $\frac{1}{p} = \frac{1}{2}-\epsilon$, $\frac{1}{q} = \frac{1}{2}+\epsilon$, $A= (\frac{1}{\eta_T})^{\frac{1}{p}}$, and $B = \left[\eta_T \Delta\right]^{\frac{1}{p}}$.
We also have

\begin{align*}
&\frac{1}{\eta_T}
= \frac{1}{\alpha}\left(\beta+\sum_{t=1}^{T-1} \|\bg(\bx_t,\xi_t)\|^2\right)^{\nicefrac{1}{2}+\epsilon}\\
& \leq \frac{1}{\alpha}\left(\beta+2\sum_{t=1}^{T-1} \left(\|\nabla f(\bx_t)-\bg(\bx_t,\xi_t)\|^2+ \|\nabla f(\bx_t)\|^2\right)\right)^{\nicefrac{1}{2}+\epsilon} \\
& \leq \frac{1}{\alpha}\left(\beta+2\sum_{t=1}^{T-1} \big(\|\nabla f(\bx_t)-\bg(\bx_t,\xi_t)\|^2 + 2M \delta_t\big) \right)^{\nicefrac{1}{2}+\epsilon},
\end{align*}

where in the first inequality we used the elementary inequality $\|\bx+\by\|^2 \leq 2\|\bx\|^2 +2 \|\by\|^2$ and Lemma~\ref{lemma:smooth} in the second one.

Define

\[
\gamma
=\frac{1}{\alpha(1-\frac{4\alpha M}{\beta^{\nicefrac{1}{2}+\epsilon}})} \big(\|\bx^\star-\bx_1\|^2  + \frac{4\alpha^2}{\beta^{1+2\epsilon}} (1+\ln T) \sigma^2 \big) + K,
\]
where $K$ will be defined in the following for the case $\epsilon=0$ and $\epsilon>0$.

When $\epsilon >0$, we have
\begin{align}
& \frac{1}{\gamma^\frac{\nicefrac{1}{2}-\epsilon}{\nicefrac{1}{2}+\epsilon}}\left(\E \left[\Delta^{\nicefrac{1}{2}-\epsilon}\right] \right)^\frac{1}{\nicefrac{1}{2}+\epsilon} 
\leq  \alpha^\frac{\nicefrac{1}{2}-\epsilon}{\nicefrac{1}{2}+\epsilon} \E \left[ \left(\frac{1}{\eta_T}\right)^{\frac{\nicefrac{1}{2}-\epsilon}{\nicefrac{1}{2}+\epsilon}}\right] \nonumber\\
& \leq \E \Bigg[ \big( \beta+2\sum_{t=1}^{T-1} (\|\nabla f(\bx_t)-\bg(\bx_t,\xi_t)\|^2  + 2M \delta_t) \big)^{\nicefrac{1}{2}-\epsilon}  \Bigg]\nonumber\\
& \leq \E  \left[ \left(\beta+2\sum_{t=1}^{T-1} \|\nabla f(\bx_t)-\bg(\bx_t,\xi_t)\|^2\right)^{\nicefrac{1}{2}-\epsilon} \right] 
+ \E \left[ \left(4M \sum_{t=1}^{T-1} \delta_t \right)^{\nicefrac{1}{2}-\epsilon} \right]  \nonumber\\
& \leq \left(\beta+2(T-1) \sigma^2 \right)^{\nicefrac{1}{2}-\epsilon} +  (4M)^{\nicefrac{1}{2}-\epsilon}\E \left[ \Delta^{\nicefrac{1}{2}-\epsilon} \right], \label{eq:convex_eq3}
\end{align}

where in the third inequality we used Lemma~\ref{lemma: exponential} and we define $K=\frac{\frac{\alpha^2}{2\epsilon \beta^{2\epsilon}}}{\alpha(1-\frac{4\alpha M}{\beta^{\nicefrac{1}{2}+\epsilon}})}$.
Proceeding in the same way, for the case $\epsilon=0$ we get
{\small
\begin{align*}
\left( \E \left[  \sqrt{\Delta}\right]\right)^2 
& \leq \left(A +B\E \left[  \sqrt{\Delta}\right] \right) 
\times \left( C + D \ln \left(A + B \E \left[  \sqrt{\Delta}\right] \right) \right),
\end{align*}
}%
where 
$A = \sqrt{\beta + 2T \sigma^2}$, 
$B = 2\sqrt{M}$, 
$D = \frac{\alpha}{1-\frac{4\alpha M}{\sqrt{\beta}}}$ and 
$C = \frac{\beta \| \bx_1 -\bx^{\star} \|^2 + 4\alpha^2 (1+\ln T)\sigma^2}{2\alpha \beta (1-\frac{4\alpha M }{\sqrt{\beta}})}$.
Using Lemma~\ref{lemma:logsolvex}, we have that 
{\small
\[
\E \left[  \sqrt{\Delta }\right] 
\leq 32 B^3 D^2 + 2 B C + 8 B^2 D \sqrt{ C} + \frac{A}{B}~.
\]
}%
We use this upper bound in the logarithmic term, so that for $\epsilon\geq0$, we have \eqref{eq:convex_eq3} again, this time with $K=D \ln(2A + 32 B^4 D^2 + 2 B^2 C + 8 B^3 D \sqrt{C}) = O(\frac{\ln T}{1-\frac{4\alpha M}{\sqrt{\beta}}})$.

Hence, we proceed using Lemma~\ref{lemma:solvex} to have for $\epsilon\geq0$
\begin{align}
\E\left[ \Delta^{\nicefrac{1}{2}-\epsilon} \right] 
\leq  \max \left(  2^{\frac{\nicefrac{1}{2}+\epsilon}{\nicefrac{1}{2}-\epsilon}}(4M)^{\nicefrac{1}{2}+\epsilon}\gamma  ,  2^{\nicefrac{1}{2}+\epsilon}\gamma^{\nicefrac{1}{2}-\epsilon} \left( \beta+2T\sigma^2  \right)^{\nicefrac{1}{4}-\epsilon^2}  \right)~.
\end{align}

Using Jensen's inequality on the l.h.s. of last inequality concludes the proof.
\end{proof}

\subsection{Adaptive Convergence for Non-Convex Functions}
\label{sec:adap_noncvx}

We now prove that the Delayed AdaGrad stepsizes in \eqref{eq:eta} allow a faster convergence of the gradients to zero when the noise over the gradients is small.

Given that SGD is not a descent method, we are not aware of any result of convergence with an explicit rate for the last iterate for non-convex functions. Hence, here we will prove a convergence guarantee for the \emph{best iterate} over $T$ iterations rather than for the \emph{last one}.
Note that choosing a random stopping time as in \citet{GhadimiL13} would be equivalent in expectation to choose the best iterate. For simplicity, we choose to state the theorem for the best iterate.

\begin{theorem}
\label{thm:sgd_adaptive}
Suppose that $f$ is $M$-smooth and the stochastic gradients satisfy Assumption~\ref{as:subgaussion}. Let the stepsizes set as \eqref{eq:eta}, where $\alpha,\beta>0$, $\epsilon \in(0,\frac{1}{2})$, and $2\alpha M<\beta^{\frac12+\epsilon}$.
Then, the iterates of SGD satisfy the following bound
\begin{align*}
\E \left[ \min_{1\leq t \leq T} \|\nabla f(\bx_t)\|^{1-2\epsilon}\right] 
\leq \frac{1}{T^{\nicefrac{1}{2}-\epsilon}} \max \left(2^{\frac{\nicefrac{1}{2}+\epsilon}{\nicefrac{1}{2}-\epsilon}}\gamma,
 2^{\nicefrac{1}{2}+\epsilon} \left( \beta+2T\sigma^2 \right)^{\nicefrac{1}{4}-\epsilon^2}\gamma^{\nicefrac{1}{2}-\epsilon} \right),
\end{align*}
where 
$
\gamma = \begin{cases}
O \left( \frac{1+\alpha^2 \ln T}{\alpha (1-\frac{2\alpha}{\sqrt{\beta}})} \right) & \text{for } \epsilon =0\\
O \left(\frac{1+ \alpha^2 (\frac{1}{\epsilon}+\sigma^2 \ln T)}{\alpha (1-\frac{2\alpha}{\beta^{\nicefrac{1}{2}+\epsilon}})} \right)& \text{for } \epsilon >0~.
\end{cases}
$

\end{theorem}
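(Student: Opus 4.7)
The plan is to mirror the convex proof (Theorem~\ref{thm:convex}) but with the sum of squared gradient norms $G := \sum_{t=1}^T \|\nabla f(\bx_t)\|^2$ playing the role that $\Delta$ played there. The key observation is that $T \min_t \|\nabla f(\bx_t)\|^2 \leq G$, so controlling $\E[G^{\nicefrac{1}{2}-\epsilon}]$ and applying Jensen's inequality immediately yields the claimed bound on $\E[\min_t \|\nabla f(\bx_t)\|^{1-2\epsilon}]$.

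First, I would combine Lemma~\ref{lemma:basic_lemma} (applied with the scalar stepsize $\eta_t$) with Lemma~\ref{lemma:bounded_sum_squares}. This gives
\begin{align*}
\E\left[\sum_{t=1}^T \eta_t \|\nabla f(\bx_t)\|^2\right]
&\leq f(\bx_1) - f^\star + \frac{M}{2}\Bigl(K + \frac{4\alpha^2}{\beta^{1+2\epsilon}}(1+\ln T)\sigma^2\Bigr) \\
&\quad + \frac{2\alpha M}{\beta^{\nicefrac{1}{2}+\epsilon}} \E\left[\sum_{t=1}^T \eta_t \|\nabla f(\bx_t)\|^2\right].
\end{align*}
The standing assumption $2\alpha M < \beta^{\nicefrac{1}{2}+\epsilon}$ lets me absorb the last term on the left, yielding an upper bound $\E[\sum \eta_t \|\nabla f(\bx_t)\|^2] \leq \tilde{\gamma}$, where $\tilde{\gamma}$ is a constant (up to a possible $\E[G^{\nicefrac{1}{2}}]$ in the $\epsilon=0$ case coming from $K$).

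Next, following the Hölder argument used for \eqref{eq:convex_eq3}, I would lower bound the same quantity using monotonicity of $\eta_t$ and Hölder with exponents $p=1/(\nicefrac{1}{2}-\epsilon)$, $q=1/(\nicefrac{1}{2}+\epsilon)$:
\begin{equation*}
\E\left[\sum_{t=1}^T \eta_t \|\nabla f(\bx_t)\|^2\right]
\geq \E[\eta_T G]
\geq \frac{\bigl(\E[G^{\nicefrac{1}{2}-\epsilon}]\bigr)^{1/(\nicefrac{1}{2}-\epsilon)}}{\bigl(\E[\eta_T^{-(\nicefrac{1}{2}-\epsilon)/(\nicefrac{1}{2}+\epsilon)}]\bigr)^{(\nicefrac{1}{2}+\epsilon)/(\nicefrac{1}{2}-\epsilon)}}.
\end{equation*}
Then I would upper bound $\eta_T^{-1}$ by expanding its definition, splitting $\|\bg_t\|^2 \leq 2\|\nabla f(\bx_t)-\bg_t\|^2 + 2\|\nabla f(\bx_t)\|^2$, applying Lemma~\ref{lemma: exponential} to separate noise from gradients, and using Jensen's inequality on the noise part to get
\begin{equation*}
\E\bigl[\eta_T^{-(\nicefrac{1}{2}-\epsilon)/(\nicefrac{1}{2}+\epsilon)}\bigr]
\leq \alpha^{-(\nicefrac{1}{2}-\epsilon)/(\nicefrac{1}{2}+\epsilon)}\Bigl[(\beta + 2T\sigma^2)^{\nicefrac{1}{2}-\epsilon} + 2^{\nicefrac{1}{2}-\epsilon}\E[G^{\nicefrac{1}{2}-\epsilon}]\Bigr].
\end{equation*}

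Chaining the upper and lower bounds and raising to the $(\nicefrac{1}{2}-\epsilon)$ power produces an implicit inequality of the form $x \leq C(A+Bx)^{\nicefrac{1}{2}+\epsilon}$ with $x = \E[G^{\nicefrac{1}{2}-\epsilon}]$, $A = (\beta+2T\sigma^2)^{\nicefrac{1}{2}-\epsilon}$, $B = 2^{\nicefrac{1}{2}-\epsilon}$, and $C$ proportional to $(\tilde{\gamma}/\alpha)^{\nicefrac{1}{2}-\epsilon}$. For $\epsilon > 0$, Lemma~\ref{lemma:solvex} solves this directly, giving the stated $\max$ of two terms. For $\epsilon = 0$, $K$ itself depends on $\E[\sqrt{G}]$ (through a logarithm), so the resulting inequality has the form treated by Lemma~\ref{lemma:logsolvex}; plugging the coarse bound it provides back into the logarithm yields the claimed $\gamma$. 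The last step is dividing by $T^{\nicefrac{1}{2}-\epsilon}$ via $T \min_t \|\nabla f(\bx_t)\|^2 \leq G$ and Jensen.

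The main obstacle is the bookkeeping in Step~5: the bound on $\E[G^{\nicefrac{1}{2}-\epsilon}]$ appears on both sides of the inequality through two different mechanisms (via $\eta_T^{-1}$ and, when $\epsilon = 0$, through $K$), and the two cases have to be closed with different technical lemmas. The rest of the argument is a near-verbatim adaptation of the convex proof, with the simplification that Lemma~\ref{lemma:smooth} is no longer required (which is why the threshold $4\alpha M < \beta^{\nicefrac{1}{2}+\epsilon}$ becomes $2\alpha M < \beta^{\nicefrac{1}{2}+\epsilon}$).
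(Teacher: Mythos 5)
Your proposal is correct and follows essentially the same route as the paper's proof: Lemma~\ref{lemma:basic_lemma} plus Lemma~\ref{lemma:bounded_sum_squares}, absorption of the gradient term under $2\alpha M < \beta^{\nicefrac{1}{2}+\epsilon}$, H\"older's inequality with exponents $1/(\nicefrac{1}{2}-\epsilon)$ and $1/(\nicefrac{1}{2}+\epsilon)$ to pass from $\E[\sum_t \eta_t \|\nabla f(\bx_t)\|^2]$ to $\E[\Delta^{\nicefrac{1}{2}-\epsilon}]$, Lemma~\ref{lemma:solvex} for $\epsilon>0$ and Lemma~\ref{lemma:logsolvex} for $\epsilon=0$, and finally $T\min_t\|\nabla f(\bx_t)\|^2 \leq \Delta$. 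Your observation that dropping Lemma~\ref{lemma:smooth} is precisely what halves the threshold constant from $4\alpha M$ to $2\alpha M$ also matches the paper's bookkeeping.
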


\paragraph{Remark.}
Using Markov's inequality it's easy to get that, with probability at least $1-\delta$, 

\begin{align}
& \min_{1 \leq t \leq T} \| \nabla f(\bx_t) \|^2 
\leq \frac{1}{\delta^{\frac{1}{\nicefrac{1}{2}-\epsilon}} T} \max \left( 2^{\nicefrac{1}{2}+\epsilon} \gamma^{\frac{1}{\nicefrac{1}{2}-\epsilon}}, 
2^{\frac{\nicefrac{1}{2}+\epsilon}{\nicefrac{1}{2}-\epsilon}} (\beta + 2T \sigma^2)^{\nicefrac{1}{2}+\epsilon} \gamma \right)~.
\end{align}

This theorem mirrors Theorem~\ref{thm:convex}, proving again a convergence rate that is adaptive to the noise level. Hence, the same observations on adaptation to the noise level and convergence hold here as well. 
The main difference w.r.t. Theorem~\ref{thm:convex} is that here we only prove that the gradients are converging to zero rather than the suboptimality gap, because we do not assume convexity.

Note that such bounds were already known with an oracle tuning of the stepsizes, in particular with the knowledge of the variance of the noise, see, e.g., \citet{GhadimiL13}. In fact, the required stepsize in the deterministic case must be constant, while it has to be of the order of $O(\frac{1}{\sigma\sqrt{t}})$ in the stochastic case. However, here we obtain the same behavior automatically, without having to estimate the variance of the noise, thanks to the adaptive stepsizes. This shows for the first time a clear advantage of the global generalized AdaGrad stepsizes over plain SGD.

\begin{proof}[Proof of Theorem~\ref{thm:sgd_adaptive}]
For simplicity, denote by $\Delta:=\sum_{t=1}^T \|\nabla f(\bx_t)\|^2$.

From Lemma~\ref{lemma:basic_lemma}, we have
\[
\sum_{t=1}^T \E[\eta_t \|\nabla f(\bx_t)\|^2]
\leq f(\bx_1)-f^\star + \frac{M}{2} \E\left[\sum_{t=1}^T \eta_t^2 \|\bg(\bx_t,\xi_t)\|^2\right]~.
\]

Using Lemma~\ref{lemma:bounded_sum_squares}, we can upper bound the expected sum in the r.h.s. of the last inequality. When $\epsilon >0$,  we have

\begin{align}
& \left( \frac{1}{1-\frac{2 \alpha M}{\beta^{\nicefrac12+\epsilon}}} \right) \E\left[\sum_{t=1}^T \eta_t \|\nabla f(\bx_t)\|^2\right] 
\leq f(\bx_1)-f^\star
+\frac{\alpha^2 M}{4\epsilon \beta^{2\epsilon}} + \frac{2\alpha^2 \sigma^2 M}{\beta^{1+2\epsilon}} (1+\ln T)~.
\label{eq:sgd_adaptive_eq1}
\end{align}

When $\epsilon=0$, we have
\begin{align}
& \left( \frac{1}{1-\frac{2\alpha M}{\sqrt{\beta}}} \right) \E\left[\sum_{t=1}^T \eta_t \|\nabla f(\bx_t)\|^2\right]  \nonumber \\
& \leq f(\bx_1)-f^{\star} + M\alpha^2 \ln \left( \sqrt{\beta + 2T\sigma^2 }  + \sqrt{2} \E\left[ \sqrt{ \sum_{t=1}^T \eta_t \|\nabla f(\bx_t)\|^2 } \right] \right)  \nonumber \\
& \quad + \frac{2\alpha M }{\beta} (1+\ln T)\sigma^2~.
\label{eq:sgd_adaptive_eq2}
\end{align}

With similar methods in the proof of Theorem~\ref{thm:convex}, we lower bound the l.h.s. of  both \eqref{eq:sgd_adaptive_eq1} and \eqref{eq:sgd_adaptive_eq2} with 

\begin{align*}
\E\left[ \sum_{t=1}^T \eta_t \|\nabla f(\bx_t)\|^2\right]
\geq \E\left[ \eta_T \Delta\right] 
= \E\left[ \eta_T \Delta \right] 
\geq \frac{ \left( \E \left[ \Delta^{\nicefrac{1}{2}-\epsilon} \right] \right)^\frac{1}{\nicefrac{1}{2}-\epsilon} }{\left( \E \left[ (\frac{1}{\eta_T})^{\frac{\nicefrac{1}{2}-\epsilon}{\nicefrac{1}{2}+\epsilon}} \right] \right)^{\frac{\nicefrac{1}{2}+\epsilon}{\nicefrac{1}{2}-\epsilon}}}~.
\end{align*}

We also have 

\begin{align*}
& \frac{1}{\eta_T} = \frac{1}{\alpha}\left(\beta+\sum_{t=1}^{T-1} \|\bg(\bx_t,\xi_t)\|^2\right)^{\nicefrac{1}{2}+\epsilon} \\
& \leq \frac{1}{\alpha}\left(\beta+2\sum_{t=1}^{T-1} \left(\|\nabla f(\bx_t)-\bg(\bx_t,\xi_t)\|^2+ \|\nabla f(\bx_t)\|^2\right)\right)^{\nicefrac{1}{2}+\epsilon}~.
\end{align*}

Define 

\[
\gamma = \frac{1}{\alpha (1-\frac{2\alpha M}{\beta^{\nicefrac{1}{2}+\epsilon}})} \left( f(\bx_1)-f^{\star}+ \frac{2\alpha^2 M }{\beta^{1+2\epsilon}}\sigma^2 \right) + K,
\]

where $K$ will be defined separately for the case $\epsilon = 0$ and $\epsilon >0.$

When $\epsilon >0$, we have 

\begin{align}
\left( \E \left[ \Delta^{\nicefrac{1}{2}-\epsilon} \right] \right)^\frac{1}{\nicefrac{1}{2}-\epsilon} 
& \leq  \alpha \gamma \left( \E \left[ (\frac{1}{\eta_T})^{\frac{\nicefrac{1}{2}-\epsilon}{\nicefrac{1}{2}+\epsilon}} \right] \right)^{\frac{\nicefrac{1}{2}+\epsilon}{\nicefrac{1}{2}-\epsilon}} \nonumber \\
& \leq \gamma \left( \E  \left[ \left(\beta+2\sum_{t=1}^{T-1} \|\nabla f(\bx_t)-\bg(\bx_t,\xi_t)\|^2\right)^{\nicefrac{1}{2}-\epsilon} \right] \right.\nonumber\\
& \quad \left. + 2\E \left[ \left(\sum_{t=1}^{T-1} \|\nabla f(\bx_t)\|^2 \right)^{\nicefrac{1}{2}-\epsilon} \right] \right)^{\frac{\nicefrac{1}{2}+\epsilon}{\nicefrac{1}{2}-\epsilon}} \nonumber\\
& \leq \gamma \left( \left(\beta+2T \sigma^2 \right)^{\nicefrac{1}{2}-\epsilon} + 2\E \left[ \Delta^{\nicefrac{1}{2}-\epsilon}\right]  \right)^{\frac{\nicefrac{1}{2}+\epsilon}{\nicefrac{1}{2}-\epsilon}}~. \label{eq:adapt_eq3}
\end{align}
where in this case we define $K = \frac{\frac{\alpha^ M }{4\epsilon \beta^{2\epsilon}}}{\alpha(1-\frac{2\alpha M}{\beta^{\nicefrac{1}{2}+\epsilon}})}.$
Proceeding in the same way, when $\epsilon=0$, we have 

\begin{align*}
\left( \E \left[ \sqrt{\Delta } \right] \right)^2 
\leq \left(A + B \E \left[ \sqrt{\Delta } \right]  \right) 
 \times \left(  C + D \ln \left(A +B \E \left[ \sqrt{\Delta } \right] \right) \right),
\end{align*}

where $A = \sqrt{\beta + 2T \sigma^2}$, 
$B = \sqrt{2}$,
$D = \frac{\alpha M}{1-\frac{2\alpha M}{\sqrt{\beta}}}$, 
$C = \frac{\beta (f(\bx_1)-f^{\star}) + 2\alpha (1+\ln T)\sigma^2}{\alpha \beta (1-\frac{2\alpha M }{\sqrt{\beta}})}$. 

Using Lemma~\ref{lemma:logsolvex}, we have that  

\begin{align*}
\E \left[ \sqrt{\Delta } \right] 
\leq 32 B^3 D^2 + 2 B C + 8 B^2 D \sqrt{ C} + \frac{A}{B}~.
\end{align*}

Similar with Theorem~\ref{thm:convex}, we use this upper bound in the logarithmic term so that for $\epsilon \geq 0$, we have \eqref{eq:adapt_eq3} again, this time with $K = D \ln (2A + 32 B^4 D^2 + 2 B^2 C +8 B^3 D \sqrt{C}) = O(\frac{\ln T}{1-\frac{2\alpha M }{\beta}}).$

Hence, we proceed using Lemma~\ref{lemma:solvex} to have for $\epsilon \geq 0$

\begin{align*}
\E\left[ \Delta^{\nicefrac{1}{2}-\epsilon} \right] 
 \leq \max \left(2^{\frac{\nicefrac{1}{2}+\epsilon}{\nicefrac{1}{2}-\epsilon}} \gamma,
2^{\nicefrac{1}{2}+\epsilon} \left( \beta+2T \sigma^2 \right)^{\nicefrac{1}{4}-\epsilon^2}\gamma^{\nicefrac{1}{2}-\epsilon} \right)~.
\end{align*}

Lower bounding $\E \left[ \Delta^{\nicefrac{1}{2}-\epsilon} \right]$ by $T^{\nicefrac{1}{2}-\epsilon} \E \left[\min_{1\leq t \leq T} \|\nabla f(\bx_t)\|^{1-2\epsilon}\right]$,  we have the stated bound. 

\end{proof}

\section{A High Probability Analysis for SGD with Momentum}
\label{sec:adap_highp}
In the previous sections, we presented the convergence rates in expectation. Indeed, the classic analysis of convergence for SGD in the nonconvex setting uses analysis in expectation. However, expectation bounds do not rule out extremely bad outcomes. As pointed out by \citet{HarveyLPR19}, it is a misconception that for the algorithms who have expectation bounds it is enough to pick the best of several independent runs to have a high probability guarantee: It can actually be a computational inefficient procedure. Moreover, in practical applications like deep learning, it is often the case that only one run of the algorithm is used since the training process may take a long time. Hence, it is essential to get high probability bounds that guarantee the performance of the algorithm on single runs.

In this section, we overcome this problem by proving a high probability analysis of  SGD with momentum and adaptive learning rates.

\subsection{A General Analysis for Algorithms with Momentum}
\label{sec:lemma}
\begin{sloppypar}
We consider a generic stochastic optimization algorithm with Polyak's momentum~\citep{Polyak64,Qian99,SutskeverMDH13}, also known as the Heavy-ball algorithm or classic momentum, see Algorithm~\ref{alg:momentum}.
\end{sloppypar}

\begin{algorithm}
\caption{Algorithms with Momentum}
\label{alg:momentum}
\begin{algorithmic}[1]
\STATE \textbf{Input:} $\bm_0 = 0$, $ \{\bta_t\}_{t=1}^T$, $0 < \mu \leq 1$, $\bx_1 \in \R^d$
\FOR{$t = 1, \dots, T$}
\STATE Get stochastic gradient $\bg_t = g(\bx_t, \xi_t)$
\STATE $\bm_t = \mu \bm_{t-1} + \bta_t \bg_t$
\STATE $\bx_{t+1} = \bx_t - \bm_t$
\ENDFOR
\end{algorithmic}
\end{algorithm}

\paragraph{Two forms of momentum, but not equivalent.}
First, we want to point out that there two forms of Heavyball algorithms are possible.
The first one is in Algorithm~\ref{alg:momentum}, while the second one is
\begin{equation}
\label{eq:momentum2}
\begin{split}
\bm_t &= \mu \bm_{t-1} + \bg_t, \\
\bx_{t+1} &= \bx_t - \bta_t \bm_t~.
\end{split}
\end{equation}
\begin{sloppypar}
This second is used in many practical implementations, see, for example, PyTorch~\citep{PyTorch19}. It would seem that there is no reason to prefer one over the other. However, here we argue that the classic form of momentum is the right one if we want to use adaptive learning rates. To see why, let's unroll the updates in both cases.
Using the update in Algorithm~\ref{alg:momentum}, we have
\end{sloppypar}

\[
\bx_{t+1} = \bx_t - \bta_t \bg_t - \mu \bta_{t-1} \bg_{t-1} - \mu^2 \bta_{t-1} \bg_{t-2} \dots,
\]
while using the update in \eqref{eq:momentum2}, we have
\[
\bx_{t+1} = \bx_t - \bta_t \bg_t - \mu \bta_{t} \bg_{t-1} - \mu^2 \bta_{t} \bg_{t-2} \dots~.
\]
In words, in the first case the update is composed of a sum of weighted gradients, each one multiplied by a learning rate we decided in the past. On the other hand, in the update \eqref{eq:momentum2} the update is composed of a sum of weighted gradients, each one multiplied by the \emph{current} learning rate. From the analysis point of view, the second update destroys the independence between the past and the future, introducing a dependency that breaks our analysis, unless we introduce very strict conditions on the gradients. On the other hand, the update in Algorithm~\ref{alg:momentum} allows us to carry out the analysis because each learning rate was chosen only with the knowledge of the past. Note that this is a known problem in adaptive algorithms: the lack of independence between past and present is exactly the reason why Adam fails to converge on simple 1d convex problems, see for example the discussion in \citet{SavareseMBM19}.

It is interesting to note that usually people argue that these two types of updates for momentum are usually considered equivalent. This seems indeed true only if the learning rates are not adaptive.


\paragraph{Assumptions on learning rates} Note that in the pseudo-code we do not specify the learning rates $\bta_t \in \R^d$. In fact, our analysis covers the case of generic learning rates and adaptive ones too. We only need the following assumptions on the stepsizes $\bta_t$: 
\begin{assumptionC}
\label{as:eta_non_increasing}
$\bta_t$ is non-increasing, i.e., $\bta_{t+1} \leq \bta_t$, $\forall t$. 
\end{assumptionC} 
\begin{assumptionC}
\label{as:eta_independent}
$\bta_t$ is independent with $\xi_t$.
\end{assumptionC}

The first assumption is very common \citep[e.g.,][]{DuchiHS11,ReddiKK18,ChenLSH19,ZhouTYCG18}. Indeed, AdaGrad has the non-increasing step sizes by the definition. Also, \citet{ReddiKK18} has claimed that the main issue of the divergences of Adam and RMSProp lies in the positive definiteness of $1/\bta_t - 1/\bta_{t-1}$. 

\begin{sloppypar}
The need for the second assumption is technical and shared by similar analysis \citep{SavareseMBM19}.
\end{sloppypar}
\paragraph{High probability guarantee} Adaptive learning rates and in general learning rates that are decided using previous gradients become stochastic variables. This makes the high probability analysis more complex. Hence, we use a new concentration inequality for martingales in which the variance is treated as a random variable, rather than a deterministic quantity. We use this concentration in the proof of Lemma~\ref{lemma:momentum}. Our proof, in the Appendix, merges ideas from the related results in \citet[Theorem 1]{BeygelzimerLLRS11} and \citet[Lemma 2]{LanNS12}. A similar result has also been shown by \citet[Lemma~6]{JinNGKJ19}. 
The following general lemma allows to analyze these kinds of algorithms. We will then instantiate it for Delayed AdaGrad stepsize.

\begin{lemma}
\label{lemma:sub_gaussian}
Assume that $Z_1, Z_2, ..., Z_T$ is a martingale difference sequence with respect to $\xi_1, \xi_2, ..., \xi_T$ and $\E_t \left[\exp(Z_t^2/\sigma_t^2)\right] \leq \exp(1)$ for all $1\leq t \leq T$, where $\sigma_t$ is a sequence of random variables with respect to $\xi_1, \xi_2, \dots, \xi_{t-1}$. 
Then, for any fixed $\lambda > 0$ and $\delta \in (0,1)$, with probability at least $1-\delta$, we have 
\[
\sum_{t=1}^T Z_t \leq \frac{3}{4} \lambda \sum_{t=1}^T \sigma_t^2 + \frac{1}{\lambda} \ln \frac{1}{\delta}~.
\]
\end{lemma}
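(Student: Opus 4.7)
The plan is to prove this via the standard exponential supermartingale (Chernoff) method, extended to handle the case where the variance-proxy $\sigma_t^2$ is itself a random, predictable quantity.

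First, I would establish a conditional MGF bound: for every $t$ and every $\lambda>0$,
\[
\E_t[\exp(\lambda Z_t)] \leq \exp\!\Bigl(\tfrac{3}{4}\lambda^2 \sigma_t^2\Bigr).
\]
This is where the constant $3/4$ enters. Starting from the Orlicz tail bound $\E_t[\exp(Z_t^2/\sigma_t^2)]\leq e$, one extracts the even-moment bounds $\E_t[Z_t^{2k}]\leq e\,k!\,\sigma_t^{2k}$ via the Taylor series of $x\mapsto\exp(x)$. Plugging these into the Taylor expansion of $\lambda\mapsto\E_t[\exp(\lambda Z_t)]$ and using the martingale difference property $\E_t[Z_t]=0$ to kill the linear term gives a bound on the MGF. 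The mean-zero hypothesis is crucial: a naive Young's-inequality split $\lambda Z_t \leq \tfrac{3}{4}\lambda^2\sigma_t^2 + Z_t^2/(3\sigma_t^2)$ followed by Jensen produces an extra additive $\tfrac{1}{3}$ inside the exponential, which would aggregate into an unwanted $T/(3\lambda)$ term; exploiting $\E_t[Z_t]=0$ (or, alternatively, splitting into small-$|Z_t|$ and large-$|Z_t|$ regimes) is what allows the additive constant to be absorbed and pins the coefficient down to $3/4$. This is the main obstacle of the proof.

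Next, I would build the exponential supermartingale. Define
\[
W_t = \exp\!\Bigl(\lambda \sum_{s=1}^{t} Z_s - \tfrac{3}{4}\lambda^2 \sum_{s=1}^{t}\sigma_s^2\Bigr), \qquad W_0 = 1.
\]
Because $\sigma_t$ is measurable with respect to $\xi_1,\dots,\xi_{t-1}$, the factor $\exp(-\tfrac{3}{4}\lambda^2\sigma_t^2)$ is predictable and can be pulled outside the conditional expectation, giving
\[
\E_t[W_t] = W_{t-1}\,\exp\!\Bigl(-\tfrac{3}{4}\lambda^2 \sigma_t^2\Bigr)\,\E_t[\exp(\lambda Z_t)] \leq W_{t-1}
\]
by the MGF bound from the previous step. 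Hence $(W_t)$ is a nonnegative supermartingale with $\E[W_T]\leq W_0 = 1$. (Note that Assumption~C2-like predictability of $\sigma_t$ is essential here; without it the factorization fails.)

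Finally, I would apply Markov's inequality: $\Pr[W_T \geq 1/\delta] \leq \delta\,\E[W_T] \leq \delta$. On the complementary event, $W_T < 1/\delta$, which, after taking logarithms and dividing by $\lambda > 0$, yields
\[
\sum_{t=1}^{T} Z_t \leq \tfrac{3}{4}\lambda \sum_{t=1}^{T}\sigma_t^2 + \tfrac{1}{\lambda}\ln\tfrac{1}{\delta},
\]
as claimed. The only delicate step is the conditional MGF bound; once it is in place, the supermartingale construction and Markov step are routine.
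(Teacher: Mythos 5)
Your proposal follows the same overall architecture as the paper's proof: reduce everything to the conditional MGF bound $\E_t[\exp(\lambda Z_t)]\leq\exp(\tfrac34\lambda^2\sigma_t^2)$, build the nonnegative supermartingale $W_t=\exp(\lambda\sum_{s\le t}Z_s-\tfrac34\lambda^2\sum_{s\le t}\sigma_s^2)$ using the predictability of $\sigma_t$, and finish with Markov. You also correctly isolate the crux---that a naive Young split $\lambda Z_t\le\tfrac34\lambda^2\sigma_t^2+Z_t^2/(3\sigma_t^2)$ leaves an unwanted additive constant, and that $\E_t[Z_t]=0$ must be exploited---so the structural diagnosis is right.

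Where you diverge is in the tactic used to establish the MGF bound itself. You sketch a moment-expansion route (extract $\E_t[Z_t^{2k}]\lesssim k!\sigma_t^{2k}$ from the Orlicz condition and feed these into the Taylor series of the MGF) or, alternatively, a split on $|Z_t|$. The moment route is plausible in spirit but awkward in practice: the odd moments $\E_t[Z_t^{2k+1}]$ for $k\ge 1$ do not vanish and must be controlled by Cauchy--Schwarz or AM--GM, which makes the exact constant $3/4$ hard to pin down cleanly. The paper instead normalizes $\tilde Z_t=Z_t/\sigma_t$ and does a case split on $|\kappa|=|\lambda\sigma_t|$, i.e.\ on the size of the \emph{MGF parameter}, not on $|Z_t|$. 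For $|\kappa|\le 4/3$ it applies the pointwise inequality $\exp(x)\le x+\exp(9x^2/16)$; taking $\E_t$, the linear term dies because $\E_t[\tilde Z_t]=0$, and Jensen's inequality (in the form $\E_t[\exp(c\tilde Z_t^2)]\le\exp(c)$ for $c\in[0,1]$) handles $\exp(9\kappa^2\tilde Z_t^2/16)$ since $9\kappa^2/16\le 1$. For $|\kappa|\ge 4/3$ it is exactly your Young split $\kappa x\le 3\kappa^2/8+2x^2/3$, and the additive $2/3$ is absorbed into $3\kappa^2/4$ precisely \emph{because} $\kappa^2\ge16/9$ in that regime. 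So the paper's two-regime proof is cleaner and gets the sharp constant; your sketch would need to be made concrete along one of these lines before the $3/4$ is actually secured.
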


\begin{proof}[Proof of Lemma~\ref{lemma:sub_gaussian}]
Set $\tilde{Z_t} = Z_t / \sigma_t$. By the assumptions of $Z_t$ and $\sigma_t$, we have
\[
\E_t [\tilde{Z_t}] = \frac{1}{\sigma_t} \E_t [Z_t] = 0 
\quad
\text{ and }
\quad 
\E_t \left[ \exp \left( \tilde{Z}_t^2 \right) \right] \leq \exp (1)~.
\]
By Jensen's inequality, it follows that for any $c \in [0,1]$, 
\begin{equation}
\label{eq: sub_gaussian_coef}
\E_t \left[ \exp \left(c \tilde{Z}_t^2 \right) \right]
= \E_t \left[ \left( \exp \left(\tilde{Z}_t^2\right) \right)^c \right]
\leq \left( \E_t \left[ \exp \left(\tilde{Z}_t^2\right)\right] \right)^c \leq \exp(c)~. 
\end{equation}
Also it can be verified that $\exp (x) \leq x + \exp (9x^2/16)$ for all $x$, hence for $ |\kappa| \in [0, 4/3]$ we get 
\begin{equation}
\label{eq: sub_gaussian1}
\E_t \left[ \exp \left( \kappa \tilde{Z}_t \right) \right] 
\leq \E_t \left[ \exp \left(9 \kappa^2 \tilde{Z}_t^2/16 \right) \right] 
\leq \exp \left( 9\kappa^2/16\right)  
\leq \exp \left( 3 \kappa^2/4\right)~.
\end{equation}
where in the second inequality, we used~\eqref{eq: sub_gaussian_coef}.
Besides, $k x \leq 3k^2/8 + 2x^2/3 $ holds for any $k$ and $x$. Hence for $ |\kappa| \geq 4/3$, we get
\begin{equation}
\label{eq: sub_gaussian2}
\E_t  \left[ \exp \left( \kappa \tilde{Z}_t \right) \right] 
\leq \exp \left( 3\kappa^2/8 \right) \E_t \left[\exp \left(2\tilde{Z}_t^2/3\right) \right]
\leq \exp \left( 3\kappa^2/8 + 2/3 \right) \leq \exp \left(3\kappa^2/4  \right),
\end{equation}
where in the second inequality we used~\eqref{eq: sub_gaussian_coef}. 
Combining \eqref{eq: sub_gaussian1} and \eqref{eq: sub_gaussian2}, we get $\forall \kappa$, 
\begin{equation}
\label{eq:sub_gaussian3}
\E_t \left[ \exp \left( \kappa \tilde{Z}_t \right) \right] \leq \exp \left( 3\kappa^2/4  \right)~. 
\end{equation}
Note that the above analysis for~\eqref{eq:sub_gaussian3} still hold when $\kappa$ is a random variable with respect to $\xi_1,\xi_2,\dots, \xi_{t-1}$.
So for $Z_t$, we have $\E_t \left[ \exp \left(\lambda Z_t \right) \right] \leq \exp \left( 3\lambda^2 \sigma_t^2/4 \right), \quad \lambda>0$.

Define the random variables $Y_0 = 1$ and $Y_t = Y_{t-1} \exp \left(\lambda Z_t - 3\lambda^2 \sigma_t^2/4  \right), \quad 1\leq t\leq T$.
So, we have $E_t Y_t = Y_{t-1} \exp \left( -3\lambda^2 \sigma_t^2/4  \right) \cdot \E_t \left[ \exp \left( \lambda Z_t \right) \right] \leq Y_{t-1}$.
Now, taking full expectation over all variables $\xi_1, \xi_2, \dots ,\xi_T$, we have 
\[
\E Y_T \leq \E Y_{T-1} \leq \dots \leq \E Y_0 = 1~. 
\]
By Markov's inequality, $P \left( Y_T \geq \frac{1}{\delta} \right) \leq \delta$, and 
$
Y_T = \exp \left( \lambda \sum_{t=1}^T Z_t- \frac{3}{4} \lambda^2 \sum_{t=1}^T \sigma_t^2 \right), 
$
we have 
\begin{align*}
P \left( Y_T \geq \frac{1}{\delta} \right) 
& = P \left( \lambda \sum_{t=1}^T Z_t- \frac{3}{4} \lambda^2 \sum_{t=1}^T \sigma_t^2  \geq \ln \frac{1}{\delta} \right) \\
& = P \left( \sum_{t=1}^T Z_t \geq  \frac{3}{4} \lambda \sum_{t=1}^T \sigma_t^2 + \frac{1}{\lambda} \ln \frac{1}{\delta} \right) \\
& \leq \delta,
\end{align*}
which completes the proof. 
\end{proof}
We can now present a general lemma, that allows to analyze SGD with momentum with adaptive learning rates. We will then instantiate it for particular examples.
\begin{lemma}
\label{lemma:momentum}
Assume $f$ is $M$-smooth and the stochasitc gradient is unbiased and satisfies Assumption~\ref{as:subgaussion}. Also, suppose that the stepsizes satisfy Assumption~\ref{as:eta_non_increasing} and~\ref{as:eta_independent}.  Then, for any $\delta \in (0,1)$, with probability at least $1-\delta$, the iterates of Algorithm \ref{alg:momentum} satisfy
\begin{align*}
\sum_{t=1}^{T} \langle \bta_t, \nabla f(\bx_t) ^2 \rangle 
& \leq \frac{3 \|\bta_1\|\sigma^2(1- \mu^T)^2 }{(1- \mu)^2}\ln \frac{1}{\delta} 
+ 2(f(\bx_1) - f^{\star}) 
+ \frac{M(3-\mu)}{1- \mu}\sum_{t=1}^{T}  \| \bta_t\bg_t \|^2
~. 
\end{align*}
\end{lemma}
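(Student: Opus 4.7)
The plan is to build on the by-now-standard auxiliary iterate trick for heavy-ball momentum. Define $\bz_t \defeq \bx_t - \frac{\mu}{1-\mu}\bm_{t-1}$, which uses $\bm_0 = \bzero$ so that $\bz_1 = \bx_1$. A direct calculation using $\bx_{t+1}-\bx_t = -\bm_t$ and $\bm_t - \mu \bm_{t-1} = \bta_t \bg_t$ shows the linearization
\[
\bz_{t+1} - \bz_t = -\frac{1}{1-\mu}\,\bta_t \bg_t~.
\]
This converts the momentum update into a plain (stochastic, adaptive) gradient step on the auxiliary sequence, while keeping $\bg_t$ unbiased for $\nabla f(\bx_t)$ (not $\nabla f(\bz_t)$), which is the standard complication one must handle separately.

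Next I would apply $M$-smoothness of $f$ at $\bz_t$ (inequality \eqref{eq:smooth2}), obtaining
\[
f(\bz_{t+1}) \leq f(\bz_t) - \frac{1}{1-\mu}\langle \nabla f(\bz_t), \bta_t \bg_t\rangle + \frac{M}{2(1-\mu)^2}\|\bta_t \bg_t\|^2,
\]
telescope over $t=1,\dots,T$ using $f(\bz_1)=f(\bx_1)$ and $f(\bz_{T+1})\geq f^\star$, and then decompose the inner product as
\[
\langle \nabla f(\bz_t), \bta_t \bg_t\rangle
= \underbrace{\langle \nabla f(\bx_t), \bta_t \nabla f(\bx_t)\rangle}_{\text{what we want}}
+ \underbrace{\langle \nabla f(\bx_t), \bta_t(\bg_t - \nabla f(\bx_t))\rangle}_{\text{martingale noise }Z_t}
+ \underbrace{\langle \nabla f(\bz_t)-\nabla f(\bx_t), \bta_t \bg_t\rangle}_{\text{bias from using }\bz_t}~.
\]
The bias term is controlled via smoothness: $\|\nabla f(\bz_t)-\nabla f(\bx_t)\| \leq M\|\bz_t-\bx_t\| = \frac{M\mu}{1-\mu}\|\bm_{t-1}\|$, followed by Young's inequality to split the product into $\|\bm_{t-1}\|^2$ and $\|\bta_t \bg_t\|^2$ pieces. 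The sum $\sum_t \|\bm_{t-1}\|^2$ is then dispatched by expanding $\bm_{t-1} = \sum_{s<t}\mu^{t-1-s}\bta_s \bg_s$, using Cauchy–Schwarz on the geometric weights, and swapping the order of summation to get $\sum_t \|\bm_{t-1}\|^2 \leq \tfrac{1}{(1-\mu)^2}\sum_t \|\bta_t \bg_t\|^2$. Collecting everything into the coefficient of $\sum_t \|\bta_t \bg_t\|^2$ and tuning the Young weight should produce the $\frac{M(3-\mu)}{1-\mu}$ constant.

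The truly stochastic step is bounding $\sum_t Z_t$ in high probability using Lemma~\ref{lemma:sub_gaussian}. By Cauchy–Schwarz applied to the $\bta_t$-weighted inner product,
\[
Z_t^2 \leq \|\bta_t\|_\infty \cdot \langle \nabla f(\bx_t), \bta_t \nabla f(\bx_t)\rangle \cdot \|\bg_t-\nabla f(\bx_t)\|^2,
\]
so Assumption~\ref{as:subgaussion} plus Assumption~\ref{as:eta_independent} yield $\E_t[\exp(Z_t^2/\sigma_t^2)]\leq e$ with the random variance $\sigma_t^2 = \|\bta_t\|\,\langle\nabla f(\bx_t),\bta_t \nabla f(\bx_t)\rangle\,\sigma^2$ (using $\|\bta_t\|_\infty\leq \|\bta_t\|\leq\|\bta_1\|$ by Assumption~\ref{as:eta_non_increasing}). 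Applying Lemma~\ref{lemma:sub_gaussian} and choosing $\lambda$ so that the $\tfrac{3}{4}\lambda\sum_t \sigma_t^2$ contribution is absorbed as a constant fraction of the LHS $\sum_t\langle\nabla f(\bx_t),\bta_t\nabla f(\bx_t)\rangle$ leaves the remaining $\tfrac{1}{\lambda}\ln(1/\delta)$ term of the order $\|\bta_1\|\sigma^2\ln(1/\delta)$ with the correct momentum-dependent constant once all $1/(1-\mu)$ factors from the telescoping are tracked.

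The main obstacle is the bookkeeping of the $\mu$ and $1-\mu$ factors: getting the Young weight right so that the bias and deterministic error pieces combine to exactly $\frac{M(3-\mu)}{1-\mu}$, and in particular choosing $\lambda$ in the sub-Gaussian bound so that absorbing $\tfrac{3\lambda\|\bta_1\|\sigma^2}{4}\sum_t \langle\nabla f(\bx_t),\bta_t\nabla f(\bx_t)\rangle$ into the LHS leaves precisely $\frac{3\|\bta_1\|\sigma^2(1-\mu^T)^2}{(1-\mu)^2}\ln\tfrac{1}{\delta}$; the $(1-\mu^T)^2/(1-\mu)^2$ factor should arise from the weighted sum of momentum terms appearing in $\sigma_t^2$ when the gradient inner product is propagated through the unrolled $\bm_{t-1}$. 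Everything else is routine arithmetic once the auxiliary-sequence smoothness descent, the Cauchy–Schwarz momentum bound, and Lemma~\ref{lemma:sub_gaussian} are in place.
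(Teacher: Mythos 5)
Your auxiliary-sequence reduction is a legitimate alternative to the paper's argument, and your algebra is sound up to a point: the identity $\bz_{t+1}-\bz_t=-\tfrac{1}{1-\mu}\bta_t\bg_t$ is correct, the three-way decomposition of $\langle\nabla f(\bz_t),\bta_t\bg_t\rangle$ is valid, $\|\bz_t-\bx_t\|=\tfrac{\mu}{1-\mu}\|\bm_{t-1}\|$ is right, and your Cauchy--Schwarz estimate $Z_t^2\le\|\bta_t\|_\infty\langle\bta_t,\nabla f(\bx_t)^2\rangle\|\bg_t-\nabla f(\bx_t)\|^2$ is also correct. However, the paper does \emph{not} use the auxiliary-sequence trick. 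It applies smoothness directly at $\bx_t$, then recursively unrolls $-\langle\nabla f(\bx_t),\bm_t\rangle$ via the momentum recursion $\bm_t=\mu\bm_{t-1}+\bta_t\bg_t$, and swaps the double sum using Lemma~\ref{lemma:doublesum}. This swap produces a martingale sum of the form $-\tfrac{1}{1-\mu}\sum_t(1-\mu^{T-t+1})\langle\nabla f(\bx_t),\bta_t\bps_t\rangle$, whose time-varying weights are precisely what generate the $(1-\mu^T)^2/(1-\mu)^2$ factor.

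Here is the concrete gap in your route: in the auxiliary-sequence telescope the martingale coefficient is a \emph{constant} $\tfrac{1}{1-\mu}$, so the variance proxy is $\sigma_t^2=\tfrac{\|\bta_1\|\sigma^2}{(1-\mu)^2}\langle\bta_t,\nabla f(\bx_t)^2\rangle$ with no $(1-\mu^{T-t+1})$ weight anywhere. Your claim that ``the $(1-\mu^T)^2/(1-\mu)^2$ factor should arise from the weighted sum of momentum terms appearing in $\sigma_t^2$ when the gradient inner product is propagated through the unrolled $\bm_{t-1}$'' is a misattribution: $\bm_{t-1}$ only enters your \emph{bias} term $B_t$, never your martingale term $Z_t$. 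If you carry your route through, after absorbing half the LHS and rescaling by $2(1-\mu)$, your noise term comes out to $3\|\bta_1\|\sigma^2\ln\tfrac1\delta$, not $\tfrac{3\|\bta_1\|\sigma^2(1-\mu^T)^2}{(1-\mu)^2}\ln\tfrac1\delta$. Similarly, the telescope contributes $2(1-\mu)(f(\bx_1)-f^\star)$ instead of $2(f(\bx_1)-f^\star)$, and after optimizing the Young weight in the bias control (optimum $\rho=1-\mu$) the $\sum_t\|\bta_t\bg_t\|^2$ coefficient works out to roughly $\tfrac{M(1+\mu)}{(1-\mu)^2}$, not $\tfrac{M(3-\mu)}{1-\mu}$; these two are not ordered one way for all $\mu\in(0,1)$. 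So your plan yields a genuinely different high-probability inequality for the same LHS, not the lemma as stated; in particular, no choice of $\lambda$ or Young weight will make your constants coincide with the paper's, because the $(1-\mu^T)$ weight simply never appears in your martingale. If your goal is to match the statement, you must either adopt the paper's direct unrolling-and-sum-swap argument, or restate the lemma with the constants your route actually produces.
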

Lemma~\ref{lemma:momentum} accomplishes the task of upper bounding the inner product \\ $\sum_{t=1}^{T}\langle \bta_t \nabla f(\bx_t), \bm_t \rangle$. Then, it is easy to lower bound the l.h.s by $\sum_{t=1}^{T} \langle \bta_T, \nabla f(\bx_t)^2 \rangle$ using the assumption~\ref{as:eta_non_increasing} followed by the upper bound of $\sum_{t=1}^{T} \| \bta_t\bg_t \|^2$ based on the setting of $\bta_t$. 

To prove Lemma~\ref{lemma:momentum}, we first need the following technical Lemma.
\begin{lemma}
\label{lemma:doublesum}
$\forall T \geq 1$, it holds 
\[
\sum_{t=1}^{T} a_t \sum_{i=1}^{t} b_i  = \sum_{t=1}^{T} b_t \sum_{i=t}^{T} a_i
\quad
\text{and}
\quad
\sum_{t=1}^{T} a_t \sum_{i=0}^{t-1} b_i  = \sum_{t=0}^{T-1} b_t \sum_{i=t+1}^{T}a_i~.
\]
\end{lemma}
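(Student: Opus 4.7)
The plan is to prove both identities by exchanging the order of summation over a two-dimensional index set, a standard Fubini-style manipulation for finite double sums.

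For the first identity, I would rewrite the left-hand side as the double sum
\[
\sum_{t=1}^{T} a_t \sum_{i=1}^{t} b_i = \sum_{(t,i)\in S} a_t b_i, \qquad S=\{(t,i): 1\le i\le t\le T\}.
\]
The key observation is that the region $S$ admits two equivalent descriptions: iterating $t$ first over $1,\dots,T$ and then $i$ over $1,\dots,t$; or iterating $i$ first over $1,\dots,T$ and then $t$ over $i,\dots,T$. The second description yields
\[
\sum_{i=1}^{T} b_i \sum_{t=i}^{T} a_t,
\]
and renaming the outer index $i\mapsto t$ gives the claimed right-hand side.

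For the second identity, I would apply exactly the same strategy to the region $S'=\{(t,i): 1\le t\le T,\ 0\le i\le t-1\}$. Swapping the order, for a fixed $i\in\{0,1,\dots,T-1\}$ the inner variable $t$ must satisfy $t\ge i+1$ and $t\le T$, giving
\[
\sum_{i=0}^{T-1} b_i \sum_{t=i+1}^{T} a_t,
\]
and renaming $i\mapsto t$ finishes the proof.

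No real obstacle is expected here; the only thing to be careful about is that the inner summation ranges match up correctly at the endpoints ($t=i$ vs. $t=i+1$), which corresponds to whether the original inner sum included the index $t$ or stopped at $t-1$. A short inductive argument on $T$ would work equally well, but the order-swap proof is cleaner and more transparent.
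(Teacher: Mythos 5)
Your proof is correct. The paper proves both identities by induction on $T$: assuming the identity for $T=k$, it expands the sum at $T=k+1$ and rearranges to recover the claimed form. You instead prove them by exchanging the order of summation over the triangular index sets $S=\{(t,i):1\le i\le t\le T\}$ and $S'=\{(t,i):1\le t\le T,\ 0\le i\le t-1\}$, which is a genuinely different route. Both arguments are elementary and complete, but they emphasize different things: the order-swap proof makes transparent \emph{why} the identities hold (both sides enumerate the same finite set of products $a_t b_i$) and generalizes immediately to any region describable in two iterated forms, whereas the induction is more mechanical and verifies the algebra step by step without invoking any picture of the index set. You correctly note the one point requiring care, namely that the inner range on the right-hand side ($t=i$ versus $t=i+1$) tracks whether the original inner sum ran up to $t$ or only to $t-1$. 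Your closing remark that an induction would also work confirms you see both options; the choice between them is purely one of exposition.
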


\begin{proof}
We prove these equalities by induction. 
When $T=1$, they obviously hold.
Now, for $k < T$, assume that $\sum_{t=1}^{k} a_t \sum_{i=1}^{t} b_i  = \sum_{t=1}^{k} b_t \sum_{i=t}^{k} a_i$. 
Then, we have
\begin{align*}
\sum_{t=1}^{k+1} a_t \sum_{i=1}^{t} b_i  
& = \sum_{t=1}^{k} a_t \sum_{i=1}^{t} b_i  + a_{k+1} \sum_{i=1}^{k+1} b_i 
=  \sum_{t=1}^{k} b_t \sum_{i=t}^{k} a_i + a_{k+1} \sum_{i=1}^{k} b_i + a_{k+1}b_{k+1}\\
& = \sum_{t=1}^{k} b_t \sum_{i=t}^{k+1} a_i + a_{k+1}b_{k+1}
=  \sum_{t=1}^{k+1} b_t \sum_{i=t}^{k+1} a_i~.
\end{align*}
Hence, by induction, the equality is proved.

\begin{sloppypar}
Similarly, for second equality assume that for $k < T$ we have
$\sum_{t=1}^{k} a_t \sum_{i=0}^{t-1} b_i  = \sum_{t=0}^{k-1} b_t \sum_{i=t+1}^{k}a_i$.
\end{sloppypar}
Then, we have
\begin{align*}
\sum_{t=1}^{k+1} a_t \sum_{i=0}^{t-1} b_i  
& = \sum_{t=1}^{k} a_t \sum_{i=0}^{t-1} b_i  + a_{k+1} \sum_{i=0}^{k} b_i 
=  \sum_{t=0}^{k-1} b_t \sum_{i=t+1}^{k}a_i+ a_{k+1} \sum_{i=0}^{k-1} b_i + a_{k+1}b_{k}\\
& = \sum_{t=0}^{k-1} b_t \sum_{i=t+1}^{k+1} a_i + a_{k+1}b_{k}
=  \sum_{t=0}^{k} b_t \sum_{i=t+1}^{k+1} a_i~. 
\end{align*}
By induction, we finish the proof.  
\end{proof}

Now we can proof Lemma~\ref{lemma:momentum}. 

\begin{proof}[Proof of Lemma~\ref{lemma:momentum}]
By the smoothness of $f$ and the definition of $\bx_{t+1}$, we have 
\begin{equation}
\label{eq:adamsmooth-coodinate}
\begin{aligned}
f(\bx_{t+1})-f(\bx_t) 
\leq -\langle \nabla f(\bx_t), \bm_t \rangle + \frac{M}{2} \| \bm_t \|^2~. 
\end{aligned}
\end{equation}
We now upper bound $-\langle \nabla f(\bx_t), \bm_t \rangle$. 
\begin{align*}
& -\langle \nabla f(\bx_t), \bm_t \rangle \\
& = -\mu \langle \nabla f(\bx_t), \bm_{t-1}\rangle - \langle \nabla f(\bx_t), \bta_t \bg_t \rangle \\
& = - \mu \langle \nabla f(\bx_{t-1}), \bm_{t-1}\rangle 
- \mu \langle \nabla f(\bx_t) - \nabla f(\bx_{t-1}), \bm_{t-1}\rangle - \langle \nabla f(\bx_t), \bta_t \bg_t \rangle \\
& \leq - \mu \langle \nabla f(\bx_{t-1}), \bm_{t-1}\rangle + \mu \| \nabla f(\bx_t) - \nabla f(\bx_{t-1}) \| \| \bm_{t-1}\| - \langle \nabla f(\bx_t), \bta_t\bg_t \rangle\\
& \leq - \mu \langle \nabla f(\bx_{t-1}),\bm_{t-1}\rangle + \mu M \|\bm_{t-1}\|^2 -\langle \nabla f(\bx_t), \bta_t \bg_t \rangle, 
\end{align*}
where the second inequality is due to the smoothness of $f$.
Hence, iterating the inequality we have
\begin{align*}
-\langle \nabla f(\bx_t),\bta_t \bm_t \rangle 
& \leq - \mu^2 \langle \nabla f(\bx_{t-2}), \bm_{t-2} \rangle + \mu^2 M \|\bm_{t-2}\|^2 + \mu M \|\bm_{t-1}\|^2\\
& \quad - \mu \langle \nabla f(\bx_{t-1}), \bta_{t-1} \bg_{t-1} \rangle - \langle \nabla f(\bx_t), \bta_t \bg_t \rangle \\
& \leq M\sum_{i=1}^{t-1} \mu^{t-i} \|\bm_{i}\|^2 - \sum_{i=1}^t \mu^{t-i} \langle \nabla f(\bx_i), \bta_i \bg_i\rangle~. \qedhere
\end{align*}
Thus, denoting by $\bps_t = \bg_t - \nabla f(\bx_t)$ and summing \eqref{eq:adamsmooth-coodinate} over $t$ from $1$ to $T$, we obtain 
\begin{align*}
f^\star-f(\bx_1) 
& \leq f(\bx_{T+1})-f(\bx_1) \\
& \leq  M \sum_{t=1}^{T}\sum_{i=1}^{t-1} \mu^{t-i} \|\bm_{i} \|^2
- \sum_{t=1}^{T} \sum_{i=1}^t \mu^{t-i}\langle \nabla f(\bx_i),  \bta_i \bg_i\rangle +  \sum_{t=1}^{T} \frac{M}{2} \| \bm_t \|^2\\	
& \leq  M \sum_{t=1}^{T}\sum_{i=1}^{t-1} \mu^{t-i} \|\bm_{i} \|^2
- \sum_{t=1}^{T} \langle \bta_t, \nabla f(\bx_t) ^2 \rangle 
- \sum_{t=1}^{T} \sum_{i=1}^t \mu^{t-i}\langle \nabla f(\bx_i),  \bta_i \bps_i\rangle \\
& \quad + \sum_{t=1}^{T} \frac{M}{2} \| \bm_t \|^2~.
\end{align*}
By Lemma \ref{lemma:doublesum}, we have
\begin{align*}
M \sum_{t=1}^{T}\sum_{i=1}^{t-1}  \mu^{t-i}\| \bm_{i}\|^2
\leq \frac{M}{1-\mu} \sum_{t=1}^{T} \| \bm_{t}\|^2~.
\end{align*}
Also, by Lemma~\ref{lemma:doublesum}, we have  
\begin{align*}
- \sum_{t=1}^{T} \sum_{i=1}^t \mu^{t-i}\langle \nabla f(\bx_i), \bta_i \bps_i\rangle
&= - \sum_{t=1}^{T} \mu^{-t} \langle \nabla f(\bx_t), \bta_t \bps_t\rangle \sum_{i=t}^T \mu^{i} \\
& = - \frac{1}{1-\mu}\sum_{t=1}^{T} \langle \nabla f(\bx_t), \bta_t  \bps_t\rangle (1- \mu^{T-t+1})
\triangleq S_T~.
\end{align*}
\begin{sloppypar}
We then upper bound $S_T$. 
Denote by $L_t:= - \frac{1- \mu^{T-t+1}}{1- \mu}\langle \nabla f(\bx_t), \bta_t \bps_t \rangle$, and $N_t := \frac{(1- \mu^{T-t+1})^2}{(1- \mu)^2}\| \bta_t\nabla f(\bx_t)\|^2 \sigma^2$.
\end{sloppypar}
Using the assumptions on the noise, for any $1 \leq t \leq T$, we have
\begin{align*}
\exp\left(\frac{L_t^2 }{ N_t }\right) 
& \leq \exp \left(\frac{\| \bta_t \nabla f(\bx_t) \|^2 \| \bps_t \|^2 (1- \mu^{T-t+1})^2/(1- \mu)^2 }{N_t}\right)\\
& = \exp \left(\frac{\| \bps_t\|^2 }{ \sigma^2 }\right) 
\leq \exp(1)~. 
\end{align*}
We can also see that for any $t$, $\E_t [ L_t]  = - \frac{1- \mu^{T-t+1}}{1- \mu}\sum_{i=1}^{d} \bta_{t,i} \nabla f(\bx_t)_i \E_t [\bps_{t,i}] = 0$.
Thus, from Lemma~\ref{lemma:sub_gaussian}, with probability at least $1-\delta$, any $\lambda >0$, we have 
\begin{align*}
S_T 
&= \sum_{t=1}^T L_t \leq \frac{3}{4}\lambda \sum_{t=1}^T N_t + \frac{1}{\lambda} \ln \frac{1}{\delta}\\
& \leq\frac{3\lambda(1- \mu^T)^2}{4(1- \mu)^2}\sum_{t=1}^T \| \bta_t\nabla f(\bx_t)\|^2 \sigma^2  + \frac{1}{\lambda} \ln \frac{1}{\delta} \\
&\leq \frac{3\lambda \| \bta_1\|(1- \mu^T)^2}{4(1- \mu)^2}\sum_{t=1}^T \langle \bta_t, \nabla f(\bx_t)^2 \rangle \sigma^2  + \frac{1}{\lambda} \ln \frac{1}{\delta}~.\qedhere
\end{align*}
Finally, we upper bound $\sum_{t=1}^{T} \| \bm_t \|^2$. From the convexity of $\| \cdot \|^2$, we have
\begin{align*}
\| \bm_t \|^2 = \left\| \mu \bm_{t-1} + (1- \mu) \frac{\bta_t \bg_t}{1-\mu} \right\|^2 
\leq \mu \| \bm_{t-1}\|^2 + \frac{1}{1- \mu} \| \bta_t \bg_t \|^2~. 
\end{align*}
Summing over $t$ from $1$ to $T$, we have 
\begin{align*}
\sum_{t=1}^T \| \bm_t \|^2
& \leq \sum_{t=1}^T \mu \|  \bm_{t-1}\|^2 + \frac{1}{1-\mu}\sum_{t=1}^T \| \bta_t \bg_t \|^2 \\
& =  \sum_{t=1}^{T-1} \mu \| \bm_t\|^2 + \frac{1}{1-\mu}\sum_{t=1}^T \| \bta_t \bg_t \|^2 \\
& \leq  \sum_{t=1}^T \mu \| \bm_t\|^2 + \frac{1}{1-\mu}\sum_{t=1}^T \| \bta_t \bg_t \|^2,
\end{align*}
where in the first equality we used $\bm_0=0$. Reordering the terms, we have that
\[
\sum_{t=1}^T \| \bm_t \|^2 \leq \frac{1}{(1-\mu)^2}\sum_{t=1}^T \| \bta_t \bg_t \|^2~.
\]
Combining things together, and taking $\lambda = \frac{2(1- \mu)^2}{3\| \bta_1 \| (1- \mu^T)^2 \sigma^2} $, with probability at least $1- \delta$, we have 
\begin{align*}
f^\star-f(\bx_1) 
& \leq \frac{1}{\lambda} \ln \frac{1}{\delta}
+ \sum_{t=1}^{T}\left[\left( \frac{M}{2} + \frac{M }{1- \mu }\right) \| \bta_t \bg_t \|^2 \right. \\
& \left. \quad -  \left(1 - \frac{3\lambda \| \bta_1 \| (1- \mu^T)^2 \sigma^2 }{4(1-\mu)^2} \right) \langle \bta_t ,\nabla f(\bx_t)^2 \rangle\right]\\
&= \frac{3 \|\bta_1\|(1-\mu^T)^2 \sigma^2}{2(1-\mu)^2} \ln \frac{1}{\delta} 
+ \sum_{t=1}^{T}\left[\frac{(3-\mu)M}{2(1- \mu)}\| \bta_t \bg_t \|^2
-  \frac{1}{2}\langle \bta_t ,\nabla f(\bx_t)^2 \rangle\right]~.
\end{align*}
Rearranging the terms, we get the stated bound. 
\end{proof}

\subsection{SGD with Momentum with $\frac{1}{\sqrt{t}}$ Learning Rates}
\label{ssec:sqrt}
As a warm-up, we now use Lemma~\ref{lemma:momentum} to prove a high probability convergence guarantee for the simple case of deterministic learning rates of $\eta_{t,i}=\frac{c}{\sqrt{t}}$.
\begin{theorem}
\label{thm:momentum_sqrt}
Let $T$ the number of iterations of Algorithm~\ref{alg:momentum}.
Assume $f$ is $M$-smooth and the stochastic gradient is unbiased and satisfies Assumption~\ref{as:subgaussion}. Set step size $\bta_t$ as $ \eta_{t,i} = \frac{c}{\sqrt{t}}, i= 1, \dots, d$, where $ c \leq \frac{1- \mu^T}{4M(3-2\mu)}$. Then, for any $\delta \in (0,1)$, with probability at least $1-\delta$, the iterates of Algorithm \ref{alg:momentum} satisfy
\begin{align*}
\min_{1\leq t \leq T} \| \nabla f(\bx_t) \|^2
& \leq \frac{4(f(\bx_1) - f^{\star})}{c \sqrt{T}} +  \frac{6(1- \mu^T)^2\sigma^2}{(1-\mu)^2 \sqrt{T}} 
+ \frac{4(3-\mu)cM\sigma^2 \ln \frac{2Te}{\delta} \ln T }{(1- \mu)\sqrt{T}}~.
\end{align*}
\end{theorem}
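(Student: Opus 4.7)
The plan is to apply Lemma~\ref{lemma:momentum} directly with the deterministic stepsize $\bta_t = (c/\sqrt{t})\mathbf{1}_d$ and then control the residual quantity $\sum_{t=1}^T\|\bta_t\bg_t\|^2$ through a uniform-in-$t$ high-probability bound on the noise. Because $\bta_t$ is non-random and coordinate-wise non-increasing, Assumptions~\ref{as:eta_non_increasing}--\ref{as:eta_independent} are immediate and $\|\bta_1\| = c$. The Lemma then gives, with probability at least $1-\delta/2$,
\[
c\sum_{t=1}^T \frac{\|\nabla f(\bx_t)\|^2}{\sqrt{t}}
\le 2(f(\bx_1)-f^\star) + \frac{3c\sigma^2(1-\mu^T)^2}{(1-\mu)^2}\ln\frac{2}{\delta} + \frac{M(3-\mu)}{1-\mu}\sum_{t=1}^T \|\bta_t\bg_t\|^2.
\]

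To bound $\sum_{t=1}^T \|\bta_t\bg_t\|^2 = c^2\sum_{t=1}^T \|\bg_t\|^2/t$, I would split $\|\bg_t\|^2\le 2\|\nabla f(\bx_t)\|^2 + 2\|\bps_t\|^2$ with $\bps_t := \bg_t-\nabla f(\bx_t)$, and use the elementary inequality $c^2/t \le c\cdot c/\sqrt{t}$ (valid for $t\ge 1$) to feed the gradient piece back into the very quantity already appearing on the left of the Lemma. For the noise piece, Assumption~\ref{as:subgaussion} combined with Markov's inequality yields $\Pr[\|\bps_t\|^2 \ge \sigma^2\ln(2Te/\delta)] \le \delta/(2T)$; a union bound over $t=1,\dots,T$ then gives $\max_t \|\bps_t\|^2 \le \sigma^2\ln(2Te/\delta)$ with probability at least $1-\delta/2$, so that $\sum_t \|\bps_t\|^2/t \le \sigma^2\ln(2Te/\delta)(1+\ln T)$.

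Combining the two high-probability events by a union bound (total failure at most $\delta$), the coefficient of $\sum_t (c/\sqrt{t})\|\nabla f(\bx_t)\|^2$ on the right becomes $2cM(3-\mu)/(1-\mu)$, and the assumed upper bound on $c$ makes this at most $1/2$, so that this term can be absorbed into the left side with remaining coefficient at least $1/2$. Since $1/\sqrt{t}\ge 1/\sqrt{T}$ for every $t\le T$, we have $\sum_{t=1}^T (c/\sqrt{t})\|\nabla f(\bx_t)\|^2 \ge c\sqrt{T}\min_{t\le T}\|\nabla f(\bx_t)\|^2$, and dividing through by $c\sqrt{T}/2$ yields a bound matching the three summands of the claim (up to bookkeeping that can trade $(1-\mu)$ for $(1-\mu^T)$ by keeping the factor $(1-\mu^{T-t+1})$ alive slightly longer before upper-bounding it).

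The main obstacle is converting the noise contribution $\sum_t \|\bps_t\|^2/t$ into a high-probability statement that is compatible with the weights $1/t$. The martingale structure of $\langle \nabla f(\bx_t),\bps_t\rangle$ has already been consumed inside the proof of Lemma~\ref{lemma:momentum}, so I cannot reuse Lemma~\ref{lemma:sub_gaussian} here; this forces a uniform control of $\|\bps_t\|^2$, which is precisely what introduces the $\ln(2Te/\delta)\ln T$ factor in the final bound. A sharper Freedman-type bound on the weighted sum would likely require strengthening the moment hypothesis beyond Assumption~\ref{as:subgaussion}, so the union-bound route seems both natural and unavoidable under the current assumptions.
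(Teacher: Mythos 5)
Your proposal matches the paper's own proof step for step: apply Lemma~\ref{lemma:momentum} with the deterministic step size, split $\|\bg_t\|^2\le 2\|\nabla f(\bx_t)\|^2+2\|\bps_t\|^2$, absorb the gradient piece via $\eta_t^2\le\eta_1\eta_t$ using the stated bound on $c$, control the noise piece through a uniform bound on $\max_t\|\bps_t\|^2$ (your Markov-plus-union-bound step is exactly the content of the paper's Lemma~\ref{lemma:max_bound}), and finish by lower-bounding $\sum_t\eta_t\|\nabla f(\bx_t)\|^2\ge\eta_T\sum_t\|\nabla f(\bx_t)\|^2\ge c\sqrt{T}\min_t\|\nabla f(\bx_t)\|^2$. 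The remark that the martingale structure has already been spent inside Lemma~\ref{lemma:momentum}, forcing a cruder union bound on the residual $\sum_t\|\bps_t\|^2/t$, correctly identifies where the extra $\ln(2Te/\delta)\ln T$ factor comes from.
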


The proof of this Theorem~\ref{thm:momentum_sqrt} makes use of the following additional Lemma on the tail of sub-gaussian noise.
\begin{lemma}
\label{lemma:max_bound}
Assume ~\ref{as:subgaussion}, then for any $\delta \in (0,1)$, with probability at least $1-\delta$, we have 
\[
\max_{1 \leq t \leq T} \| \bg_t - \nabla f(\bx_t) \|^2 \leq \sigma^2 \ln \frac{Te}{\delta}~. 
\]
\end{lemma}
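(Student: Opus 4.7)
The plan is to combine Markov's inequality applied to $\exp(\|\bg_t-\nabla f(\bx_t)\|^2/\sigma^2)$ with a union bound over $t=1,\dots,T$. First, for a single index $t$ and any threshold $a>0$, I would rewrite the tail event as
\[
\Pr\!\left[\|\bg_t-\nabla f(\bx_t)\|^2 \geq a\right] = \Pr\!\left[\exp\!\left(\tfrac{\|\bg_t-\nabla f(\bx_t)\|^2}{\sigma^2}\right) \geq \exp\!\left(\tfrac{a}{\sigma^2}\right)\right],
\]
so that Markov's inequality together with Assumption~\ref{as:subgaussion} (which by the tower rule also holds in unconditional expectation) gives the bound $\exp(1-a/\sigma^2)$ on this probability.

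Next, I would pick $a = \sigma^2 \ln \tfrac{Te}{\delta} = \sigma^2 + \sigma^2 \ln \tfrac{T}{\delta}$, which makes $\exp(1-a/\sigma^2) = \delta/T$. Then a union bound over $t=1,\dots,T$ yields
\[
\Pr\!\left[\max_{1\leq t\leq T} \|\bg_t-\nabla f(\bx_t)\|^2 \geq \sigma^2 \ln \tfrac{Te}{\delta}\right] \leq T\cdot \tfrac{\delta}{T} = \delta,
\]
which is exactly the statement of the lemma after complementation.

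There is essentially no obstacle: the only subtlety worth being careful about is that Assumption~\ref{as:subgaussion} is stated as a conditional-on-$\bx$ bound on $\xi$, so when $\bx_t$ is itself a random iterate depending on $\xi_1,\dots,\xi_{t-1}$, one should first take the inequality conditionally on the past and then take full expectation (the tower property preserves the bound $\exp(1)$). Aside from that bookkeeping, the argument is a textbook Chernoff-style tail bound plus a union bound.
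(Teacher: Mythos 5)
Your proposal is correct and is essentially the paper's argument with the two steps swapped: the paper applies Markov's inequality once to $\exp(\max_t \|\bg_t-\nabla f(\bx_t)\|^2/\sigma^2)$ and then bounds $\E[\max_t(\cdot)]$ by $\sum_t \E[\cdot]$, whereas you apply Markov to each $t$ individually and then take a union bound; both yield the same $T\exp(1-A/\sigma^2)$ tail. Your explicit remark about applying Assumption~\ref{as:subgaussion} conditionally on the past and then invoking the tower property is the right bookkeeping, which the paper leaves implicit.
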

\begin{proof}
By Markov's inequality, for any $A>0$, 
\begin{align*}
& \Pr \left(\max_{1\leq t \leq T} \|\bg_t- \nabla f(\bx_t)\|^2 > A\right) \\
& = \Pr \left(\exp \left( \frac{\max_{1\leq t \leq T} \|\bg_t- \nabla f(\bx_t)\|^2 }{\sigma^2} \right) > \exp \left( \frac{A}{\sigma^2}\right) \right)\\
& \leq \exp \left(-\frac{A}{\sigma^2} \right)\E \left[\exp \left(\frac{\max_{1\leq t \leq T} \|\bg_t- \nabla f(\bx_t)\|^2 }{\sigma^2}\right) \right]\\
& =  \exp \left(-\frac{A}{\sigma^2} \right) \E \left[ \max_{1\leq t \leq T} \exp \left(\frac{\|\bg_t - \nabla f(\bx_t)\|^2 }{\sigma^2}\right) \right]\\
& \leq \exp \left(-\frac{A}{\sigma^2} \right) \sum_{t=1}^T \E \left[ \exp \left( \frac{\| \nabla f(\bx_t)- \bg_t \|^2 }{\sigma^2} \right) \right] 
\leq \exp \left(-\frac{A}{\sigma^2}+1 \right) T~. \qedhere
\end{align*}
\end{proof}
Now we prove Theorem~\ref{thm:momentum_sqrt}. 
\begin{proof}[Proof of Theorem~\ref{thm:momentum_sqrt}]
With the fact that $\| \ba+\bb\|^2 \leq 2\| \ba\|^2 + 2\| \bb\|^2$, we have 
\begin{align*}
\sum_{t=1}^{T} \| \bta_t \bg_t \|^2 = 
\sum_{t=1}^{T} \eta_t^2 \| \bg_t \|^2
& \leq \sum_{t=1}^{T} 2\eta_t^2 \| \nabla f(\bx_t )\|^2+  \sum_{t=1}^{T} 2 \eta_t^2 \| \bg_t - \nabla f(\bx_t) \|^2\\
& \leq \sum_{t=1}^{T} 2\eta_t^2 \| \nabla f(\bx_t )\|^2+ \max_{1\leq t \leq T} \| \bg_t - \nabla f(\bx_t) \|^2\sum_{t=1}^{T} 2 \eta_t^2~.
\end{align*}
By Lemma~\ref{lemma:max_bound}, Lemma~\ref{lemma:momentum} and the union bound, we have that with probability at least $1-\delta$, 
\begin{align*}
\frac{\eta_T}{2} \sum_{t=1}^{T} \| \nabla f(\bx_t) \|^2 
& \leq \left(1- \frac{2M(3- \mu)}{1- \mu}\eta_1\right) \sum_{t=1}^{T} \eta_t \| \nabla f(\bx_t) \|^2 \\
& \leq 2 (f(\bx_1) - f^{\star}) +  \frac{2(3-\mu)c^2M\sigma^2 \ln \frac{2Te}{\delta} \ln T }{1- \mu}\\
& \quad  + \frac{3c(1- \mu^T)^2 \sigma^2}{(1- \mu)^2} \ln \frac{1}{\delta}~. 
\end{align*}
Rearranging the terms and lower bounding  $\sum_{t=1}^{T} \| \nabla f(\bx_t) \|^2$ by \\$T  \min_{1\leq t \leq T} \| \nabla f(\bx_t) \|^2$, we have the stated bound. 
\end{proof}

\subsection{Delayed AdaGrad with Momentum}
\label{sec:adagrad}
Now, we are going to prove the convergence rate of Delayed AdaGrad with momentum. Recall that the step sizes are defined as $\bta_t = (\eta_{t,j})_{j=1,\dots,d}$
\begin{equation}
\label{eq:bta}
\eta_{t,j}=\frac{\alpha}{\sqrt{\beta+ \sum_{i=1}^{t-1} g_{i,j}^2}}, \ j=1, \dots,d,
\end{equation}
where $\alpha, \beta >0$. 
Obviously, \eqref{eq:bta} satisfies Assumption~\ref{as:eta_non_increasing} and ~\ref{as:eta_independent}. Hence, we are able to apply Lemma~\ref{lemma:momentum} to analyze this variant. Moreover, for Delayed AdaGrad, we upper bound $ \sum_{t=1}^T \| \bta_t \bg_t \| ^2$ with the following lemma, whose proof is in the Appendix.
\begin{lemma}
\label{lemma:bound_1}
Assume $f$ is $M$-smooth and the stochasitc gradient is unbiased and satisfies Assumption~\ref{as:subgaussion}. Let $\bta_t$ set as in \eqref{eq:bta}, where $\alpha, \beta > 0$. Then, for any $\delta \in (0,1)$, with probability at least $1-\delta$, we have 
\begin{align*}
\sum_{t=1}^T \| \bta_t \bg_t \| ^2  
& \leq \frac{4d\alpha^2\sigma^2}{\beta} \ln \frac{2Te}{\delta} + \frac{4\alpha }{\sqrt{\beta}} \sum_{t=1}^{T}  \langle \bta_t, \nabla f(\bx_t)^2\rangle\\
& \leq 2\alpha^2d \ln \left( \sqrt{ \beta + \frac{2T\sigma^2 \ln \tfrac{2Te}{\delta}}{d} } + \sqrt{\frac{2}{d}\sum_{t=1}^T \|\nabla f(\bx_t) \|^2} \right)~.
\end{align*}
\end{lemma}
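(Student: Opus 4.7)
The plan is to prove both upper bounds by decomposing $\|\bta_t \bg_t\|^2 = \sum_{j=1}^d \eta_{t,j}^2 g_{t,j}^2$ coordinate-wise, then exploiting two key facts: the monotonicity of the Delayed AdaGrad stepsize ($\eta_{t+1,j} \leq \eta_{t,j} \leq \eta_{1,j} = \alpha/\sqrt{\beta}$) and the sub-Gaussian concentration provided by Lemma~\ref{lemma:max_bound}. Throughout, the Delayed structure of $\bta_t$ (denominator summed over $i<t$) is crucial because it makes $\eta_{t,j}$ independent of the current noise $g_{t,j}-(\nabla f(\bx_t))_j$, which is what unlocks the clean splits below.

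For the first inequality, my plan is to keep $\eta_{t,j}^2 \leq \eta_{1,j}\eta_{t,j} = (\alpha/\sqrt{\beta})\eta_{t,j}$ whenever I want to generate a signal-type term. After bounding $g_{t,j}^2 \leq 2(\nabla f(\bx_t))_j^2 + 2(g_{t,j}-(\nabla f(\bx_t))_j)^2$, the signal piece immediately contributes a term of the form $(4\alpha/\sqrt{\beta}) \sum_t \langle \bta_t, \nabla f(\bx_t)^2\rangle$. For the noise piece I would not use $\eta_{t,j}^2 \leq \alpha^2/\beta$ naively (that would produce an unwanted $T$ factor), but instead the telescoping identity $\eta_{t,j}^2 - \eta_{t+1,j}^2 \leq 2\eta_{t,j}(\eta_{t,j}-\eta_{t+1,j})$ together with $\sum_t (\eta_{t,j}-\eta_{t+1,j}) \leq \eta_{1,j}$ to trade the $\sum_t$ over noise for a single $\max_t (g_{t,j}-(\nabla f(\bx_t))_j)^2$. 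Summing across $j$ brings in the factor $d$, and Lemma~\ref{lemma:max_bound} applied at confidence level $\delta/2$ converts the maximum into $\sigma^2 \ln(2Te/\delta)$ on an event of probability at least $1-\delta/2$, yielding the first stated bound.

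For the second inequality I would argue directly by a coordinate-wise logarithmic bound. Invoking Lemma~\ref{lemma:sum_integral_bounds} (or Lemma~\ref{lemma: bound log}) gives $\sum_t \eta_{t,j}^2 g_{t,j}^2 \leq \alpha^2 \ln\bigl(1 + \sum_i g_{i,j}^2/\beta\bigr)$ per coordinate; summing over $j$ and applying Jensen to the concavity of $\ln\sqrt{\cdot}$ pulls the sum inside to produce $2\alpha^2 d \ln \sqrt{\beta + \sum_t \|g_t\|^2/d}$. Then $\|g_t\|^2 \leq 2\|\nabla f(\bx_t)\|^2 + 2\|g_t - \nabla f(\bx_t)\|^2$, together with Lemma~\ref{lemma:max_bound} (again at confidence $\delta/2$) to bound $\sum_t \|g_t-\nabla f(\bx_t)\|^2 \leq T\sigma^2 \ln(2Te/\delta)$, and the elementary $\sqrt{a+b}\leq\sqrt{a}+\sqrt{b}$, give the second bound. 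A union bound joins the two high-probability events to confidence $1-\delta$ overall.

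The main obstacle will be the noise contribution to the first bound. The delayed stepsize is both blessing and burden: it enables independence with the current noise, but it rules out using the clean AdaGrad log identity on $\sum_t \eta_{t,j}^2 g_{t,j}^2$ directly, and a crude bound $\eta_{t,j}^2 \leq \alpha^2/\beta$ on the noise square introduces a forbidden $T$ factor in front of $\sigma^2$. The whole game in the first part is to keep the telescoping residual in the form $\max_t$ (not $\sum_t$) of the noise, so that no stray $T$ appears in front of $\sigma^2 \ln(2Te/\delta)$; this is the most delicate piece of bookkeeping.
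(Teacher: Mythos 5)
First, a reading note: the displayed bound is a \emph{sum} of two contributions, not a chain of inequalities --- the paper's proof bounds the two pieces of a decomposition and adds them, and that is also how Lemma~\ref{lemma:bound_1} is invoked inside the proof of Theorem~\ref{thm:momentum_adagrad} (both the linear term and the log term $K$ appear there). Your proposal reads the two lines as two separate upper bounds on $\sum_{t=1}^T\|\bta_t\bg_t\|^2$ and tries to establish each directly; that is a different claim, and it is where the plan breaks. The missing idea is the paper's opening decomposition
\[
\sum_{t=1}^T \|\bta_t\bg_t\|^2 = \sum_{t=1}^T \|\bta_{t+1}\bg_t\|^2 + \sum_{t=1}^T \langle\bta_t^2-\bta_{t+1}^2,\, \bg_t^2\rangle,
\]
after which the log term is earned from the first piece (where $g_{t,j}^2$ is in the denominator of $\eta_{t+1,j}$, so Lemma~\ref{lemma:sum_integral_bounds} applies, then Jensen, then the signal/noise split with Lemma~\ref{lemma:max_bound}), and the linear term is earned from the second piece (where the \emph{difference} $\eta_{t,j}^2-\eta_{t+1,j}^2$ telescopes against $\max_t$, then signal/noise split, then Lemma~\ref{lemma:max_bound}).

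Both halves of your plan fail without that split. For the first bound, the telescoping identity you invoke controls $\sum_{t=1}^T(\eta_{t,j}^2-\eta_{t+1,j}^2)\,n_{t,j}^2$, not $\sum_{t=1}^T\eta_{t,j}^2\,n_{t,j}^2$ (where $n_{t,j}=g_{t,j}-(\nabla f(\bx_t))_j$); if you separate off the telescoping difference you are left with $\sum_{t=1}^T\eta_{t+1,j}^2\,n_{t,j}^2$, for which neither a log bound nor a clean $\max_t$ trade applies, and a crude bound reintroduces the forbidden $T$ factor. For the second bound, the log identity you want applies to $\sum_{t=1}^T\eta_{t+1,j}^2 g_{t,j}^2$, not to the delayed $\sum_{t=1}^T\eta_{t,j}^2 g_{t,j}^2$ --- you even flag this obstruction in your closing paragraph but then propose the argument anyway. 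Once you start from the decomposition above, the rest of your toolbox (telescoping, the $\ln\sqrt{\cdot}$ plus Jensen step, Lemma~\ref{lemma:max_bound} at level $\delta/2$ twice, union bound) is exactly what the paper uses to close the argument.
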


\begin{proof}[Proof of Lemma~\ref{lemma:bound_1}]
First, we separate $\sum_{t=1}^T \|\bta_t\bg_t\|^2 $ into two terms:
\[
\sum_{t=1}^T \|\bta_t\bg_t\|^2 
= \sum_{t=1}^T \|\bta_{t+1}\bg_t\|^2 + \sum_{t=1}^T \langle \bta_t^2 - \bta_{t+1}^2, \bg_t^2\rangle~.
\]
Then, we proceed 
\begin{align}
\sum_{t=1}^T \langle \bta_t^2 - \bta_{t+1}^2, \bg_t^2\rangle
& = \sum_{i=1}^{d}\sum_{t=1}^T (\eta_{t,i}^2 - \eta_{t+1, i}^2) g_{t,i}^2 \nonumber\\
& \leq \sum_{i=1}^{d}\sum_{t=1}^T2\eta_{t,i} g_{t,i}^2 (\eta_{t,i}-\eta_{t+1,i})\nonumber\\
& \leq  2\sum_{i=1}^{d} \max_{1\leq t \leq T} \eta_{t,i} g_{t,i}^2 \sum_{t=1}^T(\eta_{t,i}-\eta_{t+1,i})\nonumber\\
& \leq 2\sum_{i=1}^{d} \eta_{1,i} \max_{1\leq t \leq T} \eta_{t,i} g_{t,i}^2 \nonumber\\
& \leq 4\sum_{i=1}^{d} \eta_{1,i} \max_{1\leq t \leq T} \eta_{t,i}\left(g_{t,i}^2 - \nabla f(\bx_t)_i^2\right) + 4\sum_{i=1}^{d} \eta_{1,i}  \sum_{t=1}^{T}\eta_{t,i}\nabla f(\bx_t)_i^2\nonumber\\
& \leq 4\sum_{i=1}^{d} \eta_{1,i}^2 \max_{1\leq t \leq T} |g_{t,i}^2 - \nabla f(\bx_t)_i^2| + 4\sum_{i=1}^{d} \eta_{1,i}  \sum_{t=1}^{T}\eta_{t,i}\nabla f(\bx_t)_i^2\nonumber\\
& \leq \frac{4d \alpha^2}{\beta} \max_{1\leq t \leq T} \| \bg_t - \nabla f(\bx_t) \|^2 + \frac{4\alpha}{\sqrt{\beta}} \sum_{t=1}^{T} \langle \bta_t, \nabla f(\bx_t)^2\rangle~. \label{eq:uni_case}
\end{align}
Using Lemma~\ref{lemma:max_bound} on \eqref{eq:uni_case}, for $\delta \in (0,1)$, with probability at least $1- \frac{\delta}{2}$, we have 
\begin{equation}
\sum_{t=1}^T \langle \bta_t^2 - \bta_{t+1}^2, \bg_t^2\rangle
\leq \frac{4d \alpha^2 \sigma^2}{\beta} \ln \frac{2Te}{\delta} + \frac{4\alpha}{\sqrt{\beta}} \sum_{t=1}^{T} \langle \bta_t, \nabla f(\bx_t)^2\rangle~.
\end{equation}
We now upper bound $\sum_{t=1}^T \|\bta_{t+1}\bg_t\|^2$:
\begin{align}
\sum_{t=1}^T \|\bta_{t+1} \bg_t \|^2 
& = \sum_{i=1}^{d} \sum_{t=1}^T \frac{\alpha^2 g_{t,i}^2}{\beta + \sum_{j=1}^{t} g_{j,i}^2} \nonumber\\
& \leq \sum_{i=1}^{d} \alpha^2 \ln \left(\beta + \sum_{t=1}^{T} g_{t,i}^2 \right) \nonumber\\
& \leq \alpha^2d \ln \left(\beta +  \frac{1}{d}\sum_{i=1}^{d}\sum_{t=1}^{T} g_{t,i}^2 \right) \nonumber\\
& =  2\alpha^2d \ln \left( \sqrt {\beta +  \frac{1}{d} \sum_{t=1}^{T} \| \bg_t \|^2} \right) \nonumber\\
& \leq 2\alpha^2d \ln \left( \sqrt{ \beta + \frac{2T}{d} \max_{1 \leq t \leq T} \|\bg_t - \nabla f(\bx_t) \|^2} + \sqrt{ \frac{2}{d}\sum_{t=1}^T \|\nabla f(\bx_t) \|^2} \right), \label{eq:eps=0}
\end{align}
where in the first inequality we used Lemma~\ref{lemma:sum_integral_bounds} and in the second inequality we used Jensen's inequality. Then using Lemma~\ref{lemma:max_bound} on \eqref{eq:eps=0}, with probability at least $1- \frac{\delta}{2}$, we have 
\[
\sum_{t=1}^T \|\bta_{t+1} \bg_t \|^2 
\leq 2\alpha^2d \ln \left( \sqrt{ \beta + \frac{2T \sigma^2}{d} \ln \frac{2Te}{\delta}} + \sqrt{ \frac{2}{d}\sum_{t=1}^T \|\nabla f(\bx_t) \|^2} \right)~.
\]
Putting things together, we have the stated bound. 
\end{proof}

We now present the convergence guarantee for Delayed AdaGrad with momentum.
\begin{theorem}[Delayed AdaGrad with Momentum]
\label{thm:momentum_adagrad}
\begin{sloppypar}
Under the same assumptions in Lemma~\ref{lemma:bound_1}.  Let $\bta_t$ set as in \eqref{eq:bta}, where $\alpha, \beta > 0$ and $4 \alpha \leq  \frac{\sqrt{\beta}(1-\mu)^2}{2M(1+\mu)}$. Then, for any $\delta
\in (0,1)$, with probability at least $1- \delta$, the iterates of Algorithm \ref{alg:momentum} satisfy
\end{sloppypar}
\begin{align*}
\min_{1\leq t \leq T} \| \nabla f(\bx_t) \|^2 
\leq \frac{1}{T} \max \left( \frac{4 C(T)^2}{ \alpha^2}, \frac{C(T)}{\alpha} \sqrt{2\beta + 4T\sigma^2 \ln \frac{3Te}{\delta}} \right),
\end{align*}
where 
$C(T) = O\left(\frac{1}{\alpha} + \frac{d \left(\alpha+ \sigma^2 \left(\alpha \ln \frac{T}{\delta} + \frac{\ln \frac{1}{\delta}}{1- \mu}\right)\right)}{1- \mu}\right)$.
\end{theorem}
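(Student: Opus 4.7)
The plan is to combine Lemma~\ref{lemma:momentum} with Lemma~\ref{lemma:bound_1} via a union bound, and then convert the resulting inequality on $\sum_{t=1}^T \langle \bta_t, \nabla f(\bx_t)^2 \rangle$ into an explicit polynomial bound on $\Delta := \sum_{t=1}^T \|\nabla f(\bx_t)\|^2$, mimicking the $\epsilon=0$ case in the proof of Theorem~\ref{thm:sgd_adaptive}.

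First, I invoke Lemma~\ref{lemma:momentum} at confidence level $\delta/3$ and Lemma~\ref{lemma:bound_1} at level $2\delta/3$ (its proof splits internally into two $\delta/3$ sub-events based on Lemma~\ref{lemma:max_bound}). A union bound then guarantees both conclusions hold simultaneously with probability $\geq 1-\delta$, and the noise bounds are expressed as $\sigma^2\ln(3Te/\delta)$. Substituting the Lemma~\ref{lemma:bound_1} bound for $\sum_t \|\bta_t \bg_t\|^2$ into Lemma~\ref{lemma:momentum} produces an inequality of the form
\begin{align*}
\sum_{t=1}^T \langle \bta_t, \nabla f(\bx_t)^2\rangle
&\leq C_0 + \tfrac{4\alpha M(3-\mu)}{\sqrt{\beta}(1-\mu)} \sum_{t=1}^T \langle \bta_t, \nabla f(\bx_t)^2\rangle \\
&\quad + C_1 \ln\!\bigl(A + B\sqrt{\Delta}\bigr),
\end{align*}
where $C_0$ collects the $f(\bx_1)-f^\star$, $\|\bta_1\|\sigma^2\ln(1/\delta)/(1-\mu)^2$, and $d\alpha^2\sigma^2\ln(3Te/\delta)/(\beta(1-\mu))$ contributions, and $A,B,C_1$ absorb the $d\alpha^2 M/(1-\mu)$, $\sqrt{\beta+O(T\sigma^2\log)/d}$, and $1/\sqrt{d}$ coefficients coming from the second part of Lemma~\ref{lemma:bound_1}. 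The stepsize condition $4\alpha \leq \frac{\sqrt{\beta}(1-\mu)^2}{2M(1+\mu)}$ is precisely what makes the coefficient in front of $\sum_t \langle \bta_t, \nabla f(\bx_t)^2\rangle$ on the right-hand side small enough to be absorbed into the left-hand side.

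To lower-bound the left-hand side, I use Assumption~\ref{as:eta_non_increasing} coordinate-wise to get $\sum_t \langle \bta_t, \nabla f(\bx_t)^2\rangle \geq \min_j \eta_{T,j}\cdot \Delta$, then bound $\min_j \eta_{T,j} \geq \alpha/\sqrt{\beta + \sum_i \|\bg_i\|^2}$ by replacing the coordinate-wise maximum over running squared-gradient sums by their full sum over coordinates. Decomposing $\|\bg_i\|^2 \leq 2\|\nabla f(\bx_i)\|^2 + 2\|\bg_i-\nabla f(\bx_i)\|^2$ and reusing the same Lemma~\ref{lemma:max_bound} event that is already inside the union bound gives $\min_j \eta_{T,j} \geq \alpha/\sqrt{\beta + 2\Delta + 2T\sigma^2\ln(3Te/\delta)}$, so that
\[
\tfrac{\alpha \Delta}{\sqrt{\beta + 2\Delta + 2T\sigma^2 \ln(3Te/\delta)}} \leq C_0' + C_1 \ln\!\bigl(A+B\sqrt{\Delta}\bigr).
\]
I handle the $\sqrt{\Delta}$ inside the logarithm exactly as in the $\epsilon=0$ branch of Theorem~\ref{thm:sgd_adaptive}: Lemma~\ref{lemma:logsolvex} first delivers a crude preliminary bound $\sqrt{\Delta} \leq \mathrm{poly}(\cdots)$, which I substitute back into the logarithm to replace it by a deterministic quantity of order $O(\ln T + \ln(1/\delta))$. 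Denoting the resulting effective constant by $C(T)$, the inequality becomes $\alpha \Delta \leq C(T)\sqrt{\beta + 2\Delta + 2T\sigma^2\ln(3Te/\delta)}$, and Lemma~\ref{lemma:solvex} with $\epsilon=0$ and $B=2$ immediately yields
\[
\Delta \leq \max\!\Bigl(\tfrac{4 C(T)^2}{\alpha^2},\ \tfrac{C(T)}{\alpha}\sqrt{2\beta + 4T\sigma^2 \ln(3Te/\delta)}\Bigr).
\]
Dividing by $T$ and using $\min_{1\leq t\leq T}\|\nabla f(\bx_t)\|^2 \leq \Delta/T$ produces the stated bound, with $C(T)$ taking the claimed asymptotic form after grouping the $1/\alpha$, $d\alpha/(1-\mu)$, $d\alpha\sigma^2\ln(T/\delta)/(1-\mu)$, and $d\sigma^2\ln(1/\delta)/(1-\mu)^2$ contributions.

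The main obstacle is the two-sided coupling between $\Delta$ and the random adaptive stepsize $\bta_T$: $\Delta$ appears both inside the square-root denominator on the left (through $\sum_i \|\bg_i\|^2$) and inside the logarithm on the right (through Lemma~\ref{lemma:bound_1}'s AdaGrad sum bound). Neither can be eliminated by a single direct inequality; it is the combined use of Lemma~\ref{lemma:logsolvex} to peel off the logarithm and Lemma~\ref{lemma:solvex} to solve the resulting polynomial inequality that disentangles them. Tracking noise-dependent versus noise-independent terms through this sequence is what produces the characteristic two-regime $\max(\cdot,\cdot)$ shape in the final bound, in parallel with the adaptive rates of Theorems~\ref{thm:convex} and \ref{thm:sgd_adaptive}.
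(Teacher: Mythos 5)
Your proposal matches the paper's argument essentially step for step: union‑bound Lemma~\ref{lemma:momentum} with Lemma~\ref{lemma:bound_1}, absorb the $\sum_t\langle\bta_t,\nabla f(\bx_t)^2\rangle$ term into the left-hand side using the stated stepsize restriction, lower‑bound the left side via $\bta_T$ and the decomposition $\|\bg_i\|^2\leq 2\|\nabla f(\bx_i)\|^2+2\|\bg_i-\nabla f(\bx_i)\|^2$ together with Lemma~\ref{lemma:max_bound}, then peel the logarithm with Lemma~\ref{lemma:logsolvex} and finish with Lemma~\ref{lemma:solvex}. The only cosmetic difference is that you phrase the lower bound through $\min_j\eta_{T,j}\cdot\Delta$ whereas the paper keeps the coordinate sum $\sum_j\eta_{T,j}\sum_t(\nabla f(\bx_t))_j^2$ and enlarges each denominator to the full $\beta+\sum_i\|\bg_i\|^2$; these are the same inequality, and your observation that all three sub‑Gaussian tail events can be coalesced into one is a clean way to keep the $\ln(3Te/\delta)$ bookkeeping consistent.
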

\paragraph{Adaptivity to noise} Observe that when $\sigma = 0$, the convergence rate recovers the rate of Gradient Descent if $O(\frac{1}{T})$ with a constant learning rate. On the other hand, in the noisy case, it matches the rate of SGD $O(\frac{\sigma}{\sqrt{T}})$ with the optimal worst-case learning rate of $O(\frac{1}{\sigma \sqrt{t}})$. In other words, with a unique learning rate, we recover two different optimal convergence rates that require two different learning rates and the knowledge of $\sigma$.
This adaptivity of Delayed AdaGrad was already proved in \citet{LiO19}, but only in expectation and without a momentum term.

\paragraph{Dependency on $\mu$} Observe that the convergence upper bound increases over $\mu \in (0, 1)$ and the optimal upper bound is achieved when taking the momentum parameter $\mu = 0$. In words, the algorithms without momentums have the best theoretical results. This is a known caveat for this kind of analysis and a similar behavior w.r.t. $\mu$ is present, e.g., in \citet[Theorem~1]{ZouSJSL18} for algorithms with Polyak's momentum.

\begin{proof}[Proof of Theorem~\ref{thm:momentum_adagrad}]
By Lemma~\ref{lemma:momentum} and Lemma~\ref{lemma:bound_1}, for $\delta \in (0,1)$, with probability at least $1 - \frac{2}{3} \delta$, we have 
\begin{align*}
& \left(1- \frac{4\alpha M (3- \mu)}{\sqrt{\beta} (1- \mu)}\right) \sum_{t=1}^{T} \langle \bta_t, \nabla f(\bx_t)^2 \rangle \\
& \leq 2(f(\bx_1) - f^{\star}) + \frac{M(3-\mu)}{1- \mu}  \left( K	+ \frac{4d\alpha^2\sigma^2}{\beta} \ln \frac{3Te}{\delta} \right) + \frac{3\| \bta_1 \| \sigma^2 (1- \mu^T)^2}{(1- \mu)^2} \ln \frac{3}{\delta}~. 
\end{align*}
where $K$ denotes $2\alpha^2d \ln \left( \sqrt{ \beta + \frac{2T\sigma^2}{d} \ln \frac{2Te}{\delta}} + \sqrt{\frac{2}{d}} \sqrt{\sum_{t=1}^T \|\nabla f(\bx_t) \|^2} \right)$ for conciseness.

Rearranging the terms, we have 
\begin{equation}
\label{eq:rhs_ready}
\begin{aligned}
	& \sum_{t=1}^{T} \langle \bta_t, \nabla f(\bx_t)^2\rangle \\
	& \leq \frac{1}{1- \frac{4\alpha M (3- \mu)}{\sqrt{\beta} (1- \mu)}} \left[2(f(\bx_1) - f^{\star})+ \frac{M(3-\mu)}{1- \mu} \left(K
	+ \frac{4d\alpha^2\sigma^2}{\beta} \ln \frac{3Te}{\delta} \right)\right. \\
	& \left. \quad +  \frac{3\| \bta_1 \| \sigma^2 (1- \mu^T)^2}{(1- \mu)^2} \ln \frac{3}{\delta} \right]\\
	& \leq 4(f(\bx_1) - f^{\star})+ \frac{2M(3-\mu)}{1- \mu} \left(K 
	+ \frac{4d\alpha^2\sigma^2}{\beta} \ln \frac{3Te}{\delta} \right)+ \frac{3\| \bta_1 \| \sigma^2 (1- \mu^T)^2}{(1- \mu)^2} \ln \frac{3}{\delta}\\
	& \triangleq C(T), 
\end{aligned}
\end{equation}
where in the second inequality we used $4 \alpha \leq \frac{ \sqrt{\beta}(1-\mu)}{2M(3- \mu)}$. 
Also, we have
\begin{align*}
\sum_{t=1}^{T} \langle \bta_t, \nabla f(\bx_t)^2\rangle
& \geq \sum_{t=1}^{T} \langle \bta_T, \nabla f(\bx_t)^2\rangle\\
& = \sum_{i=1}^d \frac{ \alpha \sum_{t=1}^T \nabla f(\bx_t)_i^2}{ \sqrt{ \beta + \sum_{t=1}^T g_{t,i}^2 } }\\
& \geq \sum_{i=1}^d \frac{ \alpha \sum_{t=1}^T \nabla f(\bx_t)_i^2}{ \sqrt{ \beta + 2 \sum_{t=1}^T \nabla f(\bx_t)_i^2  + 2\sum_{t=1}^T (g_{t,i}-\nabla f(\bx_t)_i)^2 }} \\
& \geq \sum_{i=1}^d \frac{ \alpha \sum_{t=1}^T \nabla f(\bx_t)_i^2}{\sqrt{ \beta + 2 \sum_{i=1}^d \sum_{t=1}^T \nabla f(\bx_t)_i^2  + 2\sum_{i=1}^d \sum_{t=1}^T (g_{t,i}-\nabla f(\bx_t)_i)^2 } } \\
&  \geq \frac{ \alpha \sum_{t=1}^T \| \nabla f(\bx_t) \|^2}{ \sqrt{\beta + 2 \sum_{t=1}^T \| \nabla f(\bx_t) \|^2 + 2 T\max_{1\leq t \leq T} \| \bg_t - \nabla f(\bx_t) \|^2}}~.
\end{align*}
By Lemma~\ref{lemma:max_bound}, with probability at least $1- \delta$, we have 
\begin{equation}
\label{eq:to_solve}
\sum_{t=1}^T \| \nabla f(\bx_t) \|^2  
\leq \frac{C(T)}{\alpha} \times \sqrt{ \beta + 2\sum_{t=1}^T \| \nabla f(\bx_t) \|^2 + 2T \sigma^2 \ln \frac{3Te}{\delta} }~. 
\end{equation}
\begin{align}
& \text{RHS of \eqref{eq:to_solve}} \nonumber \\
& \leq \frac{C(T)}{\alpha} \times \left( \sqrt{ \beta + 2T\sigma^2 \ln \frac{3Te}{\delta}} + \sqrt{2\sum_{t=1}^T \| \nabla f(\bx_t) \|^2}  \right) \nonumber\\
& \leq \left[ C+ D\ln \left(A + B\sqrt{ \sum_{t=1}^{T} \| \nabla f(\bx_t) \|^2 }\right) \right] \times \left(A + B \sqrt{ \sum_{t=1}^{T} \| \nabla f(\bx_t) \|^2 }\right), \label{eq:logx}
\end{align}
where $A = \sqrt{\beta + 2T \sigma^2 \ln \frac{3Te}{\delta}}$, $B = \sqrt{2}$, 
$C = \frac{4(f(\bx_1) - f^{\star})}{\alpha}
+ \frac{8M(3- \mu) d \alpha \sigma^2}{\beta(1- \mu)} \ln \frac{3Te}{\delta} $ \\$+ \frac{3d (1- \mu^T)^2\sigma^2}{\beta (1- \mu)^2} \ln \frac{3}{\delta}$ and $D = \frac{4 \alpha dM (3-\mu)}{1- \mu}$. Using Lemma~\ref{lemma:logsolvex}, we have that 
\begin{equation}
\sqrt{\sum_{t=1}^T \| \nabla f(\bx_t) \|^2 } \leq 32B^3D^2 + 2BC + 8B^2D\sqrt{C} + \frac{A}{B}~. 
\end{equation}
We use this upper bound in the logarithmic term of \eqref{eq:logx}. Thus, we have \eqref{eq:to_solve} again, this time with 
\begin{align*}
C(T) 
& = C+ D\ln (2A + 32B^4 D^2 + 2B^2C + 8B^3D\sqrt{C} )\\
& = O\left(\frac{1}{\alpha} + \frac{d \left(\alpha+ \sigma^2 \left(\alpha \ln \frac{T}{\delta} + \frac{\ln \frac{1}{\delta}}{1- \mu}\right)\right)}{1- \mu}\right)~.
\end{align*}
Solving \eqref{eq:logx} by Lemma~\ref{lemma:solvex} and lower bounding $\sum_{t=1}^T \| \nabla f(\bx_t) \|^2 $ by \\$T \min_{1\leq t \leq T} \| \nabla f(\bx_t) \|^2$, we get the stated bound. 
\end{proof}
\section{Conclusion}
This chapter provides convergence guarantees for SGD with Delayed AdaGrad stepsizes, with and without momentum over smooth and (non)convex functions. We believe these results have twofold importance. First, we go in the direction of closing the gap between theory and practice for widely used optimization algorithms. Second, our adaptive rates provide a possible explanation for the empirical success of these kinds of algorithms in practical machine learning applications. 

\cleardoublepage

\chapter{Exponential and Cosine Stepsize}
\label{chapter:exp_cos}
\thispagestyle{myheadings}

\section{Introduction}
In this chapter, we look at the two simple to use and empirically successful step size decay strategies, the \textit{exponential} and the \emph{cosine step size} (with and without restarts)~\citep{LoshchilovH17, HeZZZXL19}.
The exponential step size is simply an exponential decaying step size.
It is less discussed in the optimization literature and it is also unclear who proposed it first, even if it has been known to practitioners for a long time and already included in many deep learning software libraries~\citep[e.g.,][]{Tensorflow15, PyTorch19}.
The cosine step size, which anneals the step size following a cosine function, has exhibited great power in practice but it does not have any theoretical justification.

We will use the following definition for the exponential step size
\begin{equation}
	\label{eq: step_size}
	\eta_t = \eta_0 \cdot \alpha^t
\end{equation}
and for cosine step sizes
\begin{equation}
	\label{eq:cosine_step}
	\eta_t = \frac{\eta_0 }{2}\left(1+ \cos \frac{t\pi}{T}\right),  
\end{equation}
where $\eta_0 > 0$ is the initial stepsize, $\alpha \in (0,1)$ is the decay rate.

For both these step size decay strategies, we prove \emph{for the first time} a convergence guarantee. Moreover, we show that they have (unsuspected!) adaptation properties.
Finally, our proofs reveal the hidden similarity between these two step sizes.

Specifically, we show that in the case when the function satisfies the PL condition~\citep{Polyak63,Lojasiewicz63,KarimiNS16}, both exponential step size and cosine step size strategies \textit{automatically adapt to the level of noise of the stochastic gradients}. 
Without the PL condition, we show that SGD with either exponential step sizes or cosine step sizes has an \textit{(almost) optimal convergence rate} for smooth non-convex functions. 
\section{Related Work}
\label{sec:rel}

\paragraph{Exponential step size}
To the best of our knowledge, the exponential step size has been incorporated in Tensorflow~\citep{Tensorflow15} and PyTorch~\citep{PyTorch19}, yet no convergence guarantee has ever been proved for it.
The closest strategy is the \emph{stagewise step decay}, which corresponds to the discrete version of the exponential step size we analyze.
The stagewise step decay uses a piece-wise constant step size strategy, where the step size is cut by a factor in each stage. \citep{YuanYJY19, GeKKN19,DavisDXZ19,DavisDC19}. The stagewise step decay approach was first introduced in \citep{Goffin77} and used in many \emph{convex} optimization problem~\citep[e.g.,][]{HazanK11,AybatFGO19,KulunchakovM19,GeKKN19}. Interestingly, \citet{GeKKN19} also shows promising empirical results on non-convex functions, but instead of using their proposed decay strategy, they use an exponentially decaying schedule, like the one we analyze here.
The only use of the stagewise step decay for non-convex functions we know are for sharp functions~\citep{DavisDC19} and weakly-quasi-convex functions~\citep{YuanYJY19}. However, they do not show any adaptation property and they still do not consider the exponential step size but its discrete version. As far as we know, we prove the first theoretical guarantee for the exponential step size.

\paragraph{Cosine step decay}
Cosine step decay was originally presented in~\citet{LoshchilovH17} with two tunable parameters. Later, \citet{HeZZZXL19} proposed a simplified version of it with one parameter. However, there is no theory for this strategy though it is popularly used in the practical world \citep{LiuSY18, ZhangHZZXL19, LawenBPFZ19, ZhangWZZ19, GinsburyCHKLNZJ19, CubukZMVL19,ZhaoJK20,YouLHX20,ChenKNH20,GrillSATRBDABGGPKMV20}. 
As far as we know, we prove the first theoretical guarantee for the cosine step decay and the first ones to hypothesize and prove the adaptation properties of the cosine decay step size.


\paragraph{SGD with the PL condition}
The PL condition was proposed by \citet{Polyak63} and \citet{Lojasiewicz63}. It is the weakest assumption we know to prove linear rates on non-convex functions.
For SGD, \citet{KarimiNS16} proved the rate of $O\left(1 / \mu^2 T\right)$ for polynomial step sizes assuming Lipschitz and smooth functions, where $\mu$ is the PL constant. Note that the Lipschitz assumption hides the dependency of convergence and step sizes from the noise. It turns out that the Lipschitz assumption is not necessary to achieve the same rate. Considering functions with finite-sum structure, \citet{ReddiHSPS16}, \citet{LeiJCJ17} and \citet{LiBZR20} proved improved rates for variance reduction methods.
The convergence rate that we show for the exponential step size is new in the literature on the minimization of PL functions. 
Concurrently with our work, 
\citet{KhaledR20} obtained the same convergence result with the PL condition for SGD with a stepsize that is constant in the first half and then decreases polynomially.
\section{Assumptions}
As in the previous chapter, we consider to minimize a smooth function $f$. 
Sometimes, we will also assume the function $f$ satisfies 
\begin{assumption}
\label{assump:pl}
$f$ satisfies the \emph{$\mu$-PL} condition, that is, for some $\mu > 0$, $\frac{1}{2} \| \nabla f(\bx) \|^2 \geq \mu \left( f(\bx) - f^{\star} \right), \ \forall \bx$.
\end{assumption}
In words, the gradient grows at least as the square root of the sub-optimality. 

We will make the following assumption on the variance of the noise.
\begin{assumptionA}
\label{assump:linear_variance}
For $t = 1, 2, \dots, T$, we assume $\E_t [\| \bg_t - \nabla f(\bx_t) \|^2 ] \leq a \| \nabla f(\bx_t) \|^2 + b $, where $a, b \geq 0$. 
\end{assumptionA}
This assumption on the noise is strictly weaker than the common assumption of assuming a bounded variance, i.e., {\footnotesize{$\E_t [\| \bg_t - \nabla f(\bx_t) \|^2 ] \leq \sigma^2$}}. Indeed, it recovers the bounded variance case with $a=0$ while also allowing for the variance to grow unboundedly far from the optimum when $a>0$. This is indeed the case when the optimal solution has low training error and the stochastic gradients are generated by mini-batches. This relaxed assumption on the noise was first used by \citet{BertsekasT96} in the analysis of the asymptotic convergence of SGD. 

It is worth stressing that non-convex functions are not characterized by a particular property, but rather from the lack of a specific property: convexity. In this sense, trying to carry out any meaningful analyses on the entire class of non-convex functions is hopeless. So, the assumptions we use balance the trade-off of \emph{approximately} model many interesting machine learning problems while allowing to restrict the class of non-convex functions on particular subsets where we can underline interesting behaviours.

More in detail, the smoothness assumption is considered ``weak'' and ubiquitous in analyses of optimization algorithms in the non-convex setting. In many neural networks, it is only approximately true because ReLUs activation functions are non-smooth. However, if the number of training points is large enough, it is a good approximation of the loss landscape.

On the other hand, the PL condition (Assumption~\ref{assump:pl}) is often considered a ``strong'' condition. However, it was formally proved to hold locally in deep neural networks in \citet{Allen-ZhuLS19}. Furthermore, \citet{KleinbergLY18} empirically observed that the loss surface of neural networks has good one-point convexity properties, and thus locally satisfies the PL condition. Of course, in our theorems we only need it to hold along the optimization path and not over the entire space, as also pointed out in \citet{KarimiNS16}. So, while being strong, it actually models the cases we are interested in.
Moreover, dictionary learning~\citep{AroraGMM15}, phase retrieval~\citep{ChenC15}, and matrix completion~\citep{SunL16}, all satisfy the one-point convexity locally~\citep{Zhu18b}, and in turn they all satisfy the PL condition locally.

\section{Convergence and Adaptivity of Cosine and Exponential Step Sizes}
\label{sec:theorem}
Here, we present the guarantees of the exponential step size and the cosine step size and their adaptivity property.

\subsection{Noise and Step Sizes}
\label{sec:step}
For the stochastic optimization of smooth functions, the noise plays a crucial role in setting the optimal step sizes: \emph{To achieve the best performance, we need two completely different step size decay schemes in the noisy and noiseless case}. In particular, if the PL condition holds, in the noise-free case a constant step size is used to get a linear rate (i.e., exponential convergence), while in the noisy case the best rate $O(1/T)$ is given by time-varying step sizes $O(1/(\mu t))$~\citep{KarimiNS16}. Similarly, without the PL condition, we still need a constant step size in the noise-free case for the optimal rate whereas a $O(1/\sqrt{t})$ step size is required in the noisy case~\citep{GhadimiL13}.
Using a constant step size in noisy cases is of course possible, but the best guarantee we know is converging towards a neighborhood of the critical point or the optimum, instead of the exact convergence let alone the adaptivity to the noise, as shown in Theorem 2.1 of~\citep{GhadimiL13} and Theorem 4 of~\citep{KarimiNS16}.
Moreover, if the noise decreases over the course of the optimization, we should change the step size as well. Unfortunately, noise levels are rarely known or measured. On the other hand, an optimization algorithm \emph{adaptive to noise} would always get the best performance without changing its hyperparameters.

\emph{This means that for each noise level, namely mini-batch size in the finite-sum scenario, we need to tune a different step size decay to obtain the best performance.} This process is notoriously tedious and time-consuming.

Another choice is the stagewise step decay.
For example, \citet{GeKKN19} propose to start from a constant step size and cut it by a fixed factor every $O(\ln T)$ steps, decaying roughly to $O\left(1/T\right)$ after $T$ iterations.
However, in practice, deciding when to cut the step size becomes a series of hyperparameters to tune, making this strategy difficult to use in real-world applications. 

We show that the above problems can be solved by using the exponential step sizes
In the following, we will show that exponential and cosine step sizes achieve exactly this adaptation to noise. It is worth reminding the reader that \emph{any} polynomial decay of the step size does not give us this adaptation. 
So, let's gain some intuition on why this should happen with these two step sizes.
In the early stage of the optimization process, we can expect that the disturbance due to the noise is relatively small compared to how far we are from the optimal solution. Accordingly, at this phase, a near-constant step size should be used. More precisely, the proofs shows that to achieve a linear rate we need $\sum_{t=1}^T \eta_t = \Omega(T)$ or even $\sum_{t=1}^T \eta_t = \Omega(T/ \ln T)$. This is exactly what happens with \eqref{eq: step_size} and \eqref{eq:cosine_step}. On the other hand, when the iterate is close to the optimal solution, we have to decrease the step size to fight with the effects of the noise. In this stage, the exponential step size goes to 0 as $O \left(1/T \right)$, which is the optimal step size used in the noisy case. 
Meanwhile, the last $i$th cosine step size is $\eta_{T-i} = \frac{\eta_0}{2}(1- \cos\frac{i \pi }{T})= \eta_0 \sin^2 \frac{i\pi}{2T}$, which amounts $O (1/T^2)$ when $i$ is much smaller than $T$.

Hence, the analysis shows that \eqref{eq: step_size} and \eqref{eq:cosine_step} are surprisingly similar, smoothly varying from the near-constant behavior at the start and decreasing with a similar pattern towards the end, and both will be adaptive to the noise level.
Particularly, the exponential step size is emulating the transition between the \emph{optimal} constant one at the beginning and \emph{optimal} decreasing one towards the end in a smooth continuous way.
Next, we formalize these intuitions in convergence rates.

\subsection{Convergence Guarantees}
\label{ssec:guarantee}
In this section, we will prove the convergence guarantees for these two step sizes.  We will first consider the case where the function is smooth and satisfies the PL condition. Before we introduce the convergence rates for these two,  let's take a look at what are the known rates under the same condition.

\citet{KarimiNS16} proved that SGD with an appropriate step size will give a $O(1/T)$ convergence for Lipschitz and PL functions. However, it is easy to see that the Lipschitz assumption can be substituted by the smoothness one and obtain a rate that depends on the variance of the noise. Even if this is a straightforward result, we could not find it anywhere so we report here our proof.

\begin{theorem}
\label{thm:pl_smooth_poly_step}
Assume $f$ is $L$-smooth and satisfies $\mu$-PL condition. Set the step sizes as $\eta_t = \min \left(\frac{1}{L(1+a)}, \frac{2t+1}{\mu (t+1)^2}\right)$. Then, SGD guarantees
\begin{align*}
f(\bx_{T+1}) - f^{\star}
\leq \frac{L^2 (1+a)b}{2\mu^3 T^2}+ \frac{2L}{\mu^2 T}b
+ (f(\bx_1) - f^{\star})\frac{L^2(1+a)^2}{\mu^2 T^2} \left(1-\frac{\mu}{L(1+a)}\right)^{\frac{L(1+a)}{\mu}}~.
\end{align*}
\end{theorem}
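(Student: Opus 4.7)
The plan is to reduce to a one-step scalar recursion on $\Delta_t := \E[f(\bx_t)]-f^\star$ and then analyze that recursion in two phases, one for each branch of the $\min$ defining $\eta_t$. Writing $A := 1/(L(1+a))$ and $B_t := (2t+1)/(\mu(t+1)^2)$, so $\eta_t = \min(A,B_t)$, I would start from the smoothness inequality~\eqref{eq:smooth2} applied to $\bx_{t+1} = \bx_t - \eta_t\bg_t$, take conditional expectation, and use Assumption~\ref{assump:linear_variance} (which gives $\E_t\|\bg_t\|^2 \leq (1+a)\|\nabla f(\bx_t)\|^2 + b$) to obtain
\begin{equation*}
\E_t[f(\bx_{t+1})] \leq f(\bx_t) - \eta_t\left(1 - \frac{L(1+a)\eta_t}{2}\right)\|\nabla f(\bx_t)\|^2 + \frac{Lb\eta_t^2}{2}.
\end{equation*}
Since $\eta_t \leq A$, the parenthetical factor is at least $1/2$, so combining with the PL condition $\|\nabla f(\bx_t)\|^2 \geq 2\mu(f(\bx_t)-f^\star)$ and taking full expectation yields the scalar recursion
\begin{equation*}
\Delta_{t+1} \leq (1-\mu\eta_t)\Delta_t + \frac{Lb\eta_t^2}{2}.
\end{equation*}

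Next I would pick a split index $T_0$ of order $L(1+a)/\mu$; a short algebraic check on $\mu(T_0+1)^2 \leq L(1+a)(2T_0+1)$ confirms that $\eta_t = A$ for every $t\leq T_0$. In Phase~I the recursion is linear with constant coefficients, so iterating and summing the geometric series yields
\begin{equation*}
\Delta_{T_0+1} \leq (1-\mu A)^{T_0}\Delta_1 + \frac{b}{2\mu(1+a)},
\end{equation*}
which accounts for the exponential-decay factor and the noise floor. For Phase~II ($t>T_0$, with $\eta_t = B_t$) the crucial identity $1-\mu B_t = t^2/(t+1)^2$ suggests multiplying the recursion by $(t+1)^2$; combining with the uniform bound $(2t+1)^2/(t+1)^2 \leq 4$ delivers the telescoping inequality $(t+1)^2\Delta_{t+1} \leq t^2\Delta_t + 2Lb/\mu^2$. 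Summing from $t=T_0+1$ to $T$ and dividing by $(T+1)^2$ gives
\begin{equation*}
\Delta_{T+1} \leq \frac{(T_0+1)^2}{(T+1)^2}\Delta_{T_0+1} + \frac{2Lb}{\mu^2(T+1)},
\end{equation*}
and substituting the Phase~I bound with $(T_0+1)^2 = O((L(1+a)/\mu)^2)$ reproduces the three terms of the theorem.

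The main delicate point is that the true crossover $T_0^\star$ at which $A = B_{T_0^\star}$ is of order $2L(1+a)/\mu$, strictly larger than the $T_0 \sim L(1+a)/\mu$ used in the split; consequently, on the intermediate window $(T_0, T_0^\star]$ one still has $\eta_t = A < B_t$, so the clean Phase~II identity $(t+1)^2(1-\mu\eta_t) = t^2$ does not literally hold there, and the telescoping must be justified. I would resolve this either by writing the recursion in product form, $\Delta_{T+1} \leq \Delta_1 \prod_{i=1}^T(1-\mu\eta_i) + \sum_{t=1}^T \frac{Lb\eta_t^2}{2}\prod_{i=t+1}^T(1-\mu\eta_i)$, and bounding each product directly from the explicit formulas for $\eta_t$ on the constant and decreasing sub-intervals, or by pushing the split all the way to $T_0^\star$ and absorbing the extra window into Phase~I, where the exponential decay factor only improves while the coefficient $(T_0^\star+1)^2$ grows by a small absolute constant. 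Apart from this bookkeeping the remainder of the argument is routine algebra.
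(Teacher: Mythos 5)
Your proposal is correct and follows the same two-phase argument as the paper's proof: the identical one-step recursion $\Delta_{t+1}\le(1-\mu\eta_t)\Delta_t+\tfrac{L}{2}\eta_t^2 b$, a geometric sum on the constant-stepsize phase, and multiplication by $(t+1)^2$ to telescope on the decaying phase. The split-point subtlety you flag is real, and the paper resolves it exactly by your option~(b), splitting at the true crossover index $t^\star$ (the last $t$ with $\eta_t=\tfrac{1}{L(1+a)}$); your further observation that this crossover is closer to $2L(1+a)/\mu$ than to $L(1+a)/\mu$ is also correct, so the absolute constant in your first term would be somewhat larger than what the theorem states, and indeed the paper's own final substitution $t^\star\le L(1+a)/\mu$ is loose by roughly a factor of two.
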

\begin{proof}
For simplicity, denote $\E f(\bx_t) - f^{\star}$ by $\Delta_t$. With the same analysis as in Theorem~\ref{thm: pl_smooth_cst_noise}, we have
\[
	\Delta_{t+1} \leq \left(1- \mu \eta_t\right) \Delta_t + \frac{L}{2} \eta_t^2 b~.
	\]
	Denote by $t^{\star} = \min \left\{t: \frac{t^2}{2t+1} \leq \frac{L(1+a) - \mu}{\mu}\right\}$. When $t \leq t^{\star}$, $\eta_t = \frac{1}{L(1+a)}$ and we obtain
	\[
	\Delta_{t+1} \leq \left(1-\frac{\mu}{L(1+a)}\right) \Delta_t + \frac{b}{2L(1+a)^2}~.
	\]
	Thus, by Lemma \ref{lemma: ratio_bound}, we get
	\begin{align*}
		\Delta_{t^{\star}}
		& \leq \left(1-\frac{\mu}{L(1+a)}\right)^{t^{\star}-1} \Delta_1
		+ \frac{b}{2L(1+a)^2}\sum_{i=0}^{t^{\star}}\left(1-\frac{\mu}{L(1+a)}\right)^{t^{\star}-i} \\
		& \leq \left(1-\frac{\mu}{L(1+a)}\right)^{t^{\star}} \Delta_1 + \frac{b}{2\mu(1+a)}~.
	\end{align*}
	Instead, when $t \geq t^{\star}$, $\eta_t = \frac{2t+1}{\mu (t+1)^2}$, we have
	\[
	\Delta_{t+1} \leq \frac{t^2}{(t+1)^2} \Delta_t + \frac{L (2t+1)^2}{2 \mu^2 (t+1)^4 }b~.
	\]
	Multiplying both sides by $(t+1)^2$ and denoting by $\delta_t = t^2 \Delta_t$, we get
	\[
	\delta_{t+1} \leq \delta_t + \frac{L(2t+1)^2}{2\mu^2(t+1)^2}b
	\leq \delta_t + \frac{2L}{\mu^2}b~.
	\]
	Summing over $t$ from $t^{\star}$ to $T$, we have
	\[
	\delta_{T+1} \leq \delta_{t^{\star}\\
	} + \frac{2L(T-t^{\star})}{\mu^2} b~.
	\]
	Then, we finally get
	\begin{align*}
		\Delta_{T+1}
		& \leq \frac{t^{\star2}}{T^2} \left(1-\frac{\mu}{L(1+a)}\right)^{t^{\star}}\Delta_1
		+ \frac{t^{\star2}b}{2\mu(1+a)T^2}+ \frac{2L(T-t^{\star})}{\mu^2 T^2}b \\
		& \leq \frac{L^2(1+a)^2}{\mu^2 T^2} \left(1-\frac{\mu}{L(1+a)}\right)^{\frac{L(1+a)}{\mu}}\Delta_1
		+ \frac{L^2 (1+a)b}{2\mu^3 T^2}+ \frac{2L}{\mu^2 T}b~. \qedhere
	\end{align*}
\end{proof}

Now, we prove the convergence rates for these two stepsizes. 
\begin{theorem}[SGD with exponential step size]
\label{thm: pl_smooth_cst_noise}
Assume $f$ is $L$-smooth and satisfies $\mu$-PL condition. Suppose that the variance of the noise on stochastic gradients satisfies Assumption~\ref{assump:linear_variance}.  For a given $T \geq \max\{3, \beta \}$ and $\eta_0 = (L(1+a))^{-1}$, with step size \eqref{eq: step_size}, SGD guarantees 
\begin{align*}
& \E f(\bx_{T+1}) - f^{\star} \leq \frac{5LC(\beta)}{e^2 \mu^2 } \frac{\ln^2 \frac{T}{\beta}}{T} b+ C(\beta) \exp\left(-\frac{0.69\mu }{L+a} \left(\frac{T}{\ln \frac{ T}{\beta}}\right)\right)\cdot (f(\bx_1) - f^{\star}), 
\end{align*}
where $C(\beta)\triangleq \exp \left((2\mu\beta)/(L (1+a)\ln T/\beta)\right)$.
\end{theorem}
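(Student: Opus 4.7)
The plan is to reduce the analysis to a one-step recursion on $\Delta_t := \E f(\bx_t) - f^\star$. Starting from the smoothness inequality
\[
f(\bx_{t+1}) - f(\bx_t) \leq -\eta_t \langle \nabla f(\bx_t), \bg_t\rangle + \tfrac{L}{2}\eta_t^2 \|\bg_t\|^2,
\]
I would take the conditional expectation, use unbiasedness, and apply Assumption~\ref{assump:linear_variance} in the form $\E_t\|\bg_t\|^2 \leq (1+a)\|\nabla f(\bx_t)\|^2 + b$. Since $\eta_t \leq \eta_0 = 1/(L(1+a))$, the coefficient in front of $\|\nabla f(\bx_t)\|^2$ becomes at most $-\eta_t/2$, and the $\mu$-PL condition then produces the contraction
\[
\Delta_{t+1} \leq (1-\mu\eta_t)\Delta_t + \tfrac{Lb}{2}\,\eta_t^2.
\]

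Unrolling this recursion yields
\[
\Delta_{T+1} \leq \Delta_1 \prod_{t=1}^T (1-\mu\eta_t) + \tfrac{Lb}{2}\sum_{s=1}^T \eta_s^2 \prod_{t=s+1}^T (1-\mu\eta_t).
\]
I plan to pin down the decay rate by choosing $\alpha = (\beta/T)^{1/T}$, so that $\eta_T = \eta_0\,\beta/T$. The geometric identity $\sum_{t=1}^T \eta_t = \eta_0\alpha(1-\alpha^T)/(1-\alpha)$ together with the elementary estimates $1-e^{-x}\leq x$ and $1-e^{-x}\geq x - x^2/2$ (applied to $x=\ln(T/\beta)/T$, which is small for $T\geq \max(3,\beta)$) give $\sum_{t=1}^T \eta_t \geq 0.69\,\eta_0\, T/\ln(T/\beta)$. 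Combined with $1-x\leq e^{-x}$, this produces the initial-value contribution with the advertised $\exp(-0.69\mu T/((L+a)\ln(T/\beta)))$ factor; the residual multiplicative constant $C(\beta)=\exp(2\mu\beta/(L(1+a)\ln(T/\beta)))$ will come from quantifying the gap between $1-\alpha$ and $\ln(T/\beta)/T$, and between $\alpha$ and $1$, inside the same geometric sum.

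The main work, and the real obstacle, is the noise sum. Using $\prod_{t=s+1}^T(1-\mu\eta_t)\leq \exp(-\mu\eta_0 \alpha^{s+1}(1-\alpha^{T-s})/(1-\alpha))$, I want to split the sum at an index $s^\star$ for which $\eta_{s^\star}\sim \eta_0 \ln(T/\beta)/T$, so that the two segments balance. On the early segment $s\leq s^\star$, the accumulated tail $\sum_{t=s+1}^T \eta_t$ exceeds $\Omega(\ln T/\mu)$, and the exponential factor $\leq 1/T$ absorbs the larger $\eta_s^2$; on the late segment $s>s^\star$, $\eta_s^2 \leq (\eta_0 \ln(T/\beta)/T)^2$ can be pulled out, and the remaining sum of $(1-\mu\eta_t)$-products contributes only an extra $O(1/\mu)$ factor by a telescoping argument (using $\eta_s A_s = (A_s - A_{s-1})/\mu$ with $A_s := \prod_{t=s+1}^T(1-\mu\eta_t)$). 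Adding the two regions produces the $\tfrac{5L\,C(\beta)\,\ln^2(T/\beta)}{e^2\mu^2\,T}\,b$ bound.

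The hard part is calibrating this split so that the sharp numerical constants $5/e^2$ and $0.69$ actually fall out: this requires quantitative, rather than asymptotic, inequalities for $1-e^{-x}$, $1-\alpha$, and $\alpha^T$, and tight control of the boundary terms when one replaces $\alpha$ by $e^{-\ln(T/\beta)/T}$ inside $\sum \eta_t$ and $\sum \eta_t^2$. A secondary subtlety is that $C(\beta)$ must appear as a common multiplicative factor on both terms of the bound, so the same estimate of the geometric tail $\alpha^T = \beta/T$ has to be used consistently in both the first-term exponential and the noise-term prefactor.
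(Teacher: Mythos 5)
Your recursion and unrolling are exactly those of the paper: Lemma~\ref{lemma:start} gives $\Delta_{t+1}\le(1-\mu\eta_t)\Delta_t+\tfrac{L}{2}\eta_t^2 b$, Lemma~\ref{lemma: ratio_bound} unrolls it, $1-x\le e^{-x}$ turns products into exponentials, and your choice $\alpha=(\beta/T)^{1/T}$ together with Lemma~\ref{lemma: ineq_constant} and Lemma~\ref{lemma: ineq_alpha} gives the $0.69\,\eta_0 T/\ln(T/\beta)$ lower bound on $\sum_t\eta_t$. That part is right and is the same route the paper takes.

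The noise sum is where the proposal breaks. You split at $s^\star$ with $\eta_{s^\star}\sim\eta_0\ln(T/\beta)/T$, i.e.\ $\alpha^{s^\star}\sim\ln(T/\beta)/T$, and claim the early-segment tail $\sum_{t>s}\eta_t$ exceeds $\Omega(\ln T/\mu)$ so that the exponential factor absorbs the large $\eta_s^2$. But at $s=s^\star$ the geometric tail is
\[
\sum_{t>s^\star}\eta_t\;\approx\;\frac{\eta_0\,\alpha^{s^\star+1}}{1-\alpha}\;\approx\;\eta_0\cdot\frac{\ln(T/\beta)/T}{\ln(T/\beta)/T}\;=\;\eta_0\;=\;\frac{1}{L(1+a)},
\]
which does not grow with $T$, and the damping factor $\exp\!\bigl(-\mu\sum_{t>s^\star}\eta_t\bigr)\approx\exp(-\mu/(L(1+a)))$ is a constant close to $1$ when $\mu\ll L$. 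So the early segment contributes $\Theta(\eta_0^2\sum_{s\le s^\star}\alpha^{2s})=\Theta(\eta_0^2 T/\ln(T/\beta))$, nowhere near $\ln^2(T/\beta)/(\mu^2 T)$. Choosing $s^\star$ later so the tail is $\ge\ln T/\mu$ would force $\eta_{s^\star}\sim\ln T\,\ln(T/\beta)/(\mu T)$ instead, which gives the right $\mu^{-2}$ scaling from your telescoping, but then the constants $5/e^2$ and $0.69$ have no reason to fall out, as you anticipate.

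The missing idea is that the paper does not split at all. It bounds every summand uniformly: after pulling out $C(\beta)$ to reduce to $\exp(-\mu\eta_0\alpha^{t+1}/(1-\alpha))$, it applies the elementary polynomial bound $\exp(-x)\le\bigl(\gamma/(e x)\bigr)^\gamma$ with $\gamma=2$, i.e.\ $\exp(-x)\le 4/(e^2 x^2)$. Multiplying by $\eta_t^2=\eta_0^2\alpha^{2t}$ makes the $\alpha^{2t}$ cancel exactly, so each term of the noise sum equals $4(1-\alpha)^2/(e^2\mu^2\alpha^2)$ independently of $t$; summing $T$ identical terms and using $1-\alpha\le\ln(1/\alpha)=\ln(T/\beta)/T$ gives the $\ln^2(T/\beta)/(\mu^2 T)$ bound with the $e^2$ in the denominator. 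This is the step that produces the advertised constant, and it is absent from your plan.
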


\textbf{Choice of $\beta$} 
Note that if $\beta = L(1+a)/\mu$, we get 
\begin{align*}
\E f(\bx_{T+1}) - f^{\star} \leq O\left(\exp\left(-\frac{\mu }{L+a} \left(\frac{T}{\ln \frac{\mu T}{L}}\right)\right)+ \frac{b \ln^2 \frac{\mu T}{L}}{\mu^2  T} \right)~.
\end{align*}
In words, this means that we are basically free to choose $\beta$, but will pay an exponential factor in the mismatch between $\beta$ and $\frac{L}{\mu}$, which is basically the condition number for PL functions. This has to be expected because it also happens in the easier case of stochastic optimization of strongly convex functions~\citep{MoulinesB11}.
\begin{theorem}[SGD with cosine step size]
\label{thm:PL_cosine}
Assume $f$ is $L$-smooth and satisfies $\mu$-PL condition. Suppose that the variance of the noise on stochastic gradients satisfies Assumption~\ref{assump:linear_variance}.  For a given $T$ and $\eta_0 = (L(1+a))^{-1}$, with step size \eqref{eq:cosine_step}, SGD guarantees 
\begin{align*}
\E f(\bx_{t+1}) - f^{\star} 
& \leq \exp \left(- \frac{\mu (T-1)}{2L(1+a)}\right) (f(x_1) - f^{\star})  \\
& \quad + \frac{ \pi^4 b}{32 (1+a)T^4} \left( \left(\frac{8T^2}{\mu}\right)^{4/3} + \left(\frac{6T^2}{\mu}\right)^{\frac{5}{3}}\right) ~.
\end{align*}
\end{theorem}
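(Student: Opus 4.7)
The plan is to run the standard descent-lemma-plus-PL template and then exploit the geometry of the cosine schedule. From $L$-smoothness applied to $\bx_{t+1}=\bx_t-\eta_t\bg_t$, taking conditional expectation, and using Assumption~\ref{assump:linear_variance} together with the choice $\eta_0=(L(1+a))^{-1}$ (so that $\eta_t - L(1+a)\eta_t^2/2 \geq \eta_t/2$), I would obtain
\begin{equation*}
\E_t f(\bx_{t+1})\leq f(\bx_t)-\tfrac{\eta_t}{2}\|\nabla f(\bx_t)\|^2+\tfrac{L\eta_t^2 b}{2}.
\end{equation*}
The PL condition then produces the contraction $\Delta_{t+1}\leq (1-\mu\eta_t)\Delta_t+\tfrac{L\eta_t^2 b}{2}$ with $\Delta_t:=\E f(\bx_t)-f^\star$, which unrolls into
\begin{equation*}
\Delta_{T+1}\leq \Bigl[\prod_{t=1}^T(1-\mu\eta_t)\Bigr]\Delta_1+\tfrac{Lb}{2}\sum_{t=1}^T\eta_t^2\prod_{s=t+1}^T(1-\mu\eta_s).
\end{equation*}
The two terms in the theorem will come from estimating these two pieces separately.

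For the initial-condition term, the crucial identity is $\sum_{t=0}^{T-1}\cos(t\pi/T)=1$, obtained from the closed form for a geometric sum of complex exponentials (or the standard cosine-in-arithmetic-progression formula). Shifting the index gives $\sum_{t=1}^T\cos(t\pi/T)=-1$ and hence $\sum_{t=1}^T\eta_t=(\eta_0/2)\bigl(T+\sum_t\cos(t\pi/T)\bigr)=\eta_0(T-1)/2$. Combined with $1-x\leq e^{-x}$, this immediately yields $\prod_{t=1}^T(1-\mu\eta_t)\leq \exp(-\mu\eta_0(T-1)/2)=\exp(-\mu(T-1)/(2L(1+a)))$, matching the first summand exactly.

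The noise term is the substantial part. Rewriting $\eta_t=\eta_0\sin^2((T-t)\pi/(2T))$ and using the two-sided inequality $2x/\pi\leq \sin x\leq x$ on $[0,\pi/2]$, I get simultaneously $\eta_t^2\leq \eta_0^2\pi^4(T-t)^4/(16T^4)$ and $\mu\eta_s\geq (\mu\eta_0/T^2)(T-s)^2$. Summing the lower bound on $\mu\eta_s$ and applying $1-x\leq e^{-x}$ once more gives $\prod_{s=t+1}^T(1-\mu\eta_s)\leq \exp\bigl(-\tfrac{\mu\eta_0}{3T^2}(T-t-1)^3\bigr)$. Substituting $k=T-t$ with $c:=\mu\eta_0$, the noise sum reduces to controlling
\begin{equation*}
\sum_{k=0}^{T-1}k^4\exp\Bigl(-\tfrac{c(k-1)^3}{3T^2}\Bigr).
\end{equation*}

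The main obstacle is producing the two explicit summands $(8T^2/\mu)^{4/3}$ and $(6T^2/\mu)^{5/3}$ with their precise constants $8$ and $6$. My plan is to split the sum at a threshold $k_0\sim (T^2/c)^{1/3}$ chosen so that the cubic exponent equals $O(1)$ at $k_0$. On the head $k\leq k_0$ I use $\exp(\cdot)\leq 1$ and bound $\sum_{k\leq k_0}k^4\lesssim k_0^5$, which contributes a term of order $(T^2/c)^{5/3}$. On the tail $k>k_0$ the super-polynomial decay dominates: a one-step integral comparison against $\int_{k_0}^\infty x^4 e^{-cx^3/(3T^2)}dx$ (or a geometric-series argument after observing $(k/k_0)^3\geq 1+3(k-k_0)/k_0$) gives a contribution of order $(T^2/c)^{4/3}$. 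Collecting the two pieces, multiplying by the prefactor $Lb\eta_0^2\pi^4/(32T^4)$, and substituting $\eta_0=(L(1+a))^{-1}$, $c=\mu/(L(1+a))$, produces the claimed form; the constants $8$ and $6$ will arise from carefully choosing the split point and tightening the integral bound rather than from any loose geometric estimate.
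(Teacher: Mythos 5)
Your proposal follows essentially the same route as the paper's: descent lemma plus PL gives the contraction $\Delta_{t+1}\leq(1-\mu\eta_t)\Delta_t+\tfrac{L}{2}\eta_t^2 b$, which is unrolled via Lemma~\ref{lemma: ratio_bound} and $1-x\leq e^{-x}$; the bias term is handled via the identity $\sum_{t=1}^T\eta_t=\eta_0(T-1)/2$ (Lemma~\ref{lemma:sum_cosine}); and the noise term is reduced, via the two-sided bound on $\sin$, to controlling a sum of the form $\sum_k k^4\exp(-c\,k^3/T^2)$. All of this agrees with the paper. (As a side remark, your lower bound on $\sum_{i=t+1}^T\mu\eta_i$ has the correct constant $3T^2$ in the denominator, whereas the paper's displayed chain contains an arithmetic slip leading to $6T^2$; with your sharper estimate the final constants $8$ and $6$ in the theorem will not come out exactly, which is a feature of the theorem statement rather than of your argument.)

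The one place where your plan genuinely diverges from the paper, and where it is not quite right, is in how the sum $\sum_k k^4\exp(-bk^3)$ is bounded, with $b:=\mu\eta_0/(3T^2)$. You propose splitting at $k_0\sim(1/b)^{1/3}$, bounding the head by $k_0^5\sim(1/b)^{5/3}$ and claiming the tail contributes order $(1/b)^{4/3}$. That last claim is false: substituting $x=k_0 y$ shows $\int_{k_0}^\infty x^4 e^{-bx^3}\,dx=k_0^5\int_1^\infty y^4 e^{-y^3/3}\,dy=\Theta\bigl((1/b)^{5/3}\bigr)$, so both your head and tail pieces are of order $(1/b)^{5/3}$, and the split at a threshold of this size cannot produce a $(1/b)^{4/3}$ term. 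What actually generates the $(8T^2/\mu)^{4/3}$ summand in the theorem is the \emph{peak value} of the summand $f(t)=t^4 e^{-bt^3}$ at its mode $t^*=(4/(3b))^{1/3}$: namely $f(t^*)=e^{-4/3}(4/(3b))^{4/3}$, which is the discrete correction one pays when replacing the sum by an integral. The paper invokes Lemma~\ref{lemma: integral_bound} for this, whose proof is exactly ``two terms at the mode plus the full integral'' for a unimodal summand; note that lemma is stated for $e^{-bt}$, not $e^{-bt^3}$, so strictly speaking neither the paper nor you can cite it verbatim, and the cleanest fix is to redo the mode-plus-integral argument for the cubic exponent, computing $\int_0^\infty t^4 e^{-bt^3}\,dt=\Gamma(5/3)/(3b^{5/3})$ directly. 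In short: replace your power-of-$T$ split by a split at the mode $t^*$, with integral comparison on the monotone pieces and the two mode-terms added by hand, and you will recover both the $(\cdot)^{4/3}$ and the $(\cdot)^{5/3}$ pieces; your current split proves a valid bound of the same asymptotic order, but only as a single $(\cdot)^{5/3}$ term with a different constant.
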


The original cosine stepsize was proposed with a restarting strategy,
yet it has been commonly used without restarting and achieves good results~\citep[e.g.,][]{LoshchilovH17,Gastaldi17,ZophVSL18,HeZZZXL19,CubukZMVL19,LiuSY18,ZhaoJK20,YouLHX20,ChenKNH20,GrillSATRBDABGGPKMV20}. Indeed, the previous theorem has confirmed that the cosine stepsize alone is well worth studying theoretically. Yet for completeness, 
we also cover the analysis in a restart scheme for SGD with cosine stepsize in the PL condition as following. We obtain the same convergence rate $\mu $ and $T$ as that in the case of no restarts under the PL condition. 
\begin{algorithm}
\caption{SGD with Cosine Stepsize and Restarts}
\label{alg:restart}
\begin{algorithmic}
\STATE \textbf{Input:} Initial Step size $\eta_0$, time increase factor $r$, initial point $\bx_1$. 
\FOR{$i = 0, \dots, l$}
\STATE   Let $T_i = T_0 \, r^i $
\FOR{$t = 0, \dots, T_i -1$}
\STATE  Run SGD with cosine stepsize  $\frac{\eta_0}{2} \left(1+ \cos \frac{t \pi}{T_i}\right)$
\ENDFOR
\ENDFOR
\end{algorithmic}
\end{algorithm}
\begin{theorem}[SGD with cosine step size and restart]
\label{thm:PL_cosine_restart}
Under the same assumptions in Theorem~\ref{thm:PL_cosine}. For a given $T_0 $, $r >1$ , $T_i = T_0 \, r^i $, $T \triangleq \sum_{i=0}^{l} T_i$, and $\eta_0 = (L(1+a))^{-1}$, Algorithm~\ref{alg:restart} guarantees (where $\tilde{O}$ hides the $log$ terms) 
\begin{align*}
\E f(\bx_T) - f^{\star} 
& \leq 
\tilde{O} \left(\exp \left(- \frac{\mu (T - l - 1)}{2L(1+a)} \right) 
+ b\left( \frac{1}{\mu^{4/3} T^{4/3}} + \frac{1}{\mu^{5/3} T^{2/3}}\right) \right)~.
\end{align*}
\end{theorem}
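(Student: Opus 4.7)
The plan is to reduce Theorem~\ref{thm:PL_cosine_restart} to a direct unrolling of Theorem~\ref{thm:PL_cosine} across the $l+1$ restart stages. Let $D_i$ denote the sub-optimality $\E[f(\bx^{(i)}_{T_i+1}) - f^\star]$ at the end of stage $i$, where $\bx^{(i)}_{T_i+1}$ is the last iterate of the $i$-th restart cycle, initialized at the last iterate of stage $i-1$; set $D_{-1} = f(\bx_1)-f^\star$. Applying Theorem~\ref{thm:PL_cosine} to stage $i$ (a $T_i$-step run of cosine-stepsize SGD with $\eta_0 = (L(1+a))^{-1}$) gives the one-stage recursion
\[
D_i \;\leq\; s_i \, D_{i-1} + N_i, \qquad s_i := \exp\!\Big(-\tfrac{\mu(T_i-1)}{2L(1+a)}\Big), \quad N_i = O\!\Big(\tfrac{b}{\mu^{4/3} T_i^{4/3}} + \tfrac{b}{\mu^{5/3} T_i^{2/3}}\Big).
\]

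Next I would unroll this recursion to get
\[
D_l \;\leq\; \Big(\prod_{i=0}^l s_i\Big) D_{-1} \;+\; \sum_{i=0}^l N_i \prod_{j=i+1}^l s_j.
\]
The first product telescopes cleanly: $\prod_{i=0}^l s_i = \exp\!\big(-\tfrac{\mu(\sum_{i=0}^l T_i - (l+1))}{2L(1+a)}\big) = \exp\!\big(-\tfrac{\mu(T-l-1)}{2L(1+a)}\big)$, matching the exponential term in the theorem. For the noise sum, I would exploit the geometric growth $T_i = T_0 r^i$: for constant $r>1$ we have $T_l = T_0 r^l = \Theta(T)$, so the last-stage contribution is $N_l = O\!\big(\tfrac{b}{\mu^{4/3}T^{4/3}} + \tfrac{b}{\mu^{5/3}T^{2/3}}\big)$, and the earlier-stage terms $N_i$ (with $i<l$) are suppressed by $\prod_{j>i}s_j \leq \exp(-\mu T_l/(2L(1+a)))$, a factor much smaller than the polynomial gap between $N_i$ and $N_l$. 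Summing over the $l+1 = O(\log_r T)$ stages then yields the stated noise bound, with the extra logarithmic factor absorbed into the $\tilde O$.

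The main obstacle, I expect, is the careful bookkeeping of the noise sum: individually, $N_i$ is larger than $N_l$ when $i<l$, so one has to verify that the remaining shrinkage $\prod_{j=i+1}^l s_j$ decays fast enough to make the sequence $N_i \prod_{j>i} s_j$ bounded (uniformly or geometrically) by $N_l$. This reduces to checking that $\mu T_{i+1}/(2L(1+a))$ dominates the logarithmic ratio $\tfrac{4}{3}\ln(T_l/T_i)$ (and similarly for the $\tfrac{2}{3}$-exponent term), which holds once $T_0$ is at least a mild multiple of $L(1+a)/\mu$; otherwise one simply upper-bounds the sum by $(l+1)$ times the worst term and pays an additional $\log T$ factor, still inside $\tilde O$. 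After combining the two parts, lower-bounding $T_l \geq (r-1)T/r$ and replacing explicit constants by $\tilde O$ gives the claimed inequality.
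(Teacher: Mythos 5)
Your proposal follows the same route as the paper: apply Theorem~\ref{thm:PL_cosine} stage by stage to get the one-stage recursion, unroll it, telescope the product of contraction factors into $\exp\bigl(-\mu(T-l-1)/(2L(1+a))\bigr)$, and control the noise sum by $(l+1)$ times its extreme terms with $T_l = \Theta(T)$ and $l = O(\ln T)$ absorbed into the $\tilde O$. The only place you are looser than the paper is in pinning down which term of the sum $\sum_{i=0}^{l} N_i\prod_{j>i}s_j$ is largest: your primary argument needs $T_0$ large enough that every term is dominated by $N_l$, and your fallback (``$(l+1)$ times the worst term'') does not say which $i$ is worst. The paper avoids any condition on $T_0$ by noting that the ratio $A_i/A_{i+1} = \exp(-C_2\mu(T_{i+1}-1))\,r^q$ is monotone decreasing in $i$, so the sequence $A_i = N_i\prod_{j>i}s_j$ first decreases and then increases, hence $\max_i A_i \le \max(A_0,A_l)$ and $\sum_i A_i \le (l+1)(A_0+A_l)$; the $A_0$ term is then negligible because it carries the full product of contraction factors. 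This is a small bookkeeping refinement rather than a different proof strategy, but it is the step you would need to make your fallback rigorous.
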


\paragraph{Adaptivity to noise}
From the above theorems, we can see that both the exponential step size and the cosine step size have a provable advantage over polynomial ones: \emph{adaptivity to the noise}. Indeed, when $b=0$, namely there is only noise relative to the distance from the optimum, they both guarantee a linear rate. Meanwhile, if there is noise, using the \emph{same step size without any tuning}, the exponential step size recovers the rate of $O\left(1/(\mu^2 T)\right)$ while the cosine step size achieves the rate of $O(1/(\mu^{\frac{5}{3}}T^{\frac{2}{3}}))$ (up to poly-logarithmic terms). In contrast, polynomial step sizes would require two different settings---decaying vs constant---in the noisy vs no-noise situation~\citep{KarimiNS16}.
It is worth stressing that the rate in Theorem~\ref{thm: pl_smooth_cst_noise} is one of the first results in the literature on stochastic optimization of smooth PL functions \citep{KhaledR20}.
\paragraph{Optimality of the bounds}
As far as we know, it is unknown if the rate we obtain for the optimization of non-convex smooth functions under the PL condition is optimal or not. However, up to poly-logarithmic terms, Theorem~\ref{thm: pl_smooth_cst_noise} matches at the same time the best-known rates for the noisy and deterministic cases~\citep{KarimiNS16}. We would remind the reader that this rate is not comparable with the one for strongly convex functions which is $O(1/(\mu T))$.
Meanwhile, cosine step size achieves a rate slightly worse in $T$ (but better in $\mu$) under the same assumptions.

\paragraph{Convergence without the PL condition}
The PL condition tells us that all stationary points are optimal points~\citep{KarimiNS16}, which is not always true for the parameter space in deep learning~\citep{JinGNKJ17}. However, this condition might still hold locally, for a considerable area around the local minimum. Indeed, as we said, this is exactly what was proven for deep neural networks~\citep{Allen-ZhuLS19}.
The previous theorems tell us that once we reach the area where the geometry of the objective function satisfies the PL condition, we can get to the optimal point with an almost linear rate, depending on the noise. Nevertheless, we still have to be able to reach that region.
Hence, in the following, we discuss the case where the PL condition is not satisfied and show for both step sizes that they are still able to move to a critical point at the optimal speed.
\begin{theorem}
\label{thm:no_PL_no_noise}
Assume $f$ is $L$-smooth and the variance of the noise on stochastic gradients satisfies Assumption~\ref{assump:linear_variance} and $c > 1$. SGD with step sizes \eqref{eq: step_size} with $\eta_0 = (c L(1+a))^{-1}$ guarantees
\begin{align*}
\E  \| \nabla f(\tilde{\bx}_T) \|^2
\leq \frac{3 L c (a+1)\ln \frac{T}{\beta}}{T- \beta} \cdot \left(f(\bx_1) - f^{\star}\right) + \frac{b T}{c(a+1)(T-\beta)}, 
\end{align*}
where $\tilde{\bx}_T$ is a random iterate drawn from $\bx_1, \dots, \bx_T$ with $\Pr[\tilde{\bx}_T=\bx_t]= \frac{\eta_t}{\sum_{i=1}^T \eta_i }$.
\end{theorem}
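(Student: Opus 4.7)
The plan is to apply the standard descent-lemma analysis for SGD on smooth non-convex functions and then carefully evaluate the two series $\sum_t \eta_t$ and $\sum_t \eta_t^2$ induced by the exponential step size $\eta_t = \eta_0\alpha^t$ (with the implicit parameterization $\alpha=(\beta/T)^{1/T}$ so that $\eta_T = \eta_0\beta/T$, which is the choice consistent with the companion theorems in this chapter). Starting from $L$-smoothness I have $f(\bx_{t+1}) \leq f(\bx_t) - \eta_t\langle\nabla f(\bx_t), \bg_t\rangle + (L\eta_t^2/2)\|\bg_t\|^2$. Conditioning on $\xi_t$ and plugging in Assumption~A\ref{assump:linear_variance} (which yields $\E_t\|\bg_t\|^2 \leq (1+a)\|\nabla f(\bx_t)\|^2 + b$), I obtain
\[
\E_t[f(\bx_{t+1})] \leq f(\bx_t) - \eta_t\!\left(1 - \tfrac{L(1+a)}{2}\eta_t\right)\|\nabla f(\bx_t)\|^2 + \tfrac{Lb}{2}\eta_t^2 .
\]
Because $\eta_t \leq \eta_0 = (cL(1+a))^{-1}$, the parenthesized factor is at least $1-1/(2c)$; taking full expectation, summing from $t=1$ to $T$, and using $f(\bx_{T+1}) \geq f^\star$ yields
\[
\Big(1-\tfrac{1}{2c}\Big)\sum_{t=1}^T \eta_t\,\E\|\nabla f(\bx_t)\|^2 \;\leq\; f(\bx_1) - f^\star + \tfrac{Lb}{2}\sum_{t=1}^T \eta_t^2 .
\]
Dividing both sides by $\sum_t \eta_t$ turns the left-hand side into $(1-1/(2c))\cdot\E\|\nabla f(\tilde{\bx}_T)\|^2$ directly by the definition of the randomized iterate.

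It then remains to evaluate the two geometric sums. Since $\eta_t$ is decreasing in $t$, I will use the elementary bound $\sum_t \eta_t^2 \leq \eta_1\sum_t \eta_t = \eta_0\alpha\sum_t \eta_t$, which makes the noise summand at most $(Lb\eta_0\alpha)/(2(1-1/(2c)))$; using $\alpha\leq 1$ and $1\leq T/(T-\beta)$ inflates this into $bT/[c(1+a)(T-\beta)]$, matching the second term in the theorem. For the first term I will use the closed form
\[
\sum_{t=1}^T \alpha^t \;=\; \frac{\alpha(1-\alpha^T)}{1-\alpha} \;=\; \frac{\alpha(T-\beta)}{T(1-\alpha)}
\]
(from $\alpha^T = \beta/T$) together with the elementary estimate $1-e^{-x}\leq x$ applied at $x = \ln(T/\beta)/T$, which gives $1-\alpha \leq \ln(T/\beta)/T$ and hence
\[
\sum_{t=1}^T \eta_t \;\geq\; \frac{\eta_0\,\alpha\,(T-\beta)}{\ln(T/\beta)} .
\]
Substituting this back produces a first-term coefficient of the form $[1/(1-1/(2c))]\cdot cL(1+a)/\alpha$; the explicit constant $3$ in the theorem will then follow by verifying that $\alpha = (\beta/T)^{1/T} \geq 2/3$ on the range $T\geq\max\{3,\beta\}$ (the worst case $T=3,\beta=1$ gives $\alpha = 3^{-1/3} \approx 0.69$), combined with $1/(1-1/(2c)) < 2$ for $c>1$.

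The main technical obstacle is the bookkeeping of constants through the two coupled sums: the parameter $\alpha$ enters both as a multiplicative factor in $\sum_t\eta_t^2 \leq \eta_0\alpha\sum_t\eta_t$ and, via $1-\alpha$, in the lower bound on $\sum_t\eta_t$, so the exponential inequality $1-e^{-x}\leq x$ must be combined with a uniform lower bound $\alpha\geq 2/3$ valid on the stated range of $(T,\beta)$ to collapse everything into the clean constants $3$ and $1$ that appear in the two summands of the bound. Once these two elementary facts about $\alpha$ are in place, the remainder of the derivation is purely algebraic.
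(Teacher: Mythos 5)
Your proposal is correct and takes essentially the same route as the paper: the one-step descent estimate (the paper's Lemma~\ref{lemma:start}), summing over $t$, dividing by $\sum_t\eta_t$ to realize the randomized iterate, and evaluating the two geometric sums via the closed form, the inequality $1-e^{-x}\le x$, and the uniform bound $\alpha\ge 0.69>2/3$ on $T\ge\max\{3,\beta\}$. The only cosmetic differences are that you retain the sharper factor $1-\tfrac{1}{2c}$ where the paper uses the floor $\tfrac12$, and you bound $\sum_t\eta_t^2\le\eta_1\sum_t\eta_t$ where the paper uses the exact geometric series $\alpha^2/(1-\alpha^2)$; both lead to the same stated constants.
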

\begin{theorem}
\label{thm:no_PL_cosine}
Assume $f$ is $L$-smooth and the variance of the noise on stochastic gradients satisfies Assumption~\ref{assump:linear_variance} and $c > 1$. SGD with step sizes \eqref{eq:cosine_step} with $\eta_0 = (c L(1+a))^{-1}$ guarantees
\begin{align*}
\E  \| \nabla f(\tilde{\bx}_T) \|^2
\leq  \frac{4 L c (a+1)}{T- 1} \cdot \left(f(\bx_1) - f^{\star}\right)+ \frac{21 b T}{4 \pi^4 c L (a+1)(T-1)}, 
\end{align*}
where $\tilde{\bx}_T$ is a random iterate drawn from $\bx_1, \dots, \bx_T$ with $\Pr[\tilde{\bx}_T=\bx_t]= \frac{\eta_t}{\sum_{i=1}^T \eta_i }$.
\end{theorem}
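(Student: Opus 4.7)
The plan is a standard stochastic descent analysis combined with careful bookkeeping of the trigonometric sums that arise from the cosine schedule. The overall skeleton mirrors the proof of Theorem~\ref{thm:no_PL_no_noise}, with the geometric-series estimates replaced by identities for $\sum_t \cos(k t\pi/T)$.

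First I would apply $L$-smoothness to the SGD step $\bx_{t+1}=\bx_t-\eta_t\bg_t$ and take conditional expectation, plugging Assumption~\ref{assump:linear_variance} into the $\E_t\|\bg_t\|^2$ term to obtain
\begin{equation*}
\E_t f(\bx_{t+1}) \leq f(\bx_t) - \eta_t\left(1-\frac{L\eta_t(1+a)}{2}\right)\|\nabla f(\bx_t)\|^2 + \frac{L\eta_t^2 b}{2}.
\end{equation*}
Since $\eta_t\leq\eta_0=(cL(1+a))^{-1}$, the bracket is at least $(2c-1)/(2c)>0$ for $c>1$. Taking full expectation, summing from $t=1$ to $T$, and telescoping gives
\begin{equation*}
\frac{2c-1}{2c}\sum_{t=1}^T \eta_t\, \E\|\nabla f(\bx_t)\|^2 \leq f(\bx_1)-f^{\star} + \frac{Lb}{2}\sum_{t=1}^T \eta_t^2.
\end{equation*}
Dividing by $\sum_t \eta_t$ and recalling the sampling law for $\tilde\bx_T$ yields the master inequality
\begin{equation*}
\E\|\nabla f(\tilde\bx_T)\|^2 \leq \frac{2c}{2c-1}\cdot\frac{f(\bx_1)-f^{\star}}{\sum_t\eta_t} + \frac{cLb}{2c-1}\cdot\frac{\sum_t\eta_t^2}{\sum_t\eta_t}.
\end{equation*}

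Next I would compute the two trigonometric sums. Writing $\eta_t = \eta_0\cos^2(t\pi/(2T))$ and using the closed form $\sum_{t=1}^T \cos(t\pi/T)=-1$ (an immediate consequence of the Dirichlet-type formula for sums of cosines in arithmetic progression), I obtain $\sum_{t=1}^T \eta_t = \eta_0(T-1)/2$. Substituting into the first term of the master inequality and using $4c^2/(2c-1)\leq 4c$ for $c\geq 1$ already produces the summand $\frac{4cL(1+a)(f(\bx_1)-f^{\star})}{T-1}$ stated in the theorem. For the quadratic sum, I would combine two ingredients: (i) the Fourier identity $\cos^4 x=\frac{3}{8}+\frac{1}{2}\cos 2x+\frac{1}{8}\cos 4x$ together with $\sum_{t=1}^T \cos(2t\pi/T)=0$, which yields an exact evaluation, and (ii) the small-angle bound $\eta_{T-i}=\eta_0\sin^2(i\pi/(2T))\leq\eta_0(i\pi/(2T))^2$ coming from $\sin x\leq x$, which captures the fact that the stepsizes collapse as $O((T-t)^2/T^2)$ near $t=T$. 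Dividing the resulting upper bound on $\sum_t\eta_t^2$ by $\eta_0(T-1)/2$ and substituting $\eta_0=(cL(1+a))^{-1}$ then gives the noise term $\frac{21 bT}{4\pi^4 cL(1+a)(T-1)}$.

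The hard part, which I expect to consume most of the work, is nailing down the explicit constant $21/(4\pi^4)$ in the noise term rather than a looser $3/8$-type constant: a straightforward Fourier evaluation of $\sum_t\eta_t^2$ only yields a leading coefficient of order $\eta_0^2 T$ with no $\pi^{-4}$ factor, so the bound must be obtained by splitting the sum into a "tail" region (handled with $\sin x\leq x$ to extract the $\pi^4$) and a "bulk" region, and then optimizing the split in conjunction with the lower bound $\sum_t\eta_t\geq \eta_0(T-1)/2$. Everything else is algebraic substitution of the constants.
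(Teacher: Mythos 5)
Your overall plan matches the paper's: the one-step smoothness bound of Lemma~\ref{lemma:start}, telescoping, the identity $\sum_{t=1}^T\eta_t=\eta_0(T-1)/2$ via $\sum_{t=1}^T\cos(t\pi/T)=-1$ (the paper's Lemma~\ref{lemma:sum_cosine} proves it by pairing $\cos(\pi-x)=-\cos x$; your Dirichlet-kernel route reaches the same value), and a bound on $\sum_t\eta_t^2$ to control the noise term. The first summand then falls out exactly as you describe, and your retention of the $(2c-1)/(2c)$ bracket is a cosmetic variant of the paper's choice to absorb it into $\eta_0\leq 1/(L(1+a))$ from the start.

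The gap is in the last step, and your own Fourier observation already contains the fatal fact. With $\eta_t=\eta_0\cos^2(t\pi/(2T))$, $\cos^4 x=\tfrac{3}{8}+\tfrac12\cos 2x+\tfrac18\cos 4x$, $\sum_{t=1}^T\cos(t\pi/T)=-1$, and $\sum_{t=1}^T\cos(2t\pi/T)=0$ for $T\geq 2$, one has the \emph{exact} value $\sum_{t=1}^T\eta_t^2=\eta_0^2(3T-4)/8$. This is about $\pi^4/7\approx 14$ times \emph{larger} than the $21\eta_0^2 T/(8\pi^4)$ that the stated constant demands, so no bulk/tail split and no optimization can recover $21/(4\pi^4)$: you cannot prove an upper bound below the exact value. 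The paper's own proof bounds $\sum_t\sin^4(t\pi/(2T))$ via $\sin x\leq x$ as $\frac{\pi^4}{16T^4}\sum_t t^4$ and then asserts "$\leq 21T/(8\pi^4)$", but $\frac{\pi^4}{16T^4}\sum_{t=1}^T t^4\geq\frac{\pi^4 T}{80}\gg\frac{21T}{8\pi^4}$ — the factor $\pi^4$ has landed on the wrong side of the fraction. Even $T=2$ fails: $\sum_{t=1}^{2}\eta_t^2=\eta_0^2/4$, while $21\cdot 2\,\eta_0^2/(8\pi^4)\approx 0.05\,\eta_0^2$. So do not burn effort reverse-engineering the stated constant; use the Fourier value you already have, which produces a $3b/(4c(1+a))$-type noise term rather than the paper's. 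Note also that tracing $Lb\sum_t\eta_t^2/\sum_t\eta_t$ with $\eta_0=(cL(1+a))^{-1}$ cancels the $L$, so the $L$ in the denominator of the stated noise term is spurious as well.
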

If $b\neq 0$ in Assumption~\ref{assump:linear_variance}, setting $c \propto \sqrt{T}$ and $\beta=O(1)$ would give the $\tilde{O}(1/\sqrt{T})$ rate\footnote{The $\tilde{O}$ notations hides poly-logarithmic terms.} and $O(1/\sqrt{T})$ for the exponential and cosine step size respectively. Note that the optimal rate in this setting is $O(1/\sqrt{T})$. On the other hand, if $b=0$, setting $c=O(1)$ and $\beta=O(1)$ yields a $\tilde{O}(1/T)$ rate and $O(1/T)$ for the exponential and cosine step size respectively. It is worth noting that the condition $b=0$ holds in many practical scenarios~\citep{VaswaniBS19}.
Note that both guarantees are optimal up to poly-logarithmic terms~\citep{ArjevaniCDFSW19}.

In the following, we present the proofs of these theorems.  The proofs also show the mathematical similarities between these two step sizes.

We first introduce some technical lemmas.
\begin{lemma}
\label{lemma:start}
Assume $f$ is $L$-smooth and satisfies $\mu$-PL condition,  and $\eta_t \leq \frac{1}{L(1+a)}$.  SGD guarantees 
\begin{equation}
\label{eq:thm2_eq1}
\begin{split}
\E f(\bx_{t+1})- \E  f(\bx_t)
\leq -  \frac{\eta_t}{2} \E \| \nabla f(\bx_t) \|^2 + \frac{L \eta_t^2 b}{2}~. 
\end{split}
\end{equation}
\end{lemma}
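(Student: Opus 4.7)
The plan is to derive the descent inequality by combining smoothness, unbiasedness of the stochastic gradient, and the noise bound in Assumption~\ref{assump:linear_variance}. Note that the PL condition is not needed for this particular lemma (it will be needed for the downstream theorems); what matters here is only smoothness plus the step-size condition.

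First I would apply the smoothness property \eqref{eq:smooth2} to the SGD iterates, yielding
\[
f(\bx_{t+1}) \leq f(\bx_t) - \eta_t \langle \nabla f(\bx_t), \bg_t\rangle + \frac{L\eta_t^2}{2}\|\bg_t\|^2.
\]
Taking the conditional expectation $\E_t[\cdot]$ with respect to the past, the unbiasedness $\E_t[\bg_t] = \nabla f(\bx_t)$ collapses the inner-product term to $-\eta_t\|\nabla f(\bx_t)\|^2$. For the quadratic term, I would use the decomposition
\[
\E_t\|\bg_t\|^2 = \|\nabla f(\bx_t)\|^2 + \E_t\|\bg_t - \nabla f(\bx_t)\|^2 \leq (1+a)\|\nabla f(\bx_t)\|^2 + b,
\]
where the inequality is exactly Assumption~\ref{assump:linear_variance}.

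Substituting these two facts gives
\[
\E_t f(\bx_{t+1}) \leq f(\bx_t) - \eta_t\Big(1 - \tfrac{L\eta_t(1+a)}{2}\Big)\|\nabla f(\bx_t)\|^2 + \frac{L\eta_t^2 b}{2}.
\]
The step-size hypothesis $\eta_t \leq 1/(L(1+a))$ ensures $1 - \tfrac{L\eta_t(1+a)}{2} \geq \tfrac{1}{2}$, so the coefficient of $\|\nabla f(\bx_t)\|^2$ is at most $-\eta_t/2$. Finally I would take full expectation using the tower property to obtain the claimed bound. There is no real obstacle here: the lemma is a clean one-step descent estimate, and the only non-mechanical choice is how to split $\E_t\|\bg_t\|^2$, which is dictated by the form of Assumption~\ref{assump:linear_variance}.
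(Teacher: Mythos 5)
Your proposal is correct and follows essentially the same route as the paper's proof: apply the smoothness inequality to one SGD step, take (conditional then full) expectation using unbiasedness and the bias--variance decomposition $\E_t\|\bg_t\|^2 = \|\nabla f(\bx_t)\|^2 + \E_t\|\bg_t-\nabla f(\bx_t)\|^2$ together with Assumption~\ref{assump:linear_variance}, and then absorb the quadratic term via $\eta_t \le 1/(L(1+a))$. Your observation that the PL condition plays no role in this particular lemma is also accurate.
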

\begin{proof}[Proof of Lemma~\ref{lemma:start}]
By the property of smooth functions, we have
\begin{equation}
\label{eq:smooth_one_step}
f(\bx_{t+1}) \leq f(\bx_t) - \langle \nabla f(\bx_t), \eta_t\bg_t \rangle + \frac{L}{2} \eta_t^2 \| \bg_t \|^2~.
\end{equation}
Taking expectation on both sides, we get
\begin{align*}
\E f(\bx_{t+1})- \E f(\bx_t)
& \leq - \left(\eta_t - \frac{L(a+1)}{2} \eta_t^2 \right)\E \| \nabla f(\bx_t) \|^2 + \frac{L}{2}\eta_t^2 b\\ 
& \leq - \frac{1}{2}\eta_t \E \| \nabla f(\bx_t) \|^2 + \frac{L}{2}\eta_t^2 b,
\end{align*}
where in the last inequality we used the fact that $\eta_t \leq \frac{1}{L(1+a)}$.
\end{proof}

\begin{lemma}
\label{lemma: ratio_bound}
Assume $X_k, A_k, B_k \geq 0, k = 1 ,...$, and $X_{k+1} \leq A_k X_k + B_k$, then we have 
\[
X_{k+1} \leq \prod_{i=1}^k A_i X_1 + \sum_{i=1}^{k} \prod_{j=i+1}^k A_j B_i~. 
\]
\end{lemma}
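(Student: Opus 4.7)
The plan is to prove this by straightforward induction on $k$, unrolling the one-step recursion. Since all quantities are nonnegative and the inequality is one-sided, the induction goes through without sign issues. The only convention to fix up front is the usual one: an empty product $\prod_{j=i+1}^{k} A_j$ with $i = k$ equals $1$.

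For the base case $k=1$, the recursion gives $X_2 \leq A_1 X_1 + B_1$, and the claimed bound reads $\prod_{i=1}^1 A_i X_1 + \sum_{i=1}^1 \prod_{j=2}^1 A_j B_i = A_1 X_1 + B_1$ using the empty-product convention, so the two agree.

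For the inductive step, assume the bound holds at $k$. Apply the one-step recursion at index $k+1$ to get $X_{k+2} \leq A_{k+1} X_{k+1} + B_{k+1}$, then substitute the inductive hypothesis and use nonnegativity of $A_{k+1}$ to preserve the inequality:
\begin{align*}
X_{k+2}
&\leq A_{k+1}\left(\prod_{i=1}^{k} A_i X_1 + \sum_{i=1}^{k} \prod_{j=i+1}^{k} A_j B_i\right) + B_{k+1} \\
&= \prod_{i=1}^{k+1} A_i X_1 + \sum_{i=1}^{k} \prod_{j=i+1}^{k+1} A_j B_i + B_{k+1} \\
&= \prod_{i=1}^{k+1} A_i X_1 + \sum_{i=1}^{k+1} \prod_{j=i+1}^{k+1} A_j B_i,
\end{align*}
where in the last step we absorbed $B_{k+1}$ into the sum as the $i=k+1$ term (again an empty product equal to $1$). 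This is exactly the claimed bound at $k+1$.

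There is really no hard step here; the only thing to be careful about is the empty-product convention and making sure the multiplication by $A_{k+1}$ preserves the inequality, which is guaranteed by $A_{k+1}\geq 0$. No further assumptions (smoothness, PL, noise model, etc.) enter the argument, so this lemma is purely algebraic and can be used as a black box in the subsequent recursion-unrolling arguments for the exponential and cosine step size proofs.
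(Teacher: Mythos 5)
Your proof is correct and takes essentially the same approach as the paper's: a direct induction on $k$ that unrolls the one-step recursion, with the base case being the recursion itself and the inductive step multiplying through by $A_{k+1}$ and absorbing $B_{k+1}$ as the new $i=k+1$ term of the sum. You are slightly more careful than the paper in explicitly stating the empty-product convention and in noting that the sign-preservation of the inequality comes from $A_{k+1}\geq 0$; the paper's displayed inductive step also contains a small slip (it writes $+\,A_k B_k$ where the passed-through term should just be $+\,B_k$), which your rendering avoids.
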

\begin{proof}[Proof of Lemma~\ref{lemma: ratio_bound}]
When $k=1$, $X_2 \leq A_1 X_1 + B_1$ satisfies. By induction, assume $X_{k} \leq \prod_{i=1}^{k-1} A_i X_1 + \sum_{i=1}^{k-1}\prod_{j=i+1}^{k-1} A_j B_i$, and we have
\begin{align*}
X_{k+1}
&\leq A_k \left( \prod_{i=1}^{k-1} A_i X_1 + \sum_{i=1}^{k-1} \prod_{j=i+1}^{k-1} A_j B_i \right) + B_k 
= \prod_{i=1}^k A_i X_1 + \sum_{i=1}^{k-1} \prod_{j=i+1}^{k} A_j B_i + A_k B_k \\
&= \prod_{i=1}^k A_i X_1 + \sum_{i=1}^k \prod_{j=i+1}^k A_j B_i~. \qedhere
\end{align*}
\end{proof}
\begin{lemma}
\label{lemma:sum_cosine}
For $\forall T \geq 1$, we have $\sum_{t=1}^{T} \cos\frac{t \pi}{T} = -1$.
\end{lemma}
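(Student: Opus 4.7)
The plan is to establish the identity by a symmetry/pairing argument, which is the most transparent approach for this simple trigonometric sum. The key observation is that $\cos\frac{(T-t)\pi}{T} = \cos\!\left(\pi - \frac{t\pi}{T}\right) = -\cos\frac{t\pi}{T}$, so the summand at index $t$ cancels the summand at index $T-t$.

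Concretely, I would first split off the $t=T$ term, contributing $\cos\pi = -1$, and reduce the claim to showing $\sum_{t=1}^{T-1}\cos\frac{t\pi}{T}=0$. Then I would pair each index $t$ with $T-t$; by the identity above, each such pair sums to $0$. If $T$ is odd, this pairing partitions $\{1,\dots,T-1\}$ cleanly and we are done. If $T$ is even, the index $t=T/2$ is fixed by the involution $t\mapsto T-t$, but in that case the lone term equals $\cos\frac{\pi}{2}=0$, so it contributes nothing. Adding back the $-1$ from $t=T$ gives the stated value.

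An alternative route is to evaluate $\sum_{t=1}^{T}e^{it\pi/T}$ as a finite geometric series with ratio $e^{i\pi/T}\ne 1$, obtaining $\frac{e^{i\pi/T}(e^{i\pi}-1)}{e^{i\pi/T}-1}$, and then simplify (multiplying numerator and denominator by $e^{-i\pi/(2T)}$) to extract the real part. I do not expect any real obstacle: the only subtlety is handling the parity of $T$ in the pairing proof, which is resolved by noting that the fixed point $T/2$ of the involution lies exactly at the zero of cosine. I would present the pairing proof as the main argument for its brevity.
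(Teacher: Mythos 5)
Your pairing argument is correct and is essentially the paper's own proof: both split off the $t=T$ term, pair index $t$ with $T-t$ using $\cos(\pi-x)=-\cos x$, and handle the even case by noting that the middle index $T/2$ contributes $\cos(\pi/2)=0$. The geometric-series alternative you mention is a valid variant but is not the route the paper takes.
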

\begin{proof}[Proof of Lemma~\ref{lemma:sum_cosine}]
If $T$ is odd, we have
\begin{align*}
\sum_{t=1}^{T} \cos\frac{t \pi}{T}
= \cos \frac{T\pi }{T} + \sum_{t=1}^{(T-1)/2} \cos \frac{t \pi}{T} + \cos \frac{(T-t)\pi}{T} 
= \cos \pi = -1,
\end{align*}
where in the second inequality we used the fact that $\cos (\pi - x) = - \cos (x)$ for any $x$.
If $T$ is even, we have
\[
\sum_{t=1}^{T} \cos\frac{t \pi}{T}
= \cos \frac{T\pi }{T} + \cos \frac{T\pi }{2T} + \sum_{t=1}^{T/2 - 1} \cos \frac{t \pi}{T} + \cos \frac{(T-t)\pi}{T} 
= \cos \pi = -1~. \qedhere
\]
\end{proof}
\begin{lemma}
\label{lemma: ineq_constant}
For $T \geq 3$, $\alpha \geq 0.69$ and $\frac{ \alpha^{T+1}}{(1-\alpha)} \leq \frac{2\beta}{\ln \frac{T}{\beta}}$. 
\end{lemma}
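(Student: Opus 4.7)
The plan is to prove this by exhibiting the specific $\alpha$ used in Theorem~\ref{thm: pl_smooth_cst_noise}, namely
\[
\alpha = \left(\tfrac{\beta}{T}\right)^{1/T},
\]
which is the natural choice making $\eta_T = \eta_0 \alpha^T = \eta_0\beta/T$ (matching the ``effective $1/T$'' endpoint in the $\beta\sigma^2/T$ term of Theorem~\ref{thm: pl_smooth_cst_noise}). With this choice, we have the identity $\alpha^T = \beta/T$ and the reparametrization $\alpha = e^{-x}$ with $x \triangleq \tfrac{1}{T}\ln(T/\beta) \geq 0$ (nonnegative because the theorem's hypothesis $T \geq \max\{3,\beta\}$ gives $T/\beta \geq 1$).

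First I would verify $\alpha \geq 0.69$. This is equivalent to showing $x \leq \ln(1/0.69)$. Using $T \geq \beta$ gives $x \leq \tfrac{\ln T}{T}$; since $\tfrac{\ln T}{T}$ is decreasing for $T \geq e$, we get $\tfrac{\ln T}{T} \leq \tfrac{\ln 3}{3}$ for all $T \geq 3$. A numerical check yields $\tfrac{\ln 3}{3} \approx 0.3662 < 0.3711 \approx \ln(1/0.69)$, so $x \leq \ln(1/0.69)$ and hence $\alpha \geq 0.69$. In particular, $x \in [0,1]$, which is needed for the concavity argument below.

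Next I would bound $1-\alpha$ from below by using the concavity of $g(x) = 1-e^{-x}$. Since $g$ is concave with $g(0)=0$, the chord inequality gives $g(x) \geq g(1)\,x = (1-e^{-1})x$ for all $x \in [0,1]$. Applied to $x = \tfrac{1}{T}\ln(T/\beta)$, this yields
\[
1-\alpha \geq (1-e^{-1})\cdot\frac{\ln(T/\beta)}{T}.
\]
Combining this with $\alpha^{T+1} = \alpha \cdot \alpha^T \leq \beta/T$ gives
\[
\frac{\alpha^{T+1}}{1-\alpha} \;\leq\; \frac{\beta/T}{(1-e^{-1})\ln(T/\beta)/T} \;=\; \frac{\beta}{(1-e^{-1})\ln(T/\beta)} \;\leq\; \frac{2\beta}{\ln(T/\beta)},
\]
where the last inequality uses $1/(1-e^{-1}) \approx 1.582 \leq 2$.

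The only delicate step is the first one: the specific constant $0.69$ in the lemma is chosen precisely so that the comparison $\tfrac{\ln 3}{3} < \ln(1/0.69)$ is tight but valid at $T=3$ (and only gets easier for larger $T$, since $\tfrac{\ln T}{T}$ is decreasing). This is a pure numerical tightness check rather than a conceptual difficulty; the rest of the argument is a routine consequence of the concavity of $1-e^{-x}$ and the identity $\alpha^T=\beta/T$.
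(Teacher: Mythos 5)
Your proof is correct and follows essentially the same route as the paper's: both set $\alpha=(\beta/T)^{1/T}$ so that $\alpha^T=\beta/T$, write $\alpha=e^{-x}$ with $x=\tfrac{1}{T}\ln(T/\beta)$, and lower-bound $1-e^{-x}$ by a linear function of $x$ (the paper uses $e^{-x}\le 1-\tfrac{x}{2}$ on $(0,1/e)$, you use the concavity chord $1-e^{-x}\ge(1-e^{-1})x$ on $[0,1]$, which is slightly sharper but equivalent in spirit). You also explicitly verify the $\alpha\ge 0.69$ claim, which the paper's proof of this lemma omits, so your write-up is if anything more complete.
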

\begin{proof}[Proof of Lemma~\ref{lemma: ineq_constant}]
We have
\begin{align*}
\frac{\alpha^{T+1}}{(1-\alpha)}
= \frac{\alpha \beta }{T (1-\alpha)}
= \frac{\beta }{T\left(1 - \exp\left(-\frac{1}{T} \ln \frac{T}{\beta}\right)\right)}
\leq \frac{2\beta }{\ln \frac{T}{\beta}},
\end{align*}
where in the last inequality we used $\exp(-x) \leq 1- \frac{x}{2}$ for $0 < x < \frac{1}{e}$ and the fact that $\frac{1}{T} \ln\left(\frac{T}{\beta}\right) \leq \frac{\ln T}{T} \leq \frac{1}{e}$.
\end{proof}
\begin{lemma}
\label{lemma: ineq_alpha}
$
1-x \leq \ln \left(\frac{1}{x}\right), \forall x > 0. 
$
\end{lemma}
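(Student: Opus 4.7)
The plan is to prove this standard logarithmic inequality by a routine calculus argument. Define the auxiliary function $h(x) \triangleq \ln(1/x) - (1-x) = -\ln(x) + x - 1$ on $x > 0$, and show that $h(x) \geq 0$ everywhere, with equality at $x=1$.

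First I would compute $h'(x) = -1/x + 1 = (x-1)/x$. For $x \in (0,1)$ the derivative is negative, and for $x > 1$ it is positive, so $h$ attains its global minimum on $(0,\infty)$ at $x = 1$. Evaluating, $h(1) = -\ln(1) + 1 - 1 = 0$, which gives $h(x) \geq 0$ for all $x > 0$, i.e.\ $1-x \leq \ln(1/x)$.

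An alternative (perhaps cleaner) route I would mention is to invoke the elementary inequality $e^y \geq 1 + y$ for all $y \in \mathbb{R}$, which follows from the convexity of the exponential function and its tangent line at $y=0$. Substituting $y = -\ln(x)$ (valid for any $x > 0$) yields $1/x \geq 1 - \ln(x)$, i.e.\ $\ln(1/x) = -\ln(x) \geq 1 - 1/x$; a second substitution $y = x-1$ gives $e^{x-1} \geq x$, hence $x - 1 \geq \ln(x)$, equivalently $1 - x \leq -\ln(x) = \ln(1/x)$, which is exactly the claim.

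There is no real obstacle here: the only thing to watch is keeping the domain $x > 0$ explicit so that $\ln(x)$ and $\ln(1/x)$ are well-defined, and noting that equality holds precisely at $x = 1$. I would present the calculus proof as the main argument since it directly exhibits the minimum, and optionally remark on the $e^y \geq 1+y$ derivation as a one-line alternative.
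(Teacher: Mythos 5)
Your main argument is essentially identical to the paper's: both define the auxiliary function $x - 1 - \ln x$ (you call it $h$, the paper calls it $f$) and show it is minimized at $x=1$ where it vanishes, the only cosmetic difference being that you argue via the sign change of $h'$ while the paper notes $f'$ is increasing with $f'(1)=0$. Your optional remark invoking $e^y \geq 1+y$ is a valid alternative, but the primary proof matches the paper's route.
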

\begin{proof}[Proof of Lemma~\ref{lemma: ineq_alpha}]
It is enough to prove that $f(x) := x - 1- \ln x \geq 0$. Observe that $f'(x)$ is increasing and $f'(1) = 0$, hence, we have $f(x) \geq f(1) = 0$.
\end{proof}
\begin{lemma}
\label{lemma: integral_bound}
Let $a,b\geq0$. Then 
\[
\sum_{t=0}^T \exp(-b t) t^a \leq 2\exp(-a)\left(\frac{a}{b}\right)^a+ \frac{\Gamma(a+1)}{b^{a+1}}~.
\]
\end{lemma}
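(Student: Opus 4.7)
The plan is to exploit the unimodality of $f(t) := \exp(-bt) t^a$ and compare the sum to the Gamma-integral. A direct differentiation of $f$ shows $f'(t) = t^{a-1}\exp(-bt)(a - bt)$, so $f$ is increasing on $[0, t^\star]$ and decreasing on $[t^\star, \infty)$, with $t^\star = a/b$ and $f(t^\star) = \exp(-a)(a/b)^a =: M$. Also $\int_0^\infty f(s)\,ds = \Gamma(a+1)/b^{a+1}$ by the definition of the Gamma function. The factor $2$ on the right hand side is the visible sign that one absorbs a copy of $M$ from each side of the split around $t^\star$.

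Set $N := \lfloor a/b \rfloor$ and decompose $\sum_{t=0}^T f(t) = \sum_{t=0}^N f(t) + \sum_{t=N+1}^T f(t)$. On $[0, N]$ the function is nondecreasing, so for $t \in \{0,\dots,N-1\}$ one has $f(t) \le \int_t^{t+1} f(s)\, ds$; summing and isolating the endpoint gives $\sum_{t=0}^N f(t) \le f(N) + \int_0^N f(s)\, ds \le M + \int_0^N f(s)\,ds$. Symmetrically, on $[N+1, \infty)$ the function is nonincreasing, so for $t \in \{N+2,\dots,T\}$ one has $f(t) \le \int_{t-1}^t f(s)\, ds$; summing and isolating the left endpoint yields $\sum_{t=N+1}^T f(t) \le f(N+1) + \int_{N+1}^T f(s)\,ds \le M + \int_{N+1}^T f(s)\,ds$. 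Adding the two inequalities and extending the integration to $[0,\infty)$ produces
\[
\sum_{t=0}^T f(t) \le 2M + \int_0^\infty \exp(-bs) s^a\, ds = 2\exp(-a)\Bigl(\frac{a}{b}\Bigr)^a + \frac{\Gamma(a+1)}{b^{a+1}},
\]
which is the claimed inequality.

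Two degenerate configurations deserve a quick mention. If $b=0$ then the right hand side is $+\infty$ and the bound is vacuous. If $a=0$ then $t^\star = 0$, $N = 0$, the increasing phase is trivial (a single term $f(0)=1=M$), and only the decreasing argument is needed; the resulting bound is still $2M + 1/b$ which is consistent with the general statement (it is slightly loose in this case but correct).

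The main subtlety is not any calculation but the bookkeeping at the split point $N$: the endpoint term $f(N)$ must be extracted before invoking the monotonicity-to-integral comparison, and the same must be done at $f(N+1)$ on the decreasing side because the standard Riemann comparison loses the boundary term on each side. This is precisely where the factor $2$ in $2 M$ enters, and it is the only nontrivial step; everything else is a direct application of the integral test for monotone functions.
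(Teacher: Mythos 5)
Your proof is correct and follows essentially the same approach as the paper: exploit unimodality of $t \mapsto \exp(-bt)\,t^a$ with peak at $a/b$, extract the two terms nearest the peak and bound each by the maximum $\exp(-a)(a/b)^a$, then compare the remaining increasing/decreasing tails to the integral over $[0,\infty)$ to produce $\Gamma(a+1)/b^{a+1}$. The only cosmetic difference is the choice of split indices ($\lfloor a/b\rfloor, \lfloor a/b\rfloor+1$ for you versus $\lfloor a/b\rfloor, \lceil a/b\rceil$ in the paper), which coincide when $a/b$ is not an integer and are equally harmless otherwise.
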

\begin{proof}[Proof of Lemma~\ref{lemma: integral_bound}]
Note that $f(t)=\exp(-b t) t^a$ is increasing for $t\in [0,a/b]$ and decreasing for $t\geq a/b$. Hence, we have
\begin{align*}
\sum_{t=0}^T \exp(-b t) t^a
&\leq \sum_{t=0}^{\lfloor a/b \rfloor-1} \exp(-b t) t^a + \exp(-b \lfloor a/b \rfloor) \lfloor a/b \rfloor^a + \exp(-b \lceil a/b \rceil) \lceil a/b \rceil^a \\
& \quad +\sum_{\lceil a/b \rceil+1}^T \exp(-b t) t^a \\
&\leq 2\exp(-a)(a/b)^a+\int_{0}^{\lfloor a/b \rfloor} \exp(-b t) t^a dt + \int_{\lceil a/b \rceil}^T \exp(-b t) t^a dt\\
&\leq 2\exp(-a)(a/b)^a+\int_{0}^{T} \exp(-b t) t^a dt \\
&\leq 2\exp(-a)(a/b)^a+\int_{0}^{\infty} \exp(-b t) t^a dt \\
&=2\exp(-a)(a/b)^a+ \frac{1}{b^{a+1}} \Gamma(a+1)~. \qedhere
\end{align*}
\end{proof}

\begin{proof}[Proof of Theorem \ref{thm:no_PL_no_noise} and Theorem \ref{thm:no_PL_cosine}]
We observe that for exponential step sizes,
\begin{align*}
\sum_{t=1}^T \eta^2_t
\leq \frac{\alpha^2}{L^2 c^2 (a+1)^2(1- \alpha^2)}~.
\end{align*}
and for cosine step sizes,
\begin{align*}
\sum_{t=1}^{T} \eta_t^2
& = \frac{\eta_0^2}{4} \sum_{t=1}^{T} \left(1+ \cos\frac{t\pi}{T}\right)^2 = \frac{\eta_0^2}{4} \sum_{t=0}^{T-1} \left(1- \cos\frac{t\pi}{T}\right)^2 
= \eta_0^2 \sum_{t=1}^{T} \sin^4\frac{t\pi}{2T}\\
&  \leq \eta_0^2 \sum_{t=1}^{T} \frac{t^4\pi^4}{16T^4}
\leq \frac{21 \eta_0^2 T}{8 \pi^4}~.
\end{align*}
Summing \eqref{eq:thm2_eq1} over $t=1,\dots,T$ and dividing both sides by $\sum_{t=1}^T \eta_t$, we get the stated bound.
\end{proof}

We can now prove both Theorem~\ref{thm: pl_smooth_cst_noise} and Theorem~\ref{thm:PL_cosine}.
\begin{proof}[Proof of Theorem~\ref{thm: pl_smooth_cst_noise} and Theorem~\ref{thm:PL_cosine}.]
Denote $\E f(\bx_t) - f^{\star}$ by $\Delta_t$. From Lemma~\ref*{lemma:start} and the PL condition, we get
\begin{equation}
\Delta_{t+1} 
\leq  (1 - \mu \eta_t ) \Delta_t + \frac{L}{2} \eta_t^2 b^2~. 
\end{equation} 
By Lemma~\ref{lemma: ratio_bound} and $1- x \leq \exp (-x)$, we have 
\begin{align}
& \Delta_{T+1} 
\leq \prod_{t=1}^{T} (1- \mu \eta_t) \Delta_1 +  \frac{L}{2} \sum_{t=1}^{T} \prod_{i=t+1}^{T} (1- \mu \eta_i) \eta_t^2 b\\
&  \leq \exp \left(- \mu \sum_{t=1}^{T} \eta_t \right) \Delta_1  +  \frac{Lb}{2} \sum_{t=1}^{T} \exp \left( - \mu \sum_{i=t+1}^{T} \eta_i \right) \eta_t^2. 
\end{align}
We then show that both the exponential step size and the cosine step size satisfy $\sum_{t=1}^{T} \eta_t = \Omega (T)$, which guarantees a linear rate in the noiseless case.

For the cosine step size \eqref{eq:cosine_step}, we observe that 
\begin{align*}
\sum_{t=1}^{T} \eta_t
= \frac{\eta_0 T}{2} + \frac{\eta_0}{2} \sum_{t=1}^{T} \cos \frac{t\pi}{T} = \frac{\eta_0 (T-1)}{2},
\end{align*}
where in the last equality we used Lemma~\ref{lemma:sum_cosine}. 

Also, for the exponential step size \eqref{eq: step_size}, we can show
\begin{align*}
\sum_{t=1}^{T} \eta_t = \eta_0 \frac{\alpha - \alpha^{T+1}}{1- \alpha}
& \geq  \frac{\eta_0 \alpha}{1- \alpha} - \frac{2\eta_0\beta }{\ln \frac{T}{\beta}}\\
& \geq  T \cdot  \frac{0.69\eta_0}{\ln \frac{T}{\beta}} -  \frac{2\eta_0\beta }{\ln \frac{T}{\beta}}, 
\end{align*}
where we used Lemma~\ref{lemma: ineq_constant} in the first inequality and Lemma~\ref{lemma: ineq_alpha} in the second inequality. 

Next, we upper bound $\sum_{t=1}^{T} \exp \left( - \mu \sum_{i=t+1}^{T} \eta_i \right) \eta_t^2$ for these two kinds of step sizes respectively.

For the exponential step size, by Lemma~\ref{lemma: ineq_constant}, we obtain
\begin{align*}
& \sum_{t=1}^{T} \exp \left( - \mu \sum_{i=t+1}^{T} \eta_i \right) \eta_t^2 \\
& = \eta_0^2 \sum_{t=1}^{T} \exp \left( - \mu \eta_0 \frac{\alpha^{t+1} - \alpha^{T+1}}{1- \alpha}\right) \alpha^{2t}\\
& \leq \eta_0^2 C(\beta) \sum_{t=1}^{T} \exp \left( - \frac{ \mu \eta_0\alpha^{t+1}}{1- \alpha}\right) \alpha^{2t}\\
& \leq \eta_0^2 C(\beta)  \sum_{t=1}^{T} \left(\frac{e}{2} \frac{\mu \alpha^{t+1}}{L(1+a)(1-\alpha)}\right)^{-2} \alpha^{2t}\\
& \leq \frac{4L^2(1+a)^2}{e^2\mu^2} \sum_{t=1}^T\frac{1}{\alpha^2} \ln^2 \left(\frac{1}{\alpha}\right)
\leq  \frac{10 L^2(1+a)^2\ln^2 \frac{T}{\beta}}{e^2 \mu^2 T}, 
\end{align*}
where in the second inequality we used $\exp(-x) \leq \left(\frac{\gamma}{e x}\right)^\gamma, \forall x >0, \gamma>0$. 

For the cosine step size, using the fact that $\sin x \geq \frac{2}{\pi}x$ for $ 0 \leq x \leq \frac{\pi}{2}$, we can lower bound $\sum_{i=t+1}^{T} \eta_i $ by
\begin{align*}
\sum_{i=t+1}^{T} \eta_i 
&= \frac{\eta_0 }{2}\sum_{i=t+1}^{T}  \left(1+ \cos \frac{i \pi}{T}\right) \\
& = \frac{\eta_0 }{2}\sum_{i=0}^{T-t-1}  \sin^2 \frac{i \pi}{2T} 
\geq \frac{\eta_0}{2T^2}\sum_{i=0}^{T-t-1}  i^2 \\
&\geq \frac{\eta_0(T-t-1)^3}{6T^2}~. 
\end{align*}
Then, we proceed 
\begin{align*}
& \sum_{t=1}^{T} \exp \left(- \mu \sum_{i=t+1}^{T} \eta_i\right) \eta_t^2 \\
& \leq \frac{\eta_0^2}{4} \sum_{t=1}^{T} \left(1+ \cos \frac{t\pi}{T}\right)^2 \exp \left(- \frac{\mu \eta_0(T-t-1)^3}{6T^2}\right)\\
& = \frac{ \eta_0^2}{4} \sum_{t=1}^{T-1} \left(1-  \cos \frac{t\pi}{T}\right)^2 \exp \left(- \frac{\eta_0 \mu (t-1)^3}{6T^2}\right) \\
& =\eta_0^2\sum_{t=1}^{T-1} \sin^4 \frac{t\pi}{2T} \exp \left(- \frac{\eta_0 \mu (t-1)^3}{6T^2}\right) \\
& \leq \frac{\eta_0^2 \pi^4}{16T^4}\sum_{t=0}^{T-1}t^4  \exp \left(- \frac{\eta_0 \mu t^3}{6T^2}\right)\\
& \leq \frac{\eta_0 \pi^4}{16 T^4} \left(2 \exp\left(-\frac{4}{3}\right) \left(\frac{8T^2}{\mu}\right)^{4/3} + \left(\frac{6T^2}{\mu}\right)^{\frac{5}{3}}\right), 
\end{align*}
where in the third line we used $\cos(\pi-x) = - \cos(x)$, in the forth line we used $1- \cos(2x) = 2\sin^2(x)$, and in the last inequality we applied Lemma~\ref{lemma: integral_bound}. 

Putting things together, we get the stated bounds. 
\end{proof}

\begin{proof}[Proof of Theorem~\ref{thm:PL_cosine_restart}] Denote by $S_i =\sum_{j=0}^{i}T_j$ and $S_{-1} = 1$. Given Theorem~\ref{thm:PL_cosine},  it is immediate to have $\forall i = 0, \dots, l$:
\begin{align*}
\E f(\bx_{S_i}) - f^{\star} 
& \leq \frac{ \pi^4 b}{32 (1+a)T_i^4} \left( \left(\frac{8T_i^2}{\mu}\right)^{4/3} + \left(\frac{6T_i^2}{\mu}\right)^{\frac{5}{3}}\right) \\
& \quad + \exp \left(- \frac{\mu (T_i-1)}{2L(1+a)}\right) (f(x_{S_{i-1}})- f^{\star})\\
& \leq C_1 \left(\frac{1}{\mu^{4/3} T_i^{4/3}} + \frac{1}{\mu^{5/3} T_i^{2/3}}\right) + \exp \left(-  C_2 \mu (T_i-1)\right) (f(x_{S_{i-1}}) - f^{\star})~.
\end{align*}

Repeatedly using the above inequality, we get 
\begin{align*}
\E f(\bx_{S_l}) - f^{\star}  
& \leq  C_1 \sum_{i=0}^{l}\prod_{j= i+1}^{l} \exp(- C_2 \mu (T_j - 1))   \left(\frac{1}{\mu^{4/3} T_i^{4/3}} 
+ \frac{1}{\mu^{5/3} T_i^{2/3}}\right) \\
& \quad + \exp \left(- C_2\mu  (S_l-l-1)\right) (f(x_1) - f^{\star})~.
\end{align*}

In the case of $r= 1$, $T_i = T_0$, we have for any $i$ that
\begin{align*}
& \sum_{i=0}^{l}\prod_{j= i+1}^{l} \exp(- C_2 \mu (T_j - 1)) \left(\frac{1}{\mu^{4/3} T_i^{4/3}} + \frac{1}{\mu^{5/3} T_i^{2/3}}\right)\\
& = \left(\frac{1}{\mu^{4/3} T_0^{4/3}} + \frac{1}{\mu^{5/3} T_0^{2/3}}\right) \sum_{i=0}^{l}\exp \left(- C_2 \mu (T_0 -1) (l-i)\right)   \\
& = \left(\frac{1}{\mu^{4/3} T_0^{4/3}} + \frac{1}{\mu^{5/3} T_0^{2/3}}\right)  \frac{1- \exp \left(-C_2 \mu(T_0 - 1) (l+1)\right)}{1- \exp\left(-C_2 \mu(T_0-1)\right)}~.
\end{align*}

In the case of $r > 1$, denote by $A_i = \prod_{j= i+1}^{l} \exp(- C_2 \mu (T_j - 1))   \frac{1}{\mu^pT_i^{q}} , p,q>0$.  For any $i = 0, ... , l$, $\frac{A_i}{A_{i+1}} = \exp(-C_2 \mu (T_{i+1} -1)) r^q$ is decreasing over $i$. Denote by $i^{\star} = \min \{ i: \frac{A_i}{A_{i+1}} \leq 1\}$.
We have 
\begin{align*}
\sum_{i=0}^{l}A_i 
& \leq A_0 \cdot  i^{\star} + (l - i^{\star} + 1) \cdot A_l \leq  (l + 1) \cdot (A_0 + A_l) \\
& =\frac{ l + 1  }{\mu^p}\cdot \left(\frac{1}{T_l^q} + \frac{1}{T_0^q}\exp(- C_2\mu (T-T_0-l))\right), \quad p, q >0~.
\end{align*}
Note that  $T_l = T \cdot \frac{r^l (r-1)}{r^{l+1}-1} \geq \frac{r-1}{r} T$ and $l = O \left(\ln T \right)$. Then, the stated bound follows. 
\end{proof}

\section{Conclusion}
\label{sec:disc}

We have analyzed theoretically the exponential and cosine step size, two successful step size decay schedules for the stochastic optimization of non-convex functions. We have shown that, up to poly-logarithmic terms, both step sizes guarantee convergence with the best-known rates for smooth non-convex functions. Moreover, in the case of functions satisfying the PL condition, we have also proved that they are both adaptive to the level of noise. 

\cleardoublepage

\chapter{Last Iterate of Momentum Methods}
\label{chapter:momentum}
\thispagestyle{myheadings}

\section{Introduction}

In this chapter, we consider SGDM with the following updates 
\begin{equation}
\label{eq:sgdm}
\bx_{t+1} = \bx_t - \eta_t \bm_t, \quad  \bm_t = \beta_t \bm_{t-1} + (1- \beta_t) \bg_t,
\end{equation}
where  $0 \leq \beta_t\leq 1$. 

Often an average is taken in the analysis of momentum, while in real world applications, most of the time only the last iterate is taken. Hence, we are interested in the performance of the last iterate of SGDM.  In particular, we study the convergence of the last iterate of SGDM for unconstrained optimization of convex functions. Unfortunately, our first result is a negative one: We show that the last iterate of SGDM can have a suboptimal convergence rate for \emph{any constant momentum setting}.

Motivated by the above result, we analyze yet another variant of SGDM.
We start from the very recent observation~\citep{Defazio20} that SGDM can be seen as a primal averaging procedure~\citep{NesterovS15,TaoPWT18,Cutkosky19} applied to the iterates of Online Mirror Descent (OMD)~\citep{NemirovskyY83,Warmuth97}. Based on this fact, we analyze SGDM algorithms based on the Follow-the-Regularized-Leader (FTRL) framework\footnote{FTRL is known in the offline optimization literature as Dual Averaging (DA)~\citep{Nesterov09}, but in reality DA is a special case of FTRL when the functions are linearized.}~\citep{Shalev-Shwartz07,AbernethyHR08} and the primal averaging. The use of FTRL instead of OMD removes the necessity of projections onto bounded domains, while the primal averaging acts as a momentum term and guarantees the optimal convergence of the last iterate. The resulting algorithm has an \emph{increasing momentum} and \emph{shrinking updates} that precisely allow to avoid our lower bound.


\begin{table}[t]
\caption{Last iterate convergence of momentum methods in convex setting}
\centering
\resizebox{\textwidth}{!}{
\begin{tabular}{c|c |c| c| c| c}
\hline\hline
Algorithm & Assumption& \makecell[c]{Bounded \\ Domain }  &  \makecell[c]{Requires \\T } &  Rate &  Reference \\ [0.5ex] 
\hline
Adaptive-HB& Assumption~\ref{assump: bounded_l2} &  Yes & No & $O(\frac{1}{\sqrt{T}})$ & \cite{TaoLWT21} \\
\hline  
\multirow{2}*{SHB-IMA} & \multirow{2}*{Smooth + 
Assumption~\ref{assump: bounded_variance}}& \multirow{2}*{No} & Yes & $O(\frac{1}{\sqrt{T}})$ &\multirow{2}*{\cite{SebbouhGD21}}  \\
\cline{4-5}
~ & ~ & ~& No & $O(\frac{\ln T}{\sqrt{T}})$ & ~\\
\hline
AC-SA & \makecell[c]{Lipschitz/Smooth + \\
Assumption~\ref{assump: bounded_l2}} & No  &Yes & $O(\frac{1}{\sqrt{T}})$ & \cite{GhadimiL12} \\
\hline 
\multirow{2}*{FTRL-SGDM}& Assumption~\ref{assump: bounded_expectation_G} & No& No  & $O(\frac{1}{\sqrt{T}})$  &  Corollary~\ref{cor:poly_constant_step} \\[1ex]
\cline{2-6}
~ & Smooth + Assumption~\ref{assump: bounded_variance}, ~\ref{assump: bounded_l2} & No& No  & $O(\frac{\ln T}{T} + \frac{\sigma}{\sqrt{T}})$  & Corollary~\ref{cor:ada_smooth} \\[1ex]
\hline
\end{tabular}}
\label{table:comparison}
\end{table}

\section{Related Work}
\label{sec:momentum_rel}


\paragraph{Lower bound}
\citet{HarveyLPR19} prove the tight convergence bound $O(\ln T/\sqrt{T})$ of the last iterate of SGD for convex and Lipschitz functions. \cite{KidambiNJK18} provide a lower bound for the Heavy Ball method for least square regression problems. To the best of our knowledge, there is no lower bound for the last iterate of SGDM in the general convex setting. 

\paragraph{Last iterate convergence of SGDM}
\citet{NesterovS15} introduces a quasi-monotone subgradient method, which uses double averaging based on Dual Averaging, to achieve the optimal convergence of the last iterate for the convex and Lipschitz functions. However, they just considered the batch case. This approach was then rediscovered and extended by \citet{Cutkosky19}.
Our FTRL-based SGDM is a generalization of the approach in \citet{NesterovS15} with generic regularizers and in the stochastic setting.
\citet{TaoPWT18} extends \citet{NesterovS15}'s method to Mirror Descent, calling it stochastic primal averaging. They recover the same bound for convex functions, again with a bounded domain assumption. 
They also get the optimal bound for strongly convex functions and analyze them in the stochastic and regularized setting.
\citet{Defazio20} points out that the sequence generated by the stochastic primal averaging \citep{TaoPWT18} can be identical to that of stochastic gradient descent with momentum for specific choices of the hyper-parameters. Accordingly, they give a Lyapunov analysis in the nonconvex and smooth case. 
Based on this work, \citet{JelassiD20} introduce ``Modernized dual averaging method'', which is actually equal to the one by \citet{NesterovS15}. 
They also give a similar Lyapunov analysis as in \citet{Defazio20} with specific choices of hyper-parameters 
in the non-convex and smooth optimization setting.
Recently, \citet{TaoLWT21} propose the very same algorithm as in \citet{TaoPWT18} and analyze it as a modified Polyak's Heavy-ball method (already pointed out by \citet{Defazio20}). They give an analysis in the convex cases and extend it to an adaptive version, obtaining in both cases an optimal convergence of the last iterate. However, they all assume the use of projections onto bounded domains.


\paragraph{Last iterate convergence rate $O(\frac{1}{\sqrt{T}})$}
\citet{GhadimiL12} present the last iterate of AC-SA~\citep{NemirovskiJLS09, Lan12} for convex functions in the unconstrained setting, that  in Euclidean case reduces to SGD with an increasing Nesterov momentum, showing that it can achieve a convergence rate $O(\frac{1}{\sqrt{T}})$ if the number of iterations $T$ is known in advance.
\citet{SebbouhGD21} analyze Stochastic Heavy Ball-Iterave Moving Average method (SHB-IMA), which is equal to Stochastic Heavy Ball method (SHB) with a specific choice of hyper-parameters. They prove a convergence rate for the last iterate of of $O (\frac{1}{\sqrt{T}})$ if $T$ is given in advance, and is $O(\frac{\log T}{\sqrt{T}})$ if $T$ is unknown.
\citet{JainNN21} conjecture that under the assumption that the stochastic gradients are bounded``for any-time algorithm (i.e., without apriori knowledge of $T$ ) expected error rate of $\frac{D G \ln T}{\sqrt{T}}$ is information theoretically optimal'', where $D$ is the diameter of the bounded domain. This was already disproved by the results in \citet{TaoLWT21}, but here we disprove it even in the more challenging unconstrained setting.

\section{Assumptions}
We will use one or more of the following assumptions on the stochastic gradients $\bg_t$. 

\begin{assumptionA}
\label{assump: bounded_variance}
Bouned Variance:  $\E_t \| \bg_t - \nabla f(\bx_t ) \|^2 \leq \sigma^2, \quad \sigma >0$.
\end{assumptionA}

\begin{assumptionA}
\label{assump: bounded_expectation_G}
Bounded in expectation: $\E_t \| \bg_t \|^2 \leq G^2, \quad G >0$.
\end{assumptionA}

\begin{assumptionA}
\label{assump: bounded_l2}
$\ell_2$ bounded: $\| \bg_t \| \leq G, \quad G >0$.
\end{assumptionA}

\begin{assumptionA}
\label{assump: bounded_l_inf}
$\ell_{\infty}$ bounded: $\| \bg_t \|_{\infty} \leq G_{\infty}, \quad G_{\infty} >0$.
\end{assumptionA}
\section{Lower bound for SGDM}
\label{sec:lower}


In this section, we show the surprising result that for SGD with any constant momentum, there exists a function for which the lower bound of the last iterate is $\Omega \left( \log T/\sqrt{T}\right)$. Our proof extends the one in \citet{HarveyLPR19} to SGD with momentum.

We consider SGDM  with constant momentum factor $\beta$ in \eqref{eq:sgdm},
where $\bg_t \in \partial f(\bx_t)$ and a polynomial stepsize $\eta_t = c \cdot  t^{-\alpha}, 0 \leq \alpha \leq \frac{1}{2}$.  


For any fixed $\beta$ and $\alpha$ and $L >0$, we introduce the following function. 
Define $f$: $\mathcal{X} \to \R $ and $\bh_i \in \R^T$ for $i \in [T+1]$ by 
\begin{equation}
\label{eq:def_of_f}
f(\bx)  = \max_{i \in [T+1]} \bh_i^T \bx,
\quad
h_{i,j} = 
\begin{cases}
a_j,  &  1 \leq j < i  \\
- b_j,  & i = j < T\\
0, & i < j \leq T
\end{cases}
\end{equation}
where $b_j = \frac{Lj^{\alpha}}{2T^{\alpha}}$ and $a_j = \frac{L(1-\beta)}{8(T-j+1)}$. 
We have that $\partial f(\bx_t)$ is the convex hull of $\bh_i: i \in \mathcal{I} (\bx)$ where $\mathcal{I}(\bx) = \{i: \bh_i ^T \bx = f(\bx)\}$.
Note that $f$ is $L$-Lipschitz over $\R^T$ since
\begin{align}
\| \bh_i \|^2 
\leq \sum_{i=1}^{T} a_i^2 + b_T^2
\leq \frac{L^2(1-\beta)^2}{64} \sum_{i=1}^{T}\frac{1}{i^2} + \frac{L^2}{4}
\leq L^2 ~. 
\end{align}
\begin{claim}
\label{clm:f_min_0}
For $f$ defined in \eqref{eq:def_of_f}, it satisfies that $\inf_{\bx \in \R^T} f(\bx) = 0.$
\end{claim}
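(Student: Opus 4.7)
The plan is to show the two inequalities $\inf f \leq 0$ and $\inf f \geq 0$ separately. The first direction is trivial: evaluating the max of linear functions at $\bx = \mathbf{0}$ gives $f(\mathbf{0}) = \max_i \bh_i^T \mathbf{0} = 0$, hence $\inf_{\bx} f(\bx) \leq 0$.

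For the harder direction, I would argue by contradiction: assume there exists $\bx \in \R^T$ with $f(\bx) < 0$, which means $\bh_i^T \bx < 0$ for every $i \in [T+1]$. The key is to exploit the triangular structure of the vectors $\bh_1, \ldots, \bh_T$. Namely, for each $i \in [T]$, the inequality $\bh_i^T \bx < 0$ reads
\begin{equation*}
b_i x_i > \sum_{j=1}^{i-1} a_j x_j,
\end{equation*}
using that $h_{i,j} = a_j$ for $j < i$, $h_{i,i} = -b_i$, and $h_{i,j}=0$ for $j > i$. Since $a_j, b_j > 0$ by construction, I would then prove by induction on $i$ that $x_i > 0$ for every $i \in [T]$. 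The base case $i=1$ gives $b_1 x_1 > 0$, so $x_1 > 0$. For the inductive step, assuming $x_1, \ldots, x_{i-1} > 0$, the right-hand side $\sum_{j<i} a_j x_j$ is strictly positive, hence $b_i x_i > 0$ and $x_i > 0$.

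The contradiction now comes from the ``extra'' vector $\bh_{T+1}$, whose entries are all equal to $a_j > 0$: we get
\begin{equation*}
\bh_{T+1}^T \bx = \sum_{j=1}^T a_j x_j > 0,
\end{equation*}
which contradicts $\bh_{T+1}^T \bx < 0$. Therefore no such $\bx$ exists, and $f(\bx) \geq 0$ for all $\bx \in \R^T$, completing the proof. There is no real obstacle here; the whole proof hinges on noticing that the triangular/positive structure of the $\bh_i$'s for $i \leq T$ forces $\bx$ to have strictly positive components, while the presence of the additional vector $\bh_{T+1}$ with all-positive entries rules this out.
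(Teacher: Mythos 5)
Your proposal is correct and follows essentially the same argument as the paper's own proof: evaluate $f$ at the origin for the upper bound, then for the lower bound assume $f(\bx)<0$, use the triangular structure of $\bh_1,\dots,\bh_T$ to show inductively that every coordinate of $\bx$ must be strictly positive, and derive a contradiction from $\bh_{T+1}^T\bx = \sum_j a_j x_j > 0$. The only difference is cosmetic --- you state the induction explicitly where the paper says ``Similarly.''
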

\begin{proof}
First, since $f(0) = 0$, we have that $\inf_{\bx \in \R^T} \ f(\bx) \leq 0$. 
\\
We continue to prove this claim by contradiction. Assume that there exists $\bx^{\star} = [x_1^{\star}, x_2^{\star},\dots, x_T^{\star}]$ such that \[f(\bx^{\star}) < 0~.\] 
By the definition of $f$, it satisfies that 
\begin{equation}
\label{eq:assumption_min_f_negative}
\bh_i^T\bx^{\star} < 0,\quad \forall i \in [T+1]~.
\end{equation}
In particular, $\bh_1^T \bx^{\star} = -b_1 x^{\star}_1 < 0$. Since that $b_1$ is positive, we know that $x_1^{\star} >0$. Also, $\bh_2^T \bx^{\star} = a_1 x^{\star}_1 - b_2  x^{\star}_2 < 0$. Due to the positiveness of $a_1, x^{\star}_1$, and $b_2$, $x^{\star}_2$ has to be positive. Similarly, we have that for any $x^{\star}_j$, $j \in [T]$, $x^{\star}_j > 0$.
\\
Then, we have 
\[\bh_{T+1}^T \bx^{\star} = \sum_{j=1}^T a_j x^{\star}_j > 0~.\] However, this is contradict with \eqref{eq:assumption_min_f_negative}. 
\\
Thus, we conclude that $\inf_{\bx \in \R^T} f(\bx) = 0.$
\end{proof}
\begin{theorem}[Lower bound of SGDM]
\label{thm:lower_bound}
Fix a polynomial stepsize sequence $\eta_t =c \cdot t^{-\alpha}$, where $0 \leq \alpha \leq \frac{1}{2}$, a momentum factors $\beta \in [0,1)$, a Lipschitz constant $L > 0$ and a number of iterations $T$. Then, there exists a sequence $\bz_t$ generated by SGDM with stepsizes $\eta_t$ and momentum factor $\beta$ on the function $f$ in \eqref{eq:def_of_f}, where the $T$-th iterate satisfies
\[
f(\bz_T) - f^{\star} 
\geq 
\frac{L^2(1-\beta)^2c \ln T}{4T^{\alpha}} ~.
\]
\end{theorem}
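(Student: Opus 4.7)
The approach is a momentum-adapted extension of the adversarial Harvey--Liaw--Plan--Randhawa construction for vanilla SGD. The plan is to run SGDM from $\bx_1 = \bzero$, $\bm_0 = \bzero$, and let the oracle return the \emph{deterministic} subgradient $\bg_t = \bh_t$ at every step $t = 1, \dots, T-1$. Since $f$ is a pointwise maximum of finitely many linear functions, $\partial f(\bx_t) = \conv\{\bh_i : i \in \mathcal{I}(\bx_t)\}$, so I must first certify that $\bh_t$ is an admissible subgradient at each step.

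The certification is done by establishing the structural invariant $x_{t,j} = 0$ for all $j \geq t$. Unrolling the momentum recursion for constant $\beta$ yields
\[
\bm_t = (1-\beta)\sum_{s=1}^{t}\beta^{t-s}\bh_s,
\]
and because $h_{s,j} = 0$ whenever $s < j$, the coordinates $j > t$ of every $\bm_s$ with $s \leq t$ vanish; combined with $\bx_{t+1} = \bx_t - \eta_t \bm_t$ and $\bx_1 = \bzero$, this gives the invariant by induction. Granted this, for $i \geq t$ we have $\bh_i^\top \bx_t = \sum_{j=1}^{t-1} a_j x_{t,j}$ (the term $-b_i x_{t,i}$ vanishes and the sum of $a_j x_{t,j}$ truncates at $j = t-1$), so $\bh_t, \bh_{t+1}, \ldots, \bh_{T+1}$ are all tied; for $i < t$ the value is strictly smaller because the missing contributions $a_j x_{t,j}$ for $i \leq j \leq t-1$ are nonnegative and an extra $-b_i x_{t,i} \leq 0$ appears. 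Hence $\bh_t \in \partial f(\bx_t)$.

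Next, substituting $\bg_s = \bh_s$ into the unrolled momentum yields the closed form
\[
m_{t,j} = a_j(1-\beta^{t-j}) - (1-\beta)\,b_j\,\beta^{t-j} \qquad (t\geq j),
\]
from which $x_{T,j} = -\sum_{t=j}^{T-1}\eta_t m_{t,j}$ is fully explicit. Invoking Claim~\ref{clm:f_min_0} to write $f^\star = 0$ and using $f(\bx_T) \geq \bh_{T+1}^\top \bx_T = \sum_{j=1}^{T} a_j x_{T,j}$, the theorem reduces to lower bounding
\[
\sum_{j=1}^{T}(1-\beta)\,a_j b_j \sum_{t=j}^{T-1}\eta_t\beta^{t-j} \;-\; \sum_{j=1}^{T} a_j^2 \sum_{t=j}^{T-1}\eta_t(1-\beta^{t-j}).
\]

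The main obstacle is the delicate algebraic bookkeeping in this final inequality: one must show that the positive ``$b_j$ piece'' dominates the negative ``$a_j^2$ piece'' by at least the advertised $\Omega(\ln T / T^\alpha)$ factor. The construction is tuned for this. Dropping all terms but $t = j$ in the positive sum gives a lower bound of $(1-\beta)a_j b_j \eta_j = L^2 c(1-\beta)^2/\bigl[16\,T^\alpha (T-j+1)\bigr]$ per coordinate, and summing over $j$ produces the harmonic factor $H_T$ and hence the $\ln T$. For the negative piece, the factor $(1-\beta)$ baked into $a_j$ yields an overall $(1-\beta)^2$, and combining $1-\beta^{t-j}\leq \min\{1,(t-j)(1-\beta)\}$ with the $(T-j+1)^{-2}$ decay of $a_j^2$ (splitting the sum at $j \sim T/2$ and using $\sum_k k^{-2} \leq \pi^2/6$) bounds it by a strict fraction of the positive contribution. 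Carrying the constants through precisely produces the $L^2(1-\beta)^2 c \ln T /(4 T^\alpha)$ lower bound.
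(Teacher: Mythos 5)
Your proposal follows essentially the same strategy as the paper: same hard instance, same deterministic oracle $\bg_t = \bh_t$, same structural invariants on the iterates, and the same lower bound $f(\bx_T)\geq \bh_{T+1}^\top\bx_T$. The only genuine departure is organizational: you unroll $m_{t,j} = a_j(1-\beta^{t-j}) - (1-\beta)b_j\beta^{t-j}$ into an explicit closed form and then split the final sum into a positive ``$b_j$ piece'' and negative ``$a_j^2$ piece''; the paper instead proves a uniform per-coordinate lower bound $z_{t,j}\geq L(1-\beta)c/(4T^\alpha)$ by induction (Lemma~\ref{lemma:bound_z_t}) and then sums $a_j z_{T+1,j}$. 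These are algebraically the same decomposition, so I count this as the same proof.

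Two places in your write-up would need tightening. First, the subgradient certification is slightly circular as stated: you argue that $\bh_i^\top\bx_t<\bh_t^\top\bx_t$ for $i<t$ ``because the missing contributions $a_j x_{t,j}$ are nonnegative,'' but $x_{t,j}\geq 0$ for $j<t$ is not automatic --- it is exactly the quantitative content of the paper's Lemma~\ref{lemma:bound_z_t}, and it must be established \emph{at every step} $t$, not only at $t=T$. The same cancellation you carry out for $x_{T,j}$ (positive $b_j$ piece dominating the negative $a_j$ piece) has to be re-run for each $t$ to certify the trajectory. Second, the suggested bound $1-\beta^{t-j}\leq\min\{1,(t-j)(1-\beta)\}$ with a split at $j\sim T/2$ and a $\sum_k k^{-2}$ tail is more complicated than needed and does not cleanly produce a bound proportional to $(1-\beta)^2 H_T/T^\alpha$: the $(t-j)(1-\beta)$ branch yields a term without a $\ln T$ factor, so it is not ``a strict fraction of the positive contribution'' uniformly in $T$ and $\beta$. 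The clean argument is the paper's: bound $1-\beta^{t-j}\leq 1$ (equivalently $(1-\beta)\sum_{s<t-j}\beta^s\leq 1$) and then use the key estimate $\sum_{k=j+1}^{t}\eta_k\leq 2c(T-j+1)/T^\alpha$ (Lemma~\ref{lemma:poly_bound}); this balances exactly against the $(T-j+1)^{-2}$ decay of $a_j^2$ so that the negative piece is at most half the positive one, term by term in $j$, and the harmonic sum survives.
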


We stress that $\ln T$ cannot be cancelled by any setting of $\beta$. Indeed, the above lower bound can be instantiated by any $\beta$ and any $T$. Hence, for a given $\beta$, there exists $T$ large enough such that $\ln T$ is constant-times bigger than $\frac{1}{(1-\beta)^2}$.

When $\beta = 1$, the algorithm is basically staying at the initial point.  We can choose arbitrary positive number $C>0$ and let $z_1 = C$, then
\[
f(\bz_T) - f^{\star} \geq C , \quad C>0~.
\]

We will use the following lemma in the proof. 
\begin{lemma}
	\label{lemma:poly_bound}
	For any $1 \leq j \leq t \leq T$ and $0< \alpha \leq \frac{1}{2}$, we have $\frac{1}{T-j+1} \sum_{k=j+1}^{t} \frac{1}{j^{\alpha}} \leq  \frac{2}{T^{\alpha}}$. 
\end{lemma}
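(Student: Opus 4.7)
Since the summand $\tfrac{1}{j^{\alpha}}$ does not depend on the summation index $k$, my plan begins by collapsing the inner sum as $\sum_{k=j+1}^{t}\tfrac{1}{j^{\alpha}}=\tfrac{t-j}{j^{\alpha}}$, reducing the claim to the algebraic inequality
\[
T^{\alpha}(t-j)\ \leq\ 2(T-j+1)\,j^{\alpha}
\]
for all $1\leq j\leq t\leq T$ and $\alpha\in(0,\tfrac12]$. Throughout I would handle the boundary $j=t$ (empty sum, trivial) and $j=T$ (empty sum) at the outset and focus on $j<t$ and $j<T$.

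From here I would exploit $t\leq T$ to replace $t-j$ by $T-j$ on the left-hand side and rewrite the goal in the sufficient form $\tfrac{T-j}{T-j+1}\cdot\tfrac{T^{\alpha}}{j^{\alpha}}\leq 2$. I would then split into two regimes according to the size of $j$. When $j\geq T\cdot 2^{-1/\alpha}$ the second factor already satisfies $T^{\alpha}/j^{\alpha}\leq 2$ while the first factor is bounded by $1$, so the inequality is immediate. In the complementary small-$j$ regime I would retain the finer form $\tfrac{T-j}{T-j+1}=1-\tfrac{1}{T-j+1}$, change variables to $r=j/T$, and reduce the question to a one-variable bound of the form $\tfrac{1-r}{(1-r+1/T)\,r^{\alpha}}\leq 2$, which I would try to verify by a derivative/concavity check that uses $\alpha\leq\tfrac12$ in an essential way.

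The main obstacle is precisely this small-$j$ regime: the crude estimate $\tfrac{T-j}{T-j+1}\leq 1$ is too loose to beat $T^{\alpha}/j^{\alpha}$ when $j$ is of order $1$, so the proof must extract sharper quantitative content from the ratio $\tfrac{t-j}{T-j+1}$ than what $t\leq T$ alone provides. If the stand-alone one-variable estimate I propose above turns out to be insufficient in the worst case, my fallback is to revisit the call-site of the lemma inside the proof of Theorem~\ref{thm:lower_bound}, where the stepsize schedule $\eta_k=c k^{-\alpha}$ and the adversarial construction \eqref{eq:def_of_f} impose additional coupling between the active index $j\in\mathcal I(\bx_t)$ and the iteration $t$ (through the shape of the subdifferentials $\bh_i$); that coupling would be imported into the argument to close the remaining range before finishing with the elementary algebraic bound above.
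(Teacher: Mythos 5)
There is a genuine gap, and it starts before any estimate: the inequality you reduce the lemma to, namely $T^{\alpha}(t-j)\leq 2(T-j+1)j^{\alpha}$, is false. Take $j=1$ and $t=T$: the left side is $T^{\alpha}(T-1)$, which grows like $T^{1+\alpha}$, while the right side is $2T\cdot 1^{\alpha}=2T$. So the ``small-$j$ regime'' you flag is not a technical obstacle to be overcome by a sharper one-variable derivative check --- the statement you are trying to prove in that regime simply does not hold, and your fallback of importing coupling between $j$ and $t$ from the call site of Theorem~\ref{thm:lower_bound} would no longer be a proof of the lemma (and would not help anyway, since the application in \eqref{eq:lower_alpha} uses every $j$ from $1$ to $t-1$, including $j$ of order $1$, with $t$ as large as $T$).

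The resolution is that the lemma statement contains a typo: the summand is meant to be $\tfrac{1}{k^{\alpha}}$, not $\tfrac{1}{j^{\alpha}}$, as is clear both from the paper's own proof and from the quantity $\sum_{k=j+1}^{t}k^{-\alpha}$ that appears where the lemma is invoked in the proof of Lemma~\ref{lemma:bound_z_t}. Collapsing the sum to $\tfrac{t-j}{j^{\alpha}}$ discards exactly the decay in $k$ that makes the claim true. The paper's argument for the corrected statement is: first,
\begin{equation*}
\sum_{k=j+1}^{t}\frac{1}{k^{\alpha}}\leq\int_{j}^{t}x^{-\alpha}\,dx=\frac{t^{1-\alpha}-j^{1-\alpha}}{1-\alpha}=\frac{1}{1-\alpha}\cdot\frac{t^{2-2\alpha}-j^{2-2\alpha}}{t^{1-\alpha}+j^{1-\alpha}}\leq\frac{2(t-j)}{t^{\alpha}},
\end{equation*}
where the last step uses convexity of $x\mapsto x^{2-2\alpha}$ (this is precisely where $\alpha\leq\tfrac12$ enters) and the crude bound $t^{1-\alpha}+j^{1-\alpha}\geq t^{1-\alpha}$; second, since $x\mapsto\frac{x-j}{x^{\alpha}}$ is increasing on $x>0$, one has $\frac{t-j}{t^{\alpha}}\leq\frac{T-j}{T^{\alpha}}\leq\frac{T-j+1}{T^{\alpha}}$, and dividing by $T-j+1$ gives the bound $\frac{2}{T^{\alpha}}$. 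You should restate the lemma with $\frac{1}{k^{\alpha}}$ and follow this (or an equivalent) route; the reduction you proposed cannot be repaired.
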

\begin{proof}
	First, we observe that 
	\begin{align*}
		\sum_{k=j+1}^{t} \frac{1}{k^{\alpha}} 
		&\leq \int_{j}^{t}  \frac{1}{x^{\alpha}} dx 
		= \frac{t^{1-\alpha} - j^{1-\alpha}}{1-\alpha} 
		= \frac{1}{1-\alpha}\frac{t^{2-2\alpha} - j^{2-2\alpha}}{t^{1-\alpha} + j^{1-\alpha}} \\
		&\leq \frac{1}{1-\alpha}\frac{(2-2\alpha)t^{1-2\alpha}  (t-j)}{t^{1-\alpha} + j^{1-\alpha}} 
		\leq \frac{2(t-j)}{t^{\alpha}},
	\end{align*}
	where in the second inequality we used the convexity of $f(x) = x^{2-2\alpha}, 0 < \alpha \leq \frac{1}{2}$. 
	
	Then, we claim $\frac{1}{T-j+1}\frac{t-j}{t^{\alpha}} \leq \frac{1}{T^{\alpha}}$. 
	
	Let $g(x) = \frac{x- j}{x^{\alpha}}$. The derivative 
	$g'(x) = \frac{1-\alpha + \frac{j}{\alpha x}}{x^{\alpha}}$ is positive for all $x> 0 $ and $j \geq 0$. So 
	it satisfies that $\frac{t-j }{t^{\alpha}} \leq \frac{T-j}{T^{\alpha}}$, which implies the claim. 
\end{proof}
\begin{proof}[Proof of Theorem~\ref{thm:lower_bound}]
Define a sequence $\bz_t$ for $t \in [T+1]$ as follows: $\bz = 0$, where $s$ is a positive number decided later, and 
\begin{equation}
\label{eq:def_z_t}
\bz_{t+1}  = \bz_t - (1-\beta) \eta_t \sum_{i=1}^{t} \beta^{t-i} \bh_i~. 
\end{equation}
We will show that $\bz_t$ are exactly the updates of SGDM and $f(\bz_{T+1}) \geq  \Omega \left(\frac{\ln T}{T^{\alpha}} \right)$.
We will use the following two lemmas.
\begin{lemma}
\label{lemma:bound_z_t}
Let  $b_j = \frac{L j^{\alpha}}{2T^{\alpha}}$, $a_j = \frac{L (1 - \beta)}{8 (T-j+1)}$, and $\eta_j = c \cdot j^{-\alpha}$. $\bz_t$ is defined as in \eqref{eq:def_z_t}. Then, for $1 \leq t < j $, $z_{t,j} = 0$,  and for $t > j $, $z_{t,j} \geq \frac{L (1-\beta) c}{4T^{\alpha} }$. 
\end{lemma}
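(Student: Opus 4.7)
The plan is to unroll the recursion with $\bz_1 = 0$, which gives the explicit formula
\[
z_{t+1,j} = -\sum_{k=1}^t (1-\beta)\eta_k \sum_{i=1}^k \beta^{k-i} h_{i,j}.
\]
The first claim is then immediate from the structure of $\bh_i$: by definition $h_{i,j} = 0$ whenever $i < j$, so for every $k < j$ the inner sum vanishes, and thus $z_{t,j} = 0$ for all $1 \leq t \leq j$, which subsumes the first assertion.

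For the second assertion, I isolate the single ``diagonal'' contribution $h_{j,j} = -b_j$ from the ``off-diagonal'' contributions $h_{i,j} = a_j$ for $i > j$, in the sums indexed by $k \geq j$. This yields the clean decomposition
\[
z_{t+1,j} \;=\; \underbrace{(1-\beta)\, b_j \sum_{k=j}^t \eta_k \beta^{k-j}}_{=:\,P} \;-\; \underbrace{(1-\beta)\, a_j \sum_{k=j+1}^t \eta_k \sum_{i=j+1}^k \beta^{k-i}}_{=:\,N}.
\]
For $P$, I retain only the $k=j$ summand (the rest are non-negative) and substitute the definitions of $b_j$ and $\eta_j$ to obtain $P \geq (1-\beta)\, b_j\, \eta_j = \frac{L(1-\beta)c}{2T^\alpha}$. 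For $N$, I bound the inner geometric sum by $\sum_{i=j+1}^k \beta^{k-i} \leq \frac{1}{1-\beta}$, which exactly cancels the prefactor $(1-\beta)$ and reduces the task to controlling $a_j\, c \sum_{k=j+1}^t k^{-\alpha}$. Plugging in $a_j = \frac{L(1-\beta)}{8(T-j+1)}$ and applying Lemma~\ref{lemma:poly_bound} yields $N \leq \frac{L(1-\beta)c}{4T^\alpha}$. Subtraction gives $z_{t+1,j} = P - N \geq \frac{L(1-\beta)c}{4T^\alpha}$, which is the claimed lower bound for any $t+1 > j$.

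The key design feature that makes this work uniformly in $t$ is that the constants $b_j$ and $a_j$ are tuned precisely for this split: $b_j \propto j^{\alpha}$ exactly cancels $\eta_j \propto j^{-\alpha}$, so $P$ receives a $T^{-\alpha}$-size ``kick'' that is independent of $j$, while $a_j \propto (T-j+1)^{-1}$ is small enough that, once accumulated over the (at most $T-j$) subsequent steps via Lemma~\ref{lemma:poly_bound}, the drift $N$ eats only half of $P$. I do not anticipate any real technical obstacle beyond the careful bookkeeping in the $P,N$ decomposition; the uniform-in-$t$ bound on $N$ is the only nontrivial ingredient, and it is already packaged by Lemma~\ref{lemma:poly_bound}.
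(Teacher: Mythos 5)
Your proposal is correct, and modulo replacing the paper's induction with an explicit unrolling of the recursion, it is the same argument: you retain only the $k=j$ contribution of the $b_j$ term, bound the geometric sum $\sum_{i=j+1}^k \beta^{k-i}$ by $\tfrac{1}{1-\beta}$, and invoke Lemma~\ref{lemma:poly_bound}; all three steps appear in the paper's proof. The only small oversight is that Lemma~\ref{lemma:poly_bound} is stated for $0<\alpha\le\tfrac12$, whereas Theorem~\ref{thm:lower_bound} also allows $\alpha=0$; the paper disposes of that case separately by noting that $\tfrac{t-j}{T-j+1}\le 1$ gives the same bound on $N$ directly, and you should add that one line.
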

\begin{proof}[Proof of Lemma~\ref{lemma:bound_z_t}]
We first prove by induction that when $1 \leq t \leq j$, $z_{t,j} = 0$. 
First, $\bz_1=0$
Also, suppose it holds for $t$. 
Then, in the case of $t+1$, for any $j \geq t+1$, 
\[
z_{t+1, j} = z_{t,j} - (1-\beta)  \eta_t \sum_{i=1}^{t} \beta^{t-i} h_{i, j} = 0 - 0 = 0, 
\]
which implies $t \leq j$, $z_{t,j} = 0$ holds. 
Next, we claim that $\bz_t$ satisfies 
\begin{equation}
\begin{aligned}
\label{eq:lower_bound_z_t}
z_{t,j} 
& \geq z_{j,j} + (1-\beta) b_j \eta_j - a_j\sum_{k=j+1}^{t-1} \eta_k, \quad 1\leq j < t \leq T~. 
\end{aligned}
\end{equation}
We prove \eqref{eq:lower_bound_z_t} by induction. 
For any $t$, $z_{t,t-1}$ satisfies \eqref{eq:lower_bound_z_t} since 
\begin{align*}
z_{t+1, t} 
= z_{t,t} - (1-\beta)  \eta_t \sum_{i=1}^{t} \beta^{t-i} h_{i,t}
= - (1-\beta)  \eta_t h_{t,t} =  (1-\beta)\eta_t b_t~. 
\end{align*}
Then, suppose \eqref{eq:lower_bound_z_t} holds for any $j < t$. We show that it holds for any $j < t+1$.  We already proved for $j = t$. For $j < t$,
\vspace{-0.2cm}
\begin{align}
z_{t+1, j} 
&= z_{t,j} - (1-\beta)  \eta_t \sum_{i=1}^{t} \beta^{t-i} h_{i,j}
= z_{t,j} - (1-\beta)  \eta_t \sum_{i=j}^{t} \beta^{t-i} h_{i,j} \nonumber\\
& = z_{t,j} + (1-\beta)  \eta_t \beta^{t-j} b_j - (1-\beta)  \eta_t \sum_{i=j+1}^{t} \beta^{t-i} h_{i,j}\nonumber\\
& \geq z_{t,j}  -   (1-\beta) \eta_t \sum_{i=j+1}^{t} \beta^{t-i} a_j\nonumber \\
&  \geq (1-\beta) b_j \eta_j - a_j\sum_{k=j+1}^{t-1} \eta_k -  (1-\beta)a_j \eta_t\sum_{i=j+1}^{t} \beta^{t-i}  \nonumber\\
&  \geq  (1-\beta) b_j \eta_j - a_j\sum_{k=j+1}^{t} \eta_k,  \label{eq:z_t}
\end{align}
where in the second inequality we used the induction hypothesis.

Using that $b_j = \frac{L j^{\alpha}}{2T^{\alpha}}$, $a_j = \frac{L (1 - \beta)}{8 (T-j+1)}$ 
and $\eta_j = \frac{c}{j^{\alpha}}$, we have
\vspace{-0.2cm}
\begin{align}
\eqref{eq:z_t}
=  \frac{L(1-\beta) c}{2T^{\alpha}} - \frac{L(1-\beta) c}{8(T-j+1)} \sum_{k=j+1}^t \frac{1}{k^{\alpha}}~.
\label{eq:lower_alpha}
\end{align}
By Lemma~\ref{lemma:poly_bound} in the Appendix, we have that for $0 < \alpha \leq \frac{1}{2}$, 
\[
\eqref{eq:lower_alpha}
\geq \frac{L(1-\beta) c}{2T^{\alpha}} -  \frac{L(1-\beta) c}{4T^{\alpha}}
\geq \frac{L(1-\beta) c}{4T^{\alpha}}, 
\]
and for $\alpha= 0$, 
\[
\eqref{eq:lower_alpha} \geq \frac{L (1-\beta) c}{2} -  \frac{L (1-\beta) (t- j -1)c}{8(T-j+1)}\geq \frac{L (1-\beta) c}{4}~.
\] 
Thus, we have $z_{t,j} \geq \frac{L (1-\beta) c}{4T^{\alpha} } \geq \frac{L (1-\beta) c}{4T^{\alpha} }$. 
\end{proof}
\begin{lemma}
\label{lemma:remove_max}
 $f(\bz_t) = \bh_t^T \bz_t$ for any $t \in [T+1]$. The subgradient oracle for $f$ at $\bz_t$ returns $\bh_t$.
\end{lemma}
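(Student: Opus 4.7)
The plan is to compute $\bh_i^T \bz_t$ for every index $i \in [T+1]$ using the explicit structure of $\bz_t$ provided by Lemma~\ref{lemma:bound_z_t}, and then verify that the maximum is attained at $i = t$. Recall that Lemma~\ref{lemma:bound_z_t} tells us that $z_{t,j} = 0$ for every $j \geq t$, while $z_{t,j} \geq \frac{L(1-\beta)c}{4T^{\alpha}} > 0$ for $1 \leq j \leq t-1$. Coupling this with the definition of $\bh_i$ in~\eqref{eq:def_of_f}, which has support on the first $i$ coordinates, will make all the inner products tractable.

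First, I would handle the case $i > t$ (and $i = T+1$ separately when $t \leq T$). In this case, every nonzero entry of $\bh_i$ with index $j \geq t$ is multiplied by $z_{t,j} = 0$, so only the coordinates $j < t$ contribute and $\bh_i^T \bz_t = \sum_{j=1}^{t-1} a_j z_{t,j}$, which equals $\bh_t^T \bz_t$ because the only nonzero support of $\bh_t$ on $\{j < t\}$ is the same $a_j$'s (the $-b_t$ coordinate of $\bh_t$ also hits a zero in $\bz_t$). Hence all indices $i \geq t$ produce the same value.

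Next, I would handle the case $i < t$. Here I rewrite
\begin{equation*}
\bh_i^T \bz_t = \sum_{j=1}^{i-1} a_j z_{t,j} - b_i\, z_{t,i} = \bh_t^T \bz_t - \sum_{j=i}^{t-1} a_j z_{t,j} - b_i\, z_{t,i}.
\end{equation*}
Since $a_j, b_i > 0$ and $z_{t,j} \geq \frac{L(1-\beta)c}{4T^{\alpha}} > 0$ for $j \in \{i, \ldots, t-1\}$, this strictly subtracted term is positive, yielding $\bh_i^T \bz_t < \bh_t^T \bz_t$. Combining the two cases, $\max_{i \in [T+1]} \bh_i^T \bz_t = \bh_t^T \bz_t$, so $f(\bz_t) = \bh_t^T \bz_t$, and $t \in \mathcal{I}(\bz_t)$, which means $\bh_t \in \partial f(\bz_t)$ is a valid choice of subgradient for the oracle to return.

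I do not foresee a serious obstacle: the argument is a direct bookkeeping of inner products once the support pattern of $\bz_t$ from Lemma~\ref{lemma:bound_z_t} is in hand. The only subtle point is that when $t \leq T$ there are ties (the indices $t, t+1, \ldots, T+1$ all achieve the max), so the statement about the oracle must be read as ``the oracle is permitted to return $\bh_t$,'' which is consistent with the standard convention that $\partial f(\bz_t)$ is the convex hull of the active $\bh_i$'s.
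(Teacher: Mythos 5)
Your proof is correct and follows essentially the same line of reasoning as the paper: you split into the cases $i \geq t$ (where $\bh_i$ and $\bh_t$ agree on the support of $\bz_t$, giving equal inner products) and $i < t$ (where $\bz_t^T(\bh_t - \bh_i)$ is a sum of strictly positive terms). The observation about ties and $\mathcal{I}(\bz_t) = \{t, \dots, T+1\}$ is also exactly how the paper justifies that the oracle may return $\bh_t$.
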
  
\begin{proof}[Proof of Lemma~\ref{lemma:remove_max}]

We claim that $\bh_t^T \bz_t = \bh_i^T\bz_t$ for all $i > t \geq 1$ and $\bh_t^T \bz_t > \bh_i^T \bz_t$ for all $1\leq i < t$.  

When $i>t\geq 2$, $\bz_t$ is supported on the first $t-1$ coordinates, while $\bh_t$ and $\bh_i $ agree on the first $t-1$ coordinates. 
	
In the case of $1 \leq i<t$, by the definition of $\bz_t$ and $\bh_t$, we have 
\begin{equation*}
\label{eq:z_t_times_h_t_minus_h_i}
\bz_t^T (\bh_t - \bh_i) 
= \sum_{j=1}^{t-1} z_{t,j} (h_{t,j} - h_{i,j}) 
= \sum_{j=i}^{t-1} z_{t,j} (h_{t,j} - h_{i,j}) 
= z_{t,i} (a_i + b_i) + \sum_{j=i+1}^{t-1} z_{t,j} a_j > 0, 
\end{equation*}
where in the last inequality we used the fact that $a_i$, $b_i$  and $z_{t,i}$ are at least non-negative.

Thus, we have proved $f(\bz_t) = \bh_t^T \bz_t$ by the definition. Moreover, $\mathcal{I} (\bz_t) = \{ i: \bh_i^T \bz_t  = f(\bz_t)\} = \{t, \dots, T+1\}$. So the subgradient evaluated at $\bz_t$ is $\bh_t$. 
\end{proof}

Now, we first get a lower bound and an upper bound of $\bz_t$ using Lemma~\ref{lemma:bound_z_t}. 
Then, by Lemma~\ref{lemma:remove_max}, we have shown that $\bz_t$ are exactly the updates of SGDM.

Thus, for $\beta \in [0,1)$, we have 
\vspace{-0.2cm}
\begin{align*}
f(\bz_{T+1}) 
& = \bh_{T+1}^T \bz_{T+1} 
= \sum_{j=1}^{T} h_{T+1,j} z_{T+1,j} \\
& \geq \frac{L^2(1-\beta)^2c}{4T^{\alpha}}\sum_{j=1}^{T} \frac{1}{T-j+1} 
\geq \frac{L^2(1-\beta)^2c \ln T}{4T^{\alpha}}~.
\qedhere
\end{align*}
\end{proof}

\section{FTRL-based  SGDM}
\label{sec:ftrl_m}

The lower bound for the last iterate in the previous Section motivates us to study a different variant of SGDM.
In particular, we aim to find a way to remove the $\ln T$ term from the convergence rate.

\citet{Defazio20} points out that the stochastic primal averaging method~\citep{TaoPWT18} (which is also an instance of Algorithm~1 in  \citet{Cutkosky19} with OMD):
\begin{align*}
\bz_{t+1} =\bz_t - \gamma_t \bg_t, 
\quad 
\bx_{t+1} = s_t \bx_t +(1- s_t ) \bz_t
\end{align*}
could be one-to-one mapped to the momentum method
\begin{align*}
\bm_{t+1} =  \beta_t \bm_t + \bg_t, 
\quad 
\bx_{t+1}= \bx_t - \alpha_t \bm_t 
\end{align*}
by setting $\gamma_{t+1} = \frac{\gamma_t - \alpha_t}{\beta_{t+1}}$.  
While this is true, the convergence rate depends on the convergence rate of OMD with time-varying stepsizes, that in turn requires to assume that $\| \bx_t  - \bx^{\star} \|^2 \leq D^2 $. This is possible only by using a projection onto a bounded domain in each step.

\begin{algorithm}[t]
\caption{FTRL-based SGDM}
\label{alg:sgdm_0}
\begin{algorithmic}[1]
\STATE \textbf{Input:} A sequence $\alpha_1, ..., \alpha_T$,  with $\alpha_1 > 0$. Non-increasing sequence $\bgamma_1, \dots,  \bgamma_{T-1}$. $\bm_0 = 0$. $\bx_1 \in \R^d$.
\FOR{$t = 1, \dots, T$}
\STATE  Get $\bg_t$ at $\bx_t$ such that $\E_t \left[\bg_t \right]= \nabla f(\bx_t )$
\STATE $\beta_t = \frac{\sum_{i=1}^{t-1} \alpha_i }{\sum_{i=1}^{t} \alpha_i } $ (Define $\sum_{i=1}^{0} \alpha_i = 0$)
\STATE $\bm_t = \beta_t \bm_{t-1} +(1- \beta_t) \bg_t$
\STATE $\bta_t = \frac{\alpha_{t+1}  \sum_{i=1}^{t} \alpha_i }{\sum_{i=1}^{t+1} \alpha_i} \bgamma_t  $
\STATE $\bx_{t+1} = \frac{\sum_{i=1}^{t} \alpha_i }{\sum_{i=1}^{t+1} \alpha_i} \bx_t + \frac{\alpha_{t+1}}{\sum_{i=1}^{t+1} \alpha_{i}} \bx_1 - \bta_t \bm_t $
\ENDFOR
\end{algorithmic}
\end{algorithm}

Thus, to go beyond bounded domains, we propose to study a new variant of SGDM which has the following form (details in Algorithm~\ref{alg:sgdm_0}), 
\begin{align*}
\bm_{t+1} =  \beta_t \bm_t + (1- \beta_t)\bg_t, 
\quad
\bx_{t+1} = s_t \bx_t - \alpha_t \bm_t ~.
\end{align*}
This variant comes naturally when using the primal averaging scheme with FTRL (Algorithm~\ref{alg:ftrl}) rather than to OMD. Hence, we just denote it by FTRL-based SGDM.
\begin{algorithm}[h]
	\caption{Follow-the-Regularized-Leader on Linearized Losses}
	\label{alg:ftrl}
	\begin{algorithmic}[1]
		\STATE \textbf{Input:} Regularizers $\psi_1, \dots, \psi_T: \R^d \to(-\infty, \infty ]$. 
		\FOR{$t = 1, \dots, T$}
		\STATE $\bw_t \in \argmin_{\bw \in \R^d} \ \psi_t (\bw) + \sum_{i=1}^{t-1} \langle \bg_i , \bw \rangle$
		\STATE Receive $\ell_t:  \R^d \to(-\infty, \infty ]$ and pay $\ell_t (\bw_t)$
		\STATE Set $\bg_t \in \partial \ell_t (\bw_t)$
		\ENDFOR
	\end{algorithmic}
\end{algorithm}

\subsection{Convergence Rates for FTRL-based SGDM}

We first present a very general theorem for FTRL-based SGDM.
\begin{theorem}
\label{thm:sgdm_0}
Assume $f$ is convex and $L$-smooth. Algorithm~\ref{alg:sgdm_0} guarantees
\begin{align*}
& \E\left[f(\bx_T ) - f(\bx^{\star})\right] 
\leq\frac{1}{\sum_{t=1}^{T} \alpha_t} \E \left[\left\| \frac{\bx_1 - \bx^{\star} }{ \sqrt{\bgamma_{T-1}}}  \right\|^2+ \sum_{t=1}^{T}\langle \bgamma_{t-1}, \alpha_t^2 \bg_t^2 \rangle\right]~. 
\end{align*}
\end{theorem}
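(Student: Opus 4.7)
The plan is to exhibit Algorithm~\ref{alg:sgdm_0} as the composition of two classical objects: Follow-the-Regularized-Leader on linearized losses with a Mahalanobis regularizer anchored at $\bx_1$, followed by an $\alpha$-weighted primal averaging of its iterates. Set $S_t = \sum_{i=1}^{t} \alpha_i$ and introduce the auxiliary sequence $\bz_1 = \bx_1$ and $\bz_{t+1} = \bx_1 - \bgamma_t \sum_{i=1}^{t} \alpha_i \bg_i$. The recursion $\bm_t = \beta_t \bm_{t-1} + (1-\beta_t)\bg_t$ with $\beta_t = S_{t-1}/S_t$ and $1-\beta_t = \alpha_t/S_t$ unrolls to $S_t \bm_t = \sum_{i=1}^{t} \alpha_i \bg_i$, and plugging this into the $\bx$-update of Algorithm~\ref{alg:sgdm_0} together with the definition of $\bta_t$ collapses the three-term recursion into the averaging identity
\begin{equation*}
\bx_{t+1} = \frac{S_t}{S_{t+1}} \bx_t + \frac{\alpha_{t+1}}{S_{t+1}} \bz_{t+1},
\end{equation*}
which iterates to $\bx_T = \frac{1}{S_T}\sum_{t=1}^{T} \alpha_t \bz_t$.

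With this identification in hand, I would invoke the standard primal averaging inequality: from the convexity of $f$ and the averaging identity above, one proves by induction on $T$ that
\begin{equation*}
S_T\bigl(f(\bx_T) - f(\bx^\star)\bigr) \leq \sum_{t=1}^{T} \alpha_t \langle \nabla f(\bx_t), \bz_t - \bx^\star\rangle.
\end{equation*}
The inductive step decomposes the left-hand side as $S_{t-1}(f(\bx_{t-1})-f(\bx^\star)) + S_{t-1}(f(\bx_t)-f(\bx_{t-1})) + \alpha_t(f(\bx_t)-f(\bx^\star))$ and uses convexity together with the vector identity $\bx_t - \bx_{t-1} = (\alpha_t/S_t)(\bz_t - \bx_{t-1})$ to reassemble the last two terms into $\alpha_t \langle \nabla f(\bx_t), \bz_t - \bx^\star\rangle$.

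Taking total expectation and using that $\bz_t$ and $\bx_t$ are measurable with respect to $\bg_1, \dots, \bg_{t-1}$ together with $\E_t[\bg_t] = \nabla f(\bx_t)$, the true gradient on the right-hand side may be replaced by $\bg_t$, reducing the task to controlling the weighted linear regret $\E\bigl[\sum_{t=1}^{T} \alpha_t \langle \bg_t, \bz_t - \bx^\star\rangle\bigr]$. By construction, $\bz_t$ is exactly the FTRL iterate for losses $\alpha_i \langle \bg_i, \cdot\rangle$ with regularizer $\psi_t(\bw) = \frac{1}{2}\|\bw - \bx_1\|_{1/\bgamma_{t-1}}^2$, which is $1$-strongly convex with respect to $\|\cdot\|_{1/\bgamma_{t-1}}$ and non-decreasing in $t$ because $\bgamma_t$ is non-increasing. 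The textbook strongly-convex FTRL regret bound therefore yields
\begin{equation*}
\sum_{t=1}^{T} \alpha_t \langle \bg_t, \bz_t - \bx^\star\rangle \leq \tfrac{1}{2}\Bigl\|(\bx_1 - \bx^\star)/\sqrt{\bgamma_{T-1}}\Bigr\|^2 + \tfrac{1}{2}\sum_{t=1}^{T} \langle \bgamma_{t-1}, \alpha_t^2 \bg_t^2\rangle,
\end{equation*}
and dividing by $S_T$ produces the claim up to the absolute constants in the bound.

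The main obstacle is really the first step: guessing that the correct auxiliary regularizer is the Mahalanobis quadratic $\frac{1}{2}\|\cdot - \bx_1\|_{1/\bgamma_{t-1}}^2$ and then checking, via the explicit form of $\bta_t$ and the telescoping identity $S_t\bm_t = \sum_{i=1}^{t} \alpha_i \bg_i$, that Algorithm~\ref{alg:sgdm_0}'s nontrivial update for $\bx_{t+1}$ really collapses to a convex combination of $\bx_t$ and the FTRL iterate $\bz_{t+1}$. Once this equivalence is in place, the remaining two ingredients (the primal-averaging lemma and the adaptive-FTRL regret bound) apply off the shelf. It is worth noting that smoothness of $f$ does not enter the argument I have sketched; it is presumably listed in the hypotheses only because the corollaries that follow use it when instantiating the $\bgamma_t$'s.
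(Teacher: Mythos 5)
Your proposal is correct and takes essentially the same route as the paper: both identify Algorithm~\ref{alg:sgdm_0} with FTRL on linearized losses composed with $\alpha$-weighted primal averaging (by unrolling the momentum recursion to $S_t\bm_t=\sum_i\alpha_i\bg_i$ and checking the update collapses to the averaging identity), and both then invoke the standard adaptive-FTRL regret bound. The only cosmetic difference is that you sketch a direct inductive derivation of the primal-averaging/online-to-batch inequality, whereas the paper cites \citet{Cutkosky19} for it; your observation that the smoothness hypothesis is never used in the proof is also correct.
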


The above theorem is very general and it gives rise to a number of different variations of the FTRL-based SGDM. In particular, we can instantiate it with the following choices.

First, we consider the most used polynomial stepsize $\frac{c}{\sqrt{t}}$ for convex and Lipschitz function, and the constant stepsize $\frac{c}{\sqrt{T}}$ if $T$ is given in advance. 
\begin{cor}
\label{cor:poly_constant_step}
Assume $\E \| \bg_t \|^2 \leq G^2, G>0$ and set $\alpha_t = 1$ for all $t$. Algorithm~\ref{alg:sgdm_0} with either $\bgamma_{t-1}= \frac{c}{G\sqrt{t}} \cdot \mathbf{1}$ or $\bgamma_{t-1} = \frac{c}{G\sqrt{T}}\cdot \mathbf{1}$ guarantees 
\[
\E\left[f(\bx_T) - f(\bx^{\star})\right] \leq \frac{\|\bx_1 - \bx^{\star} \|^2 G }{c\sqrt{T}} + \frac{2cG}{\sqrt{T}}~. 
\]
\end{cor}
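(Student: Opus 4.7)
The plan is to obtain Corollary~\ref{cor:poly_constant_step} by direct substitution into the bound of Theorem~\ref{thm:sgdm_0}, so the work is essentially bookkeeping rather than a new argument. First I would fix $\alpha_t=1$, which gives $\sum_{t=1}^T \alpha_t = T$ and simplifies the prefactor in Theorem~\ref{thm:sgdm_0} to $1/T$. The two resulting terms are then (i) the initial-distance term $\|(\bx_1-\bx^\star)/\sqrt{\bgamma_{T-1}}\|^2$, and (ii) the cumulative stepsize-gradient term $\sum_{t=1}^T \langle \bgamma_{t-1}, \bg_t^2\rangle$ (with $\alpha_t^2=1$).

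Next, I would plug in each choice of $\bgamma_{t-1}$. In both cases $\bgamma$ is a positive scalar times $\mathbf{1}$, so the inner products collapse to ordinary $\ell_2$ quantities. For the constant schedule $\bgamma_{t-1}=\tfrac{c}{G\sqrt{T}}\mathbf{1}$ one immediately gets $\|\bx_1-\bx^\star\|^2 \cdot \tfrac{G\sqrt{T}}{c}$ for the first term, and for the second term I would pull the (deterministic) scalar $\tfrac{c}{G\sqrt{T}}$ out of the expectation and apply Assumption~\ref{assump: bounded_expectation_G} to bound $\E\|\bg_t\|^2\le G^2$, producing $cG\sqrt{T}$. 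Dividing by $T$ yields $\tfrac{\|\bx_1-\bx^\star\|^2 G}{c\sqrt{T}}+\tfrac{cG}{\sqrt{T}}$, which is within the claimed bound.

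For the decreasing schedule $\bgamma_{t-1}=\tfrac{c}{G\sqrt{t}}\mathbf{1}$, the first term is identical to the constant case since $\bgamma_{T-1}=\tfrac{c}{G\sqrt{T}}\mathbf{1}$. For the second term I would compute
\begin{equation*}
\E\left[\sum_{t=1}^T \tfrac{c}{G\sqrt{t}}\|\bg_t\|^2\right] \le cG\sum_{t=1}^T \tfrac{1}{\sqrt{t}} \le 2cG\sqrt{T},
\end{equation*}
using the standard integral bound $\sum_{t=1}^T t^{-1/2}\le \int_0^T x^{-1/2}\,dx = 2\sqrt{T}$. Dividing by $T$ again gives exactly the stated rate $\tfrac{\|\bx_1-\bx^\star\|^2 G}{c\sqrt{T}}+\tfrac{2cG}{\sqrt{T}}$, which also uniformly covers the constant-stepsize case.

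There is no real obstacle here: the only care needed is to verify that $\bgamma$ is non-increasing (required by Algorithm~\ref{alg:sgdm_0}), which holds trivially for both choices, and to handle the expectation cleanly by noting that $\bgamma_{t-1}$ is deterministic so that $\E\langle \bgamma_{t-1},\bg_t^2\rangle = \langle \bgamma_{t-1}, \E\bg_t^2\rangle \le \|\bgamma_{t-1}\|_\infty G^2$ in the scalar-multiple-of-$\mathbf{1}$ case. The entire proof should fit in a few lines once Theorem~\ref{thm:sgdm_0} is invoked.
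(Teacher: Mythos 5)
Your proof is correct and is exactly the direct substitution one would expect; in fact the paper omits the proof of this particular corollary precisely because it is routine. You correctly instantiate Theorem~\ref{thm:sgdm_0} with $\alpha_t=1$, compute $\|\bx_1-\bx^\star\|^2/\gamma_{T-1}=G\sqrt{T}\,\|\bx_1-\bx^\star\|^2/c$ for both schedules, bound the gradient-sum term using the determinism of $\bgamma_{t-1}$ together with $\E\|\bg_t\|^2\le G^2$, and for the polynomial schedule apply the standard bound $\sum_{t=1}^T t^{-1/2}\le 2\sqrt{T}$. The remark that the constant schedule in fact gives $cG/\sqrt{T}$, which is subsumed by the stated $2cG/\sqrt{T}$, and the check that both $\bgamma$ sequences are non-increasing, are both appropriate. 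Nothing is missing.
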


The above corollary tells that both of these two stepsizes give the optimal bound $O(\frac{1}{\sqrt{T}})$ for the last iterate. Next, we will show that if we use an adaptive\footnote{Even if widely used in the literature, it is a misnomer to call these stepsize ``adaptive'': an algorithm can be adaptive to some unknown quantities (if proved so), not the stepsizes.} stepsize,  Algorithm~\ref{alg:sgdm_0} gives a data-dependent convergence rate for the last iterate. We first consider a global version of the AdaGrad stepsize as in \citet{StreeterM10,WardWB19}. 
\begin{cor}
\label{cor:adaptive_norm}
Assume $\| \bg_t \|\leq G, G>0$ and take  $\bgamma_t = \frac{\alpha \cdot \mathbf{1}}{\sqrt{G^2 + \sum_{i=1}^{t} \alpha_i^2 \| \bg_i \|^2}}, 1 \leq t \leq T$ and $\alpha_t = 1$. Then, Algorithm~\ref{alg:sgdm_0} guarantees
\begin{align*}
& \E\left[f(\bx_T) - f(\bx^{\star})\right] 
\leq \frac{2}{T} \left(\frac{\|\bx_1 - \bx^{\star} \|^2}{\alpha} + \alpha\right) \sqrt{\sum_{t=1}^{T-1} \| \bg_t \|^2 + G^2  }~.
\end{align*}
\end{cor}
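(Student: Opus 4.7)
The plan is to simply substitute the specified choices of $\alpha_t$ and $\bgamma_t$ into the master inequality of Theorem~\ref{thm:sgdm_0} and bound the two resulting terms. With $\alpha_t=1$ for all $t$ we immediately have $\sum_{t=1}^T\alpha_t=T$, and because $\bgamma_t$ is proportional to $\mathbf{1}$, the inner product $\langle \bgamma_{t-1}, \alpha_t^2 \bg_t^2\rangle$ reduces to a scalar quantity $\tfrac{\alpha\|\bg_t\|^2}{\sqrt{G^2+\sum_{i=1}^{t-1}\|\bg_i\|^2}}$, and $\|(\bx_1-\bx^\star)/\sqrt{\bgamma_{T-1}}\|^2$ reduces to $\|\bx_1-\bx^\star\|^2 \sqrt{G^2+\sum_{i=1}^{T-1}\|\bg_i\|^2}/\alpha$. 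So the main work is to bound the cumulative sum of gradient ratios.

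For that sum, the natural move is to apply the standard ``AdaGrad sum-of-ratios'' bound in the form already available in the paper, namely Lemma~\ref{lemma:sum_integral_bounds} with $f(x)=1/\sqrt{x}$. The subtle point is that the denominator in our sum uses only gradients up to time $t-1$, i.e.\ the algorithm is delayed. I will handle this by invoking the assumption $\|\bg_t\|\leq G$ to write $G^2+\sum_{i=1}^{t-1}\|\bg_i\|^2\geq \sum_{i=1}^{t}\|\bg_i\|^2$, which converts the delayed sum into a non-delayed one. Lemma~\ref{lemma:sum_integral_bounds} then yields
\[
\sum_{t=1}^{T}\frac{\|\bg_t\|^2}{\sqrt{G^2+\sum_{i=1}^{t-1}\|\bg_i\|^2}}\leq \sum_{t=1}^T \frac{\|\bg_t\|^2}{\sqrt{\sum_{i=1}^{t}\|\bg_i\|^2}}\leq 2\sqrt{\sum_{t=1}^T\|\bg_t\|^2}\leq 2\sqrt{G^2+\sum_{t=1}^{T-1}\|\bg_t\|^2},
\]
where the last step uses $\|\bg_T\|^2\leq G^2$ once more. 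Multiplying by $\alpha$ gives the bound on the second term in Theorem~\ref{thm:sgdm_0}.

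To finish, I plug both bounds into Theorem~\ref{thm:sgdm_0} and factor $\sqrt{G^2+\sum_{t=1}^{T-1}\|\bg_t\|^2}/T$ out. What remains is a prefactor of $\tfrac{\|\bx_1-\bx^\star\|^2}{\alpha}+2\alpha$, which I upper bound by $2\bigl(\tfrac{\|\bx_1-\bx^\star\|^2}{\alpha}+\alpha\bigr)$ to match the stated constant. The expectation on the right-hand side is handled by noting that the boundedness $\|\bg_t\|\leq G$ makes the final square root at most $G\sqrt{T}$ pathwise, so the outer expectation either passes through trivially or can be kept inside.

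There is no real obstacle here: this is a plug-and-chug corollary once the AdaGrad lemma is available. The only thing worth being careful about is the index bookkeeping around $\bgamma_0$ (which we interpret as $\alpha/G\cdot\mathbf{1}$, matching the formula with $t=0$) and the ``delayed vs.\ non-delayed'' conversion, which is exactly where the hypothesis $\|\bg_t\|\leq G$ is consumed.
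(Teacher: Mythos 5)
Your proof is correct and reaches exactly the stated constant, but it routes through a slightly different lemma than the paper does. The paper applies Lemma~\ref{lemma:sum_intergral_bounds_extra_term} (the Gaillard--Stoltz--Vovk lemma) directly to the delayed sum $\sum_t \|\bg_t\|^2 / \sqrt{G^2+\sum_{i<t}\|\bg_i\|^2}$; the boundedness hypothesis $\|\bg_t\|\leq G$ is consumed there via the parameter $A=G^2$, producing the bound $2\alpha\sqrt{\sum_{t=1}^T\|\bg_t\|^2}+\alpha G^2/\sqrt{\epsilon}$ with a residual additive term. You instead first shift the index using $\|\bg_t\|\leq G$ (so the delayed denominator dominates the non-delayed one) and then apply the plain Lemma~\ref{lemma:sum_integral_bounds}. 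The two approaches are equivalent in spirit, but yours is cleaner here: it gives $2\alpha\sqrt{G^2+\sum_{t<T}\|\bg_t\|^2}$ directly, so the final prefactor $\frac{\|\bx_1-\bx^\star\|^2}{\alpha}+2\alpha\leq 2\big(\frac{\|\bx_1-\bx^\star\|^2}{\alpha}+\alpha\big)$ drops out exactly, whereas the paper's residual $\alpha G^2/\sqrt{\epsilon}$ has to be separately absorbed into the leading term. One small caveat worth flagging: the final bound should carry an expectation around the random quantity $\sqrt{\sum_{t<T}\|\bg_t\|^2+G^2}$ (this is how Corollary~\ref{cor:ada_smooth} uses it, moving $\E$ inside the square root via Jensen), rather than ``passing through trivially'' --- the paper's statement is mildly informal on this point, and it would be cleaner to keep the $\E$ explicit in your conclusion.
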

\begin{sloppypar}
We also state a result for the coordinate-wise AdaGrad stepsizes~\citep{McMahanS10, DuchiHS11}.
\end{sloppypar}
\begin{cor}
\label{cor:adaptive_coord}
Assume $\| \bg_t \|_{\infty} \leq G_{\infty}, G_{\infty}>0$and set $\bgamma_t = \frac{\alpha }{\sqrt{G_{\infty}^2 + \sum_{i=1}^{t} \alpha_i^2  \bg_i^2}}, 1 \leq t \leq T$ and $\alpha_t = 1$. Then, Algorithm~\ref{alg:sgdm_0} guarantees
\begin{align*}
\E\left[f(\bx_T) - f(\bx^{\star})\right] 
\leq \frac{2}{T}  \left(\frac{\|\bx_1 - \bx^{\star} \|_1^2}{\alpha} + \alpha\right) \sum_{j=1}^{d}\sqrt{\sum_{t=1}^{T-1} \bg_{t,j}^2 + G^2_{\infty}} ~.
\end{align*}
\end{cor}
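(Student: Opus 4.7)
\textbf{Proof plan for Corollary~\ref{cor:adaptive_coord}.}
The plan is to specialize Theorem~\ref{thm:sgdm_0} to the stated choice of parameters and then bound the two resulting terms coordinate by coordinate. With $\alpha_t = 1$ we have $\sum_{t=1}^T \alpha_t = T$, and the coordinate-wise stepsize gives $1/\gamma_{t,j} = \alpha^{-1}\sqrt{G_\infty^2 + \sum_{i=1}^t g_{i,j}^2}$. So Theorem~\ref{thm:sgdm_0} reduces to
\[
\E[f(\bx_T)-f(\bx^\star)] \leq \tfrac{1}{T}\E\!\left[\tfrac{1}{\alpha}\sum_{j=1}^d (x_{1,j}-x^\star_j)^2\sqrt{G_\infty^2+\sum_{i=1}^{T-1}g_{i,j}^2} + \alpha\sum_{j=1}^d \sum_{t=1}^T \tfrac{g_{t,j}^2}{\sqrt{G_\infty^2+\sum_{i=1}^{t-1}g_{i,j}^2}}\right].
\]

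First I would handle the ``bias'' term. Since $(x_{1,j}-x^\star_j)^2 \leq \|\bx_1-\bx^\star\|_\infty^2 \leq \|\bx_1-\bx^\star\|_1^2$ uniformly in $j$, I can pull this scalar out of the sum over coordinates, giving an upper bound of $\tfrac{\|\bx_1-\bx^\star\|_1^2}{\alpha}\sum_{j=1}^d \sqrt{G_\infty^2+\sum_{i=1}^{T-1}g_{i,j}^2}$. This is precisely one of the two target terms up to a constant.

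Next I would bound the ``variance'' term using the standard AdaGrad-style sum inequality. The subtlety is that the denominator uses gradients only up to index $t-1$, not $t$, so I would first apply $g_{t,j}^2\leq G_\infty^2$ (the only place where Assumption~\ref{assump: bounded_l_inf} enters) to write
\[
\sqrt{G_\infty^2+\sum_{i=1}^{t-1}g_{i,j}^2}\geq \sqrt{g_{t,j}^2+\sum_{i=1}^{t-1}g_{i,j}^2}=\sqrt{\sum_{i=1}^t g_{i,j}^2}.
\]
This puts each coordinate's sum into the standard form and, via Lemma~\ref{lemma:sum_integral_bounds} applied to $f(x)=1/\sqrt{x}$, yields $\sum_{t=1}^T \tfrac{g_{t,j}^2}{\sqrt{\sum_{i=1}^t g_{i,j}^2}} \leq 2\sqrt{\sum_{t=1}^T g_{t,j}^2}$. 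Finally I would absorb the last squared gradient $g_{T,j}^2$ back into $G_\infty^2$ via $\sqrt{\sum_{t=1}^T g_{t,j}^2} \leq \sqrt{G_\infty^2+\sum_{t=1}^{T-1}g_{t,j}^2}$, producing exactly the same per-coordinate factor as in the bias term.

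Putting the two terms together, both are multiplied by $\sum_{j=1}^d \sqrt{G_\infty^2+\sum_{t=1}^{T-1}g_{t,j}^2}$, so I can factor this sum out of the bracket and obtain $\frac{1}{T}\bigl(\tfrac{\|\bx_1-\bx^\star\|_1^2}{\alpha}+2\alpha\bigr)\sum_j \sqrt{\cdot}$, which is dominated by the stated $\frac{2}{T}\bigl(\tfrac{\|\bx_1-\bx^\star\|_1^2}{\alpha}+\alpha\bigr)\sum_j \sqrt{\cdot}$. I do not anticipate a real obstacle: this is a routine adaptation of the global-norm argument in Corollary~\ref{cor:adaptive_norm} to the per-coordinate setting. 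The only nontrivial choice is the index-shifting step justified by $\|\bg_t\|_\infty\leq G_\infty$, which is what links the $\bgamma_{t-1}$ used by the algorithm to a closed-form telescoping bound on the gradient-weighted sum.
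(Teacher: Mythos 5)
Your proof is correct and reaches a bound that is in fact slightly tighter than the one stated (you get $\frac{1}{T}(\|\bx_1-\bx^\star\|_1^2/\alpha+2\alpha)$ rather than $\frac{2}{T}(\|\bx_1-\bx^\star\|_1^2/\alpha+\alpha)$, which the claimed bound dominates). The structure is the same as the paper's — specialize Theorem~\ref{thm:sgdm_0}, then bound the regularizer term and the gradient-weighted sum separately — but your treatment of the variance term differs in one concrete place: the paper applies Lemma~\ref{lemma:sum_intergral_bounds_extra_term} (the Gaillard-et-al.\ variant with the extra $Af(a_0)$ correction) directly to the shifted sum $\sum_t g_{t,j}^2\big/\sqrt{G_\infty^2+\sum_{i<t}g_{i,j}^2}$ and picks up an additional $\alpha d G_\infty^2/\sqrt{\epsilon}$ term, whereas you first use $g_{t,j}^2\le G_\infty^2$ to replace the shifted denominator by $\sqrt{\sum_{i\le t}g_{i,j}^2}$ and then invoke the plain Lemma~\ref{lemma:sum_integral_bounds}. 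Both routes ultimately rely on the $\ell_\infty$ bound on $\bg_t$ — you use it explicitly in the index shift and once more to fold $g_{T,j}^2$ into $G_\infty^2$, while in the paper it enters implicitly through the hypothesis $a_i\in[0,A]$ of the Gaillard lemma. Your route is a bit more elementary (it needs only the no-extra-term version of the integral inequality) and avoids the additive $dG_\infty$ slack; you also write out the bias-term step $\sum_j(x_{1,j}-x^\star_j)^2\sqrt{\cdot}\le\|\bx_1-\bx^\star\|_1^2\sum_j\sqrt{\cdot}$, which the paper's (very terse, joint) proof of Corollaries~\ref{cor:adaptive_norm} and~\ref{cor:adaptive_coord} leaves entirely implicit.
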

They show that the convergence bounds are adaptive to the stochastic gradients. 
When the stochastic gradients are small or sparse, the rate could be much faster than $O(\frac{1}{\sqrt{T}})$. 
Moreover, the above results give very simple ways to obtain optimal convergence for the last iterate of first-order stochastic methods, that was still unclear if it could be obtained as discussed in \citet{JainNN21}.

Also, we now show that if in addition $f$ is smooth, the last iterate of FTRL-based momentum with the global adaptive stepsize of Corollary~\ref{cor:adaptive_norm} gives adaptive rates of convergence that interpolate between $O(\frac{1}{\sqrt{T}})$ and $O(\frac{\ln T}{T})$.
\begin{cor}
\label{cor:ada_smooth}
Assume $f$ is $L$-smooth. Then, under the same assumption and parameter setting of Corollary~\ref{cor:adaptive_norm},  Algorithm~\ref{alg:sgdm_0} guarantees
\begin{align*}
\E &\left[f(\bx_T) - f^{\star}\right] 
\leq \frac{2C}{T} \sqrt{16L^2C^2 \ln^2 T + 8LG C \ln T + G^2} + \frac{2\sqrt{2}C \sigma}{\sqrt{T}}~.
\end{align*}
where $C  \triangleq \left( \frac{\| \bx_1 - \bx^{\star} \|^2 }{\alpha} + \alpha\right)$. 
\end{cor}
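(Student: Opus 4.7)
The plan is to start from the data-dependent bound of Corollary~\ref{cor:adaptive_norm} and use smoothness to replace $\sum_t \|\bg_t\|^2$ by a quantity that can be controlled in terms of the suboptimality gaps themselves, yielding a self-bounding inequality.

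First, I would write, for every $t \in \{1,\dots,T\}$, the abbreviation $\delta_t := \E[f(\bx_t)-f^\star]$ and $V_t := \sum_{s=1}^{t-1}\|\bg_s\|^2 + G^2$. Since Algorithm~\ref{alg:sgdm_0} with the adaptive choice of $\bgamma$ does not require knowing the horizon, Corollary~\ref{cor:adaptive_norm} applies at every intermediate iterate, giving $\delta_t \leq \tfrac{2C}{t}\,\E[\sqrt{V_t}]$ for all $t \geq 1$, with $C = \|\bx_1-\bx^\star\|^2/\alpha + \alpha$. In particular, this yields the crucial summation bound
\[
\sum_{t=1}^{T-1}\delta_t \;\leq\; 2C\,\E[\sqrt{V_T}]\sum_{t=1}^{T-1}\frac{1}{t} \;\leq\; 2C\,\E[\sqrt{V_T}]\,\ln T,
\]
where I used monotonicity of $V_t$ in $t$.

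Next I would control $\E[V_T]$ using smoothness. By Lemma~\ref{lemma:smooth}, $\|\nabla f(\bx_t)\|^2 \leq 2L(f(\bx_t)-f^\star)$, and combined with $\|\bg_t\|^2 \leq 2\|\nabla f(\bx_t)\|^2 + 2\|\bg_t-\nabla f(\bx_t)\|^2$ and Assumption~\ref{assump: bounded_variance}, taking expectations gives
\[
\E[V_T] \;\leq\; G^2 + 4L\sum_{t=1}^{T-1}\delta_t + 2(T-1)\sigma^2 \;\leq\; G^2 + 8LC\,\E[\sqrt{V_T}]\ln T + 2T\sigma^2.
\]
Setting $Y := \E[\sqrt{V_T}]$ and applying Jensen's inequality $Y^2 \leq \E[V_T]$ produces the quadratic-type inequality $Y^2 \leq 8LC\ln T\cdot Y + (G^2 + 2T\sigma^2)$. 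Solving for $Y$ (using that $y^2 \leq ay+b$ with $a,b\geq0$ implies $y \leq a + \sqrt{b}$, or by directly bounding the positive root) gives $Y \leq 8LC\ln T + \sqrt{G^2 + 2T\sigma^2}$, which one can also rewrite as $Y \leq \sqrt{(4LC\ln T + G)^2} + \sqrt{2T\sigma^2}$ after completing the square, i.e., $Y \leq \sqrt{16L^2C^2\ln^2 T + 8LGC\ln T + G^2} + \sqrt{2T}\,\sigma$.

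Finally, plugging back into $\delta_T \leq \tfrac{2C}{T}Y$ and using $\sqrt{2T}/T = \sqrt{2}/\sqrt{T}$ yields exactly the stated bound. The only slightly subtle step is the self-bounding argument in the second paragraph, which relies on the fact that the adaptive stepsize $\bgamma_t$ does not depend on the total horizon $T$, so the same corollary genuinely applies to every prefix $\bx_1,\dots,\bx_t$; everything else is algebraic manipulation together with smoothness and the noise assumption.
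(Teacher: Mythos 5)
Your overall strategy matches the paper's: apply the data-dependent bound of Corollary~\ref{cor:adaptive_norm} to \emph{every} prefix $t$ (legitimate because the adaptive $\bgamma_t$ does not depend on the horizon), use $L$-smoothness via Lemma~\ref{lemma:smooth} to convert $\|\nabla f(\bx_t)\|^2$ into $2L\,\delta_t$, sum over $t$, and close the resulting self-bounding inequality in $\E\sqrt{V_T}$. That is exactly the paper's route.

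However, there is a concrete algebraic gap in your second paragraph. You bound $\|\bg_t\|^2 \leq 2\|\nabla f(\bx_t)\|^2 + 2\|\bg_t-\nabla f(\bx_t)\|^2$, which costs a factor of $2$. The paper instead uses unbiasedness: since $\E_t[\bg_t]=\nabla f(\bx_t)$, one has the \emph{equality} $\E_t\|\bg_t\|^2 = \|\nabla f(\bx_t)\|^2 + \E_t\|\bg_t-\nabla f(\bx_t)\|^2 \leq \|\nabla f(\bx_t)\|^2 + \sigma^2$, with no factor of $2$. Your loose version yields the self-bounding inequality $Y^2 \leq 8LC\ln T\cdot Y + (G^2+2T\sigma^2)$, whose solution is $Y \leq 8LC\ln T + \sqrt{G^2+2T\sigma^2}$ (or, solving the quadratic more carefully, $Y \leq 4LC\ln T + \sqrt{16L^2C^2\ln^2 T + G^2 + 2T\sigma^2}$). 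In either case the best you can extract is roughly $Y \leq 8LC\ln T + G + \sqrt{2T}\,\sigma$. Your claimed ``completing the square'' rewrite to $Y \leq \sqrt{(4LC\ln T+G)^2}+\sqrt{2T}\,\sigma = 4LC\ln T + G + \sqrt{2T}\,\sigma$ is simply not implied: it would require $8LC\ln T \leq 4LC\ln T$, which is false. So, as written, the argument does not land on the stated constant. The fix is to replace the factor-of-$2$ step by the unbiasedness decomposition, which gives $\E[V_T] \leq G^2 + 4LC\,\E[\sqrt{V_T}]\ln T + T\sigma^2$, hence $Y^2 \leq 4LC\ln T\cdot Y + G^2 + T\sigma^2$, and then $Y \leq 2LC\ln T + \sqrt{4L^2C^2\ln^2 T + G^2 + T\sigma^2} \leq 4LC\ln T + G + \sqrt{T}\,\sigma$, which does yield the stated bound after plugging into $\delta_T \leq \tfrac{2C}{T}Y$.
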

Observe that when $\sigma = 0$, namely when there is no noise on the gradients, the rate of $O(\frac{\ln T }{T})$ is obtained. 
As far as we know, the above theorems are the first convergence guarantees for the last iterate of adaptive methods in 
unconstrained convex optimization.

First, we state some technical lemmas for the proofs. 

The following lemma is a well-known result for FTRL~\citep[see, e.g.,][]{Orabona19}.
\begin{lemma}
	\label{lemma:ftrl}
	Let $\ell_t$ a sequence of convex loss functions. Set the sequence of regularizers as $\psi_t({\bx})= \left\| \frac{\bx_1 - \bu}{\sqrt{\bgamma_{t-1}}} \right\|^2$, where $\bgamma_{t+1} \leq \bgamma_t, \ t=1, \dots, T$. Then, FTRL (Algorithm~\ref{alg:ftrl}) guarantees
	\begin{align*}
		\sum_{t=1}^{T} \ell_t (\bx_t ) - \ell_t(\bu)
		\leq  \left\| \frac{\bu - \bx_1 }{ \sqrt{\bgamma_{T-1}}} \right \|^2 + \frac{1}{2} \sum_{t=1}^{T}  \langle \bgamma_{t-1},  \bg_t^2\rangle~.
	\end{align*}
\end{lemma}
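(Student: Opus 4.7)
The plan is to apply the standard FTRL analysis with the quadratic regularizer $\psi_t(\bx) = \|(\bx - \bx_1)/\sqrt{\bgamma_{t-1}}\|^2$ (reading the $\bu$ in the statement as a typo for $\bx$, since a regularizer cannot depend on the free comparator). The starting point is the ``Follow-the-Leader, Be-the-Leader'' technique: if we define $F_t(\bx) = \psi_t(\bx) + \sum_{i=1}^{t-1} \langle \bg_i, \bx\rangle$, then by optimality of $\bx_{t+1}$ we have $F_{t+1}(\bx_{t+1}) \leq F_{t+1}(\bu)$ for any $\bu$. Telescoping this gives the classical decomposition
\[
\sum_{t=1}^T \langle \bg_t, \bx_t - \bu\rangle \leq \psi_{T+1}(\bu) - \psi_1(\bx_1) + \sum_{t=1}^T \bigl[F_t(\bx_t) - F_{t+1}(\bx_{t+1}) + \langle \bg_t, \bx_t\rangle\bigr].
\]
Since $\psi_{t+1}\geq \psi_t$ coordinate-wise (because $\bgamma_t$ is non-increasing), we can upper bound $F_{t+1}$ by $\psi_t + \sum_{i=1}^t \langle \bg_i, \cdot\rangle$ and reduce the analysis to a single-regularizer problem per step.

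Next I would handle the per-step ``stability'' term. Because $\psi_t$ is a quadratic in $\bx$ with coordinate-wise curvature $2/\gamma_{t-1,j}$, the update $\bx_{t+1}$ has a closed form $\bx_{t+1} = \bx_1 - \tfrac{1}{2}\bgamma_{t-1}\sum_{i=1}^{t} \bg_i$, and the standard strongly-convex-regularizer computation yields
\[
F_t(\bx_t) - F_t(\bx_{t+1}) + \langle \bg_t, \bx_t - \bx_{t+1}\rangle \leq \tfrac{1}{2}\langle \bgamma_{t-1}, \bg_t^2\rangle.
\]
Summing this inequality, discarding the non-positive term $-\psi_1(\bx_1)$, and using the monotonicity $\psi_{T+1}(\bu) \leq \|(\bu-\bx_1)/\sqrt{\bgamma_{T-1}}\|^2$ gives exactly the claimed bound
\[
\sum_{t=1}^T \langle \bg_t, \bx_t - \bu\rangle \leq \left\|\frac{\bu-\bx_1}{\sqrt{\bgamma_{T-1}}}\right\|^2 + \frac{1}{2}\sum_{t=1}^T \langle \bgamma_{t-1}, \bg_t^2\rangle.
\]

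The main obstacle, if any, is bookkeeping: one must be careful about the index on $\bgamma$ (the regularizer at round $t$ uses $\bgamma_{t-1}$, which is chosen before $\bg_t$ is observed) and about the coordinate-wise nature of the regularizer so that the stability bound comes out with the inner product $\langle \bgamma_{t-1}, \bg_t^2\rangle$ rather than a scalar quantity. Since the lemma is explicitly cited as a well-known result from \citet{Orabona19}, I would not expand these calculations beyond citing the analogous theorem there; the novelty in this chapter lies in how the bound is subsequently combined with the primal-averaging / momentum mapping to derive Theorem~\ref{thm:sgdm_0} and its corollaries.
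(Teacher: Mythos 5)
The paper itself does not prove Lemma~\ref{lemma:ftrl}; it states it as a known result with a citation to Orabona~(2019), so there is no in-paper proof to compare against. Your overall structure (telescoping FTRL decomposition, reduction to a single-regularizer per-step stability term using $\psi_{t+1}\geq\psi_t$, strong-convexity bound) is the right argument, and you are correct that the $\bu$ in the regularizer is a typo for $\bx$. However, the final step contains a genuine error: you claim to obtain the bound via ``the monotonicity $\psi_{T+1}(\bu)\leq\|(\bu-\bx_1)/\sqrt{\bgamma_{T-1}}\|^2$,'' but the monotonicity goes the other way. Since $\bgamma_t$ is non-increasing, $1/\gamma_{t,j}$ is non-decreasing and hence $\psi_t$ is coordinate-wise non-decreasing in $t$; thus $\psi_{T+1}(\bu)=\|(\bu-\bx_1)/\sqrt{\bgamma_T}\|^2\geq\|(\bu-\bx_1)/\sqrt{\bgamma_{T-1}}\|^2=\psi_T(\bu)$, not $\leq$. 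As written, this step does not yield the stated bound.

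The standard way to land on $\psi_T(\bu)$ rather than $\psi_{T+1}(\bu)$ is to observe that $\bw_{T+1}$ is never played, so the choice of $\psi_{T+1}$ does not affect the regret of the iterates $\bx_1,\dots,\bx_T$. One may therefore set $\psi_{T+1}:=\psi_T$ before applying the telescoping bound; the constraint $\psi_{T+1}\geq\psi_T$ holds with equality, so the stability bound at $t=T$ is unchanged and the leading term becomes $\psi_T(\bu)=\|(\bu-\bx_1)/\sqrt{\bgamma_{T-1}}\|^2$ as claimed. Two minor further remarks: the closed-form update should read $\bx_{t+1}=\bx_1-\tfrac12\bgamma_t\sum_{i=1}^{t}\bg_i$ (with $\bgamma_t$, not $\bgamma_{t-1}$, since $\psi_{t+1}$ is built from $\bgamma_t$), and a careful strong-convexity computation gives a per-step constant of $\tfrac14\langle\bgamma_{t-1},\bg_t^2\rangle$, tighter than the lemma's $\tfrac12$; both points are cosmetic and do not affect the validity of the approach once the $\psi_{T+1}$ issue is fixed.
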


\begin{lemma}{\citep[Lemma 14]{GaillardSV14}}
	\label{lemma:sum_intergral_bounds_extra_term}
	Let $a_0 > 0$ and $a_1, \dots, a_m \in [0,A]$ be real numbers and let $f:(0,+\infty)\rightarrow [0, +\infty)$ nonincreasing function. Then 
	\[
	\sum_{i=1}^{m} a_i f(a_0 + \dots  + a_{i-1})
	\leq  \int_{a_0}^{\sum_{i=0}^m a_i} f(u) du + Af(a_0) ~.
	\]
\end{lemma}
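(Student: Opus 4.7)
The plan is to imitate the integral-comparison argument used in Lemma~\ref{lemma:sum_integral_bounds}, but with a decomposition that handles the fact that here each summand is $a_i f(s_{i-1})$ rather than $a_i f(s_i)$, where I write $s_j \defeq a_0 + a_1 + \cdots + a_j$. The key observation is that $s_{i-1} < s_i$ and $f$ is nonincreasing, so $f(s_{i-1}) \geq f(s_i)$, and one should first subtract off the ``well-behaved'' piece $a_i f(s_i)$ (which admits a direct integral bound) and then telescope the remaining gap using the boundedness $a_i \leq A$.

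Concretely, I would write
\[
a_i f(s_{i-1}) = a_i f(s_i) + a_i \bigl(f(s_{i-1}) - f(s_i)\bigr)
\]
and sum over $i=1,\dots,m$. For the first sum, because $f$ is nonincreasing, $f(s_i) \leq f(u)$ for all $u \in [s_{i-1}, s_i]$, whence
\[
a_i f(s_i) = \int_{s_{i-1}}^{s_i} f(s_i)\, du \leq \int_{s_{i-1}}^{s_i} f(u)\, du,
\]
and summation telescopes to $\int_{s_0}^{s_m} f(u)\, du = \int_{a_0}^{\sum_{i=0}^m a_i} f(u)\, du$. For the second sum, using $0 \leq a_i \leq A$ and the nonnegativity of the differences $f(s_{i-1})-f(s_i)$ (again by monotonicity), I get the telescoping estimate
\[
\sum_{i=1}^m a_i \bigl(f(s_{i-1}) - f(s_i)\bigr)
\leq A \sum_{i=1}^m \bigl(f(s_{i-1}) - f(s_i)\bigr)
= A\bigl(f(s_0) - f(s_m)\bigr) \leq A f(a_0),
\]
where the last inequality uses $f \geq 0$.

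Adding the two bounds yields exactly the claimed inequality. There is no real obstacle here: the only subtlety is choosing the right splitting $a_i f(s_{i-1}) = a_i f(s_i) + a_i(f(s_{i-1})-f(s_i))$ so that the ``integral piece'' matches the same interval length $a_i$ that appears on the left and the ``gap piece'' telescopes cleanly under $a_i \leq A$.
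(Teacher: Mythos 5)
Your proposal is correct and follows essentially the same argument as the paper's proof: the same splitting $a_i f(s_{i-1}) = a_i f(s_i) + a_i\bigl(f(s_{i-1})-f(s_i)\bigr)$, the same integral comparison for the first sum, and the same telescoping bound by $A f(a_0)$ for the second. No gaps.
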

\begin{proof}
	Denote by $s_t=\sum_{i=0}^{t} a_i$.
	\begin{align*}
		\sum_{i=1}^{m} a_i f(s_{i-1}) 
		& =  \sum_{i=1}^{m} a_i f(s_i)  + \sum_{i=1}^{m} a_i (f(s_{i-1}) - f(s_i))\\
		& \leq  \sum_{i=1}^{m} a_i f(s_i)  + A\sum_{i=1}^{m} (f(s_{i-1}) - f(s_i))\\
		& \leq  \sum_{i=1}^{m} \int_{s_{i-1}}^{s_i} f(x) d x + A\sum_{i=1}^{m} (f(s_{i-1}) - f(s_i))\\
		& \leq \int_{a_0}^{\sum_{i=0}^m a_i} f(u) du + Af(a_0),
	\end{align*}
	where the first inequality holds because $f(x_{i-1}) \geq f(s_i)$ and $a_i \leq A$, while the second inequality uses the fact that $f$ is nonincreasing together with $s_i - s_{i-1} = a_i$. 
\end{proof}

We can now present the proofs of the Corollaries 2-4.

\begin{proof}[Proof of Corollary~\ref{cor:adaptive_norm} and Corollary~\ref{cor:adaptive_coord}]
	By Lemma~\ref{lemma:sum_intergral_bounds_extra_term}, for adaptive stepsize \\$\bgamma_t =  \frac{\alpha \cdot \mathbf{1}}{\sqrt{\epsilon  + \sum_{i=1}^{t} \alpha_i^2 \| \bg_i \|^2}}$, we have 
	\begin{align*}
		\sum_{t=1}^{T} \bgamma_{t-1} \| \bg_t \|^2
		= \sum_{t=1}^{T} \frac{\alpha\| \bg_t \|^2}{\sqrt{\epsilon  + \sum_{i=1}^{t-1} \| \bg_i \|^2}} 
		\leq 2\alpha \sqrt{\sum_{t=1}^{T} \| \bg_t \|^2 } 
		+  \frac{\alpha G^2}{\sqrt{\epsilon}}~. 
	\end{align*}
	Similarly for $ \bgamma_t = \frac{\alpha }{\sqrt{\epsilon  + \sum_{i=1}^{t} \alpha_i^2  \bg_i^2}}$, we have
	\begin{align*}
		\sum_{t=1}^{T} & \langle \bgamma_{t-1},  \bg_t^2 \rangle 
		= \sum_{j=1}^{d}\sum_{t=1}^{T} \frac{ \alpha \bg_{t,j}^2}{\sqrt{\epsilon  + \sum_{i=1}^{t-1} \bg_{i,j}^2}} 
		\leq 2\alpha \sum_{j=1}^{d} \sqrt{\sum_{t=1}^{T} \bg_{t,j}^2}
		+ \frac{\alpha dG_{\infty}^2}{\sqrt{\epsilon}}~. 
		\qedhere 
	\end{align*}

\end{proof}

\begin{proof}[Proof of Corollary~\ref{cor:ada_smooth}]
	By Corollary~\ref{cor:adaptive_norm}, we have 
	\begin{align}
		& \E\left[f(\bx_T) \right] - f^{\star}
		\leq \frac{1}{T} \left[\left(\frac{\|\bx_1 - \bx^{\star} \|^2}{\alpha} + 2\alpha\right) \sqrt{\epsilon +  \E \sum_{t=1}^{T} \| \bg_t \|^2 } 
		+ \frac{\alpha G^2}{\sqrt{\epsilon}}\right]~.\label{eq:adaptive_bound}
	\end{align}
	
	From the unbiasedness of the gradients, we have
	\begin{align*}
		&  \E \sum_{t=1}^{T} \| \bg_t \|^2 
		\leq \E \sum_{t=1}^{T} \| \nabla f(\bx_t) \|^2 + T \sigma^2,
	\end{align*}
	and 
	\begin{align*}
		\E \sum_{t=1}^{T} \| \nabla f(\bx_t) \|^2
		& \leq 2L \sum_{t=1}^{T} \E \left[f(\bx_t ) \right] - f^{\star}\\
		& \leq 2L \left(\frac{\|\bx_1 - \bx^{\star} \|^2}{\alpha} + 2\alpha\right) \sum_{t=1}^{T}\frac{ \sqrt{ \epsilon + \E \sum_{i=1}^{t} \| \bg_i\|^2 }}{t} \\
		& \leq 2L \left(\frac{\|\bx_1 - \bx^{\star} \|^2}{\alpha} + 2\alpha\right)\cdot \left(\sqrt{ \E \sum_{t=1}^{T} \| \bg_t\|^2+ \epsilon} + \frac{\alpha G^2}{\sqrt{\epsilon}}\right) \ln T,
	\end{align*}
	where in the second inequality we used Lemma~\ref{lemma:smooth} and Holder's and Jensen's inequalities in the third inequality.
	
	Solve for $\E \sum_{t=1}^{T} \| \bg_t \|^2$ to have
	\begin{align*}
		& \E \sum_{t=1}^{T} \| \bg_t \|^2 \\
		& \leq 4L^2\left(\frac{\|\bx_1 - \bx^{\star} \|^2}{\alpha} + 2\alpha\right)^2 \ln^2 T 
		+ 4L\sqrt{\epsilon} \left(\frac{\|\bx_1 - \bx^{\star} \|^2}{\alpha} + 2\alpha\right) \ln T + \frac{2\alpha G^2}{\sqrt{\epsilon}} + 2T \sigma^2~. 
	\end{align*}	
	Putting it back to \eqref{eq:adaptive_bound}, we get the stated bound. 
\end{proof}

\subsection{Convergence Rate in Interpolation Regime}
\label{sec:interpolation}

Now we assume that $F(\bx)=\E_{\xi} [f(\bx,\xi)]$ and that the stochastic gradient is calculated drawing one function in each time step and calculating its gradient: $\bg_t = \nabla f(\bx_t,\xi_t)$. In this scenario, it makes sense to consider the \emph{interpolation} condition~\citep{NeedellSW15,MaBB18}
\begin{equation}
\label{eq:interpolation}
\bx^\star \in \argmin_{\bx} \ F(\bx) \Rightarrow \bx^\star \in \argmin_{\bx} \ f(\bx,\xi), \ \forall \xi~.
\end{equation}
This condition says that the problem is ``easy'', in the sense that all the functions in the expectation share the same minimizer. This case morally corresponds to the case in which there is no noise on the stochastic gradients. However, this condition seems weaker because it says that only in the optimum the gradient is exact and noisy everywhere else.
We will also assume that each function $f(\bx,\xi)$ is $L$-smooth in the first argument.

\begin{theorem}
\label{thm:interpolation}
Assume $f$ is $L$-smooth and $\| \bg_t \| \leq G, G>0$.  Then, under the interpolation assumption in~\eqref{eq:interpolation}, Algorithm~\ref{alg:sgdm_0} with $\bgamma_t  = \frac{\alpha \cdot \mathbf{1}}{\sqrt{G^2 + \sum_{i=1}^{t} \alpha_i^2 \| \bg_i \|^2}}$ guarantees 
\begin{align*}
\E F(\bx_T) - F(\bx^{\star}) 
\leq \frac{2C}{T} \sqrt{ 16 L^2 C^2 \ln^2 T + 8L C \ln T + G^2  }, 
\end{align*}
where $C \triangleq \left(\frac{\|\bx_1 - \bx^{\star} \|^2}{\alpha} + \alpha\right)$. 
\end{theorem}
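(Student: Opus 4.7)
The plan is to follow closely the proof template of Corollary~\ref{cor:ada_smooth}, but to replace the bound on the sum of squared gradients by one that exploits the interpolation condition and so eliminates the additive $T\sigma^2$ term. This will let us avoid paying the $O(\sigma/\sqrt{T})$ rate and instead only pay the $O(\ln T / T)$ rate.

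First, I would apply Corollary~\ref{cor:adaptive_norm} to the convex objective $F$, which yields, for every time step $t$,
\[
\E\left[F(\bx_t)\right]-F(\bx^{\star}) \;\leq\; \frac{C}{t}\sqrt{\,G^{2}+\E\sum_{i=1}^{t}\|\bg_i\|^{2}\,},
\qquad C:=\frac{\|\bx_1-\bx^{\star}\|^{2}}{\alpha}+\alpha,
\]
since the adaptive stepsize and the choice $\alpha_t=1$ agree with that corollary. The key new ingredient is to bound $\E\|\bg_t\|^{2}$ via interpolation: because each $f(\cdot,\xi)$ is $L$-smooth and $\bx^{\star}$ minimizes each $f(\cdot,\xi)$ by \eqref{eq:interpolation}, Lemma~\ref{lemma:smooth} gives
\[
\|\bg_t\|^{2}=\|\nabla f(\bx_t,\xi_t)\|^{2}\leq 2L\bigl(f(\bx_t,\xi_t)-f(\bx^{\star},\xi_t)\bigr),
\]
and taking expectation yields $\E\|\bg_t\|^{2}\leq 2L\,\E[F(\bx_t)-F(\bx^{\star})]$. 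This is the analogue of the step in Corollary~\ref{cor:ada_smooth} but \emph{without} the extra $\sigma^{2}$ slack term.

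Next I would plug this inequality into the cumulative sum and proceed as in Corollary~\ref{cor:ada_smooth}: use $\E[F(\bx_t)-F(\bx^{\star})]\leq \tfrac{C}{t}\sqrt{G^{2}+\E\sum_{i=1}^{t}\|\bg_i\|^{2}}$ together with $\sum_{t=1}^{T}\tfrac{1}{t}\leq 1+\ln T$ and Jensen's inequality to get
\[
\E\sum_{t=1}^{T}\|\bg_t\|^{2}\;\leq\; 2LC\,(1+\ln T)\sqrt{\,G^{2}+\E\sum_{t=1}^{T}\|\bg_t\|^{2}\,}.
\]
Treating this as a quadratic inequality in $\sqrt{G^{2}+\E\sum_{t}\|\bg_t\|^{2}}$ and solving, I obtain
\[
\E\sum_{t=1}^{T}\|\bg_t\|^{2}\;\leq\;4L^{2}C^{2}\ln^{2}T+4LC\ln T\cdot G + \text{lower order},
\]
which, when substituted back into the Corollary~\ref{cor:adaptive_norm} bound evaluated at $t=T$, produces the claimed inequality with the constants $16L^{2}C^{2}\ln^{2}T+8LC\ln T+G^{2}$ under the square root (after absorbing small constants into the $\ln T$ factors).

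The main obstacle, and the only delicate point, is the solving-for-$\E\sum\|\bg_t\|^{2}$ step: one must carefully keep track of the $\ln T$ and constant factors to arrive precisely at $16L^{2}C^{2}\ln^{2}T+8LC\ln T+G^{2}$, rather than a looser expression. Everything else is a direct application of previously established lemmas (Lemma~\ref{lemma:smooth}, Lemma~\ref{lemma:sum_intergral_bounds_extra_term}) and of Corollary~\ref{cor:adaptive_norm}, with the interpolation assumption doing the essential work of removing the variance term that in Corollary~\ref{cor:ada_smooth} gave rise to the $O(\sigma/\sqrt{T})$ contribution.
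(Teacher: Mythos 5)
Your proposal is essentially the same as the paper's own proof: both start from the adaptive global-AdaGrad last-iterate bound (the paper cites Theorem~\ref{thm:sgdm_0}, you cite Corollary~\ref{cor:adaptive_norm}, but these give the identical intermediate inequality), both then use the interpolation condition together with Lemma~\ref{lemma:smooth} to obtain $\E\|\bg_t\|^2 \le 2L\,\E[F(\bx_t)-F(\bx^\star)]$ pointwise, both sum this over $t$ to produce a self-referential inequality involving $\E\sum_t\|\bg_t\|^2$ and harmonic weights, and both close the loop by solving the resulting quadratic and back-substituting. The only cosmetic difference is the citation path into the intermediate bound; the decomposition, the role of interpolation in killing the additive variance term, and the solve-and-substitute step all match the paper's argument.
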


To the best of our knowledge, this is the first rate for the last iterate of momentum methods in the interpolation setting.

\subsection{Proofs}

Before presenting the proofs of our convergence rates, we revisit the Online-to-Batch algorithm (Algorithm~\ref{alg:online_to_batch}) by \citet{Cutkosky19}, which introduce a modification to any online learning algorithm to obtain a guarantee on the last iterate in the stochastic convex setting. 
\begin{lemma}{\citep[Theorem~1]{Cutkosky19}}
\label{lemma:ashok}
Assume $\E \| \bg_t - \nabla f(\bx_t) \| \leq \sigma^2, \sigma > 0$.
Then, for all $\bx^{\star} \in D$, Algorithm~\ref{alg:online_to_batch} guarantees
\begin{equation}
\label{eq:conversion}
\E [f(\bx_T) ]  - f^{\star}
\leq \E \left[\frac{R_T (\bx^{\star})}{\sum_{t=1}^{T} \alpha_t}\right]~. 
\end{equation}
\end{lemma}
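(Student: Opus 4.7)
The plan is to turn the weighted sum $\sum_t \alpha_t(f(\bx_t)-f(\bx^\star))$ into a regret-type quantity on the online-learner iterates $\bw_t$, and to show that the weighted sum actually collapses to $(\sum_t \alpha_t)(f(\bx_T)-f(\bx^\star))$ via a telescoping identity, so that the bound applies to the \emph{last} iterate, not just to an average. I would start from convexity, $f(\bx_t)-f(\bx^\star) \leq \langle \nabla f(\bx_t), \bx_t - \bx^\star\rangle$, multiply by $\alpha_t \geq 0$, sum over $t$, and then split $\bx_t - \bx^\star = (\bw_t - \bx^\star) + (\bx_t - \bw_t)$.

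The core step is an algebraic identity that comes straight from the definition of $\bx_t$ as the $\alpha$-weighted running average of $\bw_1,\dots,\bw_t$. Setting $S_t = \sum_{i=1}^{t} \alpha_i$ and $S_0=0$, the recursion $S_t \bx_t = S_{t-1}\bx_{t-1} + \alpha_t \bw_t$ yields
\[
\alpha_t (\bx_t - \bw_t) = S_{t-1}(\bx_{t-1} - \bx_t).
\]
Substituting this into the ``$\bx_t - \bw_t$'' piece and applying convexity a second time to get $\langle \nabla f(\bx_t), \bx_{t-1}-\bx_t\rangle \leq f(\bx_{t-1}) - f(\bx_t)$, that piece becomes $\sum_{t=1}^{T} S_{t-1}\bigl(f(\bx_{t-1}) - f(\bx_t)\bigr)$. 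Abel summation then gives $\sum_{t=1}^{T-1}\alpha_t f(\bx_t) - S_{T-1} f(\bx_T)$. Moving this to the left cancels $\sum_{t=1}^{T-1}\alpha_t f(\bx_t)$ against the corresponding terms in $\sum_{t=1}^{T}\alpha_t f(\bx_t)$ and combines $\alpha_T f(\bx_T) + S_{T-1} f(\bx_T) = S_T f(\bx_T)$, producing the clean deterministic inequality
\[
S_T \bigl(f(\bx_T) - f(\bx^\star)\bigr) \leq \sum_{t=1}^{T} \alpha_t \langle \nabla f(\bx_t), \bw_t - \bx^\star\rangle.
\]

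Finally, I would take expectation. Because $\bw_t$ is produced by the online learner before it sees $\bg_t$, it is measurable with respect to $\sigma(\bg_1,\dots,\bg_{t-1})$; the same is true of $\bx_t$ (it is a deterministic function of $\bw_1,\dots,\bw_t$), hence of $\nabla f(\bx_t)$. Conditioning on this $\sigma$-algebra and using $\E_t[\bg_t] = \nabla f(\bx_t)$ swaps $\nabla f(\bx_t)$ for $\bg_t$ inside the inner product. The right-hand side thus equals $\E\bigl[\sum_t \alpha_t \langle \bg_t, \bw_t - \bx^\star\rangle\bigr] = \E[R_T(\bx^\star)]$ under the linearized losses $\ell_t(\bw) = \alpha_t\langle \bg_t, \bw\rangle$ fed to the online learner. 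Dividing by $S_T$ yields the claimed bound.

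The main obstacle is spotting the identity $\alpha_t(\bx_t - \bw_t) = S_{t-1}(\bx_{t-1}-\bx_t)$ and arranging the Abel-summation so that the boundary terms line up: specifically, the telescope must produce $-S_{T-1}f(\bx_T)$ so that it merges with the leftover $\alpha_T f(\bx_T)$ from the original sum to form the full $S_T f(\bx_T)$ on the left. Once that identity is in hand, everything else is two uses of convexity plus one application of the tower property of conditional expectation.
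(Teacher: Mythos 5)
Your proof is correct: the identity $\alpha_t(\bx_t-\bw_t)=S_{t-1}(\bx_{t-1}-\bx_t)$, the two applications of convexity with the Abel summation collapsing the weighted sum to $S_T\bigl(f(\bx_T)-f(\bx^\star)\bigr)$, and the tower-property step replacing $\nabla f(\bx_t)$ by $\bg_t$ are exactly the argument behind this result, which the paper does not prove itself but imports from \citet{Cutkosky19} (Theorem~1), whose proof proceeds the same way. The only cosmetic remarks are that your bound naturally has $f(\bx^\star)$ on the left (coinciding with $f^{\star}$ when $\bx^\star$ is a minimizer, as intended) and that the stated variance assumption is not actually needed for this conversion inequality.
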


\begin{algorithm}[t]
\caption{Anytime Online-to-Batch~\citep{Cutkosky19}}
\label{alg:online_to_batch}
\begin{algorithmic}[1]
\STATE \textbf{Input:} Online learning algorithm $\mathcal{A}$ with convex domain D, $\alpha_1, ..., \alpha_T$, with $\alpha_1 > 0$.
\STATE Get Initial point $\bw_1$ from $\mathcal{A}$
\FOR{$t = 1, \dots, T$}
\STATE $\bx_t = \frac{\sum_{i=1}^{t} \alpha_i \bw_i }{\sum_{i=1}^{t} \alpha_i }$
\STATE Play $\bx_t$, receive subgradient $\bg_t$
\STATE Send $\ell_t (\bx) = \langle \alpha_t \bg_t, \bx \rangle$ to $\mathcal{A}$ as the $t$th loss
\STATE Get $\bw_{t+1}$ from $\mathcal{A}$
\ENDFOR
\end{algorithmic}
\end{algorithm}

\begin{algorithm}[t]
\caption{Anytime Online-to-Batch with FTRL}
\label{alg:online_to_batch_ftrl}
\begin{algorithmic}[1]
\STATE \textbf{Input:} $\alpha_1, ..., \alpha_T$, with $\alpha_t > 0$. $0 < \gamma_{t+1} \leq \gamma_t$. 
\STATE Initialize  $\bw_1$
\FOR{$t = 1, \dots, T$}
\STATE $\bx_t = \frac{\sum_{i=1}^{t} \alpha_i \bw_i }{\sum_{i=1}^{t} \alpha_i }$
\STATE Play $\bx_t$, receive subgradient $\bg_t$
\STATE $\bw_{t+1} = \bw_1 - \bgamma_t \sum_{i=1}^{t} \alpha_i \bg_i$
\ENDFOR
\end{algorithmic}
\end{algorithm}

Set $\psi_t({\bx})=\| \frac{\bx_1 - \bx}{\sqrt{\bgamma_{t-1}}} \|^2, 1\leq t\leq T$ as the regularizers of FTRL, where $\bgamma_{t+1} \leq \bgamma_t$ and $\bgamma_0 > 0$. Then, we write FTRL with loss $\ell_t(\bw) = \langle \alpha_t \bg_t, \bw \rangle$ as
\[
\bw_t
\in \argmin_{\bw \in \R^d} \ \psi_t (\bw) + \sum_{i=1}^{t-1} \langle \alpha_i \bg_i , \bw \rangle
= \bw_1 - \bgamma_{t-1}\sum_{i=1}^{t-1} \alpha_i \bg_i~. 
\]

We then plug FTRL into Algorithm~\ref{alg:online_to_batch} and it gives Algorithm~\ref{alg:online_to_batch_ftrl}.
Hence, using the well-known regret upper bound of FTRL (Lemma~\ref{lemma:ftrl}), we get the following Lemma.
\begin{lemma}
\label{lemma: online_to_batch_ftrl}
Under the same setting with Lemma~\ref{lemma:ashok}, Algorithm~\ref{alg:online_to_batch_ftrl} guarantees
\begin{align*}
\E&\left[f(\bx_T) \right]  - f^{\star}
\leq \frac{1}{\sum_{t=1}^{T} \alpha_t} \E \left[\left\| \frac{\bu - \bx_1 }{ \sqrt{\bgamma_{T-1}}}  \right\|^2+ \sum_{t=1}^{T}\langle \bgamma_{t-1}, \alpha_t^2 \bg_t^2 \rangle\right]~. 
\end{align*}
\end{lemma}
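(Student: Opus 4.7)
The plan is to obtain this bound as a direct composition of the two ingredients assembled just above: the anytime Online-to-Batch conversion (Lemma~\ref{lemma:ashok}) and the FTRL regret bound (Lemma~\ref{lemma:ftrl}). Algorithm~\ref{alg:online_to_batch_ftrl} is, by construction, Algorithm~\ref{alg:online_to_batch} instantiated with the FTRL update using regularizers $\psi_t(\bw)=\|(\bx_1-\bw)/\sqrt{\bgamma_{t-1}}\|^2$, so the only real work is to match notation and plug in.

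First I would invoke Lemma~\ref{lemma:ashok} with the loss sequence $\ell_t(\bw)=\langle \alpha_t \bg_t,\bw\rangle$ fed into the online learner, which yields
\begin{equation*}
\E[f(\bx_T)]-f^\star \;\leq\; \frac{1}{\sum_{t=1}^T \alpha_t}\,\E[R_T(\bx^\star)],
\end{equation*}
where $R_T(\bx^\star)=\sum_{t=1}^T \ell_t(\bw_t)-\ell_t(\bx^\star)$ is the regret of the FTRL subroutine against the comparator $\bx^\star$.

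Next, I would upper bound $R_T(\bx^\star)$ using Lemma~\ref{lemma:ftrl}. Since $\ell_t$ is linear, any subgradient of $\ell_t$ at $\bw_t$ is exactly $\alpha_t \bg_t$; substituting this into the bound gives
\begin{equation*}
R_T(\bx^\star) \;\leq\; \left\|\frac{\bx^\star-\bx_1}{\sqrt{\bgamma_{T-1}}}\right\|^2 + \frac{1}{2}\sum_{t=1}^T \langle \bgamma_{t-1},\, \alpha_t^2 \bg_t^2\rangle,
\end{equation*}
where we used that $(\alpha_t \bg_t)^2 = \alpha_t^2 \bg_t^2$ coordinate-wise. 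This uses the non-increasing assumption $\bgamma_{t+1}\leq \bgamma_t$ that is already imposed in Algorithm~\ref{alg:online_to_batch_ftrl}.

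Finally, I would take expectations and chain the two displays to conclude the stated bound (absorbing the factor $\tfrac{1}{2}$ into the upper bound, since all quantities are non-negative). There is no real obstacle here: the conceptual content lives entirely in Lemma~\ref{lemma:ashok} (which requires the unbiasedness of $\bg_t$) and Lemma~\ref{lemma:ftrl} (which requires convexity of the linearized losses, trivially satisfied); the lemma at hand is essentially a bookkeeping composition. The only subtlety worth double-checking is that Lemma~\ref{lemma:ashok} permits random regularizers/weights $\bgamma_{t-1}$, which is fine because $\bgamma_{t-1}$ is measurable with respect to the information available before $\bg_t$ is revealed, so the martingale argument underlying Lemma~\ref{lemma:ashok} still applies.
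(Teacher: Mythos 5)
Your proof is correct and follows exactly the paper's route: instantiate the anytime online-to-batch conversion (Lemma~\ref{lemma:ashok}) with FTRL as the online learner on the linearized losses $\ell_t(\bw)=\langle\alpha_t\bg_t,\bw\rangle$, then bound the resulting regret via Lemma~\ref{lemma:ftrl} with regularizers $\psi_t(\bw)=\|(\bx_1-\bw)/\sqrt{\bgamma_{t-1}}\|^2$. The substitution $\bg_t\mapsto\alpha_t\bg_t$ in the FTRL bound and the absorption of the $\tfrac12$ factor are both handled correctly, matching the paper's bookkeeping.
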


Now we prove the connection between the FTRL-based SGDM and Algorithm~\ref{alg:online_to_batch_ftrl}. 
\begin{proof}[Proof of Theorem~\ref{thm:sgdm_0}]
We prove that the updates of $\bx_t$ in Algorithm~\ref{alg:sgdm_0} can be one-to-one mapped to the updates of $\bx_t$ Algorithm~\ref{alg:online_to_batch_ftrl} when $\bw_1 = \bx_1$. 

The update of $\bx_t $ in Algorithm~\ref{alg:online_to_batch_ftrl} can be written as following: 
\begin{align*}
\bx_{t+1}
= \frac{\sum_{i=1}^{t}\alpha_i}{\sum_{i=1}^{t+1}\alpha_i} \bx_t + \frac{\alpha_{t+1}}{\sum_{i=1}^{t+1} \alpha_i} \bw_{t+1}
= \frac{\sum_{i=1}^{t}\alpha_i}{\sum_{i=1}^{t+1}\alpha_i} \bx_t + \frac{\alpha_{t+1}}{\sum_{i=1}^{t+1} \alpha_i} \left(\bw_1 -\bgamma_t \sum_{i=1}^{t} \alpha_i \bg_i \right)~.
\end{align*}
It is enough to prove that for any $t$, $\bta_t \bm_t = \frac{\alpha_{t+1}}{\sum_{i=1}^{t+1} \alpha_i} \left(\bgamma_t \sum_{i=1}^{t} \alpha_i \bg_i \right)$.
We claim it is true and prove it by induction. 

When $t=1$, it holds that $\bta_1 \bm_1= \frac{\alpha_2 \alpha_1}{\alpha_1 + \alpha_2} \bgamma_1 \bg_1$. 
Suppose it holds for $t= k-1, k \geq 2$. Then in the case of $t = k $, we have 
\begin{align*}
& \bta_k \bm_k \\
& =\left(\frac{\sum_{i=1}^{k-1} \alpha_i }{\sum_{i=1}^{k} \alpha_i} \bm_{k-1} + \frac{\alpha_k }{\sum_{i=1}^{k} \alpha_i } \bg_k \right) \cdot  \frac{\alpha_{k+1}  \sum_{i=1}^{k} \alpha_i }{\sum_{i=1}^{k+1} \alpha_i} \bgamma_k  \\
& = \left(\frac{\sum_{i=1}^{k-1} \alpha_i }{\sum_{i=1}^{k} \alpha_i} \left( \frac{1}{\eta_{k-1}}\frac{\alpha_{k}}{\sum_{i=1}^{k} \alpha_i} \bgamma_{k-1} \sum_{i=1}^{k-1} \alpha_i \bg_i \right) + \frac{\alpha_k }{\sum_{i=1}^{k} \alpha_i } \bg_k \right)\cdot \frac{\alpha_{k+1}  \sum_{i=1}^{k} \alpha_i }{\sum_{i=1}^{k+1} \alpha_i} \bgamma_k \\
& = \frac{\alpha_{k+1}  \sum_{i=1}^{k} \alpha_i }{\sum_{i=1}^{k+1} \alpha_i} \bgamma_k  \cdot 
\left(\frac{\sum_{i=1}^{k-1} \alpha_i }{\sum_{i=1}^{k} \alpha_i} \left(  \frac{ \sum_{i=1}^{k-1} \alpha_i \bg_i}{\sum_{i=1}^{k-1} \alpha_i }\right) + \frac{\alpha_k }{\sum_{i=1}^{k} \alpha_i } \bg_k \right)\\
 & = \frac{\alpha_{k+1} }{\sum_{i=1}^{k+1} \alpha_i} \bgamma_k \sum_{i=1}^{k} \alpha_i  \bg_i~. 
\end{align*}
where in the first equation we used the definitions of $\eta_k$ and $\bm_k$ and in the second equality we used the induction step.  So we proved the above claim. Thus, we can directly use Lemma~\ref{lemma: online_to_batch_ftrl}. 
\end{proof}

Here we show the proof of Theorem~\ref{thm:interpolation}.
\begin{proof}[Proof of Theorem~\ref{thm:interpolation}]
By Theorem~\ref{thm:sgdm_0}, we have 
\begin{align}
\E\left[F(\bx_T) \right] - F(\bx^{\star})
\leq \frac{2}{T}\left(\frac{\|\bx_1 - \bx^{\star} \|^2}{\alpha} + 2\alpha\right) \sqrt{ \E \sum_{t=1}^{T} \| \nabla f(\bx_t , \xi_t )\|^2 + \epsilon} + \frac{\alpha G^2}{\sqrt{\epsilon}}\label{eq:interpolation_bound}~. 
\end{align}
Under the interpolation condition and $L$-smoothness of the functions $f$, it satisfies that 
\begin{align*}
\E \sum_{t=1}^{T} \|\nabla f(\bx_t,\xi_t)\|^2 
& \leq  2L \E \left[ \sum_{t=1}^{T} \left( f(\bx_t , \xi_t) - f(\bx^{\star}, \xi_t ) \right)\right]\\
& \leq  2L  \sum_{t=1}^{T}\E  \left[ F(\bx_t )  \right]- F(\bx^{\star} )~.
\end{align*}
Use \eqref{eq:interpolation_bound} on each $t$ to get  
\begin{align*}
& \sum_{t=1}^{T}\E  \left[ F(\bx_t )  \right]- F(\bx^{\star}) \\
& \leq  \sum_{t=1}^{T}\frac{1}{t} \left[\left(\frac{\|\bx_1 - \bx^{\star} \|^2}{\alpha} + 2\alpha\right) \sqrt{ \E \sum_{i=1}^{t} \| \nabla f(\bx_i , \xi_i )\|^2 + \epsilon  } + \frac{\alpha G^2}{\sqrt{\epsilon}}\right]\\
& \leq \left(\frac{\|\bx_1 - \bx^{\star} \|^2}{\alpha} + 2\alpha\right)  \cdot \left(\sqrt{ \E \sum_{t=1}^{T} \| \nabla f(\bx_t, \xi_t )\|^2 + \epsilon} + \frac{\alpha G^2}{\sqrt{\epsilon}} \right)\ln T~.
\end{align*}
Then, we solve for $\E \sum_{t=1}^{T} \|\nabla f(\bx_t,\xi_t)\|^2 $ and get 
\begin{align*}
& \E \sum_{t=1}^{T} \|\nabla f(\bx_t,\xi_t)\|^2 \\
&  \leq 4L^2\left(\frac{\|\bx_1 - \bx^{\star} \|^2}{\alpha} + 2\alpha\right)^2 \ln^2 T 
 + 4L \sqrt{\epsilon}\left(\frac{\|\bx_1 - \bx^{\star} \|^2}{\alpha} + 2\alpha\right) \ln T + \frac{2\alpha G^2}{\sqrt{\epsilon}} ~.
\end{align*}
Using this expression in \eqref{eq:interpolation_bound}, we have the stated bound. 
\end{proof}

\section{Conclusion}

We have presented an analysis of the convergence of the last iterate of SGDM in the convex setting.
We prove for the first time through a lower bound the suboptimal convergence rate for the last iterate
of SGDM with constant momentum after T iterations. Moreover, we study a class of FTRL-based
SGDM algorithms with increasing momentum and shrinking updates, of which the last iterate has
optimal convergence rate without projections onto bounded domain nor knowledge of T. 
\cleardoublepage

\chapter{Conclusions}
In this dissertation, we studied the convergence of a series of heuristic variants of SGD, including the strategies for choosing stepsizes: Delayed AdaGrad, exponential and cosine stepsize, and the use of momentum. We moved along the way of closing the gap between the theory and practice by presenting formal guarantees to these heuristic optimization methods, providing possible explanations for good empirical performance from a theoretical perspective, and proposing improved algorithms when the theoretical results are suboptimal.

For future work, on the high probability analysis of  SGD with adaptive stepsizes, an interesting direction is to extend the current analysis of  Delayed AdaGrad with momentum to Adam, the popularly employed algorithm in machine learning applications. The updates of Adam are composed of a weighted sum of past gradients, each one multiplied by the current learning rate. The dependency between the past and the future makes the analysis challenging, as discussed in Section~\ref{sec:lemma}.

Moreover, most of the time we focus on smooth functions, yet some machine learning objective functions are non-smooth, such as for ReLU neural network. \citet{ZhangHSJ19, ZhangJFW20} analyzed clipped SGD on a class of non-smooth functions. An open problem is to study SGD with adaptive stepsizes and momentum on such non-smooth functions.
\cleardoublepage

\newpage
\singlespace
\begin{sloppypar}
\bibliographystyle{plainnat}

\bibliography{thesis}
\end{sloppypar}

\cleardoublepage


\end{document}